\Crefname{assumption}{Assumption}{Assumptions}
\DeclarePairedDelimiter\norm{\lVert}{\rVert}
\DeclarePairedDelimiter\fnorm{\lVert}{\rVert_{\rm{F}}}
\DeclarePairedDelimiter\opnorm{\lVert}{\rVert_{\rm{op}}}
\def\defas{\stackrel{\text{\tiny def}}{=}}
\def\Rem{{\mathrm{Rem}}}
\def\sign{{\rm sign}}
\def\var{{\rm var}}
\def\Rem{{\rm Rem}}
\newcommand{\limd}{{\smash{\xrightarrow{~\mathrm{d}~}}}}
\newcommand{\limp}{{\smash{\xrightarrow{~\mathrm{p}~}}}}
\def\argmin{\mathop{\rm arg\, min}}
\DeclareMathOperator{\prox}{prox}
\DeclareMathOperator{\soft}{soft}
\DeclareMathOperator{\trace}{Tr}
\let\vec\relax
\DeclareMathOperator{\vec}{\mathbf{vec}}
\DeclareMathOperator{\supp}{supp}
\def\iid{i.i.d.\@\xspace}
\def\eg{e.g.\@\xspace}
\def\ie{i.e.\@\xspace}
\def\R{{\mathbb{R}}}
\def\E{{\mathbb{E}}}
\def\N{{\mathbb{N}}}
\def\P{{\mathbb{P}}}
\def\mathbold{\boldsymbol} 
\def\cal{\mathcal} 
\def\ba{\mathbold{a}}
\def\bA{\mathbold{A}}
\def\hbA{\smash{\widehat{\mathbf A}}}
\def\bb{\mathbold{b}}
\def\hbb{{\widehat{\bb}}}\def\tbb{{\widetilde{\bb}}}
\def\bB{\mathbold{B}}
\def\hbB{{\widehat{\bB}}}
\def\hbC{{\widehat{\bfC}}}
\def\bC{\mathbold{C}}
\def\hbC{{\widehat{\bC}}}
\def\bD{\mathbold{D}}\def\calD{{\cal D}}
\def\bE{\mathbold{E}}
\def\bff{\mathbold{f}}
\def\bF{\mathbold{F}}
\def\bg{\mathbold{g}}
\def\tbg{{\widetilde{\bg}}}
\def\bG{\mathbold{G}}
\def\bH{\mathbold{H}}
\def\bI{\mathbold{I}}
\def\bJ{\mathbold{J}}
\def\calJ{{\cal J}}
\def\bK{\mathbold{K}}
\def\bL{\mathbold{L}}
\def\bM{\mathbold{M}}
\def\calM{{\cal M}}
\def\bN{\mathbold{N}}
\def\bP{\mathbold{P}}
\def\bQ{\mathbold{Q}}\def\tbQ{{\widetilde{\bQ}}}
\def\bR{\mathbold{R}}
\def\bS{\mathbold{S}}
\def\bT{\mathbold{T}}
\def\be{\mathbold{e}}
\def\bu{\mathbold{u}}
\def\bU{\mathbold{U}}
\def\bv{\mathbold{v}}
\def\tbv{{\widetilde{\bv}}}
\def\bV{\mathbold{V}}
\def\bw{\mathbold{w}}
\def\bW{\mathbold{W}}\def\hbW{{\widehat{\bW}}}
\def\bx{\mathbold{x}}
\def\bX{\mathbold{X}}
\def\calX{{\cal X}}
\def\by{\mathbold{y}}
\def\bY{\mathbold{Y}}
\def\bz{\mathbold{z}}
\def\tbz{{\widetilde{\bz}}}
\def\bZ{\mathbold{Z}}
\def\bGamma{\mathbold{\Gamma}}
\def\bDelta{\mathbold{\Delta}}
\def\ep{\varepsilon}\def\eps{\epsilon}
\def\bep{ {\mathbold{\ep} }}
\def\bzeta{\mathbold{\zeta}}
\def\bfeta{\mathbold{\eta}}
\def\btheta{\mathbold{\theta}}
\def\hbtheta{{\widehat{\btheta}}}
\def\lam{\lambda}
\def\bmu{\mathbold{\mu}}
\def\bXi{\mathbold{\Xi}}
\def\bPi{\mathbold{\Pi}}
\def\brho{\mathbold{\rho}}
\def\bSigma{\mathbold{\Sigma}}
\def\b0{{\mathbf{0}}}
\def\bd1{{\mathbf{1}}}
\newcommand{\boldzero}{\mathbf{0}}
\def\I{{\mathbb{I}}}
\theoremstyle{plain}
\newtheorem{theorem}{Theorem}[section]
\newtheorem{lemma}[theorem]{Lemma}
\newtheorem{corollary}[theorem]{Corollary}
\newtheorem{proposition}[theorem]{Proposition}
\theoremstyle{remark}
\newtheorem{definition}{Definition}[section]
\newtheorem{assumption}{Assumption}[section]
	\def\MR#1{}  
\theoremstyle{remark} 
\newtheorem{remark}{Remark}[section]
\numberwithin{equation}{section}
\begin{document}
\def\doi{}

\begin{frontmatter}
    \title{Uncertainty quantification for iterative algorithms 
    in 
    linear models
    with application to early stopping}
    \runtitle{Uncertainty quantification for iterative algorithms}
    
    \begin{aug}
    \author[A]{\fnms{Pierre C.}~\snm{Bellec}\ead[label=e1]{pierre.bellec@rutgers.edu}},
    \author[A]{\fnms{Kai}~\snm{Tan}\ead[label=e2]{kai.tan@rutgers.edu}},
    \address[A]{Department of Statistics,
    Rutgers University\printead[presep={,\ }]{e1,e2}}
    
    \end{aug}

    \begin{abstract}
        This paper investigates the 
        iterates $\hbb^1,\dots,\hbb^T$ obtained from iterative algorithms in high-dimensional linear regression problems, in the regime where the feature dimension $p$ is comparable with the sample size $n$, i.e., $p \asymp n$. 
        The analysis and proposed estimators are applicable
        to Gradient Descent (GD), proximal GD and their accelerated
        variants such as Fast Iterative Soft-Thresholding (FISTA).
        The paper proposes novel estimators for the generalization error
        of the iterate $\hbb^t$ for any fixed iteration $t$
        along the trajectory. These estimators are proved to be
        $\sqrt n$-consistent under Gaussian designs.
        Applications to early-stopping are provided:
        when the generalization error of the iterates is a U-shape
        function of the iteration $t$, the estimates allow to select
        from the data an iteration $\hat t$ that achieves the smallest
        generalization error along the trajectory.
        Additionally, we provide a technique for developing debiasing 
        corrections and valid confidence intervals for the components of the true coefficient vector from the iterate $\hbb^t$ at any finite iteration $t$. Extensive simulations on synthetic data illustrate the theoretical results.
\end{abstract}
    

    
    \begin{keyword}
    \kwd{high-dimensional models}
    \kwd{risk estimate}
    \kwd{Lasso}
    \kwd{generalization error}
    \end{keyword}
    
    \end{frontmatter}

\maketitle
\setcounter{tocdepth}{2}
\tableofcontents

\section{Introduction}
Consider the linear model
\begin{equation}\label{eq:linear-model}
    \by = \bX \bb^* + \bep,
\end{equation}
where $\by\in \R^n$ is the response vector, $\bX\in \R^{n\times p}$ is the design matrix with \iid rows with
covariance $\bSigma\in \R^{p\times p}$,
and $\bep\in \R^n$ is the noise vector with \iid entries from $\mathsf{N}(0,\sigma^2)$. 
To estimate the coefficient vector in the linear model with
dimension $p$ larger or comparable to the sample size $n$,
a common strategy is to consider
the penalized least-squares estimator
\begin{equation}\label{eq:b-hat}
    \hbb 
    \in \argmin_{\bb\in \R^p} \frac{1}{2n}\|\by - \bX \bb\|^2 + \rho(\bb).
\end{equation}
This estimator could be ordinary least-squares estimate (if $\rho(\bb) = 0$), 
Ridge regression, Lasso \citep{tibshirani1996regression}, Elastic Net \citep{zou2005regularization}, group Lasso \citep{yuan2006model}, nuclear norm
penalization \citep{koltchinskii2011nuclear} or
Slope \citep{bogdan2015slope}, to name a few convex examples.
Concave penalty functions $\rho$ in \eqref{eq:b-hat} have also been considered,
including
SCAD \citep{fan2001variable} and MCP \citep{zhang10-mc+}. 
The properties of the penalized estimator $\hbb$ in \eqref{eq:b-hat} are now well understood in high-dimensional regimes where the dimension
$p$ is larger than $n$. 
For instance, optimal L1 and L2 bounds of Lasso estimator $\hbb$ have been studied
extensively \citep[among others]{Wainwright09,bickel2009simultaneous,ye2010rate,bayati2012lasso,dalalyan2017prediction,bellec2016slope}.
Beyond risk bounds, confidence intervals
for single entries of $\bb^*$ have been developed in \cite{zhang2014confidence,JavanmardM14a,GeerBR14}, and more recently in
\cite{cai2017confidence,javanmard2018debiasing,bellec_zhang2019debiasing_adjust}.
We refer readers to \cite{buhlmann2011statistics,van2016estimation} for a comprehensive review of such results. 
In most results cited above, $p\gg n$ and the L2 risk of \eqref{eq:b-hat}
typically converges to 0, \ie, $\hbb$ is consistent.
Consistency, on the other hand, is not granted in the regime
where dimension and sample size are of the same order, \ie,
\begin{equation}
    \label{eq:proportional_regime}
    p \asymp n,
    \qquad \text{ or }
    \qquad 
     \lim (p/n)
    \text{ exists and is finite}.
\end{equation}
In this regime, \cite{bayati2012lasso,miolane2018distribution}
showed that the Lasso L2 risk,
under suitable assumptions, converges to a finite positive constant
that can be characterized by a system of nonlinear equations.
Regime \eqref{eq:proportional_regime} will be in force in the present paper.
Since the early works
\citep{bayati2012lasso,el_karoui2013robust,donoho2016high}
on this proportional regime,
significant progress was made
to develop a unified theory explaining the behaviors of M-estimators
and penalized estimators when \eqref{eq:proportional_regime} holds, see for instance \cite{thrampoulidis2018precise,celentano2020lasso,loureiro2021capturing}.
Compared with the early debiasing literature \citep{ZhangSteph14,GeerBR14,JavanmardM14a},
new debiasing techniques are needed to construct confidence intervals
for components of $\bb^*$ in this regime, in particular with the need
to account for the degrees-of-freedom of the estimator \eqref{eq:b-hat}
for the debiasing correction
\citep{bellec2022biasing,celentano2020lasso,bellec2023debiasing}.

\subsection{Iterative algorithms}
As we have seen,
the properties of the estimator $\hbb$ in \eqref{eq:b-hat}
are now well understood, both in the classical high-dimensional regime with $p\gg n$  
and in the proportional regime \eqref{eq:proportional_regime}.
In practice, an exact minimizer $\hbb$ of \eqref{eq:b-hat} is
typically unavailable in closed-form, and numerical approximations are used
instead.
To solve the minimization problem \eqref{eq:b-hat} numerically, the practitioner
resorts to iterative algorithms
developed in the optimization literature
to obtain an approximate solution to \eqref{eq:b-hat}.

For smooth objective functions,
this includes first-order methods such as
Gradient Descent (GD), 
its accelerated version known as Accelerated Gradient Descent (AGD) \citep{nesterov1983method}, 
or
its randomized variants \citep{bottou2010large}.
For non-smooth objective functions, the practitioner typically
resorts to coordinate descent
\citep{friedman2010regularization} or iterative algorithms
involving the proximal operator of $\rho$ such as
Iterative Soft-Thresholding Algorithm (ISTA)
\citep{Daubechies2004} and its accelerated version Fast Iterative Soft-Thresholding Algorithm (FISTA)
\citep{beck2009fast}.
Such iterative algorithm starts with an initializer $\hbb^1$ and then iteratively produces iterates $\hbb^2, \hbb^3, ...$ by applying a nonlinear
function to a weighted sum of the previous iterate and the gradient of the first term
in \eqref{eq:b-hat} (the smooth part of the objective function) at the previous iterate.
For instance,
for a convex penalty function $\rho$ in \eqref{eq:b-hat} 
the proximal gradient descent algorithm solves \eqref{eq:b-hat} 
with the iterations
\begin{equation}
    \label{proximal_gradient_intro}
    \hbb^{t+1} = \prox[\tfrac1L\rho]\bigl(\hbb^{t} + \tfrac{1}{nL}\bX^\top (\by - \bX\hbb^{t})\bigr){,}
\end{equation}
where the proximal operator is defined as $\prox[\frac1L\rho](\bv)=\argmin_{\bb\in\R^p}\|\bb-\bv\|^2/2 + \frac1L\rho(\bb)$ and
the scalar $L>0$ is usually taken \citep{beck2009fast} as an upper bound
on the Lipschitz constant of the gradient of the first term in the objective
function \eqref{eq:b-hat}, \ie, such that $L\ge \|\bX^\top\bX/n\|_{\rm op}$.
For the Lasso problem with $\rho(\bb) = \lambda \|\bb\|_1$ in \eqref{eq:b-hat},
the iterations \eqref{proximal_gradient_intro} become the ISTA iterations
\begin{equation}
    \label{soft_intro}
    \hbb^{t+1} = \soft_{\lambda/L}\bigl(\hbb^{t} + \tfrac{1}{nL}\bX^\top (\by - \bX\hbb^{t})\bigr) {,}
\end{equation}
where $\soft_{\lambda/L}(\cdot)$  applies the soft-thresholding $u\mapsto \sign(u)(|u|-\lambda/L)_+$ componentwise. 
For AGD and FISTA,
the recursion formula defining $\hbb^{t+1}$ involves the iterates 
$(\hbb^{t},\hbb^{t-1})$ as we will detail in 
\Cref{sec:eg-AGD,sec:eg-FISTA} below.

More generally, the focus of the present paper is on iterative algorithm of
the form
\begin{equation}
    \hbb^{t+1} = \bg_{t+1}
    (\hbb^{t}, \hbb^{t-1}, \bv^t, \bv^{t-1}),
    \quad \text{ where}
    \quad
    \bv^t = \tfrac1n \bX^\top(\by - \bX\hbb^t)
    \label{recursion_intro_2_previous_iterates}
\end{equation}
for nonlinear Lipschitz functions $\bg_{t+1}$, in order to include
accelerated algorithms from the
optimization literature that 
are of the form \eqref{recursion_intro_2_previous_iterates}, i.e.,
using the previous two iterations,
including AGD \citep{nesterov1983method}
and FISTA
\citep{beck2009fast}.
Our theory actually applies to recursions of the form
\begin{equation}
    \hbb^{t+1} = \bg_{t+1}
    (\hbb^{t}, \hbb^{t-1},\dots,\hbb^2,\hbb^1, ~  \bv^t, \bv^{t-1},\dots,\bv^2,\bv^1),
    \label{intro_iterates_general_form}
\end{equation}
although the typical optimization algorithms mentioned in the previous
paragraph only use the previous two iterates, as in 
\eqref{recursion_intro_2_previous_iterates}.

These iterative algorithms are typically stopped when
a stopping criterion is met. 
Existing theories in the optimization literature on iterative algorithms examine their convergence properties regarding the training error, specifically how fast (in $t$) does 
the objective function at the iterate $\hbb^t$ approach the minimal
value in \eqref{eq:b-hat}, or how fast does
the iterate $\hbb^t$ approach an ideal minimizer
$\hbb$ of \eqref{eq:b-hat} as $t\to+\infty$.
For instance, \cite{beck2009fast} showed that FISTA enjoys a faster rate of convergence
compared to ISTA.

\subsection{Statistical properties of iterates for small $t$ or when convergence fails}
If $\hbb^t$ converges to $\hbb$ as $t$ increases, it is reasonable to expect that the properties of $\hbb$ (\eg, risk bounds, or debiased confidence intervals for components of $\bb^*$) are also valid for $\hbb^t$ provided $t$ is sufficiently large.
One motivation of the present paper is the study of iterates $(\hbb^t)_{t\ge 1}$ for small values of $t$, or when convergence of these iterates 
fail for the optimization problem \eqref{eq:b-hat}.
In such cases, because $\hbb^t$ and an ideal solution $\hbb$ to \eqref{eq:b-hat} significantly differ from each other, we should not expect that
the statistical properties of $\hbb$ are inherited by $\hbb^t$.
This calls for a statistical analysis of $\hbb^t$ itself, rather
than ideal minimizers $\hbb$ of the optimization problem \eqref{eq:b-hat}.
Two concrete situations for which convergence fails are the following.

\begin{itemize}
    \item The dataset $(\bX,\by)$ is so large that a single iteration
        of the recursion \eqref{intro_iterates_general_form}
        is slow and/or computationally costly.
        Running the iterations \eqref{intro_iterates_general_form}
        until convergence ($t\to+\infty$) is then not realistic.
        Still,
        in such situation, the practitioner may only run a few iterations and use
        $\hbb^t$ for a small $t$ (\eg, $t=5$ or $t=10$) for statistical purposes,
        including making predictions
        at new test data points $\bx_{new}$.
    \item If the optimization problem \eqref{eq:b-hat} is non-convex,
    as is the case with SCAD \citep{fan2001variable} 
        or MCP \citep{zhang10-mc+}, convergence of the iterates
        to a local or global solution to \eqref{eq:b-hat} is a subtle
        problem. Iterative algorithms may converge to a local optima.
        Specific algorithms have been developed to ensure good
        statistical properties under
        suitable assumptions, such as Restricted Eigenvalue conditions
        \citep{zhang10-mc+,feng2017sorted}.
        However, such conditions 
        are often not verifiable in practice \citep{bandeira2012certifying}.
        Despite the absence of any convergence guarantees, 
        the algorithm used to solve \eqref{eq:b-hat} for
        SCAD or MCP is still implementable, yielding iterates in the form of 
        \eqref
        {intro_iterates_general_form}.
        Despite the lack of verifiable conditions to guarantee convergence, 
        at a given $t$
        the iterate $\hbb^t$ can still be used for downstream tasks,
        for instance predictions at test data points $\bx_{new}$.
\end{itemize}

In both situations, convergence does not occur and statistical
properties of $\hbb^t$ at a finite $t$ are expected to significantly departs
from those of $\hbb$.
Using $\hbb^t$ for statistical purposes, instead of waiting for
convergence, is also appealing in order to save computational resources.
This calls for a statistical theory of $\hbb^t$ itself, rather
than ideal minimizers $\hbb$ of \eqref{eq:b-hat}.
The present makes contributions in this direction with two major applications on the
properties of the iterate $\hbb^t$ at a fixed $t$:
assessing the predictive performance of $\hbb^t$, and constructing debiased confidence
intervals for components of $\bb^*$ based on $\hbb^t$.

\subsection{Early stopping}
It is important to recognize that the predictive performance of $\hbb^t$ does not necessarily improve as $t$ grows. 
In fact, early-stopped gradient descent has been found to be effective in several machine learning applications, including non-parametric regression in a reproducing kernel Hilbert space \citep{raskutti2014early}, boosting \cite{Buhlmann2003boosting,Rosset2003Boosting}
and ridge regression \citep{ali2019continuous}, to name a few. 
The intuition is that early stopping provides a form of regularization
that prevents overfitting
(see, e.g., \cite{ali2019continuous} and the references therein).
In these settings, the population risk curve versus the iteration number is U-shaped (first decreasing and then increasing); we will observe instances of this
phenomenon in Figures~\ref{fig:LSE-GD}-\ref{fig:MCP-LQA-1500}.
For practical purposes, the major question in such situations is to
infer from the data if early stopping is beneficial, and when to stop
the iterations. The practitioner typically wishes to stop the iteration as soon
as the population risk starts to increase; this is only possible
if an estimator of the population risk of $\hbb^t$ is available.
Our results below provide such an estimator that can be effectively computed
from the data $(\bX,\by)$.

\subsection{Contributions}
The aforementioned situations motivate studying the properties of the iterates $\hbb^t$ at each step $t$, in particular for smaller values of $t$ where $\hbb^t$ is not yet close to the optimizer $\hbb$. 
Our focus centers on two key questions:
\begin{itemize}
    \item[(Q1)] Can we quantify the predictive performance of $\hbb^t$ for each $t$ along the trajectory?
    \item[(Q2)] Can we use the iterates $\hbb^t$ to construct confidence intervals for the entries of $\bb^*$, 
        for instance by deriving asymptotic normality results around $\hbb^t$?
\end{itemize}
This paper provides affirmative answers to these questions.
Our contributions are summarized as follows:
\begin{itemize}
    \item[i)] We introduce a reliable method for estimating the generalization error of each iterate $\hbb^t$ obtained from widely-used algorithms that solve \eqref{eq:b-hat}. This estimator takes the form
        $\frac1n\|\sum_{s\le t}\hat w_{s,t}(\by - \bX\hbb^s)\|^2$,
        \ie,
        the mean square error of the weighted average of the residual vector $\{\by - \bX\hbb^s\}_{s\le t}$ with
        carefully chosen data-driven weights $\hat w_{s,t}$ defined in \Cref{thm:rt} below.
    These weights are algorithm-specific and are computed from the data. 
    The theory also allows us to derive estimators for the generalization
    error of weighted averages of the iterates. 
        For instance, our approach enables the estimation of the generalization error for the average $\frac{1}{T}\sum_{t=1}^T\hbb^t$. 
        The corresponding theory is presented in \Cref{thm:generalization-error}.
    \item[ii)] The proposed estimator of the generalization error serves as a practical proxy for minimizing the true generalization error. To identify the iteration index $t$ such that $\hbb^t$ has the lowest generalization error, one can choose the $t$ that minimizes the estimated generalization error. This approach offers a viable method for pinpointing the optimal stopping point in the algorithm's trajectory. This application is presented in \Cref{cor:early}.
    \item[iii)] We develop a method for constructing valid confidence intervals for the elements of $\bb^*$ based on a given iteration $t$ and
    iterate $\hbb^t$. Consequently, one needs not wait for the convergence of the algorithm to construct valid confidence intervals for $\bb^*$. In cases where early-stopping is beneficial for achieving smaller generalization error, confidence intervals based on the early-stopped iterate $\hbb^{\hat t}$ (where $\hat t$ is chosen by minimizing the estimated risk) are shorter than the confidence interval based on the fully converged minimizer $\hbb$.
The theory on asymptotic normality
and confidence intervals 
is provided in \Cref{thm:debias}.
    \item[iv)] The proposed method is applicable to a broad spectrum of regression techniques and algorithms. 
        We provide detailed expressions of the weights $\hat w_{s, t}$ for several popular algorithms in \Cref{sec:examples}.
    It offers a practical solution for determining the optimal stopping iteration of the algorithm to enjoy the smallest generalization error as 
    a function of $t$ along the
    trajectory.
    Extensive simulation studies on synthetic data corroborate our theoretical findings. 
    These studies demonstrate the efficiency of our proposed risk estimator for $\hbb^t$, and confirm the validity of the asymptotic normality results for the entries of $\bb^*$ derived from $\hbb^t$.
    The simulations are presented and discussed in 
    of \Cref{sec:examples}.
\end{itemize}

\subsection{Related literature}
For regression problems in the proportional regime \eqref{eq:proportional_regime}, the most extensively studied iterative algorithm is Approximate Message Passing (AMP) \citep{donoho2009message,bayati2011dynamics,bayati2012lasso}. In the current framework, AMP can be formulated as $
\hbb^{t+1}=\bfeta_{t+1}(\hbb^t + \sum_{s\le t}w_{t,s}^{\textsc{amp}} \bv^s)$, where $w_{t,s}^{\textsc{amp}}$ are specific scalar weights. These weights ensure that at each step, the input of $\bfeta_{t+1}$ is approximately normally distributed.
This construction enables tracking the evolution of the error of $\hbb^t$ as a function of $t$ using a simple recursion, referred to as state evolution.
AMP has found numerous applications, including for instance
in high-dimensional statistics \citep{bayati2012lasso,celentano2019fundamental},
spiked models/low-rank matrix denoising
\citep[and references therein]{montanari2021estimation}
or sampling
\citep{el2022sampling}.
A proof of the state evolution for non-separable
$\bfeta_{t}$ is given in \cite{berthier2020state}.
The focus of the present is to study algorithms from the optimization 
literature such as
proximal gradient \eqref{proximal_gradient_intro}, AGD or ISTA/FISTA
that do not use the special AMP weights.

\cite{celentano2020estimation,montanari2022statistically} exhibit
fundamental limits for the performance of iterations that encompass \eqref{intro_iterates_general_form}. These works show that
iterates can be viewed as a Lipschitz function $g_*$ of some AMP algorithm. They 
characterize the fundamental lower bound obtained by taking
the infimum over all $g_*$ and corresponding AMP algorithms,
and show that the lower bound is attained by a specific AMP algorithm, termed Bayes-AMP.
Our goal in this paper is different: to develop a data-driven estimate of the error of the iterates $\hbb^t$.

Still in the proportional regime,
\cite{chandrasekher2022alternating,chandrasekher2023sharp,lou2024hyperparameter} 
consider iterations that,
in order to compute the $(t+1)$-th iterate,
use a fresh random batch
$(\bx_i^{t+1},y_i^{t+1})_{i \in N^{t+1}}$ independent of all the past
(here, the past refers to $\hbb^t,\hbb^{t-1},...\hbb^1$ as well as the random samples used to compute
these).
These works characterize the evolution of the iterates' performance (measured by mean square error, or inner product with the ground truth) by a sequence of deterministic scalars defined by
simple recursions, and they show that the iterates' performance
is close to the deterministic scalars.
If samples are reused across iterations, as in the present paper,
it is still possible to characterize deterministic equivalents of the
performance of the iterates in some cases
(see \cite{celentano2021high} for continuous-time iterates,
\cite{gerbelot2022rigorous} for discrete-time ones).
The deterministic equivalents of the iterates' performance are necessarily more complex than in the fresh random batch setting. This complexity is captured by limiting Gaussian processes indexed by $t \in [T]$ \citep{celentano2021high,gerbelot2022rigorous}.

For optimization of the least-squares problem, with $\rho(\cdot)=0$
in \eqref{eq:b-hat}, \cite{ali2020implicit,ali2019continuous,sheng2022accelerated} (among others)
study equivalences between early stopping and L2 regularization:
stopping the iterations of GD or AGD
at a fixed $t$ implicitly performs shrinkage
of a similar nature to Ridge regression.
For gradient descent iterations solving this least-squares problem,
\cite{patil2024failures} develops estimates of the generalization error
of the iterates using leave-one-out cross-validation, and show 
consistency of these leave-one-out estimates along the GD trajectory.
While leave-one-out estimates are close to the goal of the present paper,
\ie, to develop estimates of the generalization error along the
trajectory of the algorithm, they are
not computationally practical without further modifications since they require
to running $n$ algorithms in parallel to obtain the leave-one-out
estimate.

Outside of the proportional regime \eqref{eq:proportional_regime},
\cite{luo2023iterative} develop an efficient approximation of leave-one-out 
cross validation for
iterates $\hbb^t$ solving empirical risk minimization problems with additive regularization.
The normalized Hessian of the objective function is assumed to be Lipschitz
and non-singular at the iterates, which precludes settings
of interest for the present paper with $p>n$
and non-smooth penalty in \eqref{eq:b-hat}.
For uncertainty quantification of the iterates, 
\cite{hoppe2023uncertainty} 
extends the classical debiased lasso estimator \citep{ZhangSteph14,JavanmardM14a,van2014asymptotically} to the iterates of a variant of ISTA. 
The asymptotic normality in 
\cite{hoppe2023uncertainty}
requires a bound on the size of the supports of
$\hbb^t$ and $\bb^*$, as well as $\norm{\hbb^t - \bb^*}$ to vanish as $n\to\infty$.
This is not the case in the regime \eqref{eq:proportional_regime} of interest here where $p$ and $n$ are of the same order.
The present paper departs from this analysis of a variant ISTA
on several fronts. First, our analysis is grounded in a more general iteration formula (see 
\eqref{recursion_intro_2_previous_iterates} and  \eqref{intro_iterates_general_form}), which encompasses a broader spectrum of algorithms. 
Second, our results establish asymptotic normality of $\hbb^t$ for 
any $t$, and do not require $\norm{\hbb^t - \bb^*}$ to vanish as $n\to\infty$.
In contrast to these studies, the present paper introduces new estimator
of the generalization error of algorithm iterates that solve the regression problem \eqref{eq:b-hat} for $p>n$ and with non-smooth penalty functions.
Additionally, we propose a method to construct confidence intervals for the entries of $\bb^*$ by debiasing the iterate $\hbb^t$.

\subsection{Notation}
Regular variables like $a, b, ...$ refer to scalars, bold lowercase letters such as $\ba, \bb, ...$ represent vectors, and bold uppercase letters like $\bA, \bB, ...$ indicate matrices.
For an integer $n\ge 1$, we use the notation
$[n] = \{1, \ldots, n\}$. 
The vectors $\be_i\in \R^n, 
\be_j\in\R^p, \be_t\in\R^T$ denote the canonical basis vector of the corresponding index. 
For clarity, we always use implicit index $i$ to loop or sum over $[n]$,
index $j$ to loop over $[p]$, and 
indices $t,t',s$ to loop over $[T]$.
For a real vector $\ba \in \R^p$,  
$\norm*{\ba}$ 
denotes its Euclidean norm. 
For a matrix $\bA$,  
$\fnorm*{\bA}$, 
$\opnorm*{\bA}$, 
$\norm*{\bA}_*$ 
denote its Frobenius, operator and nuclear norm, respectively. 
For a matrix $\bA$, we denote its maximum
and minimum singular values by $\sigma_{\max}(\bA)$ and $\sigma_{\min}(\bA)$, respectively. If $\bA$ is symmetric,
$\lambda_{\max}(\bA)$ and $\lambda_{\min}(\bA)$ are its maximum and minimum eigenvalues.
Let $\bA\otimes \bB$ be the Kronecker product of matrices $\bA$ and $\bB$. 
Let ${\bf 1}_n$ denote the all-ones vector in $\R^n$, and $\bI_n$ denote the identity matrix of size $n$. 
For an event $E$, $\I(E)$ denotes the indicator function of $E$. It takes the value 1 if the event $E$ occurs and 0 otherwise. 
Let $\mathsf{N}(\mu, \sigma^2)$ denote the Gaussian distribution with mean $\mu$ and variance $\sigma^2$, and $\mathsf{N}_k(\bmu, \bSigma)$ denote the $k$-dimensional Gaussian distribution with mean $\bmu$ and covariance matrix $\bSigma$.
For a random sequence $\xi_n$, we write $\xi_n = O_P(a_n)$ if $\xi_n/a_n$ is stochastically bounded,
and $\limp$ for convergence in probability and
$\limd$ for convergence in distribution.
We reserve the letters $c$ and $C$ to denote generic constants.
Additionally, we use $C(\zeta, T, \gamma, \kappa)$ to denote a positive constant that depends only on $\zeta, T, \gamma, \kappa$.
The exact values of these constants may vary from place to place.

\section{Main results}
\label{sec:main}
Throughout the paper, given iterates $\hbb^1,...,\hbb^t,...$, define 
$\bv^t\in\R^p$ by
\begin{equation}
    \label{v_t}
    \bv^t = n^{-1} \bX^\top (\by - \bX\hbb^t) \qquad \text{ for } t\ge 1. 
\end{equation}

\subsection{Iterates and derivatives}
Our first task is to establish some notation
(namely, matrices $\calJ,\calD\in\R^{pT\times pT}$) that will be
used to construct the proposed estimators. 
As a warm-up, we will first introduce this notation for proximal gradient 
descent (\Cref{subsubsec:PGD})
and iterates constructed from the two previous ones
(\Cref{subsubsec:two}).
Notation for the general form \eqref{intro_iterates_general_form}
will be given in \Cref{subsubsec:general}.

\subsubsection{Proximal gradient descent}
\label{subsubsec:PGD}
Consider the proximal gradient descent iterates
\eqref{proximal_gradient_intro}.
With this notation and starting with $\hbb^1=\bf0$,  the proximal gradient descent
\eqref{proximal_gradient_intro} iterations can be rewritten as 
\begin{align}\label{eq:pgd-iterates}
    \hbb^{t} = \bg_t(\hbb^{t-1},\bv^{t-1}),
    \quad \text{ where }
    \bg_t(\bb^t,\bv^t) = \prox[\tfrac1L\rho](\bb^t + \tfrac1L \bv^t).
\end{align}
The Jacobian of the function $\bg_t$ is a matrix of size
$p\times 2p$ and can be partitioned into the two $p\times p$ blocks
\begin{equation}\label{eq:pgd-J-D}
    \bJ_{t,t-1}= \pdv{\bg_{t}}{\bb^{t-1}}(\hbb^{t-1}, \bv^{t-1}), \quad
    \bD_{t,t-1} = \pdv{\bg_{t}}{\bv^{t-1}}(\hbb^{t-1}, \bv^{t-1}).
\end{equation}
Next, define $\calJ\in\R^{pT\times pT}$ and $\calD \in \R^{pT\times pT}$ 
with $T\times T$ blocks of size $p\times p$ as follows
\begin{equation}\label{eq:pgd-cal-J-D}
\calJ = 
\begin{bmatrix}
    \b0 & \b0 & \cdots & \cdots & \cdots& \b0\\
    \bJ_{2,1} & \b0 & \ddots & \cdots & \cdots& \vdots\\
    \b0 & \bJ_{3,2} & \b0 & \ddots & \cdots& \vdots\\
    \b0 & \b0 & \bJ_{4,3} & \b0 & \ddots& \vdots\\
    \vdots & \ddots & \ddots & \ddots & \ddots& \b0\\
    \b0 & \cdots &\b0 & \b0 & \bJ_{T, T-1} &\b0\\
\end{bmatrix}
,~
\calD = 
\begin{bmatrix}
    \b0 & \b0 & \cdots & \cdots & \cdots& \b0\\
    \bD_{2,1} & \b0 & \ddots & \cdots & \cdots& \vdots\\
    \b0 & \bD_{3,2} & \b0 & \ddots & \cdots& \vdots\\
    \b0 & \b0 & \bD_{4,3} & \b0 & \ddots& \vdots\\
    \vdots & \ddots & \ddots & \ddots & \ddots& \b0\\
    \b0 & \cdots &\b0 & \b0 & \bD_{T, T-1} &\b0
\end{bmatrix}.
\end{equation}
Each $\boldzero$ block in the above $\calJ,\calD$ is the zero matrix of
size $p\times p$.
For iterations depending not only on $\hbb^{t-1},\bv^{t-1}$ but
also on previous iterates, the lower triangular blocks
of $\calD,\calJ$ will be filled with the corresponding non-zero
blocks of the Jacobian of $\bg_t$, as shown in the next subsections.

\subsubsection{Combining two previous iterates}
\label{subsubsec:two}
To incorporate more general algorithms beyond proximal gradient descent, we consider the iteration formula: 
\begin{align}\label{eq:general-form}
    \hbb^{t} = \bg_{t}(\hbb^{t-1}, \hbb^{t-2}, \bv^{t-1}, \bv^{t-2})
    \qquad \text{ for } t\ge 2. 
\end{align}
Here, $\hbb^1$ serves as an initial vector, and we set $\hbb^0 = \bv^0 = \b0_p$.
The iteration functions $\bg_t(\cdot): \R^{4p} \to \R^p$ are determined by user-specified algorithms. 
We include $\hbb^{t-2}$ and $\bv^{t-2}$ into the argument of \eqref{eq:general-form} to encompass optimization algorithms that update iterates by considering information from the two preceding steps. 
This general formulation accommodates popular algorithms such as Nesterov's accelerated gradient (AGD) method \citep{nesterov1983method,nesterov2003introductory} and the Fast Iterative Shrinkage-Thresholding Algorithm (FISTA) \citep{beck2009fast}. 
We provide detailed expressions of $\bg_t$ for several important algorithms in \Cref{sec:examples}.

For algorithms with iteration function $\bg_t$ in \eqref{eq:general-form}
depending on the previous two iterates,
the Jacobian of $\bg_t$ at a point where
$\bg_t$ is differentiable is a matrix of size $\R^{p \times 4p}$.
This Jacobian matrix in $\R^{p \times 4p}$ is naturally partitioned
into the four $p\times p$ blocks:
\begin{equation}\label{eq:D-J} 
\begin{aligned}
    \bJ_{t,t-1}&= \pdv{\bg_{t}}{\bb^{t-1}}(\hbb^{t-1}, \hbb^{t-2}, \bv^{t-1}, \bv^{t-2}), \quad
    \bD_{t,t-1} = \pdv{\bg_{t}}{\bv^{t-1}}(\hbb^{t-1}, \hbb^{t-2}, \bv^{t-1}, \bv^{t-2}),\\
    \bJ_{t,t-2} &= \pdv{\bg_{t}}{\bb^{t-2}}(\hbb^{t-1}, \hbb^{t-2}, \bv^{t-1}, \bv^{t-2}), \quad 
    \bD_{t,t-2} = \pdv{\bg_{t}}{\bv^{t-2}}(\hbb^{t-1}, \hbb^{t-2}, \bv^{t-1}, \bv^{t-2}).
\end{aligned}
\end{equation}
We further define larger matrices $\calJ, \calD \in \R^{pT\times pT}$ by setting the $(t,t')$ block of $\calJ$ as $\bJ_{t,t'}$ and that of $\calD$ as $\bD_{t,t'}$ for all $t,t'\in [T]$ for all $t,t'$ corresponding to matrices
in \eqref{eq:D-J},
with zeros everywhere else:
\begin{equation} \label{eq:calD-J}
\calJ = 
\begin{bmatrix}
    \b0 & \b0 & \cdots & \cdots & \cdots& \b0\\
    \bJ_{2,1} & \b0 & \ddots & \cdots & \cdots& \vdots\\
    \bJ_{3,1} & \bJ_{3,2} & \b0 & \ddots & \cdots& \vdots\\
    \b0 & \bJ_{4,2} & \bJ_{4,3} & \b0 & \ddots& \vdots\\
    \vdots & \ddots & \ddots & \ddots & \ddots& \b0\\
    \b0 & \cdots &\b0 & \bJ_{T, T-2} & \bJ_{T, T-1} &\b0\\
\end{bmatrix}
,~
\calD = 
\begin{bmatrix}
    \b0 & \b0 & \cdots & \cdots & \cdots& \b0\\
    \bD_{2,1} & \b0 & \ddots & \cdots & \cdots& \vdots\\
    \bD_{3,1} & \bD_{3,2} & \b0 & \ddots & \cdots& \vdots\\
    \b0 & \bD_{4,2} & \bD_{4,3} & \b0 & \ddots& \vdots\\
    \vdots & \ddots & \ddots & \ddots & \ddots& \b0\\
    \b0 & \cdots &\b0 & \bD_{T, T-2} & \bD_{T, T-1} &\b0
\end{bmatrix}.
\end{equation}
An equivalent definition
of $\calJ,\calD$ using Kronecker products is
\begin{equation*}
\calJ = \sum_{t=2}^T 
\sum_{s=t-2}^{t-1}
(\be_t \be_s^\top) \otimes
\bJ_{t,s}, \qquad
\calD = \sum_{t=2}^T
\sum_{s=t-2}^{t-1}
(\be_t \be_s^\top) \otimes
\bD_{t,s}, 
\end{equation*}
where $\be_t,\be_s$ are the $t$-th
and $s$-th canonical basis vectors
in $\R^T$.
\subsubsection{General form: combining all previous iterates}
\label{subsubsec:general}
More generally, our theory applies to iterations that depend on
all previous iterates, \ie,
iterates of the form
\begin{equation}
    \hbb^{t} = \bg_{t}
    (\hbb^{t-1}, \hbb^{t-2},\dots,\hbb^2,\hbb^1, ~  \bv^{t-1}, ,\dots,\bv^2,\bv^1)
    \label{eq:iterates_all_previous}
\end{equation}
with $\bv^s=\hbb^s = \boldzero_p$ for all $s\le 0$ by convention.
With the more general iterations \eqref{eq:iterates_all_previous}
depending on all previous iterates, the Jacobian of $\bg_t$ in
\eqref{eq:iterates_all_previous} is now a matrix in $\R^{p\times 2p(t-1)}$
partitioned into blocks
\begin{equation}
    \bJ_{t,s}
    = \frac{\partial \bg_t}{\partial \bb^s}\Bigl(
    \hbb^{t-1},...,\hbb^1,
    \bv^{t-1},...,\bv^1
    \Bigr),
    \quad
    \bD_{t,s}
    = \frac{\partial \bg_t}{\partial \bv^s}\Bigl(
    \hbb^{t-1},...,\hbb^1,
    \bv^{t-1},...,\bv^1
    \Bigr) 
    \label{J_ts_D_ts_all_previous}
\end{equation}
for all $s=1,...,t-1$.
The large matrices $\calJ, \calD\in\R^{pT\times pT}$ are then defined as
\begin{equation}
\calJ = \sum_{t=2}^T 
\sum_{s=1}^{t-1}
(\be_t \be_s^\top) \otimes
\bJ_{t,s}, \qquad
\calD = \sum_{t=2}^T 
    \sum_{s=1}^{t-1}
(\be_t \be_s^\top) \otimes
\bD_{t,s}.
\label{eq:all_previous_calD_calJ}
\end{equation}
This generalizes \eqref{eq:calD-J}
when $\hbb^t$ depends on all previous
iterates.
The matrices $\calJ$ and $\calD$
are always lower triangular block matrices, as in \eqref{eq:calD-J}.

\subsection{Lipschitz assumption and the chain rule}
Throughout, we assume the following.
\begin{assumption}\label{assu:Lipschitz}
    The algorithm starts with $\hbb^1 =\b0$. 
    The iteration function $\bg_t$ in \eqref{eq:iterates_all_previous}
    is $\zeta$-Lipschitz and satisfies 
    $\bg_t(\b0) = \b0$.
\end{assumption}

The notation in \eqref{eq:D-J} and \eqref{J_ts_D_ts_all_previous} naturally arises from the application of the chain rule in the recursive computation of derivatives for the iterates
\eqref{eq:general-form}
and
\eqref{eq:iterates_all_previous}.
While assuming that $\bg_t$ is Lipschitz (\Cref{assu:Lipschitz}) implies
that the iterates are differentiable almost everywhere
with respect to $(\bX,\by)$ by Rademacher's theorem
(a locally Lipschitz function of $(\by,\bX)$ is differentiable
almost everywhere), the chain rule may fail for composition
of multivariate Lipschitz functions.
This technical issue is clearly apparent 
with the example 
\begin{equation}
    \label{chainrule_lipschitz_example}
    \phi^*(u,v) = (u,u),
    \quad
    \phi^{**}(u, v) = \max(u, {v}),
    \quad
    \phi^{**}\circ \phi^*(u, v) = u.
\end{equation}
Although $\frac{\partial}{\partial u}\bigl(\phi^{**}\circ \phi^*\bigr)(u, v) = 1$
clearly holds,
the chain rule is undefined because
$\phi^{**}$ is not differentiable
at $\phi^*(u,v)$ for all $(u,v)\in\R^2$. 
A fix to this issue for the composition of locally Lipschitz functions
is given in Corollary 3.2 of
\cite{ambrosio1990general}: the chain rule for
for the composition of two Lipschitz functions $g:\R^k\to\R^q,u:\R^q\to\R^p$
holds almost surely in the modified
form $\nabla(g\circ u)(x)=\nabla(g|_{T_x^u})\nabla u(x)$
where $g|_{T_x^u}$ is the restriction of $g$ to the affine space defined 
in \cite[Corollary 3.2]{ambrosio1990general}.
Thus, in order to leverage the chain rule in our proofs, we will assume that
the matrices in \eqref{eq:D-J} or \eqref{J_ts_D_ts_all_previous} are all
modified, if necessary, in order to grant the chain rule as given in
\cite[Corollary 3.2]{ambrosio1990general}.
In many practical examples,
the Lipschitz (but not-necessarily differentiable everywhere)
nonlinear function used to recursively obtain the iterates 
is separable, \ie, $\bg^{sep}:\R^p\to\R^p$ of the form 
\begin{equation*}
    \bg^{sep}(\bx)_j=g_j(x_j)
    \quad
    \text{ for all }\bx\in\R^p,j\in[p]
    \quad
    \text{for some functions } g_j:\R\to\R.
\end{equation*}
This is the case for the soft-thresholding in \eqref{soft_intro}
as well as its variants, FISTA and LQA, detailed in
\Cref{sec:eg-FISTA,sec:eg-LQA}. In this case
by \cite[Theorem 2.1.11]{ziemer2012weakly}, the chain rule holds
almost everywhere for the composition of Lipschitz functions 
of the form $g_j\circ h$ for $h:\R^k\to\R$ and $g_j:\R\to\R$.
Consequently, the usual (unmodified) chain rule is granted almost everywhere
for compositions of Lipschitz functions
that are either separable or
everywhere differentiable.

\subsection{Memory matrix}

Equipped with the above notation in \eqref{J_ts_D_ts_all_previous} and \eqref{eq:all_previous_calD_calJ}, we now introduce the memory matrix. The memory matrix
is a key ingredient in the construction of our estimator of the prediction
error of the iterate $\hbb^t$ obtained from the iterations
\eqref{eq:general-form} or \eqref{eq:iterates_all_previous}.

\begin{definition}
For iterates $\{\hbb^1, \hbb^2, ..., \hbb^T\}$ obtained from the iteration \eqref{eq:iterates_all_previous},
with $\calD,\calJ$ defined in 
\eqref{eq:all_previous_calD_calJ},
define the memory matrix $\hbA\in \R^{T\times T}$ as 
\begin{equation}
    \label{eq:hat-A}
    \hbA 
    = \frac{1}{n}\sum_{i=1}^n 
    \Bigl(\bI_T \otimes (\be_i^\top \bX)\Bigr)
    \Bigl(\bI_{pT} + \calD (\bI_T \otimes \tfrac{\bX^\top \bX}{n}) - \calJ\Bigr)^{-1} 
    \calD \Bigl(\bI_T \otimes (\bX^\top \be_i)\Bigr)
\end{equation}
where $\be_i\in\R^n$ is the $i$-th
canonical basis vector.
\end{definition}
By definition, the memory matrix $\hbA$ only depends on the data $(\bX, \by)$, the iterates $\hbb^t$, the vectors $\bv^t$ in
\eqref{v_t}, and the derivative matrices in \eqref{J_ts_D_ts_all_previous}
evaluated at $(\hbb^{t-1},...,\hbb^1, \bv^{t-1},...,\bv^1)$.
The specific computation of $\hbA$ thus depends on the iteration function $\bg_t$ through the derivative matrices in \eqref{J_ts_D_ts_all_previous}
that appear as blocks of $\calD$ and $\calJ$.
The matrix $\hbA$ will differ significantly for different choices
of nonlinear functions $\bg_t$.
For the same nonlinear functions $\bg_t$, the matrix $\hbA$
will also be different for different realizations of the data $(\bX,\by)$,
although we observe in practice that the entries of $\hbA$ concentrate
and have small variance with respect to the randomness of $(\bX,\by)$.

In \Cref{sec:examples}, we provide detailed expressions of $\calJ$ and $\calD$ for several important algorithms in the optimization literature. 
 While definition \eqref{eq:hat-A} involves the inverse of a matrix
of size $\R^{pT\times pT}$, due to its specific structure,
computation of $\hbA$ requires
much fewer resources that performing a matrix inversion
or solving a linear system in $\R^{pT\times pT}$.
We provide efficient methods to compute $\hbA$ row by row in \Cref{sec:computation}.

An equivalent definition of the matrix $\hbA$ is to consider the matrix
\begin{equation}
    \frac{1}{n}
    \Bigl(\bI_T \otimes  \bX\Bigr)
    \Bigl(\bI_{pT} + \calD (\bI_T \otimes \tfrac{\bX^\top \bX}{n}) - \calJ\Bigr)^{-1} 
    \calD \Bigl(\bI_T \otimes (\bX^\top)\Bigr)
    \label{large_nT_nT}
\end{equation}
in $\R^{nT\times nT}$ as
a matrix by block, with $T\times T$
blocks of size $n\times n$. The matrix $\hbA\in\R^{T\times T}$ is then
obtained by taking the trace of each $n\times n$ block.

Since $\calD$ and $\calJ$ are lower triangular matrices with diagonal
blocks equal to $\boldzero_{p\times p}$, both
$\bI_{pT} + \calD(\bI_T\otimes \bX^\top\bX/n) - \calJ$
and its inverse are lower triangular with diagonal blocks equal to $\bI_p$.
Consequently, \eqref{large_nT_nT} is block lower triangular with
diagonal blocks equal to $\boldzero_{n\times n}$. By taking the trace
of each $n\times n$ block in \eqref{large_nT_nT}, we see that
$\hbA\in\R^{T\times T}$ is lower triangular with
zero diagonal entries.
It thus always holds that the matrix $\bI_T - \hbA/n$ is a lower triangular with all diagonal entries equal to 1, invertible, with
$(\bI_T - \hbA/n)^{-1}$ also lower triangular with all diagonal entries equal to 1.
The matrix 
\begin{equation*}
    (\bI_T - \hbA/n)^{-1}
\end{equation*}
plays a critical role in our
estimator of the generalization error of each iteration $\hbb^t$ in
\Cref{thm:rt,thm:generalization-error} below.

\subsection{Probabilistic assumptions and proportional regime}
Our main theorems below hold under the following assumptions.
\begin{assumption}\label{assu:design}
    The design matrix $\bX$ has \iid rows from $\mathsf{N}_p(\bf0, \bSigma)$ for some positive definite matrix $\bSigma$ satisfying 
    $0< \lambda_{\min}(\bSigma)\le 1 \le \lambda_{\max}(\bSigma)$ and 
    $\opnorm{\bSigma} \|\bSigma^{-1}\|_{\rm op} \le \kappa$. 
\end{assumption}

\begin{assumption}\label{assu:noise}
The noise $\bep$ is independent of $\bX$ and 
has \iid entries from $\mathsf{N}(0, \sigma^2)$. 
\end{assumption}

\begin{assumption}\label{assu:regime}
The sample size $n$ and predictor dimension $p$ satisfy $p/n\le \gamma$ for a constant $\gamma \in (0, \infty)$.
\end{assumption}

Recall that $\zeta$ is the Lipschitz constant in \Cref{assu:Lipschitz}.
In the results below, $(\zeta,T,\gamma,\kappa)$ can be thought of as 
constant problem parameters as $n,p\to+\infty$; our upper bounds involve constants $C(\zeta,T,\gamma,\kappa)$
that only depend on these four quantities.

\subsection{Estimation of the generalization error of iterates}
\label{subsec:r_t}
We are interested in quantifying the performance of each iterate $\hbb^t$. 
We assess the performance of the iterate $\hbb^t$ using the out-of-sample prediction error (also referred to as prediction risk or generalization error)
\begin{equation}
    \label{eq:rt}
    r_t
    \defas 
    \E \Bigl[\bigl(y_{new} - \bx_{new}^\top \hbb^t\bigr)^2 \mid (\bX, \by)\Bigr]
    = \|\bSigma^{1/2} (\hbb^t - \bb^*)\|^2 + \sigma^2,
\end{equation}
where $(\bx_{new}, y_{new})$ is a new test sample that has the same distribution as $(\bx_1, y_1)$, the first row of $(\bX,\by)$.
Since $\hbb^t$ is measurable with respect to $(\bX,\by)$, the last equality
above follows from \Cref{assu:design,assu:noise}.
Note that $(\bx_{new}, y_{new})$ is only used as a mathematical
device to define $r_t$ in \eqref{eq:rt}; we do not assume that independent
test samples are available.
Our first result concerns the estimation of prediction risk $r_t$ for each iterate $\hbb^t$. 
\begin{theorem}[Estimation of prediction risk]\label{thm:rt}
    Let \Cref{assu:design,assu:noise,assu:regime,assu:Lipschitz} be fulfilled. 
    For each $t\in [T]$, 
    define the estimate $\hat r_t$ of $r_t$ by
    \begin{equation}
        \label{eq:hat-rt}
        \hat r_t
        = \frac1n\Big\|\sum_{s=1}^t \hat w_{t,s} \bigl(\by - \bX \hbb^s\bigr) \Big\|^2,
    \end{equation}
    where $\hat w_{t,s} = \be_t^\top (\bI_T - \hbA/n)^{-1}\be_s$.
    We have for any $t\in [T]$,
    \begin{equation}
        \E\big|\hat r_t - r_t\big| 
        \le n^{-\frac 12} 
        \var(y_1)
        C(\zeta, T, \gamma, \kappa) 
        .
    \end{equation}
    Here 
    $\var(y_1) = \norm{\bSigma^{1/2}\bb^*}^2 + \sigma^2$, and 
    $C(\zeta, T, \gamma, \kappa)$ is a constant depending only on 
    $\zeta, T, \gamma, \kappa$.
\end{theorem}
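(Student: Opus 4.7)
The plan is to bound $\E|\hat r_t - r_t|$ by applying Gaussian integration by parts (Stein's identity) with respect to both the noise $\bep$ and the design $\bX$, and to show that the weights $\hat w_{t,s} = [(\bI_T - \hbA/n)^{-1}]_{t,s}$ are exactly the multiplicative correction needed to turn the observable residual quadratic form $\hat r_t$ into a $\sqrt{n}$-consistent estimate of the population target $r_t$---analogous to how the generalized cross-validation factor $(1-\mathrm{df}/n)^{-1}$ corrects the training error of a single linear smoother. The argument splits into (i) an algebraic reduction to a matrix identity, (ii) two applications of Stein's formula combined with the chain rule on the iteration recursion, and (iii) concentration control of the remainders.

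First, writing $\by - \bX\hbb^s = \bep - \bX(\hbb^s - \bb^*)$ and expanding,
\begin{equation*}
    \hat r_t
    = \tfrac{1}{n}\sum_{s,s'}\hat w_{t,s}\hat w_{t,s'}\bigl\langle \by - \bX\hbb^s,\; \by - \bX\hbb^{s'}\bigr\rangle
    = \be_t^\top (\bI_T - \hbA/n)^{-1}\, \hbQ \,(\bI_T - \hbA/n)^{-\top}\be_t,
\end{equation*}
where $\hbQ \in \R^{T\times T}$ has entries $\hbQ_{s,s'}=\langle\by-\bX\hbb^s,\by-\bX\hbb^{s'}\rangle/n$. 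Introducing the target matrix $\bP^*\in\R^{T\times T}$ with entries $\bP^*_{s,s'}=(\hbb^s-\bb^*)^\top\bSigma(\hbb^{s'}-\bb^*)+\sigma^2$, so that $\bP^*_{t,t}=r_t$, the problem reduces to the entrywise approximation
\begin{equation*}
    \hbQ \;\approx\; (\bI_T-\hbA/n)\,\bP^*\,(\bI_T-\hbA/n)^\top \quad \text{in } L^1, \text{ with entrywise remainder of order } n^{-1/2}.
\end{equation*}
Each entry of $\hbQ$ decomposes into a pure-noise piece $\|\bep\|^2/n$, two noise-design cross pieces $\bep^\top\bX(\bb^*-\hbb^s)/n$, and a design quadratic form $(\hbb^s-\bb^*)^\top(\bX^\top\bX/n)(\hbb^{s'}-\bb^*)$.

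Second, I would invoke the chain rule on \eqref{eq:iterates_all_previous}: differentiating in $\by$ and using $\partial \bv^s/\partial\by^\top=\bX^\top/n-(\bX^\top\bX/n)\,\partial\hbb^s/\partial\by^\top$, the vertically stacked Jacobian $\bm\in\R^{pT\times n}$ with blocks $\partial\hbb^t/\partial\by^\top$ solves
\begin{equation*}
    \bigl(\bI_{pT}+\calD(\bI_T\otimes\bX^\top\bX/n)-\calJ\bigr)\,\bm=\calD\,(\mathbf{1}_T\otimes \bX^\top/n),
\end{equation*}
from which $\mathrm{tr}(\bX\,\partial\hbb^s/\partial\by^\top)$ equals the $s$-th row sum of the memory matrix $\hbA$ defined by \eqref{eq:hat-A}. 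Stein's identity for $\bep$ then turns each cross-term expectation into $\E[\bep^\top\bX(\hbb^s-\bb^*)]=\sigma^2\,\E[\sum_{s''}\hbA_{s,s''}]$, which accounts precisely for the linear-in-$\hbA$ part of $(\bI_T-\hbA/n)\bP^*(\bI_T-\hbA/n)^\top$. A parallel application of Gaussian integration by parts in $\bX$, combined with the chain rule for $\partial\hbb^s/\partial\bX$, replaces $(\hbb^s-\bb^*)^\top(\bX^\top\bX/n)(\hbb^{s'}-\bb^*)$ by the population counterpart $(\hbb^s-\bb^*)^\top\bSigma(\hbb^{s'}-\bb^*)$ plus correction terms that factor through the same matrix $(\bI_{pT}+\calD(\bI_T\otimes\bX^\top\bX/n)-\calJ)^{-1}\calD$ used to define $\hbA$, and these are precisely the bilinear-in-$\hbA$ pieces required to close the expansion of $(\bI_T-\hbA/n)\bP^*(\bI_T-\hbA/n)^\top$. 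Conjugating by $(\bI_T-\hbA/n)^{-1}$ on both sides then isolates $\bP^*_{t,t}=r_t$.

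The main technical obstacles, I expect, are twofold. (i) Making the two Stein steps quantitative in $L^1$ at the $n^{-1/2}$ rate requires Gaussian Poincar\'e estimates for Lipschitz functions of $(\bX,\bep)$ whose Lipschitz constants propagate through the composition of $T$ iterates; the block bounds $\opnorm{\bJ_{t,s}},\opnorm{\bD_{t,s}}\le\zeta$ implied by \Cref{assu:Lipschitz} control these, and the chain-rule pathology exemplified in \eqref{chainrule_lipschitz_example} is handled by the restricted-gradient convention of \cite[Corollary 3.2]{ambrosio1990general}, as the paper already anticipates. (ii) Controlling $\opnorm{(\bI_T-\hbA/n)^{-1}}$ uniformly uses crucially that $\hbA$ is strictly lower triangular, so $(\bI_T-\hbA/n)^{-1}=\sum_{k=0}^{T-1}(\hbA/n)^k$ is a finite Neumann sum; each entry of $\hbA/n$ can be bounded by combining $\opnorm{\bX^\top\bX/n}\le C(\gamma,\kappa)$ on a high-probability event (by \Cref{assu:design,assu:regime}) with the $\zeta$-Lipschitz control on $\calJ$ and $\calD$, yielding $\opnorm{(\bI_T-\hbA/n)^{-1}}\le C(\zeta,T,\gamma,\kappa)$. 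Combined with $\mathrm{var}(y_1)=\|\bSigma^{1/2}\bb^*\|^2+\sigma^2$ as the natural scale of the target, this delivers the stated $C(\zeta,T,\gamma,\kappa)\,\mathrm{var}(y_1)\,n^{-1/2}$ bound.
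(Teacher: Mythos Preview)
Your high-level strategy matches the paper's: reduce to the matrix inequality $\bF^\top\bF/n \approx (\bI_T-\hbA/n)(\bH^\top\bSigma\bH+\sigma^2\bd1_T\bd1_T^\top)(\bI_T-\hbA/n)^\top$ in $L^1$, establish it by Gaussian integration by parts combined with the chain-rule Jacobians of the iterates, and control $\opnorm{(\bI_T-\hbA/n)^{-1}}$ via the finite Neumann sum. The Stein-in-$\bep$ step for the cross terms and the Neumann bound are exactly as in the paper.

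The gap is in your treatment of the design-quadratic piece. When you apply Stein in $\bX$ to $\bH^\top(\bX^\top\bX/n)\bH$ (or more precisely to $\bF^\top\bF = \bF^\top[\bX,\bep/\sigma][-\bH^\top,\sigma\bd1_T]^\top$), the product-rule correction coming from $\partial\bH/\partial x_{ij}$ does \emph{not} produce $\hbA$: it produces the $p$-summed partial trace $\hbC=\sum_{j=1}^p(\bI_T\otimes\be_j^\top)\calM^{-1}\calD(\bI_T\otimes\be_j)$. What the first-order Stein step actually delivers is the asymmetric relation $\bF^\top\bF(\bI_T+\hbC/n)^\top \approx (n\bI_T-\hbA)(\bH^\top\bH+\bS)$, not the symmetric form you need. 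Your sentence ``these are precisely the bilinear-in-$\hbA$ pieces required to close the expansion'' is therefore where the real work hides: one must additionally show $(\bI_T+\hbC/n)\bF^\top\bF(\bI_T+\hbC/n)^\top \approx n(\bH^\top\bH+\bS)$, which the paper obtains via a separate \emph{second-order} (chi-square type) Stein identity applied to $\bV^\top\bZ^\top\bZ\bV$ with $\bV=[-\bH^\top,\sigma\bd1_T]^\top$. Only after combining these two approximations algebraically does the target relation, symmetric in $(\bI_T-\hbA/n)$, fall out. So your outline is directionally correct but underestimates this step: a single ``parallel application of Gaussian integration by parts in $\bX$'' is not enough, and you should plan for the auxiliary matrix $\hbC$ and a quadratic-form Stein identity in addition to the linear one.
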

\Cref{thm:rt} establishes that $\hat r_t$ is $\sqrt n$-consistent when
$\var(y_1),\zeta,T,\gamma,\kappa$ are constant as $n,p\to+\infty$.
The risk estimate $\hat r_t$ for $\hbb^t$ is determined by both the current residual vector $\by - \bX \hbb^t$ and all preceding residual vectors $\{\by - \bX \hbb^s\}_{s=1, ..., t-1}$ through the weighted sum
inside the mean squared norm in \eqref{eq:hat-rt}.
These weights
$(\hat w_{t,1}, \ldots, \hat w_{t,t})$ are the first $t$ entries of the $t$-th row of the matrix $(\bI_T - \hbA/n)^{-1}$.
The proof of \Cref{thm:rt} is given in \Cref{sec:proof-thm-rt}.

By construction of the matrices $\hbA$
and $(\bI_T - \hbA/n)^{-1}$,
the definition of these weights are not influenced by the total number of iterations $T$ (\eg, the weights $(\hat w_{s,t})_{s\le t}$ for any $t\le T$
are the same if the total number of iterations is $T$ or any $T'>T$).
In other words, opting for a larger $T$ simply enlarges the dimensions of the matrix $\hbA$ and the inverse matrix $(\bI_T - \hbA/n)^{-1}$, yet the first $t$ entries in the $t$-th row of $(\bI_T - \hbA/n)^{-1}$ remain unchanged.
The necessity of a total number of iteration, $T$, is arguably
artificial; an equivalent presentation would increase the size
of $\hbA$ and $(\bI_T - \hbA/n)^{-1}$ by one row and one column
at every new iteration.
We opted for a fixed total number of iterations $T$ with
matrices $\hbA$ and $(\bI_T - \hbA/n)^{-1}$ having fixed size $T\times T$
for clarity,
to avoid dealing with matrices changing in size.

\begin{remark}[Risk of initialization]
    For $t=1$, $\hat w_{t,s} = \I(s=1)$ since $(\bI_T - \hbA/n)^{-1}$ is lower triangular
    with diagonal entries all equal to 1.
    As a sanity check,
    we obtain $\hat r_1 = n^{-1} \|\by - \bX\hbb^1\|^2$
    from \eqref{eq:hat-rt}, which is an unbiased estimate
    of the prediction error of the initialization $\hbb^1$
    since $\hbb^1$ is independent of $(\bX,\by)$.
\end{remark}

Besides the individual iterate $\hbb^t$, one might also consider the estimate $\bar\bb$ defined as the average of $m$ consecutive iterates starting from $\hbb^{t_0}$, \ie, $\bar\bb = \frac{1}{m}\sum_{t=t_0}^{t_0+m-1} \hbb^t$. 
The prediction risk of
the estimate 
$\bar\bb$ is then 
\begin{equation*}
    \sigma^2 + \|\bSigma^{1/2}(\bar\bb-\bb^*)\|^2
    =
    \frac{1}{m^2}
    \sum_{t=t_0}^{t_0+m-1} 
    \sum_{t'=t_0}^{t_0+m-1} 
    \Bigl[\sigma^2 + (\hbb^t-\bb^*)^\top\bSigma(\hbb^{t'}-\bb^*)\Bigr],
\end{equation*}
simply by expanding the square.
Estimation of the generalization error of $\bar\bb$ is thus possible provided
that we can estimate the terms
\begin{equation}
    \label{eq:rtt}
    r_{tt'} \defas \sigma^2 + (\hbb^t-\bb^*)^\top\bSigma(\hbb^{t'}-\bb^*)
\end{equation}
for each $t,t'$ inside the double sum.
The following result derives a $\sqrt n$-consistent estimate
of $r_{tt'}$, using the same weights as in \Cref{thm:rt}.

\begin{theorem}[Proof is given in \Cref{sec:proof-thm-generalization-error}]
    \label{thm:generalization-error}
    Let \Cref{assu:design,assu:noise,assu:regime,assu:Lipschitz} be fulfilled. 
    For two integers \( t,t' \le T \) define the estimate of $r_{tt'}$ as 
    \begin{equation}
        \hat r_{tt'} = 
        \frac1n
        \Bigl(
        \sum_{s=1}^t \hat w_{t,s} \bigl(\by - \bX \hbb^s\bigr)
        \Bigr)^\top
        \Bigl(
            \sum_{s'=1}^{t'} \hat w_{t',s'} \bigl(\by - \bX \hbb^{s'}\bigr)
        \Bigr),
        \label{eq:hat_r_tt'}
    \end{equation}
    where $\hat w_{a,b} = \be_a^\top (\bI_T - \hbA/n)^{-1}\be_b$ for all \( a,b\in [T] \) as in \Cref{thm:rt}.
    Then
    \begin{align*}
        \E |\hat r_{tt'} - r_{tt'}|
        + \var(r_{tt'})^{1/2}
        \le n^{-\frac 12} C(\zeta, T, \gamma, \kappa) \var(y_1).
    \end{align*}
    \end{theorem}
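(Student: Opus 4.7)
The plan is to reduce the bilinear quantity $\hat r_{tt'} - r_{tt'}$ to the diagonal setting of \Cref{thm:rt} via polarization. Set $\bu^t := \sum_{s\le t}\hat w_{t,s}(\by - \bX\hbb^s) \in \R^n$, so that $\hat r_t = n^{-1}\|\bu^t\|^2$ and $\hat r_{tt'} = n^{-1}\langle \bu^t, \bu^{t'}\rangle$. Using the polarization identities
$$2\langle \bu^t, \bu^{t'}\rangle = \|\bu^t\|^2 + \|\bu^{t'}\|^2 - \|\bu^t - \bu^{t'}\|^2 \quad \text{and} \quad 2r_{tt'} = r_t + r_{t'} - \|\bSigma^{1/2}(\hbb^t - \hbb^{t'})\|^2,$$
one obtains
$$2(\hat r_{tt'} - r_{tt'}) = (\hat r_t - r_t) + (\hat r_{t'} - r_{t'}) - \Bigl(n^{-1}\|\bu^t - \bu^{t'}\|^2 - \|\bSigma^{1/2}(\hbb^t - \hbb^{t'})\|^2\Bigr).$$
\Cref{thm:rt} controls the first two summands in $L^1$ by $n^{-1/2}\var(y_1) C(\zeta, T, \gamma, \kappa)$.

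The third summand requires an additional ingredient. Observe that $\bu^t - \bu^{t'} = \sum_s(\hat w_{t,s} - \hat w_{t',s})(\by - \bX\hbb^s)$ is a weighted residual of exactly the same form as $\bu^t$, with the weight vector obtained by replacing $\be_t$ with $\ba := \be_t - \be_{t'}$ in the formula $\hat w_{t,s} = \be_t^\top(\bI_T - \hbA/n)^{-1}\be_s$. The argument behind \Cref{thm:rt} should extend without change to show that for any $\ba \in \R^T$,
$$\E\,\Bigl|\,n^{-1}\Bigl\|\sum_s \bigl(\ba^\top(\bI_T - \hbA/n)^{-1}\be_s\bigr)(\by - \bX\hbb^s)\Bigr\|^2 - \sigma^2\|\ba\|^2 - \Bigl\|\bSigma^{1/2}\sum_s a_s(\hbb^s - \bb^*)\Bigr\|^2\,\Bigr| \le n^{-\tfrac12}\|\ba\|^2\var(y_1) C,$$
of which \Cref{thm:rt} is the instance $\ba = \be_t$ (giving the target $\sigma^2 + \|\bSigma^{1/2}(\hbb^t - \bb^*)\|^2 = r_t$). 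Applied to $\ba = \be_t - \be_{t'}$, one has $\|\ba\|^2 = 2$ and $\sum_s a_s(\hbb^s - \bb^*) = \hbb^t - \hbb^{t'}$, so $n^{-1}\|\bu^t - \bu^{t'}\|^2$ estimates $2\sigma^2 + \|\bSigma^{1/2}(\hbb^t - \hbb^{t'})\|^2$; the extraneous $2\sigma^2$ is exactly canceled by the $\sigma^2$ offsets inside $r_t + r_{t'}$, and assembling the three $L^1$ bounds yields $\E|\hat r_{tt'} - r_{tt'}| \le n^{-1/2}\var(y_1) C(\zeta, T, \gamma, \kappa)$.

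For $\var(r_{tt'})^{1/2}$, I would invoke the Gaussian Poincar\'e inequality applied to the smooth map $(\bX, \bep) \mapsto r_{tt'} = \sigma^2 + (\hbb^t - \bb^*)^\top \bSigma(\hbb^{t'} - \bb^*)$. Using \Cref{assu:Lipschitz}, a routine induction on $t$ shows that each iterate $\hbb^t$ is Lipschitz in $(\bX, \bep)$ (on a high-probability set controlling $\opnorm{\bX/\sqrt n}$) with a Lipschitz constant depending only on $\zeta, T, \gamma, \kappa$. Combined with $\opnorm{\bSigma} \le \kappa$, the bilinear form $(\hbb^t - \bb^*)^\top \bSigma(\hbb^{t'} - \bb^*)$ has effective Lipschitz constant of order $\var(y_1)^{1/2} C(\zeta, T, \gamma, \kappa)$, so Gaussian Poincar\'e gives $\var(r_{tt'}) \le n^{-1}\var(y_1)^2 C(\zeta, T, \gamma, \kappa)^2$. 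The main obstacle is the extension step in the second paragraph: one must verify that the proof of \Cref{thm:rt} genuinely goes through for arbitrary weight vectors $\ba$ rather than just canonical basis vectors, while tracking the noise contribution precisely as $\sigma^2\|\ba\|^2$. If the underlying Stein/divergence identity is linear in the weight vector, this is mechanical; otherwise the Stein corrections need re-examination.
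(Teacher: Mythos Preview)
Your polarization strategy has a structural gap: in the paper, \Cref{thm:rt} is not proved independently but is \emph{derived} from \Cref{thm:generalization-error} as the diagonal case $t=t'$. The substantive work is a single matrix-level bound
\[
\E\bigl\|\bH^\top\bSigma\bH + \bS - (\bI_T-\hbA/n)^{-1}(\bF^\top\bF/n)(\bI_T-\hbA/n)^{-\top}\bigr\|_{\rm F} \le n^{-1/2}C\,\var(y_1),
\]
with $\bS=\sigma^2\bd1_T\bd1_T^\top$, established directly via two Stein-type identities (one relating $\bF^\top\bF$ to $(n\bI_T-\hbA)(\bH^\top\bH+\bS)$, the other relating $\bH^\top\bH+\bS$ to $(\bI_T+\hbC/n)\bF^\top\bF(\bI_T+\hbC/n)^\top$ for an auxiliary matrix $\hbC$), followed by an algebraic combination. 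So the ``extension of the proof of \Cref{thm:rt} to arbitrary $\ba$'' you flag as the main obstacle is not a mechanical check---it \emph{is} the full content of the theorem, and your reduction to the diagonal case is circular.

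There is also a concrete error in your extended claim. The noise contribution is not $\sigma^2\|\ba\|^2$ but $\ba^\top\bS\ba=\sigma^2(\bd1_T^\top\ba)^2$, since $\bS$ is the rank-one matrix $\sigma^2\bd1_T\bd1_T^\top$, not $\sigma^2\bI_T$. (For $\ba=\be_t$ these coincide, which is why the discrepancy is invisible in \Cref{thm:rt}.) With your incorrect term, the ``extraneous $2\sigma^2$'' does \emph{not} cancel anywhere in your polarization identity---the identity $2r_{tt'}=r_t+r_{t'}-\|\bSigma^{1/2}(\hbb^t-\hbb^{t'})\|^2$ is already exact and leaves no room for absorbing an extra $2\sigma^2$. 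With the correct term, $\ba=\be_t-\be_{t'}$ gives $(\bd1_T^\top\ba)^2=0$, so the third summand is directly small and no cancellation is needed. Thus the polarization route can be made to work, but only once you have the matrix-level Frobenius bound above, at which point polarization is redundant: the $(t,t')$ entry of that bound \emph{is} $|\hat r_{tt'}-r_{tt'}|$.

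Your variance argument via Gaussian Poincar\'e is essentially what the paper does: it bounds $\var(r_{tt'})$ by the full $\E\|\bH^\top\bSigma\bH-\E[\bH^\top\bSigma\bH]\|_{\rm F}^2$ and controls the latter by Poincar\'e applied entrywise, combined with operator-norm bounds on the Jacobians of $\bH$.
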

    
\subsection{An oracle inequality for early stopping}
Since \Cref{thm:rt} provides upper bounds of the form
$\E |\hat r_t - r_t| \le n^{-1/2} C(\zeta,T,\gamma,\kappa) \var(y_1)$.
A straightforward application of Markov's inequality
yields the following.

\begin{corollary}[Proof is given in \Cref{proof:cor:early}]
    \label{cor:early}
    Let \Cref{assu:design,assu:noise,assu:regime,assu:Lipschitz} be fulfilled.
    Select an iteration $\hat t\in[T]$ by
    minimizing \eqref{eq:hat-rt}, that is,
    $\hat t = \argmin_{t\in[T]}\|\sum_{s\le t}\hat w_{t,s}(\by-\bX\hbb^s)\|^2/n$.
    Then for any constant $c\in(0,1/2)$,
    \begin{equation*}
        \P\Bigl(
        \|\bSigma^{1/2}(\hbb^{{\hat t}}-\bb^*)\|^2
        \le
        \min_{s\in[T]}
        \|\bSigma^{1/2}(\hbb^{s}-\bb^*)\|^2
        + \frac{\var(y_1)}{n^{1/2-c}}
        \Bigr) \ge 1 - 
        \frac{C(\zeta,\gamma,T,\kappa)}{n^c}
        \to 1.
    \end{equation*}
\end{corollary}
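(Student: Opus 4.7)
The plan is to combine \Cref{thm:rt} with a union bound and Markov's inequality to control the deviation of $\hat r_t$ from $r_t$ uniformly in $t \in [T]$, then use the standard ``oracle inequality'' argument: since $\hat t$ minimizes $\hat r_t$ and $\hat r_t$ is close to $r_t = \|\bSigma^{1/2}(\hbb^t-\bb^*)\|^2 + \sigma^2$, the quantity $r_{\hat t}$ cannot exceed $\min_{s\in[T]} r_s$ by more than twice the uniform deviation.

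\textbf{Step 1 (uniform deviation bound).} Fix $c\in(0,1/2)$ and set $\epsilon_n = \var(y_1)/(2 n^{1/2-c})$. For each $t\in[T]$, \Cref{thm:rt} yields $\E|\hat r_t - r_t| \le n^{-1/2} C(\zeta,T,\gamma,\kappa)\var(y_1)$. Markov's inequality combined with a union bound over $t\in[T]$ gives
\begin{equation*}
\P\Bigl(\max_{t\in[T]} |\hat r_t - r_t| > \epsilon_n\Bigr)
\le \sum_{t=1}^T \frac{\E|\hat r_t - r_t|}{\epsilon_n}
\le \frac{2 T\, C(\zeta,T,\gamma,\kappa)}{n^{c}}
= \frac{C'(\zeta,\gamma,T,\kappa)}{n^c},
\end{equation*}
where $C'$ absorbs the factor $2T$ into a constant that depends only on $(\zeta,\gamma,T,\kappa)$.

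\textbf{Step 2 (oracle inequality on the good event).} Let $s^\star \in \argmin_{s\in[T]} r_s$, which by \eqref{eq:rt} is also a minimizer of $\|\bSigma^{1/2}(\hbb^s - \bb^*)\|^2$ since $\sigma^2$ is constant in $s$. On the event $\{\max_t|\hat r_t - r_t| \le \epsilon_n\}$, using the definition of $\hat t$ as the minimizer of $\hat r_\cdot$,
\begin{equation*}
r_{\hat t} \le \hat r_{\hat t} + \epsilon_n \le \hat r_{s^\star} + \epsilon_n \le r_{s^\star} + 2\epsilon_n.
\end{equation*}
Subtracting $\sigma^2$ from both sides and using $2\epsilon_n = \var(y_1)/n^{1/2-c}$ yields
\begin{equation*}
\|\bSigma^{1/2}(\hbb^{\hat t}-\bb^*)\|^2
\le \min_{s\in[T]} \|\bSigma^{1/2}(\hbb^{s}-\bb^*)\|^2 + \frac{\var(y_1)}{n^{1/2-c}}.
\end{equation*}

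\textbf{Step 3 (conclusion).} Combining the two steps, the displayed event in the statement contains the good event from Step 1, so its probability is at least $1 - C'(\zeta,\gamma,T,\kappa)/n^c$, which tends to $1$ as $n\to\infty$. There is no real obstacle here since the bulk of the work is already done in \Cref{thm:rt}; the only point requiring a little care is tracking that the union bound over $t\in[T]$ only inflates the constant by a factor of $T$ (which is absorbed into $C(\zeta,T,\gamma,\kappa)$) and choosing $\epsilon_n$ of the right order $n^{-(1/2-c)}$ so that Markov's inequality yields the rate $n^{-c}$ for the failure probability.
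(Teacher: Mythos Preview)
Your proof is correct and follows essentially the same approach as the paper: both arguments reduce the oracle inequality to controlling $\max_{t\in[T]}|\hat r_t - r_t|$ via \Cref{thm:rt} and Markov's inequality. The only cosmetic difference is ordering---you first establish a high-probability uniform deviation bound and then argue deterministically on the good event, whereas the paper applies Markov directly to the nonnegative random variable $r_{\hat t}-\min_s r_s$ and bounds its expectation; both routes invoke the same ingredients and yield the same constant dependence.
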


This means that after running $T$ iterations, we can pick the iteration
$\hat t$ that minimizes the criterion \eqref{eq:hat-rt}, and this choice
achieves the smallest prediction error among the first $T$ iterates
up to a negligible error term.
This is appealing for the settings illustrated in \Cref{fig:LSE-GD,fig:LSE-AGD-1500,fig:Lasso-ISTA-1500,fig:Lasso-FISTA-1500,fig:MCP-LQA-1500}
where the generalization error of the iterates $\hbb^t$ is first decreasing
in $t$ up to some $t_*$, before increasing for $t\ge t_*$.
The above selection of $\hat t$ guarantees that the risk of $\hbb^{\hat t}$ is close to that of $\hbb^{t_*}$.

Given $K_n$ candidate sequences of nonlinear functions, say
$(\bg_{t}^{k})_{t\in [T]}$ for each $k\in[K_n]$, a similar argument
using Markov's inequality yields
$$
\|\bSigma^{1/2}(\hbb^{{\hat t}}-\bb^*)\|^2
\le
\min_{k\in [K_n]}
\min_{s\in[T]}
\|\bSigma^{1/2}(\hbb^{s}-\bb^*)\|^2
+ \var(y_1) o_P(1)
$$
if $(\zeta,\gamma,T,\kappa)$ are constants as $n,p\to+\infty$
and $K_n=o(\sqrt n)$.

\subsection{Alternative weights and covariance-adaptive weights}
The proofs of \Cref{thm:rt,thm:generalization-error} reveal
that if $\bSigma=\bI_p$, the weights $\hat w_{t,s}$
in \eqref{eq:hat-rt} and \eqref{eq:hat_r_tt'} can be replaced
with the alternative weights
\begin{equation*}
\begin{aligned}
    \check w_{t,s} &= 
    \mathbb I\{t=s\}
    + {\frac1n} \trace\Bigl[(\be_t^\top\otimes \bI_p)\bigl(\bI_{pT} + \calD (\bI_T \otimes \tfrac{\bX^\top \bX}{n}) - \calJ\bigr)^{-1} \calD (\be_s\otimes \bI_p)\Bigr]
                 \\&=
    \mathbb I\{t=s\}
    + {\frac1n}
    \sum_{j=1}^p
    (\be_t^\top \otimes \be_j^\top)\bigl(\bI_{pT} + \calD (\bI_T \otimes \tfrac{\bX^\top \bX}{n}) - \calJ\bigr)^{-1}\calD (\be_s\otimes \be_j)
\end{aligned}
\text{ (for $\bSigma=\bI_p$).}
\end{equation*}
Indeed, these alternative weights
$\check w_{t,s}$ are the $(t,s)$ entries of the matrix $\bI_T + \hbC/n$ in the proof of \Cref{thm:generalization-error} with $\hbC\in \R^{T\times T}$ defined in
\eqref{hbC}.
These weights satisfy
\begin{equation}
    \E\Bigl[\sum_{s=1}^T 
        \frac 1 n
    \Big\|
    \sum_{s=1}^t (\hat w_{t,s} - \check w_{t,s}) (\by-\bX\hbb^s)
    \Big\|^2
    \Bigr] \le C(\zeta,T,\gamma) n^{-1/2},
    \qquad
    \text{ when }\bSigma= \bI_p,
\end{equation}
by \eqref{eq:bound_alternative_weights} in the appendix. This means that
$\hat w_{t,s}$
and $\check w_{t,s}$ can be used interchangeably
as weights for
$(\by-\bX\hbb^s)_{s\le T}$ in the definition of $\hat r_t$ when $\bSigma = \bI_p$.

This interchangeability is lost as soon as $\bSigma\ne \bI_p$.
In this case, the alternative weights $\check w_{t,s}$ have expression
\begin{equation}
    \check w_{t,s} = 
    \mathbb I\{t=s\}
    + {\tfrac1n}\trace\bigl[(\be_t^\top\otimes \bSigma^{1/2})
    \bigl(\bI_{pT} + \calD (\bI_T \otimes \tfrac{\bX^\top \bX}{n}) - \calJ\bigr)^{-1}
    \calD (\be_s\otimes \bSigma^{1/2})\bigr]
\end{equation}
using the correspondence \eqref{relation_M_M^*_D_D^*}.
The alternative weights $\check w_{t,s}$ thus require the knowledge
of $\bSigma$. This is the main reason we presented the main results
using the weights $\hat w_{t,s}$ and the memory matrix $\hbA$:
The expressions of $\hat w_{t,s}$ and $\hbA$ do not 
require prior knowledge of  $\bSigma$ or estimating $\bSigma$ from the data.
The weights $\hat w_{t,s}$ are thus preferable, and more broadly
applicable than $\check w_{t,s}$.
This duality between two interchangeable scalar adjustments,
one requiring the knowledge of $\bSigma$ and one
not requiring it,
was already observed in regularized M-estimation
\cite[Section 5]{bellec2021derivatives}.

\subsection{Iterative debiased estimation}
This subsection focuses on asymptotic normality and 
statistical inference for the entries of $\bb^*$ using iterate $\hbb^t$.
The next theorem shows that the following debiased estimate,
\begin{equation}
    \label{debiased_j}
    \hbb^{t, \rm{debias}}_j:=
    \underbrace{\hbb^t_j}_{\text{iterate}}
    ~+~
    \underbrace{n^{-1} 
    \sum_{s=1}^t \hat w_{t,s} \bigl(\by - \bX \hbb^s\bigr)^\top
    \bX \bSigma^{-1} \be_j}_{\text{bias correction}}
    ,
\end{equation}
is approximately normally distributed and centered at $\bb_j^*$.
Above, the observable
weights $\hat w_{t,s}$ are the same as in \Cref{thm:rt}.
The approximate normality below is quantified using the 2-Wasserstein distance
between probability distributions in $\R^T$, defined as 
$W_2(\mu, \nu)=\inf_{(\bu,\bw)}\E[\|\bu-\bv\|^2]^{1/2}$
where the infimum is taken over all couplings $(\bu,\bw)$ of the two probability measures $(\mu,\nu)$.
We refer to \cite[Definition 6.8]{villani2009optimal} for several
characterizations of convergence with respect to $W_2$.

The following result is understood asymptotically as $n,p\to+\infty$.
Implicitly, we assume that a sequence of regression problems
\eqref{eq:linear-model} is given, together with nonlinear functions $\bg_t$ in \eqref{eq:iterates_all_previous}. It is implicit that $n$ serves as the index of the sequence,
and $p=p^{(n)}$, $\bb^*{}^{(n)}$, $\bg_t{}^{(n)}$ all implicitly depend on $n$
and may change values as $n$ increases, as long as 
\Cref{assu:design,assu:Lipschitz,assu:noise,assu:regime} are 
fulfilled for every $n$. The constants $(T,\zeta,\gamma,\kappa,\sigma^2)$ 
do not depend on $n$.

\begin{theorem}
    \label{thm:debias}
    Let \Cref{assu:design,assu:Lipschitz,assu:noise,assu:regime}
    be fulfilled and assume that both
    $T$ and $\var(y_1)$ are bounded from above
    by a fixed constant as \( n,p\to+\infty \).
    Then there exists a random vector \( \bzeta_j \)
        in $\R^T$ with
        \begin{equation}
            \label{convergence_Zeta_j_W2_to_0}
            \max_{j\in[p]}
            W_2\Bigl( \mathsf{Law}(\bzeta_j), ~~ \mathsf{N}\Bigl(\boldzero_T, \bigl(\E[r_{tt'}]\bigr)_{(t,t')\in[T]\times [T]}\Bigr) \Bigr)
        \to 0
        \end{equation}
        such that for each $t \in[T]$,
        \begin{equation}
    \sum_{j=1}^p
    \E\Bigl[
    \Bigl(
        \sqrt{\frac{n}{\|\bSigma^{-1/2} \be_j\|^2}}
        \Bigl(\hbb^{t, \rm{debias}}_j - \bb^*_j\Bigr)
        - \bzeta_{jt}
    \Bigr)^2
    \Bigr] \le {C(\zeta, T, \kappa, \gamma) \var(y_1)}.
        \label{eq:debias-normality}
        \end{equation}
\end{theorem}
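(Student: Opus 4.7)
The plan is to write the rescaled debiased estimator explicitly in terms of independent Gaussian inputs, apply Gaussian integration by parts to cancel the bias, and then recognize the remainder as a linear Gaussian functional of the residual vector $\br_t := \sum_{s=1}^t \hat w_{t,s}(\by - \bX\hbb^s)$. Substituting $\by = \bX\bb^*+\bep$, the quantity of interest becomes
\begin{equation*}
\hbb^{t,\mathrm{debias}}_j - \bb^*_j = (\hbb^t_j - \bb^*_j) + n^{-1} \br_t^\top \bX \bSigma^{-1}\be_j.
\end{equation*}
Theorem \ref{thm:generalization-error} already provides $n^{-1}\br_t^\top \br_{t'} \approx r_{tt'}$, so the candidate covariance is in place. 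What must be proved is that the above expression behaves, up to a remainder of order $n^{-1/2}$ in $L^2$ summed over $j$, like $n^{-1/2}\|\bSigma^{-1/2}\be_j\|$ times a Gaussian linear functional of $\br_t$ that is independent of $\br_t$ after a suitable decomposition.

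I would introduce a Gaussian decomposition that isolates the direction $\bu_j := \bSigma^{-1/2}\be_j/\|\bSigma^{-1/2}\be_j\|$ inside $\bX$. Writing $\bX = \bG\bSigma^{1/2}$ with $\bG$ having i.i.d.\ $\mathsf{N}(0,1)$ entries, split $\bG = \boldsymbol\alpha \bu_j^\top + \tilde{\bG}(\bI_p - \bu_j\bu_j^\top)$ where $\boldsymbol\alpha := \bG\bu_j \sim \mathsf{N}(\boldzero_n, \bI_n)$ is independent of the complementary Gaussian matrix $\tilde{\bG}(\bI_p-\bu_j\bu_j^\top)$. The iterates $\hbb^s$ and the weights $\hat w_{t,s}$ depend on $\boldsymbol\alpha$ through the recursion \eqref{eq:iterates_all_previous}. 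Conditioning on $(\tilde{\bG}(\bI_p-\bu_j\bu_j^\top), \bep)$ and applying Stein's identity to the Lipschitz map $\boldsymbol\alpha \mapsto n^{-1}\|\bSigma^{-1/2}\be_j\|\br_t^\top \boldsymbol\alpha$ produces a trace correction. The chain rule applied through \eqref{eq:iterates_all_previous}, assembled according to the block structure of $\calJ,\calD$ in \eqref{eq:all_previous_calD_calJ}, shows that the resulting trace is exactly $n^{-1}$ times a quantity that matches the memory matrix $\hbA$ from \eqref{eq:hat-A}. The algebraic identity ``$\hat w_{t,s}$ is the $(t,s)$ entry of $(\bI_T - \hbA/n)^{-1}$'' is engineered precisely so that this trace correction cancels $(\hbb^t_j - \bb^*_j)$ up to a $O(n^{-1/2})$ remainder, paralleling the cancellation that drives Theorem \ref{thm:rt}.

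After the cancellation, the rescaled debiased estimator equals $n^{-1/2}\br_t^\top \boldsymbol\alpha^\perp$ plus a vanishing remainder, where $\boldsymbol\alpha^\perp$ is a standard Gaussian in $\R^n$ independent of $\br_t$ after the decomposition. Conditioning on $\br_t$, the vector $\bigl(n^{-1/2}\br_t^\top\boldsymbol\alpha^\perp\bigr)_{t\in[T]}$ is exactly Gaussian in $\R^T$ with covariance $(n^{-1}\br_t^\top\br_{t'})_{t,t'}$, which converges to $(\E[r_{tt'}])_{t,t'}$ by Theorem \ref{thm:generalization-error}. Taking $\bzeta_{jt} := n^{-1/2}\br_t^\top \boldsymbol\alpha^\perp$ as the auxiliary Gaussian furnishes \eqref{convergence_Zeta_j_W2_to_0} (the $W_2$ bound coming from the concentration of the conditional covariance around its expectation) and the $L^2$ bound \eqref{eq:debias-normality} from the remainder analysis.

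The hardest step will be the Stein integration-by-parts identity. Although the iterates are Lipschitz by Assumption \ref{assu:Lipschitz}, the ordinary chain rule does not hold at every point and must be replaced by the modified chain rule from \cite{ambrosio1990general} discussed around \eqref{chainrule_lipschitz_example}, so all Jacobian blocks $\bJ_{t,s},\bD_{t,s}$ must be understood in that modified sense. Tracking these blocks through the cascaded recursion and showing that the cumulative trace equals $n^{-1}\hat w_{t,s}$ times natural scalars is the core algebraic obstacle. A secondary difficulty is making the bound uniform over $j\in[p]$ in \eqref{convergence_Zeta_j_W2_to_0}: this requires summing the $L^2$ remainder bounds over $j$ and exploiting $\sum_j\|\bSigma^{-1/2}\be_j\|^2 = \trace(\bSigma^{-1}) \le \kappa p$, together with stability of the inverse $(\bI_T - \hbA/n)^{-1}$, which follows because $\hbA$ is strictly lower triangular so $(\bI_T-\hbA/n)^{-1}$ is always well defined with unit diagonal entries.
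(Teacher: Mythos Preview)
Your high-level strategy coincides with the paper's: decompose $\bX$ along the direction $\bu_j=\bSigma^{-1/2}\be_j/\|\bSigma^{-1/2}\be_j\|$ (your $\boldsymbol\alpha$ is exactly the paper's $\bz_j=\bX\bSigma^{-1}\be_j/\|\bSigma^{-1/2}\be_j\|$), then apply a second-order Stein / SOS-normality identity so that the divergence term, computed via the block chain rule encoded in $\calJ,\calD$, reproduces $-\be_j^\top\bH(n\bI_T-\hbA)^\top/\|\bSigma^{-1/2}\be_j\|$ and cancels the raw error. One difference worth noting: the paper applies SOS normality to the \emph{unweighted} residual matrix $\bF$ and only afterward left-multiplies by $\bL_n=(\bI_T-\hbA/n)^{-1}$. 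Your plan applies it to $\br_t=\sum_s\hat w_{t,s}\bF\be_s$ directly, which forces you to differentiate the random weights $\hat w_{t,s}$ with respect to $\boldsymbol\alpha$; the paper's ordering avoids this and only needs an operator-norm bound on $\bL_n$ after the Stein step.

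The genuine gap is in the $W_2$ step. Your statement ``$\bzeta_{jt}:=n^{-1/2}\br_t^\top\boldsymbol\alpha^\perp$ with $\boldsymbol\alpha^\perp$ a standard Gaussian independent of $\br_t$'' does not come out of the SOS identity. What SOS actually produces is $\bz_j^\top\bF(\tilde\bz_j)/\sqrt n$, with $\tilde\bz_j$ an independent copy; after left-multiplying by $\bL_n$ the auxiliary vector is $\bzeta_j=\bL_n\bF(\tilde\bz_j)^\top\bz_j/\sqrt n$. Here $\bz_j$ is independent of $\bF(\tilde\bz_j)$, but $\bL_n$ still depends on the original $\bz_j$, so $\bz_j$ is \emph{not} independent of $\bL_n\bF(\tilde\bz_j)^\top$ and your ``conditionally exactly Gaussian with covariance $n^{-1}\br_t^\top\br_{t'}$'' claim fails. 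The paper closes this gap by an additional argument you did not anticipate: extract a subsequence along which $\E[\bF^\top\bF/n]\to\bK$ and $\E[\bH^\top\bSigma\bH+\bS]\to\bar\bK$, take LDLT decompositions $\bK=\bL\bD\bL^\top$, $\bar\bK=\bar\bL\bar\bD\bar\bL^\top$, and show (using \Cref{thm:generalization-error}) that $(\bL_n-\bL_\infty)\bF^\top/\sqrt n\to 0$ in probability for the \emph{deterministic} matrix $\bL_\infty=\bar\bL\bL^{-1}$. Once $\bL_n$ is replaced by $\bL_\infty$, independence is restored and characteristic-function plus second-moment convergence give the $W_2$ limit. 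Without this replacement step, the $W_2$ claim \eqref{convergence_Zeta_j_W2_to_0} is not established.
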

The proof of \Cref{thm:debias} is given in \Cref{sec:proof-thm-debias}.
If $\var(y_1)$ stays bounded as $n,p\to+\infty$, the sum over $p$ entries
in the left-hand side of \eqref{eq:debias-normality} stays bounded, and at most a constant number
of entries may not converge to 0. This is made precise in the following corollary.

\begin{corollary} \label{cor:debias}
    Let \Cref{assu:design,assu:Lipschitz,assu:noise,assu:regime}
    be fulfilled and assume that 
    $T$, $\var(y_1)$ are bounded from above
    by a fixed constant as \( n,p\to+\infty \).
    Let $a_p\to+\infty$ be a slowly increasing sequence (\eg, $a_p=\log p$).
    There exists a subset $J_{n,p}$ of size at least $p- a_p$ 
    such that
    \begin{equation}
        \max_{j\in J_{n,p}}
        W_2\Bigl(
        \sqrt{\frac{n}{\|\bSigma^{-1/2} \be_j\|^2}}
        \bigl(\hbb^{t, \rm{debias}}_j - \bb^*_j\bigr)
        ,
        ~~
        \mathsf{N}\bigl(0,\E[r_t]\bigr)
        \Bigr)
    \to 0
    \label{eq:Zj}.
        \end{equation}
\end{corollary}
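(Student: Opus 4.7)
The plan is to combine the sum bound from \Cref{thm:debias} with a pigeonhole selection and then apply the triangle inequality for $W_2$.

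First I would denote, for each $j\in[p]$,
\[
Z_{jt} \defas \sqrt{\tfrac{n}{\|\bSigma^{-1/2}\be_j\|^2}}\bigl(\hbb^{t,\rm{debias}}_j - \bb^*_j\bigr),
\qquad
\Delta_j \defas \E\bigl[(Z_{jt}-\bzeta_{jt})^2\bigr].
\]
\Cref{thm:debias} gives $\sum_{j=1}^p \Delta_j \le C$ for some constant $C=C(\zeta,T,\kappa,\gamma)\var(y_1)$, which is bounded as $n,p\to\infty$ by the hypothesis on $\var(y_1)$.

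Next I would select $J_{n,p} \defas \{j\in[p] : \Delta_j \le C/a_p\}$. A one-line Markov bound gives $|\{j : \Delta_j > C/a_p\}| \cdot (C/a_p) \le \sum_j \Delta_j \le C$, so this exceptional set has cardinality at most $a_p$; hence $|J_{n,p}| \ge p - a_p$ and $\max_{j\in J_{n,p}}\Delta_j \le C/a_p \to 0$.

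For each $j\in J_{n,p}$, I would apply the triangle inequality for $W_2$:
\[
W_2\bigl(\mathsf{Law}(Z_{jt}),\mathsf{N}(0,\E[r_t])\bigr)
\le
W_2\bigl(\mathsf{Law}(Z_{jt}),\mathsf{Law}(\bzeta_{jt})\bigr)
+ W_2\bigl(\mathsf{Law}(\bzeta_{jt}),\mathsf{N}(0,\E[r_t])\bigr).
\]
The first term is controlled by the \emph{canonical coupling}: $Z_{jt}$ and $\bzeta_{jt}$ are defined on the same probability space, so $W_2^2 \le \Delta_j \le C/a_p \to 0$ uniformly on $J_{n,p}$. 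For the second term, I would use the fact that the coordinate projection $\R^T \to \R$, $\bx\mapsto x_t$, is $1$-Lipschitz and hence contracts $W_2$: pushing forward any coupling of $(\bzeta_j,\mathsf{N}_T(\boldzero,(\E[r_{tt'}])))$ through this projection and observing that $r_{tt}=\sigma^2+\|\bSigma^{1/2}(\hbb^t-\bb^*)\|^2=r_t$ (so the $t$-th diagonal of the target covariance is $\E[r_t]$) yields
\[
W_2\bigl(\mathsf{Law}(\bzeta_{jt}),\mathsf{N}(0,\E[r_t])\bigr)
\le
W_2\Bigl(\mathsf{Law}(\bzeta_j),\mathsf{N}_T(\boldzero_T,(\E[r_{tt'}])_{t,t'\in[T]})\Bigr),
\]
which tends to $0$ uniformly in $j\in[p]$ by \eqref{convergence_Zeta_j_W2_to_0}.

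Combining these two bounds shows $\max_{j\in J_{n,p}} W_2(\mathsf{Law}(Z_{jt}),\mathsf{N}(0,\E[r_t])) \to 0$, which is exactly \eqref{eq:Zj}. There is no real obstacle here — the content is entirely in \Cref{thm:debias} and \eqref{convergence_Zeta_j_W2_to_0}; the corollary is a clean Markov-plus-triangle-inequality argument whose only subtlety is noting that $W_2$ contracts under the $1$-Lipschitz coordinate projection so that the multivariate $W_2$ convergence passes to the $t$-th marginal.
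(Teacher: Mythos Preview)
Your proof is correct and follows essentially the same route as the paper's own: define $J_{n,p}$ by the threshold $\Delta_j\le C/a_p$, apply Markov to bound $|[p]\setminus J_{n,p}|\le a_p$, and use the triangle inequality for $W_2$ together with the natural coupling and \eqref{convergence_Zeta_j_W2_to_0}. Your explicit remark that the $1$-Lipschitz coordinate projection contracts $W_2$ (so the multivariate convergence in \eqref{convergence_Zeta_j_W2_to_0} passes to the $t$-th marginal) is a welcome clarification that the paper leaves implicit.
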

The proof of \Cref{cor:debias} is given in \Cref{sec:proof_cor_debias}. 
Convergence in 2-Wasserstein distance implies convergence in distributions 
\citep[Def. 6.8 and Theorem 6.9]{villani2009optimal}. If $\var(y_1)/\sigma^2$ is bounded as $n,p\to+\infty$,
\Cref{thm:generalization-error} further shows that
$\hat r_t$ consistently estimates $\E[r_t]$, so that
by Slutsky's theorem and \eqref{eq:Zj},
the z-score 
\begin{equation}
    \label{eq:z-score}
    \frac{\sqrt{n}(\hbb^{t, \rm{debias}}_j - \bb^*_j)}
    {
        \|\bSigma^{-1/2} \be_j\|
    \sqrt{\hat r_t}}
\end{equation}
converge to $\sf N(0,1)$ in distributions uniformly
over $j\in J_{n,p}$.
For those overwhelming majority
of components $j\in J_{n,p}$, \Cref{thm:generalization-error} and 
\Cref{cor:debias} thus provide the $1- \alpha$ confidence interval for $\bb_j^*:$
$$
\Big[\hbb_j^{t, \rm{debias}} 
- z_{\alpha/2}\sqrt{\frac{\hat r_t}{n}(\bSigma^{-1})_{jj}}, 
\quad 
\hbb^{t, \rm{debias}}
+ z_{\alpha/2}\sqrt{\frac{\hat r_t}{n}(\bSigma^{-1})_{jj}}
\Big],
$$
where $z_{\alpha/2}$ is the standard normal quantile defined by $\P(|\mathsf{N}(0,1)|\ge z_{\alpha/2})=\alpha$.

Asymptotic normality ``on average'' over coordinates, 
or only over a large subsets of coordinates, is typical in asymptotic
normality results in the proportional regime, see for instance
\cite{bayati2012lasso,sur2018modern,lei2018asymptotics,berthier2020state,celentano2020lasso,bellec2023debiasing}. 
Studying the remaining coordinates ($j\notin J_{n,p}$ above)
remains a challenge, and in some situations the remaining coordinates
exhibit a ``variance spike'' where the standard deviation
estimate in \eqref{eq:Zj} is incorrect and
the asymptotic variance of \eqref{eq:z-score} is bounded away from 1
\cite[Section 3.7]{bellec2023debiasing}.

One application of \Cref{cor:debias} is early rejection in hypothesis
testing. In order to perform a hypothesis test of the form
\begin{equation}
    H_{0j}:
    \bb_j^* = 0
    \qquad
    \text{ against }
    \qquad
    H_{1j}:
    \bb_j^* \ne 0,
\end{equation}
for the $j$-th component of the true regression vector $\bb^*$ in \eqref{eq:linear-model}.
Since the asymptotic normality of the z-score \eqref{eq:z-score}
is maintained throughout the iterations, one may perform
early tests, without waiting for the convergence of $\hbb^t$: 
reject $H_{0j}$ at the $t$-th iteration if the z-score under $H_{0j}$
is larger than the two-sided quantile of the normal distribution.
We may for instance perform this test at $t=10,10^2, 10^3$
with an appropriate Bonferroni multiple testing correction.
While the estimates $\hat r_{t}$ in \eqref{eq:hat-rt} and $\hat r_{tt'}$ in \eqref{eq:hat_r_tt'} do not require the knowledge of the covariance
matrix $\bSigma$, the construction of the debiased estimate
\eqref{debiased_j} requires
the knowledge of $\bSigma^{-1}\be_j$ or an estimate of it.
If unlabeled samples $(\bx_i)_{i=n+1,...,N}$ are available for instance,
$\bSigma^{-1}\be_j$ can be estimated by regressing the $j$-th column
onto the others.
We emphasize that the knowledge of $\bSigma^{-1}\be_j$ is only used
in the construction of debiased estimate \eqref{debiased_j},
and that the estimate $\hat r_t$ of the generalization error
is readily usable without any knowledge of $\bSigma$.

\section{Concrete examples}\label{sec:examples}
In this section, we present the analysis of a few popular algorithms aimed
at solving minimization problems of the form \eqref{eq:b-hat},
including the least-squares problem, Lasso and MCP. 
For the five algorithms present in the five subsections below,
we present explicit expressions for iteration functions $\bg_t$
as well as for the corresponding derivative matrices $\calD, \calJ$ and the memory matrix $\hbA$. For each algorithm, these expressions are then used to compute
 the risk estimate $\hat r_t$ in \eqref{eq:hat-rt} and the z-score in \eqref{eq:z-score}. 
To illustrate our theoretical results, we conduct extensive simulation studies comparing the proposed risk estimator $\hat r_t$ with the actual risk $r_t$ at each iteration $t$ and constructing normal quantile-quantile plots for  z-score defined in \eqref{eq:z-score} compared to the standard normal distribution.
We start this section with the simulation settings. 

\textbf{Simulation setup.}
We consider three distinct scenarios based on the relationship between the sample size and feature dimension, denoted as $(n, p)$, to generate
datasets from the linear model \eqref{eq:linear-model}. These scenarios are:
\begin{itemize}
    \item Over-parametrized regime: $(n, p) = (1200, 1500)$, in which the number of features surpasses the number of samples.
    \item Equal-parametrized regime: $(n, p) = (1200, 1200)$, characterized by an equal number of samples and features.
    \item Under-parametrized regime: $(n, p) = (1200, 500)$, where the number of samples exceeds the number of features.
\end{itemize}

For each scenario, the design matrix $\bX$ is generated from a multivariate normal distribution with zero mean and a covariance matrix $\bSigma = (\bSigma_{jk})_{j,k\in[p]}$, where $\bSigma_{jk} = 0.5^{|j-k|}$. The noise vector $\bep$ follows the standard normal distribution, namely, $\sigma^2 = 1$. 
The coefficient vector $\bb^*$ is chosen with $p/20$ nonzero entries, set to a constant value such that the signal-to-noise ratio $\|\bSigma^{1/2} \bb^*\|^2/\sigma^2$ equals 5.
For each algorithm in the following subsections, the initial vector is set to $\hbb^1 = \b0$, and the algorithm runs for $T = 500$ iterations,
except for GD, where $T = 3000$ is necessary to achieve convergence.
For each iterate $(\hbb^t)_{t\in[T]}$, we compute both the actual risk $r_t$ following \eqref{eq:rt} and our proposed risk estimator $\hat r_t$ as per \eqref{eq:hat-rt}, along with the proposed z-score defined in \eqref{eq:z-score} with $j=1$ (\ie, the first coordinate). 
Each simulation is replicated 100 times. 
We present the simulation results for $(n,p) = (1200, 1500)$ for each algorithm in the subsequent subsections, and leave the other $(n,p)$ pair scenarios to \Cref{sec:app-simulation} since their results are very similar to $(n,p) = (1200, 1500)$. Each simulation we tried confirmed the accuracy of 
the estimated risk $\hat r_t$ for estimating $r_t$ as well as the 
closeness of the z-scores \eqref{eq:z-score} to the standard normal distribution.

\subsection{Gradient descent}\label{sec:eg-GD}
Consider estimating $\bb^*$ by minimizing the squared loss function
$
\bff(\bb) = \frac{1}{2n} \|\by - \bX \bb\|^2. 
$
The gradient descent (GD) method finds the minimizer of $\bff(\bb)$ by iterations 
$$
\hbb^{t} = \hbb^{t-1} - \eta \nabla \bff(\hbb^{t-1}) = \hbb^{t-1} + \frac{\eta}{n} \bX^\top (\by - \bX \hbb^{t-1}), 
$$
with an initialization $\hbb^1\in \R^p$. 
Since the function 
$\bff$ is an $L$-smooth with $L= \opnorm{\bX}^2/n$, one can take fixed step size $\eta = \frac{1}{L}$. 
Using the definition $\bv^t = \frac{1}{n} \bX^\top (\by - \bX \hbb^{t})$, the iteration of GD can be written as 
\begin{equation}\label{eq:iteration-GD}
    \hbb^t = \bg_{t}(\hbb^{t-1}, \bv^{t-1})= \hbb^{t-1} + \eta \bv^{t-1} \text{ for } t\ge 2.
\end{equation}
Therefore, the function $\bg_t$ is Lipschitz continuous, and the matrices in \eqref{eq:pgd-J-D} are given by
\begin{equation}
    \bJ_{t,t-1} = \bI_p, 
    \quad 
    \bD_{t,t-1} = \eta \bI_p.
\end{equation}
Hence, for GD with iteration \eqref{eq:iteration-GD}, the expressions of $\calJ$ and $\calD$ in \eqref{eq:calD-J} become 
\begin{equation*}
    \calJ = 
    \begin{bmatrix}
        \b0 & \b0 & \cdots & \cdots & \b0 \\
        \bI_p & \b0 & \ddots & & \vdots \\
        \b0 & \bI_p & \b0 & \ddots & \vdots \\
        \vdots & \ddots & \ddots & \ddots & \b0 \\
        \b0 & \cdots & \b0 & \bI_p & \b0 \\
    \end{bmatrix}     
    = \bL \otimes \bI_p
, \qquad
\calD = 
\eta
\calJ =
\eta 
    (\bL \otimes \bI_p),
\end{equation*}
where $\bL\in\R^{T\times T}$ is the strictly lower triangular matrix
$\bL = \sum_{t=2}^{T} \be_t \be_{t-1}^\top$.
Now we proceed to derive the expression for each entry of $\hbA$ in \eqref{eq:hat-A} for GD. 
First, note that $\calD = \eta\calJ$, so  that
\begin{align*}
    \calJ - \calD (\bI_T \otimes \tfrac{\bX^\top \bX}{n}) 
    =
    \bL \otimes (\bI_p - \eta \bX^\top\bX /n).
\end{align*}
Since $\bL$ is strictly lower triangular,
the above display is also lower triangular. 
With $\bGamma = \bI_p - \tfrac{\eta \bX^\top \bX}{n}$,
we have 
\begin{align*}
    \Bigl(\bI_{pT} + \calD (\bI_T \otimes \tfrac{\bX^\top \bX}{n}) - \calJ\Bigr)^{-1}
    = 
    \Bigl(\bI_{pT} - (\bL \otimes \bGamma)\Bigr)^{-1}
    = \sum_{k=0}^{\infty} \bigl(\bL \otimes \bGamma\bigr)^k
    = \sum_{k=0}^{T-1} \bL^k \otimes \bGamma^k
\end{align*}
thanks to $(\bL \otimes \bGamma)^k = \bL^k \otimes \bGamma^k$ by the mixed-product
property and since $\bL^k = \b0$ for $k\ge T$. 
In this case, by the definition \eqref{eq:hat-A} of $\hbA$, we get
\begin{align*}
    \hbA 
    & = 
    \sum_{k=0}^{T-1} \bL^{k+1} \trace\Bigl[\frac{\eta}{n}\bX\bGamma^k \bX^\top\Bigr]
    =
    \sum_{k=0}^{T-1} \bL^{k+1} \trace\Bigl[(\bI_p-\bGamma)\bGamma^k\Bigr]
    =
    \sum_{k=0}^{T-2} \bL^{k+1} \trace\Bigl[(\bI_p-\bGamma)\bGamma^k\Bigr].
\end{align*}
More visually, since each incremental power of $\bL$ moves 
the diagonal one step towards the bottom left, we have
\begin{align*}
    \hbA & = 
    \begin{bmatrix}
        0 & 0 & 0 & 0 & 0 & \\
        \trace(\bI_p-\bGamma) & 0 & 0 & 0 & 0 & \\
        \trace((\bI_p-\bGamma) \bGamma) & \trace(\bI_p-\bGamma) & 0 & 0 & 0 & \\
        \vdots & \ddots & \ddots & \ddots & 0 & \\
        \trace((\bI_p-\bGamma) \bGamma^{T-2}) & \ldots & \trace((\bI_p-\bGamma)\bGamma ) & \trace(\bI_p-\bGamma) & 0 & 
    \end{bmatrix}.
\end{align*}
The above simplifications are specific to GD and will typically not be
possible for examples involving nonlinear transformations such as
soft-thresholding. 
Computation of $\hbA$ is straightforward once the eigenvalues of $\bGamma$ is computed. 

\begin{remark}
    A closely related estimator to the GD iterates \eqref{eq:iteration-GD} is the minimum $\ell_2$ norm least-squares estimator, is defined as
\begin{align}\label{eq:tbb}
    \tbb = \argmin \big\{\|\bb\|: \bb \text{ minimizes } \|\by - \bX \bb\|^2\big\}.
\end{align}
For $n> p$, $\tbb = (\bX^\top  \bX)^{-1} \bX^\top \by$, and for $n\le p$, $\tbb = \bX^\top (\bX \bX^\top)^{-1} \by$.
It is equivalent to write
$\tbb = (\bX^\top  \bX)^{\dagger} \bX^\top \by $ 
where $\dagger$ denotes the Moore-Penrose inverse. 
The GD iterates $\hbb^t$ in \eqref{eq:iteration-GD}
converge to the min-norm least-squares solution $\tbb$  as $t\to\infty$ \cite[Proposition 1]{Hastie2022surprises}. We observe this phenomenon in
the simulations below where the risk of $\tbb$ is represented by
an horizontal green curve.
\end{remark}

\textbf{Simulation results.}
Define $r_{\infty} := \norm{\bSigma^{1/2} (\hbb^{\infty} - \bb^*)} + \sigma^2$, where $\hbb^{\infty}:= \lim_{t\to\infty} \hbb^t$ is the min-norm least-squares estimator \eqref{eq:tbb}, and has closed-form expression $\hbb^{\infty} = \tbb = (\bX^\top \bX)^{\dagger} \bX^\top  \by$.
At each iteration $t\in[T]$, we calculate and present the average estimated risk $\hat r_t$, the actual risk $r_t$, and the limiting risk $r_{\infty}$ over 100 repetitions, including 2-standard error bars, in \Cref{fig:LSE-GD-risk-1500}. 
We also provide the quantile-quantile (Q-Q) plot of the z-score \eqref{eq:z-score} at different iterations in \Cref{fig:LSE-GD-zscore-1500}.

\begin{figure}[H]
    \centering
    \begin{subfigure}{0.5\linewidth}
      \centering
      \includegraphics[width=\linewidth]{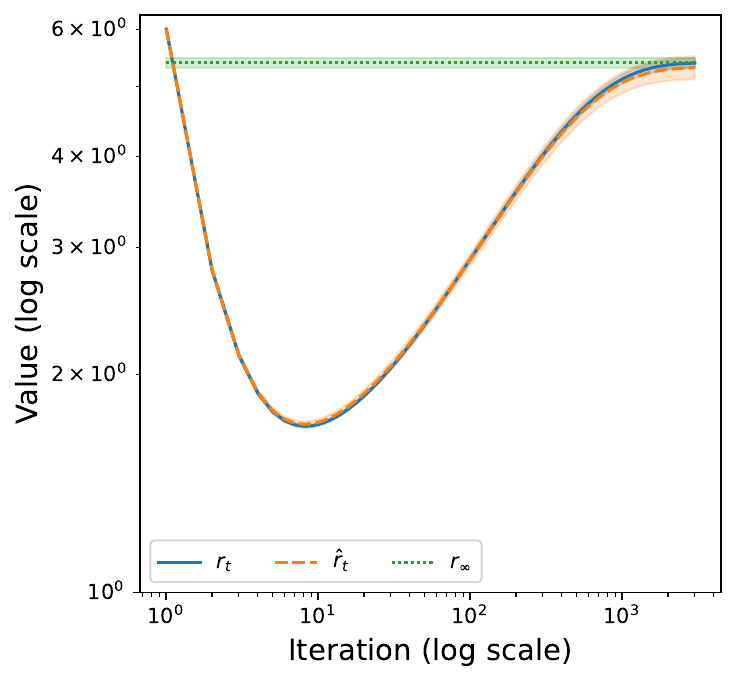}
      \caption{Risk curves versus iteration number.}
      \label{fig:LSE-GD-risk-1500}
    \end{subfigure}
    \hfill
    \begin{subfigure}{0.49\linewidth}
      \centering
      \includegraphics[width=\linewidth]{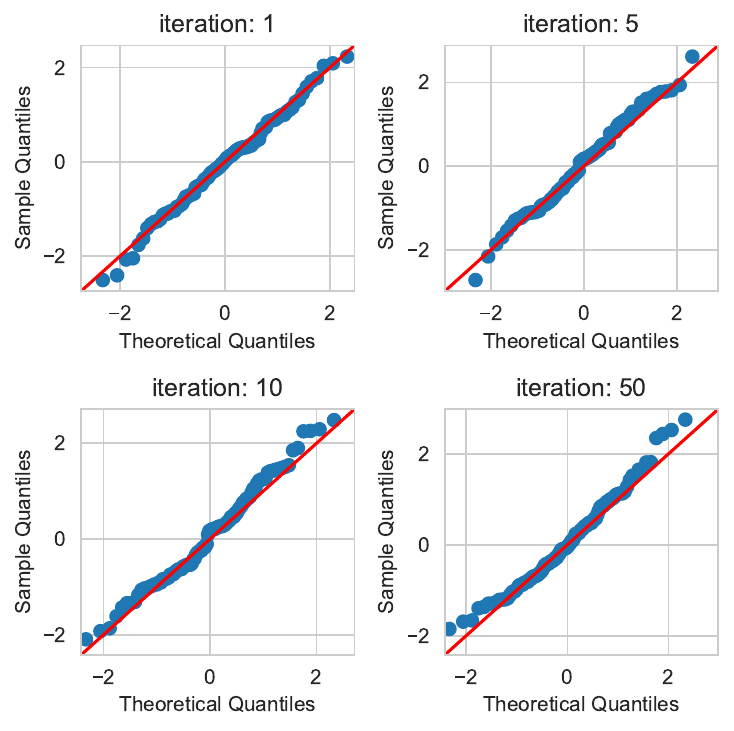}
      \caption{Q-Q plots of z-score at different iterations.}
      \label{fig:LSE-GD-zscore-1500}
    \end{subfigure}
    \caption{Risk curves and qq-plots of z-score of GD for $(n,p)=(1200, 1500)$.}
    \label{fig:LSE-GD}
  \end{figure}

  \Cref{fig:LSE-GD-risk-1500} shows a strong alignment between our risk estimate $\hat r_t$ and the actual risk $r_t$. 
  Notably, both the risk and the estimated risk reach their minimum at iteration 8, suggesting that $\hbb^{8}$ yields the lowest out-of-sample prediction risk. 
  This observation indicates that it is beneficial to terminate the algorithm as early as iteration 8, rather than continuing with additional iterations, because the risk will blow up quickly after iteration 8.
  As $t$ increases, we observe from \Cref{fig:LSE-GD-risk-1500} that $\hat r_t$ converges to $r_{\infty}$, indicating the effectiveness of our risk estimator for large $t$.
  Furthermore, \Cref{fig:LSE-GD-zscore-1500} reveals that the quantiles closely match the 45-degree line (shown in red). This alignment supports the conclusion that the z-score \eqref{eq:z-score} closely approximates a standard normal distribution.

\subsection{Nesterov's accelerated gradient descent (AGD)} \label{sec:eg-AGD}
The Nesterov's accelerated gradient method \citep{nesterov1983method} is a remarkable extension of the gradient descent by utilizing the momentum from previous iterates. 
It is well known that AGD enjoys quadratic convergence rates, which is faster than the linear convergence rate of the gradient descent. 
To describe the iteration of AGD, define the sequence of scalars
\begin{equation}\label{eq:a}
    a_0 = 0, \quad
a_t = \tfrac{1 + \sqrt{1 + 4 a_{t-1}^2}}{2}, \quad
w_t = \tfrac{1 - a_t}{a_{t+1}}\quad \text{ for all } t\ge 1.
\end{equation}
From some initialization $\hbb^1 \in \R^p$,
AGD iterates are defined as the weighted sum
\begin{align}\label{eq:iteration-AGD}
    \hbb^{t} 
    = (1 - w_{t-1}) (\hbb^{t-1} + \eta \bv^{t-1}) + w_{t-1} (\hbb^{t-2} + \eta \bv^{t-2}),
    \qquad\text{ for all } t\ge 2. 
\end{align}
Hence, the Jacobian matrices defined in \eqref{eq:D-J} are given by
\begin{equation*}
    \begin{aligned}
        \bJ_{t,t-1} &= (1 - w_{t-1}) \bI_p, \quad & \bD_{t,t-1} &= \eta (1 - w_{t-1}) \bI_p, \\
        \bJ_{t,t-2} &= w_{t-1} \bI_p, \quad & \bD_{t,t-2} &= \eta w_{t-1} \bI_p.
    \end{aligned}
\end{equation*}
It follows that the expressions of $\calJ$ and $\calD$ in \eqref{eq:calD-J} become
\begin{equation*}
    \calJ = 
\begin{bmatrix}
    \b0 & \b0 & \cdots & \cdots & \cdots& \b0\\
    (1-w_1)\bI_p & \b0 & \ddots & \cdots & \cdots& \vdots\\
    w_2\bI_p & (1-w_2)\bI_p & \b0 & \ddots & \cdots& \vdots\\
    \b0 & w_3\bI_p & (1-w_3)\bI_p & \b0 & \ddots& \vdots\\
    \vdots & \ddots & \ddots & \ddots & \ddots& \b0\\
    \b0 & \cdots &\b0 & w_{T-1}\bI_p & (1-w_{T-1})\bI_p &\b0
\end{bmatrix} 
,\quad
\calD = 
\eta \calJ.
\end{equation*}

\begin{remark}
    By Proposition 1 in \cite{Hastie2022surprises}, the
AGD iterate $\hbb^t$ in \eqref{eq:iteration-AGD} also converges to the min-norm least-squares estimator $\tbb$ as $t\to\infty$.
\end{remark}

\textbf{Simulation results.} Similar to the plots for gradient descent, we provide results for AGD applied to the least-squares problem. The risk curves are shown in \Cref{fig:LSE-AGD-risk-1500} and Q-Q plots of the z-score \eqref{eq:z-score} in \Cref{fig:LSE-AGD-zscore-1500}.
  The simulation results for AGD are similar to those for GD.
  \Cref{fig:LSE-AGD-risk-1500} clearly shows that the risk estimate $\hat r_t$ closely matches the actual risk $r_t$. 
  In addition, the risk curve suggests stopping the algorithm early at iteration 8 as the estimated risk increases quickly after iteration 8. Both $r_t$ and $\hat r_t$ converge to $r_{\infty}$ as $t$ increases. 
\Cref{fig:LSE-AGD-zscore-1500} again confirms that the z-scores \eqref{eq:z-score} are close to a standard normal distribution.
\begin{figure}[H]
    \centering
    \begin{subfigure}{0.50\linewidth}
      \centering
      \includegraphics[width=\linewidth]{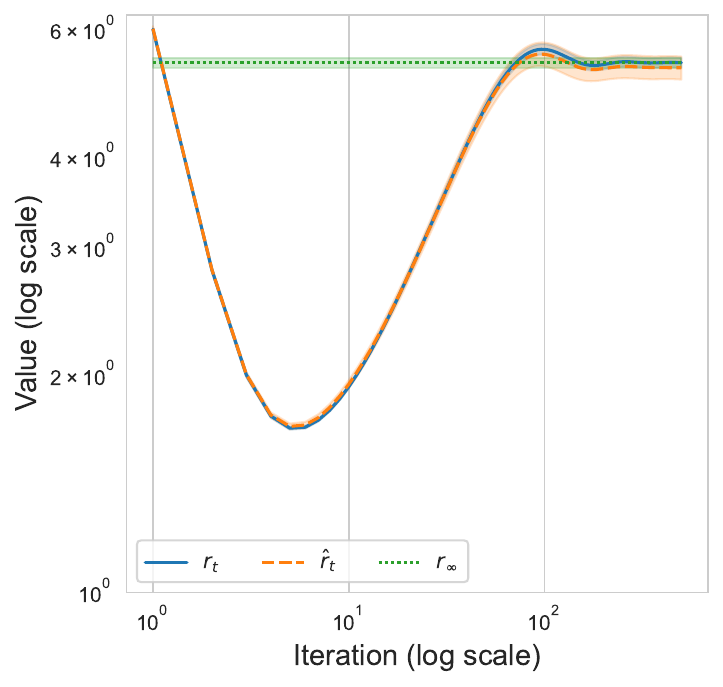}
      \caption{Risk curves versus iteration number.}
      \label{fig:LSE-AGD-risk-1500}
    \end{subfigure}
    \hfill
    \begin{subfigure}{0.47\linewidth}
      \centering
      \includegraphics[width=\linewidth]{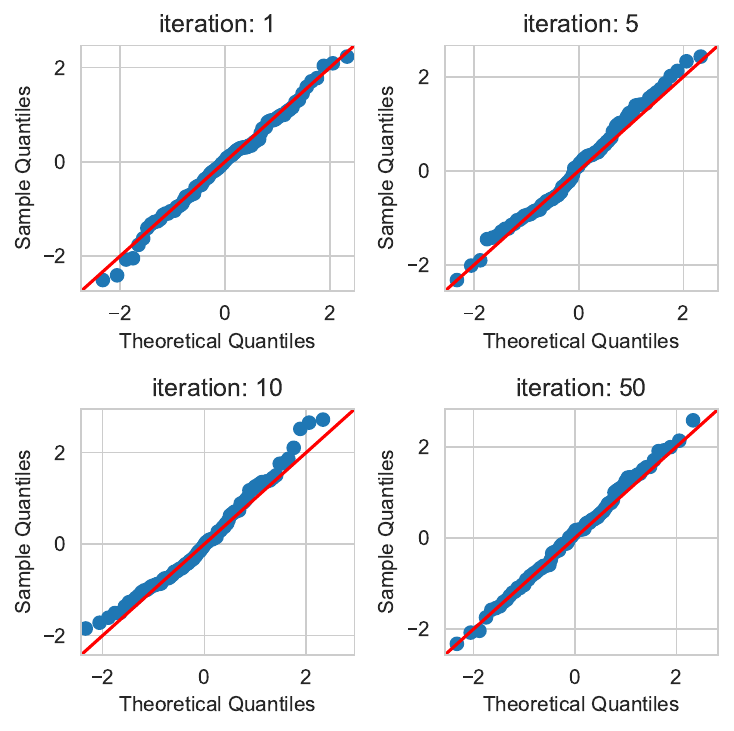}
      \caption{Q-Q plots of z-score at different iterations.}
      \label{fig:LSE-AGD-zscore-1500}
    \end{subfigure}
    \caption{Risk curves and qq-plots of z-score of AGD for $(n,p)=(1200, 1500)$.}
    \label{fig:LSE-AGD-1500}
  \end{figure}

\subsection{Iterative Shrinkage-Thresholding Algorithm (ISTA)} \label{sec:ISTA}
For regression problems with high-dimensional features, penalized regression are useful to achieve sparse solutions. 
Consider the Lasso regression 
\begin{equation}\label{eq:Lasso}
    \hbb \in 
    \argmin_{\bb\in \R^p} \frac{1}{2n}\|\by - \bX \bb\|^2 + \lambda \|\bb\|_1.
\end{equation}
This objective function of the above optimization has two parts, one is the squared loss, the other is an $\ell_1$ penalty. 
ISTA \citep{Daubechies2004} is a simple algorithm to solve \eqref{eq:Lasso} by imposing a soft-thresholding nonlinearity at each iteration.
Concretely,
let $\soft_{\theta}: \R^p \to \R^p$ denote the elementwise soft-thresholding operator, \ie $\soft_{\theta}(\bb)_j = (|\bb_j-\theta|)_+ \text{sgn}(\bb_j).$
ISTA can be viewed as the proximal gradient descent in \Cref{subsubsec:PGD}, and its iteration function $\bg_t$ is given by
\begin{equation}\label{eq:ISTA-update}
    \hbb^{t} 
    = \bg_t(\hbb^{t-1}, \bv^{t-1})
= \soft_{\lambda/L} (\hbb^{t-1} + L^{-1} \bv^{t-1}). 
\end{equation}
The function $\bg_t$ is Lipschitz continuous since the soft-thresholding
is 1-Lipschitz.
Let $S^t = \big\{j\in[p]: |\hbb^{t}_j + L^{-1} \bv^{t}_j|>\lambda/L\big\}$. 
For ISTA, the expressions of $\bJ_{t,t-1}$ and $\bD_{t,t-1}$ in \eqref{eq:pgd-J-D} are the diagonal matrices
\begin{equation*}
    \begin{aligned}
        \bJ_{t,t-1} 
        = \sum_{j\in S^{t-1}} \be_j \be_j^\top
        = \text{diag}\Bigl(
                \bigl(\mathbb I\{j\in S^{t-1}\}
                \bigr)_{j\in[p]}
            \Bigr) ,
        \qquad 
        \bD_{t,t-1} = L^{-1} \bJ_{t,t-1}.
    \end{aligned}
\end{equation*}
Substituting the above into \eqref{eq:pgd-cal-J-D} gives the expressions of $\calJ$ and $\calD$ for ISTA. 

\textbf{Simulation results.}
We apply ISTA to solve the Lasso regression \eqref{eq:Lasso} with two
regularization parameters $\lambda \in \{0.01, 0.1\}$.
The ISTA iterates converge to the Lasso estimator \eqref{eq:Lasso}
(see, e.g., \cite{beck2009fast}).
We compute the Lasso using the Python module \texttt{sklearn.linear\_model.Lasso} and
use this Lasso estimator to evaluate the limiting risk $r_{\infty}$.
\Cref{fig:Lasso-ISTA-risk-1500} showcases the risk estimator $\hat r_t$ and actual risk $r_t$ for each iteration $t$ using ISTA. Furthermore, \Cref{fig:Lasso-ISTA-zscore-1500} displays the Q-Q plots of the z-score \eqref{eq:z-score} for ISTA.
Again, the estimated risk curve closely aligns with the actual risk curve for both values of $\lambda$, and the corresponding z-scores closely approximate the standard normal distribution. 

\begin{figure}[H]
    \centering
    \begin{subfigure}{0.5\linewidth}
      \centering
      \includegraphics[width=\linewidth]{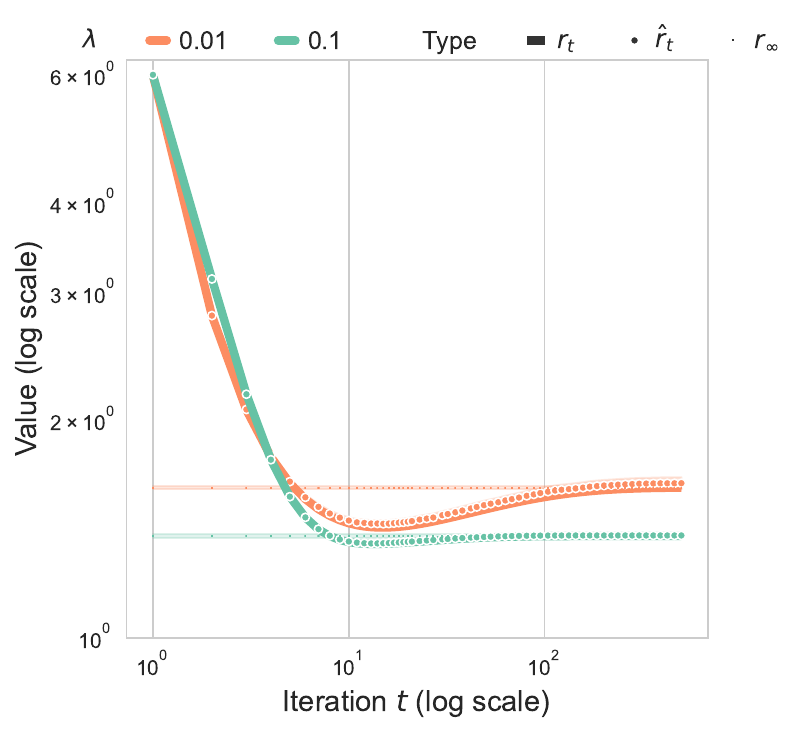}
      \caption{Risk curves versus iteration number.}
      \label{fig:Lasso-ISTA-risk-1500}
    \end{subfigure}
    \hfill
    \begin{subfigure}{0.49\linewidth}
      \centering
      \includegraphics[width=\linewidth]{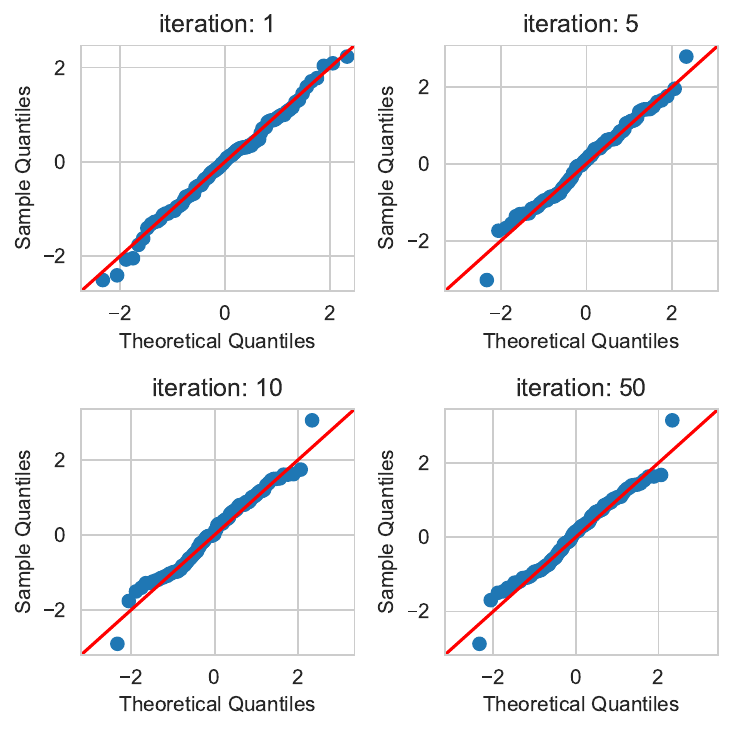}
      \caption{Q-Q plots of z-score for $\lambda=0.1$.}
      \label{fig:Lasso-ISTA-zscore-1500}
    \end{subfigure}
    \caption{Risk curves and qq-plots of z-score of ISTA for $(n,p)=(1200, 1500)$.}
    \label{fig:Lasso-ISTA-1500}
  \end{figure}

\subsection{Fast Iterative Shrinkage-Thresholding Algorithm (FISTA)}\label{sec:eg-FISTA}
Similar to the extension from GD to AGD, 
FISTA \citep{beck2009fast} is an accelerated version of ISTA,
incorporating momentum with the weights \eqref{eq:a}.
One advantage of FISTA is that it enjoys faster convergence rate than ISTA \citep{beck2009fast}. 
Using the same definitions of $a_t, w_t$ in \eqref{eq:a}, FISTA iterates the following steps with some initialization $\hbb^1 \in \R^p$:
\begin{align*}
    \hbb^{t} 
    &= \bg_t(\hbb^{t-1}, \hbb^{t-2},\bv^{t-1}, \bv^{t-2})\\
    &= (1 - w_{t-1})\soft_{\lambda/L}(\hbb^{t-1} + L^{-1} \bv^{t-1}) + w_{t-1} \soft_{\lambda/L}(\hbb^{t-2} + L^{-1} \bv^{t-2})  \text{ for } t\ge 2. 
\end{align*}
Using $S^t = \big\{j\in[p]: |\hbb^{t}_j + L^{-1} \bv^{t}_j|>\lambda/L\big\}$,
the matrices in \eqref{eq:D-J} are the diagonal matrices
\begin{equation*}
    \begin{aligned}
        \bJ_{t,t-1} 
        &= (1 - w_{t-1}) \sum_{j\in S^{t-1}} \be_j \be_j^\top,
        \quad
        &\bD_{t,t-1} &= \frac{1}{L} \bJ_{t,t-1},\\
        \bJ_{t,t-2} &= w_{t-1} \sum_{j\in S^{t-2}} \be_j \be_j^\top, 
        \quad 
        & \bD_{t, t-2} &= \frac{1}{L} \bJ_{t,t-2}.
    \end{aligned}
\end{equation*}
We obtain the expressions of $\calD$ and $\calJ$ for FISTA by substituting the above into \eqref{eq:calD-J}. 
Similarly to the simulation results for ISTA, we present the risk curves and Q-Q plots of the z-score \eqref{eq:z-score} for FISTA in \Cref{fig:Lasso-FISTA-1500}. 
\begin{figure}[H]
    \centering
    \begin{subfigure}{0.5\linewidth}
      \centering
      \includegraphics[width=\linewidth]{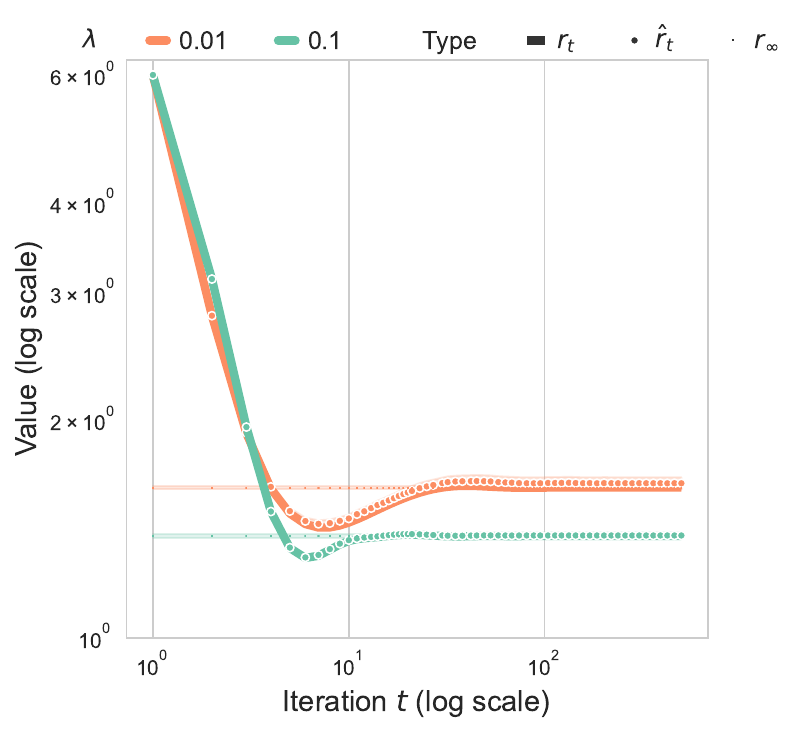}
      \caption{Risk curves versus iteration number.}
      \label{fig:Lasso-FISTA-risk-1500}
    \end{subfigure}
    \hfill
    \begin{subfigure}{0.49\linewidth}
      \centering
      \includegraphics[width=\linewidth]{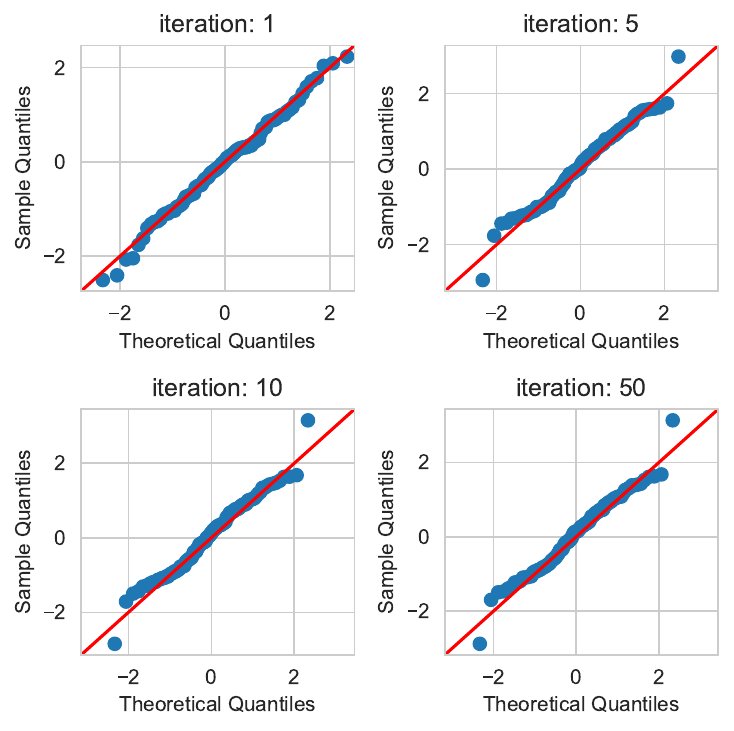}
      \caption{Q-Q plots of z-score for $\lambda=0.1$.}
      \label{fig:Lasso-FISTA-zscore-1500}
    \end{subfigure}
    \caption{Risk curves and qq-plots of z-score of FISTA for $(n,p)=(1200, 1500)$.}
    \label{fig:Lasso-FISTA-1500}
  \end{figure}
  
\subsection{Local quadratic approximation in non-convex penalized regression}\label{sec:eg-LQA}
In this subsection, we consider the folded-concave penalized least-squares,
\begin{equation}\label{eq:concave}
    \min_{\bb\in \R^p} \frac{1}{2n}\|\by - \bX \bb\|^2 + \sum_{j=1}^p \rho(|\bb_j|),
\end{equation}
where $\rho:\R_+\to \R$ is a concave penalty function. 
This encompasses SCAD \citep{fan2001variable} and MCP \citep{zhang10-mc+}.
For simulations, we set $\rho(\cdot)$ to be the MCP penalty 
with two positive tuning parameters $(\lambda, \tau)$, defined as
\begin{align}\label{eq:MCP}
    \rho(x;\lambda, \tau) = 
    \begin{cases}
        \lambda x - \frac{1}{2\tau} x^2& \text{if~} x \le \tau\lambda\\
        \frac12 \tau \lambda^2 & \text{if~} x > \tau\lambda
    \end{cases}.
\end{align}
For simplicity, we will omit the parameters $(\lambda, \tau)$ in the notation, using $\rho(x)$ instead of $\rho(x;\lambda, \tau)$. Consequently, the derivative of $\rho$ becomes $\rho'(x) = (\lambda - x/\tau) \I(x\le \tau\lambda)$.
As $\tau \to \infty$, $\rho(x)$ becomes $\lambda x$, and the optimization problem \eqref{eq:concave} then coincides with the Lasso \eqref{eq:Lasso}. 

In order to solve the non-convex penalized regression \eqref{eq:concave}, we consider the local isotropic quadratic approximation (LQA) to the least-squares loss 
$f(\bb):= \frac{1}{2n}\|\by - \bX \bb\|^2$ at a vector $\hbb^{t-1}$, namely
\begin{equation}\label{eq:Quad_Approx}
    f(\bb) 
\approx f(\hbb^{t-1}) + (\bb-\hbb^{t-1})^\top \nabla f(\hbb^{t-1}) 
+ (L/2) \|\bb - \hbb^{t-1}\|^2, 
\end{equation}
where $\nabla f(\hbb^{t-1})$ is the gradient of $f$ evaluated at $\hbb^{t-1}$, and
$L = n^{-1}\opnorm{\bX}^2$ as in previous sections.
Applying the above quadratic approximation \eqref{eq:Quad_Approx} to the least-squares in \eqref{eq:concave}
and ignoring the constant term yields the optimization problem
\begin{equation}\label{eq:MCP-quad-update}
    \hbb^{t} 
    = \argmin_{\bb\in \R^p} 
    \frac{1}{2} \|\bb - (\hbb^{t-1} + (nL)^{-1} \bX^\top (\by- \bX\hbb^{t-1}))\|^2
    + \frac1L \sum_{j=1}^p \rho(|\bb_j|).
\end{equation}
Let $\hbb_j^t$ and $\bv_j^t$ be the $j$-th entry of $\hbb^t$ and $\bv^t$, respectively.
For $\tau \ge 1/L$, the optimization problem \eqref{eq:MCP-quad-update} admits the closed-form solution
\begin{equation}\label{eq:LQA-update}
\begin{aligned}
    \hbb^{t}_j &=
\begin{cases}
    \soft_{\lambda/L} \Bigl(\hbb^{t-1}_j + \frac{1}{L} \bv^{t-1}_j\Bigr) (1 - \tfrac{1}{\tau L})^{-1} & \text{if } |\hbb^{t-1}_j + \tfrac{1}{L}  \bv^{t-1}_j| \le \tau\lambda, \\
    \hbb^{t-1}_j + \frac{1}{L} \bv^{t-1}_j & \text{otherwise}.
\end{cases}.
\end{aligned}
\end{equation}
In this case, $\bJ_{t,t-1}$ and $\bD_{t,t-1}$ in \eqref{eq:pgd-J-D}
are the diagonal matrices
\begin{align*}
    \bJ_{t,t-1}&= 
    \sum_{j=1}^p \be_j \be_j^\top \Bigl[ (1 - \tfrac{1}{\tau L})^{-1} \I\Bigl(|\hbb^{t-1}_j + \tfrac{1}{L} \bv^{t-1}_j|\in [\lambda/L, \tau\lambda]\Bigr) + 
    \I\Bigl(|\hbb^{t-1}_j + \tfrac{1}{L} \bv^{t-1}_j| > \tau\lambda\Bigr)\Bigr]
\end{align*}
and $\bD_{t,t-1} = \tfrac{1}{L} \bJ_{t,t-1}$.

\begin{remark}
    If $\tau = \infty$, the MCP function reduces to
    $\sum_{j=1}^p\rho(|\bb_j|) = \lambda \norm{\bb}_1$, so MCP is the same
    as the Lasso. 
    In this case, the LQA iterations \eqref{eq:LQA-update} become
    $\hbb^{t}= \soft_{\lambda/L}(\hbb^{t-1} + \frac{1}{L} \bv^{t-1})$, which is the same as ISTA iterations \eqref{eq:ISTA-update} in \Cref{sec:ISTA}.

\end{remark}

\textbf{Simulation results.} 
For the MCP penalty function \eqref{eq:MCP}, we consider $\lambda\in \{0.1, 0.2\}$ and $\tau=3$. 
We display the curves for the risk estimator $\hat r_t$ and the actual risk $r_t$ for each iteration $t$ in \Cref{fig:MCP-LQA-risk-1500}. Additionally, the Q-Q plots of the z-score \eqref{eq:z-score} are given in \Cref{fig:MCP-LQA-zscore-1500}.
\Cref{fig:MCP-LQA-risk-1500} shows that the estimated risk accurately
estimates the true risk.
The z-scores \eqref{eq:z-score} closely approximate the
standard normal distribution. 
Similar to observations in the aforementioned algorithms, LQA reaches its lowest risk level at around iteration 10. This suggests that early stopping could be beneficial for LQA for certain tuning parameters to improve generalization performance.
When comparing the lowest points on the risk curves for LQA with other algorithms (GD, AGD, ISTA, and FISTA), LQA achieves the lower risk
among the tested algorithms at the given tuning parameters.
\Cref{fig:MCP-LQA-zscore-1500} provides empirical support for the established asymptotic normality of the debiased LQA iterates in \Cref{thm:debias}. 
\begin{figure}[H]
    \centering
    \begin{subfigure}{0.5\linewidth}
      \centering
      \includegraphics[width=\linewidth]{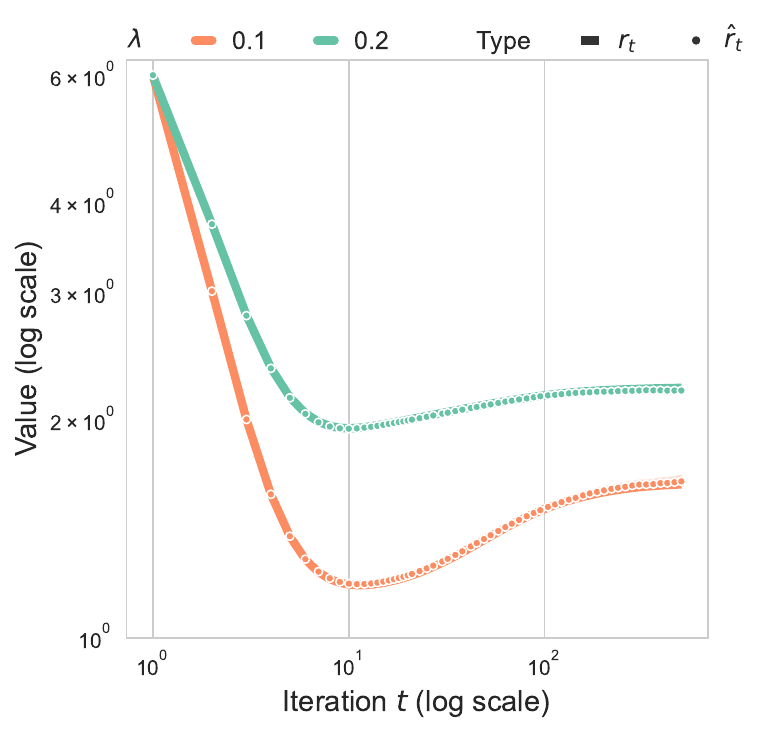}
      \caption{Risk curves versus iteration number.}
      \label{fig:MCP-LQA-risk-1500}
    \end{subfigure}
    \hfill
    \begin{subfigure}{0.49\linewidth}
      \centering
      \includegraphics[width=\linewidth]{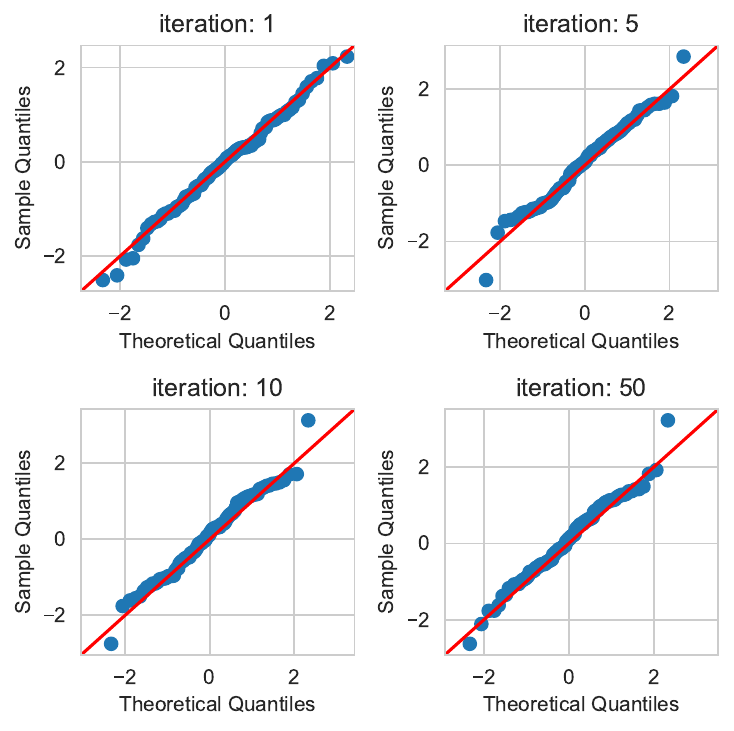}
      \caption{Q-Q plots of z-score for $\lambda=0.1$.}
      \label{fig:MCP-LQA-zscore-1500}
    \end{subfigure}
    \caption{Risk curves and qq-plots of z-score of LQA for $(n,p)=(1200, 1500)$.}
    \label{fig:MCP-LQA-1500}
  \end{figure}

\section{Efficient computation of the memory matrix}
\label{sec:computation}
Recall that the memory matrix $\hbA$
is crucial in the formulae of proposed risk estimator $\hat r_t$ in \eqref{eq:hat-rt} and debiased estimator $\hbb_j^{t, \rm debias}$ in \eqref{debiased_j}. 
We have provided specific expressions of $\hbA$ for various algorithms in \Cref{sec:examples}.
In this section, we provide an efficient way to compute the memory matrix $\hbA$ in \eqref{eq:hat-A} for algorithms with the general iterations \eqref{eq:iterates_all_previous}.

\subsection{Iteratively computing rows of $\hbA$}
Recall the definition of $\hbA$ in \eqref{eq:hat-A} is 
\begin{align*}
    \hbA
    = \frac{1}{n}\sum_{i=1}^n 
    \bigl(\bI_T \otimes (\be_i^\top \bX)\bigr)
    \Bigl(\bI_{pT} + \calD (\bI_T \otimes \tfrac{\bX^\top \bX}{n}) - \calJ\Bigr)^{-1} 
    \calD \bigl(\bI_T \otimes (\bX^\top \be_i)\bigr).
\end{align*}
The first apparent computational hurdle lies in inverting the large matrix $\bI_{pT} + \calD (\bI_T \otimes \tfrac{\bX^\top \bX}{n}) - \calJ$ of size ${pT \times pT}$. 
We now provide an efficient way to compute the memory matrix $\hbA$ without explicitly inverting this large matrix.
Recall that $\calD$ and $\calJ$ are $T\times T$ block lower triangular matrices given in \eqref{eq:calD-J} or \eqref{eq:all_previous_calD_calJ},
where each block is of size $p\times p$.
It follows that $\calJ - \calD (\bI_T \otimes \tfrac{\bX^\top \bX}{n})$ is also a $T\times T$ block lower triangular matrix with zero diagonal blocks.
Let $\bM_t\in\R^{p\times (pT)}$ be the $t$-th block row
of $\mathcal D$. 
Consider the linear system with unknowns $\bR_1\in\R^{p\times pT},
\dots,\bR_T\in\R^{p\times pT}$ given by
\begin{equation*}
    \Bigl(\bI_{pT} 
            -\calJ + \calD(\bI_T \otimes \tfrac{\bX^\top\bX}{n})
    \Bigr)
    \begin{bmatrix}
        \bR_1 \\
        \bR_2 \\
        \vdots \\
        \bR_{T-1} \\
        \bR_T
    \end{bmatrix}
    =
    \begin{bmatrix*}
        \bM_1\\
        \bM_2\\
        \vdots\\
        \bM_{T-1} \\
        \bM_T
    \end{bmatrix*}
    =
    \calD
\end{equation*}
which can be rewritten with $\bP_{t,s} = \bJ_{t,s} - \bD_{t,s}\bX^\top\bX/n$ as
\begin{equation}
    \label{large-system}
    \begin{bmatrix}
        \bI_p &\b0_{p\times p}&\dots &\dots&\b0_{p\times p}\\
        -\bP_{2,1} & \bI_p &\b0_{p\times p}&\empty&\vdots\\
        -\bP_{3,1} & -\bP_{3,2} & \bI_p &\ddots& \vdots\\\
        \vdots& \ddots & \ddots & \ddots &\b0_{p\times p}\\
        -\bP_{T,1} & \empty & \dots  & -\bP_{T,T-1}   &\bI_p
    \end{bmatrix}
    \begin{bmatrix}
        \bR_1 \\
        \bR_2 \\
        \vdots \\
        \bR_{T-1} \\
        \bR_T
    \end{bmatrix}
    =
    \begin{bmatrix}
        \bM_1\\
        \bM_2\\
        \vdots\\
        \bM_{T-1} \\
        \bM_T
    \end{bmatrix}
    =
    \calD.
\end{equation}
The left-most matrix being lower-triangular with identity diagonal blocks,
forward substitution provides the unique solution: starting with
$\bR_1 = \bM_1$ we obtain directly
$\bR_2 = \bM_2 + \bP_{2,1}\bR_1$, and more generally for all $t\ge 1$,
\begin{equation}
    \bR_t = \bM_t + \sum_{s=1}^{t-1}\bP_{t,s}\bR_s
    = \sum_{s=1}^{t-1} \be_s^\top \otimes \bD_{t,s}
    +
    \sum_{s=1}^{t-1}\bP_{t,s}\bR_s
    \label{R_t},
\end{equation}
where the second equality follows from the observation that
$\calD$ is block lower triangular with $t$-th row block
$\bM_t = \sum_{s=1}^{t-1}\be_s^\top \otimes \bD_{t,s}$.
Given
$(\bR_{s})_{s\le t-1}$, compute $\bR_t$ according to 
\eqref{R_t} and set
\begin{equation}
    \label{eq:computation_A}
    \hbA_{t,t'} = \tfrac 1n
    \trace\bigl[
        \bX \bR_t (\be_{t'} \otimes \bX^\top)
    \bigr]
    \quad\text{ for } t'<t,
    \quad \hbA_{t,t'} = 0 \quad \text{ for }t'\ge t.
\end{equation}
If only one or two blocks $\bP_{t,s}$ per row are nonzero
as in \eqref{eq:calD-J}, which is satisfied for all examples of
\Cref{sec:examples}, the recursion
\eqref{R_t} simplifies in this case to
\begin{equation}
    \label{R_t_only_two}
    \bR_t 
    = 
    \sum_{s=t-2}^{t-1} \be_s^\top \otimes \bD_{t,s}
    + \sum_{s=t-2}^{t-1} \bigl(\bJ_{t,s} - \bD_{t,s}\tfrac{\bX^\top\bX}{n}\bigr)\bR_{s},
\end{equation}
followed by \eqref{eq:computation_A}.
Notably, we may compute each $\bR_t$ recursively while
only keeping in memory $\bR_{t-1}$ and $\bR_{t-2}$,
both of size $p\times (pT)$, at each step.

\subsection{Hutchinson's trace approximation}

The above computation of $\hbA$ using \eqref{eq:computation_A} can still be prohibitive, as it requires storing in memory
matrices $\bR_t,\bR_{t-1}$ of size at most $p\times (pT)$ at each step, and perform
matrix-matrix products with dimensions $p\times p$ and $p\times (pT)$
in \eqref{R_t_only_two}.
We now describe an efficient way to approximate the entries of $\hbA$
while avoiding to store intermediate matrices of size $p\times (pT)$.
Let $m\ge 1$ be a small constant integer; in simulations
we have noticed that $m=1,2$ or $3$ already gives accurate estimates.
We propose to approximate the trace \eqref{eq:computation_A}
for the $(t,s)$-th entry of $\hbA$
using  Hutchinson's trace approximation \citep{hutchinson1990stochastic}
$$
\trace(\bM)
\approx \trace(\bW^\top \bM \bW)
$$
for any matrix $\bM\in\R^{n\times n}$,
where 
$\bW\in\R^{n\times m}$ is a random matrix with i.i.d. entries uniformly
distributed in $\{\frac{1}{\sqrt m}, \frac{-1}{\sqrt m}\}$.
The Hanson-Wright inequality ensures that the above approximation
holds with high-probability when $\|\bM\|_{\rm F}$ is negligible
compared to $\trace(\bM)$.
Using this approximation, the computation of the trace of a matrix
of size $n\times n$ is reduced to that of a matrix of size $m\times m$.

In our case, we need to approximate the trace in \eqref{eq:computation_A} using Hutchinson's approximation. 
To this end, 
let 
$\bW\in\R^{n\times m}$ be a random matrix with i.i.d. entries uniformly
distributed in $\{\frac{1}{\sqrt m}, \frac{-1}{\sqrt m}\}$
independently of everything else.
The matrix $\bW$ is only sampled once, and fixed throughout
the following computation.
Instead of computing recursively $\bR_t$ in \eqref{R_t},
we compute $\bar \bR_t = \bR_t (\bI_T \otimes (\bX^\top\bW))$ 
iteratively from \eqref{R_t} with the recursion
\begin{equation}
    \label{bar_R_t}
    \bar\bR_t = 
    \sum_{s=1}^{t-1} \be_s^\top \otimes (\bD_{t,s}\bX^\top\bW)
    + \sum_{s=1}^{t-1} \Bigl(\bJ_{t,s} - \bD_{t,s}\frac{\bX^\top\bX}{n}\Bigr)
    \bar \bR_{s}.
\end{equation}
Once $\bar\bR_t$ is available,
we compute the Hutchinson's approximation for each entry
of the $t$-th row of $\hbA$ in \eqref{eq:computation_A} by
\begin{align*}
    \hbA_{t,t'}^{H}
    &= \frac1n 
    \trace\Bigl[
        \bW^\top \bX \bar\bR_t(\be_{t'}\otimes \bI_m)
    \Bigr]
\end{align*}
for $t'<t$ and $0$ for $t'\ge t$.
For AGD and FISTA or any iterative algorithm
with $\calD,\calJ$ given by \eqref{eq:calD-J},
$\bD_{t,s}$ and $\bJ_{t,s}$ are 0 except for $s=t-1$ and $s=t-2$.
In this case the sums in the recursion for $\bR_t$ in \eqref{R_t} and
$\bar \bR_t$ in \eqref{bar_R_t} are reduced 
to $\sum_{s=t-2}^{t-1}$ with only two terms.
For ISTA and GD, $\bD_{t,s}$ and $\bJ_{t,s}$ are 0 except for $s=t-1$
so the sum is reduced to only one term at $s=t-1$.
In this case, the recursion
\eqref{bar_R_t} only uses $mT$ matrix-vector products
with matrix dimensions smaller than $\max\{n, p, T\}$.
In particular, if $mT \ll \min\{n,p\}$, it is never needed to
perform a matrix-matrix multiplication with two matrices with
both dimensions of order $n$ or $p$.
In terms of memory footprint, in the case \eqref{eq:calD-J},
only the last two $\bar\bR_{t-1}$
and $\bar\bR_{t-2}$ are needed to compute $\bar\bR_t$.
This is the same cost as storing a matrix of size $p\times (2mT)$,
which is negligible compared to storing $\bX\in\R^{n\times p}$
as long as $mT \ll n$.

Finally, remark that $\bar \bR_t$ is 0 except in its first $t-1$ column blocks,
so that only these first $t-1$ columns blocks need to be stored
and computed in the recursion \eqref{bar_R_t}.
More precisely,
let $\bV_{t,t'}\defas \bR_{t}(\be_{t'} \otimes \bX^\top \bW) \in \R^{p\times m}$ for $t'\le t-1$ to be $t'$ column block of $\bar\bR_t$. 
Then Hutchinson's approximation equals
\begin{align}\label{eq:hbA-approx}
    \hbA_{t,t'}^H
    = \tfrac{1}{n} \trace[\bW^\top \bX \bV_{t,t'}].
\end{align}
At step $t$, in order to compute
$\bV_t 
\defas [\bV_{t,1}, .., \bV_{t,t-1}]
\in \R^{p\times m(t-1)}$
in the case \eqref{eq:calD-J} satisfied by all examples of \Cref{sec:examples},
we have
the recursion formula
\begin{align*}
    \bV_t
    = \bP_{t,t-2} 
    \bigl[\bV_{t-2},~ \b0_p, \b0_p\bigr] 
    + \bP_{t,t-1} \bigl[\bV_{t-1},~ \b0_p\bigr] + 
    \bigl[
        \b0_{p\times m(t-3)},
        ~ \bD_{t,t-2}\bX^\top \bW,
        ~ \bD_{t,t-1}\bX^\top \bW
    \bigr],
\end{align*}
where $\bP_{t,s} = \bJ_{t,s} - \bD_{t,s}\bX^\top\bX/n$ as above.
In the case of ISTA and FISTA where the nonlinear functions use the soft-thresholding operator,
these approximations let us compute the iterations \eqref{recursion_intro_2_previous_iterates} and the memory matrix \eqref{eq:hat-A} with problem
dimensions $n=25,000;p=40,000;T=30$
on a laptop with 32GB of RAM within four minutes.

\section{Discussion}
This paper introduces a novel procedure to estimate the out-of-sample prediction error for iterates of gradient descent type algorithms, in the context of high-dimensional regression.
As illustrated in \Cref{sec:examples}, this risk-estimation procedure is applicable to a wide range of algorithms commonly used in optimization, including gradient descent, Nesterov's accelerated gradient descent, iterative shrinkage-thresholding algorithm, fast iterative shrinkage-thresholding algorithm \citep{beck2009fast}, and local quadratic approximation. 
The proposed procedure do not require the knowledge of the design
covariance $\bSigma$ or of the noise level $\sigma^2$.
The estimates allow the statistician to leverage
the benefits of early stopping,
by selecting an early iteration $\hat t\in [T]$ that minimizes
the generalization error among the first $T$ iterations
up to negligible error
(\Cref{cor:early}).
We have further established the asymptotic normality of the entries of the iterates after a debiasing correction, which can be used to construct confidence intervals for the $j$-th of the ground-truth $\bb^*$
when $\bSigma^{-1}\be_j$ is known or can be estimated
(\Cref{thm:debias}).
Extensive numerical simulations in \Cref{sec:examples} demonstrate that
the proposed estimate is accurate,
and that the z-scores defining the confidence intervals
are approximately standard normal.

Here we highlight several directions for further exploration. 
A first avenue for future research is to improve the dependence
of the bounds on the number $T$ of iterations. Currently,
upper bounds in \Cref{thm:generalization-error} and other main results
involve constants that worst than exponential in $T$, while we observe
in simulations that the risk estimate is still accurate over the 
whole trajectory for $T\gg \log n$. Another direction of interest
is to generalize
the estimates of the present paper beyond the square loss
in \eqref{eq:b-hat}, for instance
with the Huber or least-absolute deviation loss in robust regression,
or the logistic loss in classification problems.
Another generalization concerns randomized versions of GD such
as stochastic gradient descent.
Finally, our proofs rely crucially on the Gaussianity of the design,
and it would be of interest to extend the validity of the estimates
proposed here to different design distributions. Recent progress
has been made in this direction
\citep{montanari2022universality,hu2022universality,han2023universality,pesce2023gaussian,dudeja2023universality} regarding the universality
of the training and generalization error of minimizers such as
\eqref{eq:b-hat}. An extra challenge presented here is that the validity
of $\hat r_t$ for estimating $r_t$ requires not only universality of the training and generalization error, but also universality the weights $\hat w_{t,s}$.

\begin{supplement}
    \stitle{Supplement A}
    \sdescription{This supplement contains addtional numerical results and proofs.}
    \end{supplement}
    \begin{supplement}
    \stitle{Supplement B}
    \sdescription{This supplement contains the code and instruction to produce the numerical results.}
    \end{supplement}
\bibliography{main}  
\bibliographystyle{plainnat} 

\newpage
\appendix 
\appendixpage
\addappheadtotoc
\addtocontents{toc}{\protect\setcounter{tocdepth}{0}}
\section{Additional simulation results}\label{sec:app-simulation}
In this section, we provide more simulation results for the other two scenarios $(n,p) = (1200, 500)$ and $(n,p) = (1200, 1200)$, which are similar to the results for $(n,p) = (1200, 1500)$ presented in \Cref{sec:examples}.

\subsection{Under parametrization: $(n,p) = (1200, 500)$}

\subsubsection{GD and AGD}
For GD and AGD applied to solve least-squares problem, when $(n,p) = (1200, 500)$, we know that the iterate $\hbb^t$ converges to the ordinary least-squares estimate $(\bX^\top\bX)^{-1}\bX^\top \by$ as $t\to\infty$. 
Thus we are able to compute the limiting risk $r_{\infty}$ for GD and AGD.
We present the simulation results for GD and AGD in \Cref{fig:LSE-GD-500} and \Cref{fig:LSE-AGD-500}, respectively.

\begin{figure}[H]
    \centering
    \begin{subfigure}{0.5\linewidth}
      \centering
      \includegraphics[width=\linewidth]{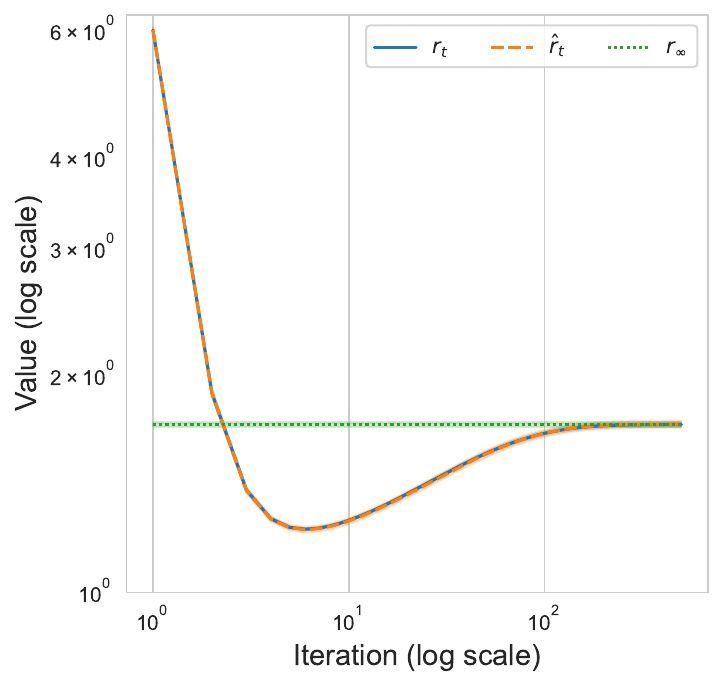}
      \caption{Risk curves versus iteration number.}
      \label{fig:LSE-GD-risk-500}
    \end{subfigure}
    \hfill
    \begin{subfigure}{0.49\linewidth}
      \centering
      \includegraphics[width=\linewidth]{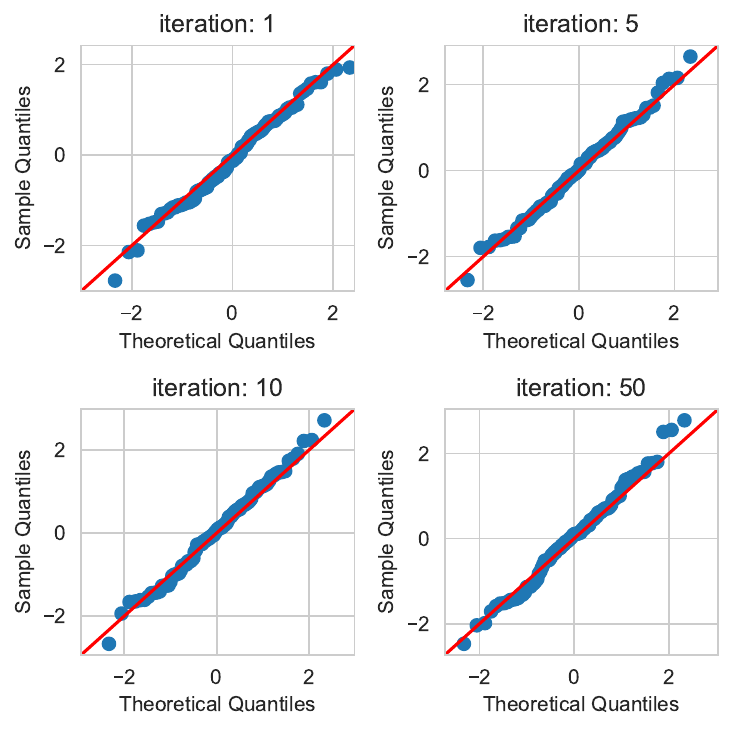}
      \caption{Q-Q plots of z-score for $\lambda = 0.1$.}
      \label{fig:LSE-GD-zscore-500}
    \end{subfigure}
    \caption{Risk curves and qq-plots of z-score of GD for $(n,p)=(1200, 500)$.}
    \label{fig:LSE-GD-500}
  \end{figure}
  \begin{figure}[H]
    \centering
    \begin{subfigure}{0.5\linewidth}
      \centering
      \includegraphics[width=\linewidth]{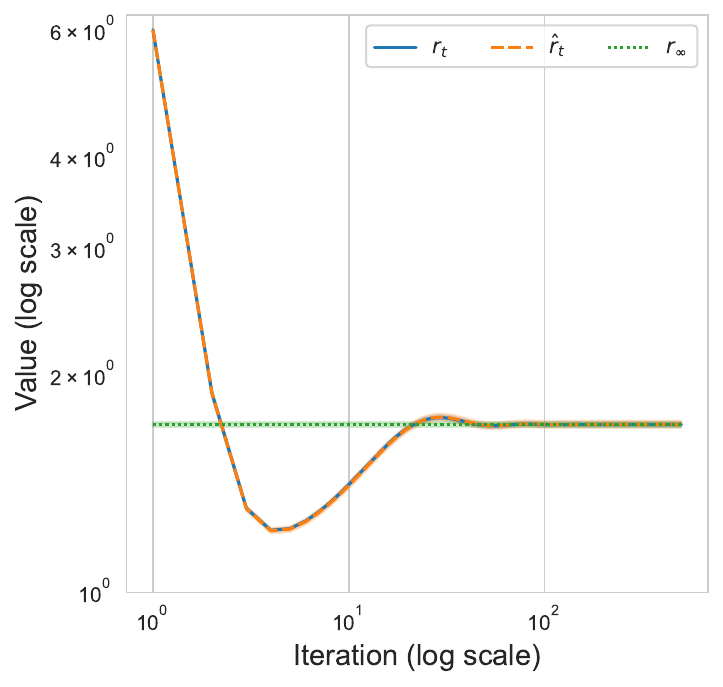}
      \caption{Risk curves versus iteration number.}
      \label{fig:LSE-AGD-risk-500}
    \end{subfigure}
    \hfill
    \begin{subfigure}{0.49\linewidth}
      \centering
      \includegraphics[width=\linewidth]{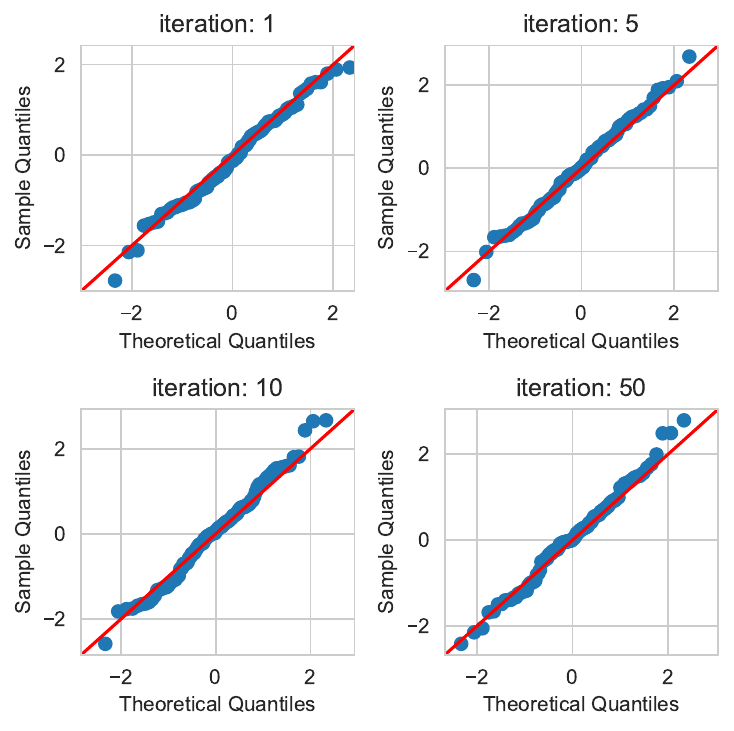}
      \caption{Q-Q plots of z-score for $\lambda = 0.1$.}
      \label{fig:LSE-AGD-zscore-500}
    \end{subfigure}
    \caption{Risk curves and qq-plots of z-score of AGD for $(n,p)=(1200, 500)$.}
    \label{fig:LSE-AGD-500}
  \end{figure}

\subsubsection{ISTA and FISTA}
We present the simulation results for ISTA and FISTA in \Cref{fig:Lasso-ISTA-500} and \Cref{fig:Lasso-FISTA-500}, respectively.

\begin{figure}[H]
    \centering
    \begin{subfigure}{0.5\linewidth}
      \centering
      \includegraphics[width=\linewidth]{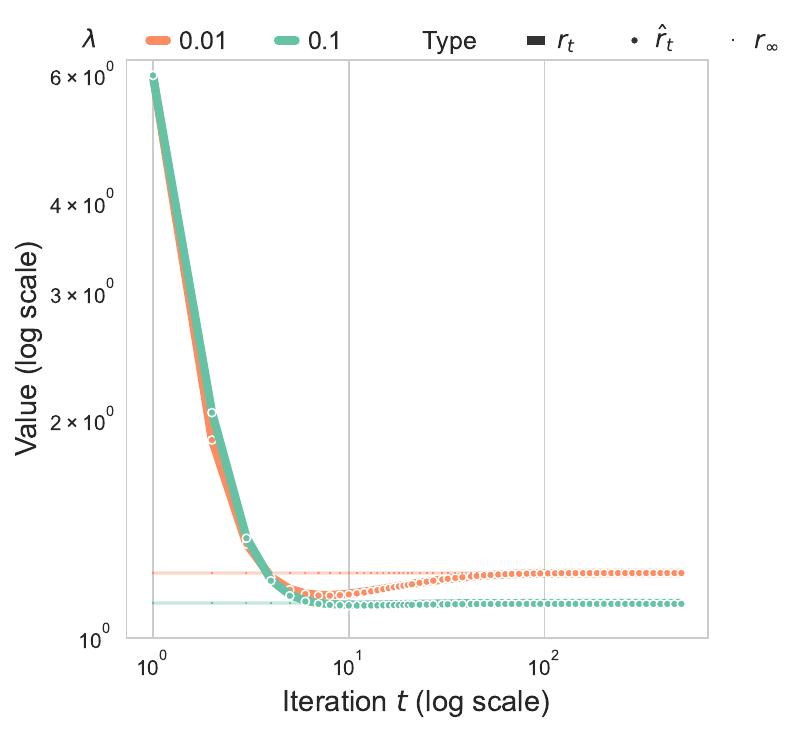}
      \caption{Risk curves versus iteration number.}
      \label{fig:Lasso-ISTA-risk}
    \end{subfigure}
    \hfill
    \begin{subfigure}{0.49\linewidth}
      \centering
      \includegraphics[width=\linewidth]{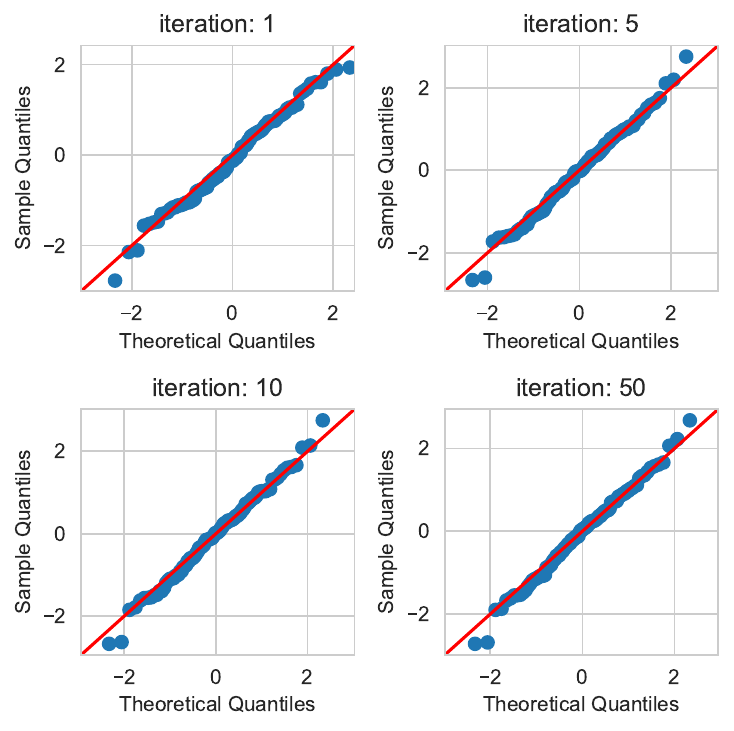}
      \caption{Q-Q plots of z-score for $\lambda=0.1$.}
      \label{fig:Lasso-ISTA-zscore}
    \end{subfigure}
    \caption{Risk curves and qq-plots of z-score of ISTA for $(n,p)=(1200, 500)$.}
    \label{fig:Lasso-ISTA-500}
  \end{figure}
  \begin{figure}[H]
    \centering
    \begin{subfigure}{0.5\linewidth}
      \centering
      \includegraphics[width=\linewidth]{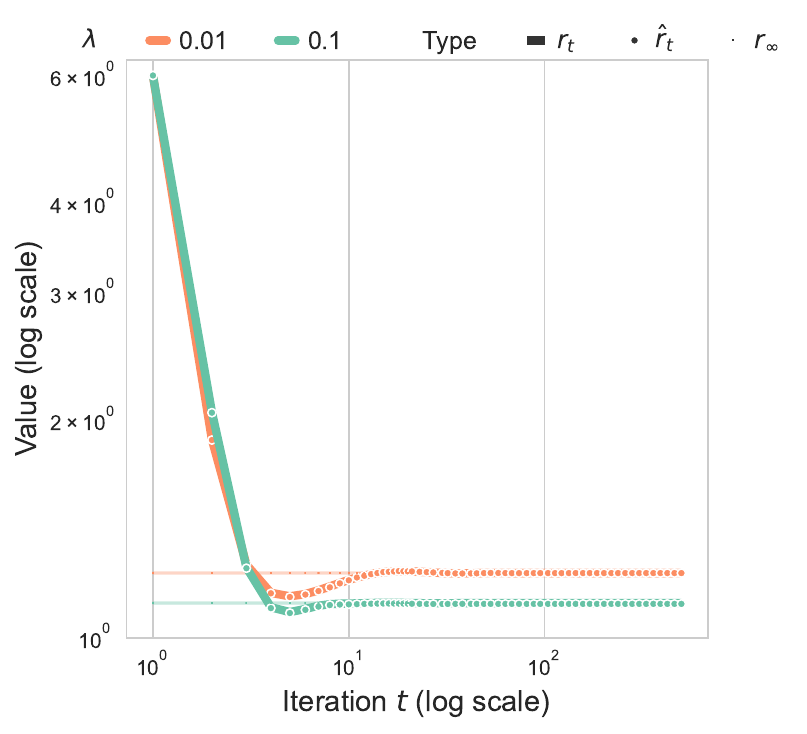}
      \caption{Risk curves versus iteration number.}
      \label{fig:Lasso-FISTA-risk}
    \end{subfigure}
    \hfill
    \begin{subfigure}{0.49\linewidth}
      \centering
      \includegraphics[width=\linewidth]{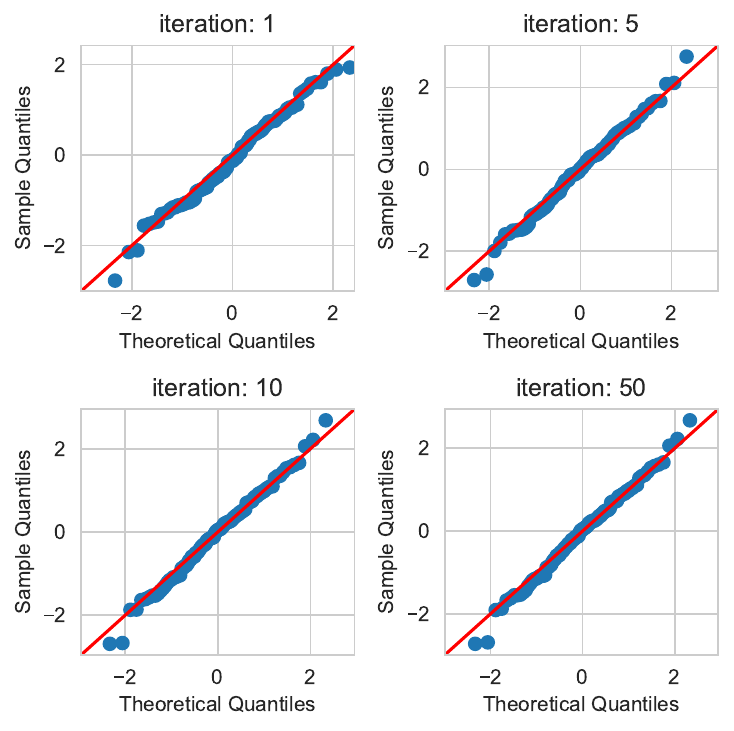}
      \caption{Q-Q plots of z-score for $\lambda = 0.1$.}
      \label{fig:Lasso-FISTA-zscore}
    \end{subfigure}
    \caption{Risk curves and qq-plots of z-score of FISTA for $(n,p)=(1200, 500)$.}
    \label{fig:Lasso-FISTA-500}
  \end{figure}

\subsubsection{LQA}
We present the simulation results for LQA in \Cref{fig:MCP-LQA-500}.
\begin{figure}[H]
    \centering
    \begin{subfigure}{0.5\linewidth}
      \centering
      \includegraphics[width=\linewidth]{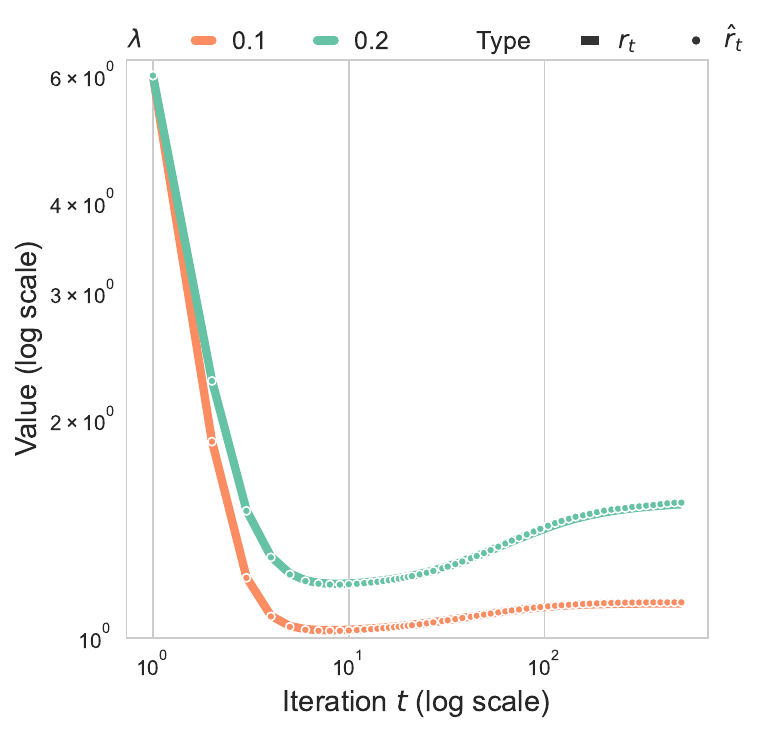}
      \caption{Risk curves versus iteration number.}
      \label{fig:MCP-LQA-risk}
    \end{subfigure}
    \hfill
    \begin{subfigure}{0.49\linewidth}
      \centering
      \includegraphics[width=\linewidth]{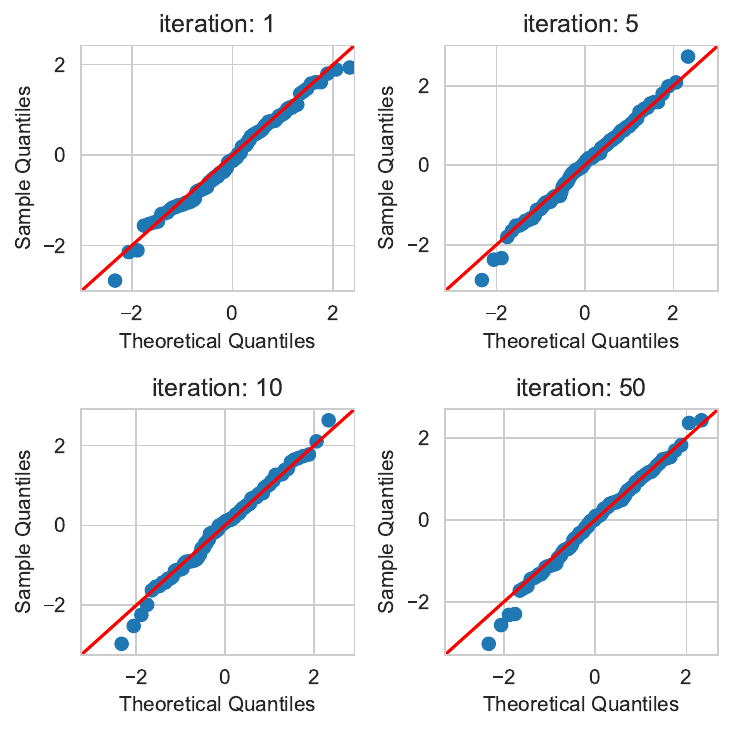}
      \caption{Q-Q plots of z-score for $\lambda = 0.1$.}
      \label{fig:MCP-LQA-zscore}
    \end{subfigure}
    \caption{Risk curves and qq-plots of z-score of LQA for $(n,p)=(1200, 500)$.}
    \label{fig:MCP-LQA-500}
  \end{figure}

\subsection{Equal parametrization: $(n,p) = (1200, 1200)$}
In this section, we present the simulation results for the equal parametrization scenario $(n,p) = (1200, 1200)$.

\subsubsection{GD and AGD}
For GD and AGD, we know that the iterates converge to the min-norm least-squares estimator \eqref{eq:tbb} as $t\to\infty$ \cite[Proposition 1]{Hastie2022surprises}. 
Under $n=p=1200$, we know that the
risk of the min-norm least-squares estimator is infinite \citep{Hastie2022surprises}, \ie, $r_{\infty} = +\infty$. 
So here we do not plot the horizontal line for $r_\infty$.

We present the simulation results for GD and AGD in \Cref{fig:LSE-GD-1200} and \Cref{fig:LSE-AGD-1200}, respectively.

\begin{figure}[H]
    \centering
    \begin{subfigure}{0.5\linewidth}
      \centering
      \includegraphics[width=\linewidth]{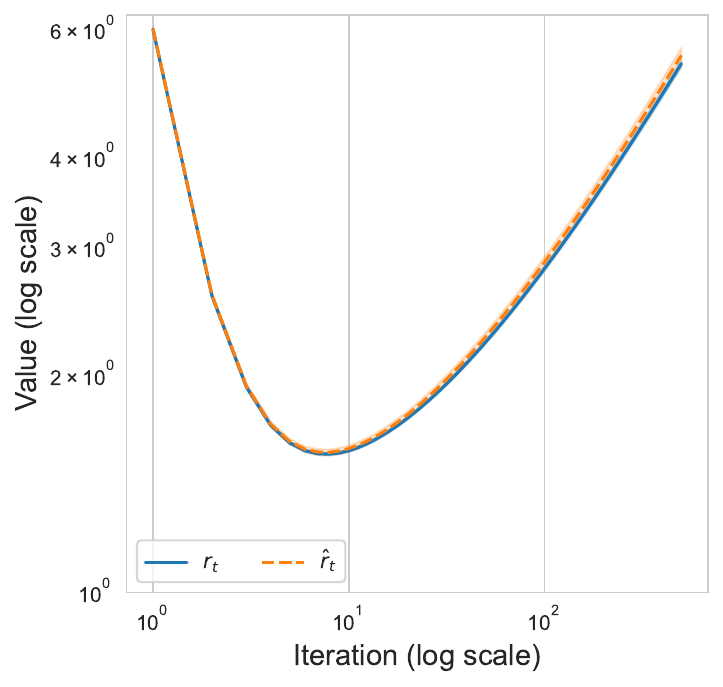}
      \caption{Risk curves versus iteration number.}
      \label{fig:LSE-GD-risk-1200}
    \end{subfigure}
    \hfill
    \begin{subfigure}{0.49\linewidth}
      \centering
      \includegraphics[width=\linewidth]{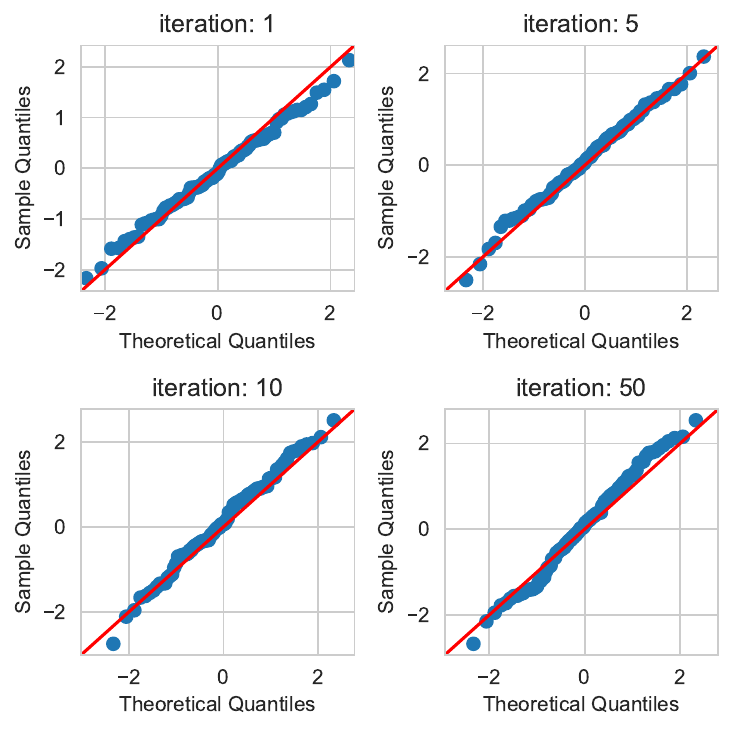}
      \caption{Q-Q plots of z-score for $\lambda = 0.1$.}
      \label{fig:LSE-GD-zscore-1200}
    \end{subfigure}
    \caption{Risk curves and qq-plots of z-score of GD for $(n,p)=(1200, 1200)$.}
    \label{fig:LSE-GD-1200}
  \end{figure}
  \begin{figure}[H]
    \centering
    \begin{subfigure}{0.5\linewidth}
      \centering
      \includegraphics[width=\linewidth]{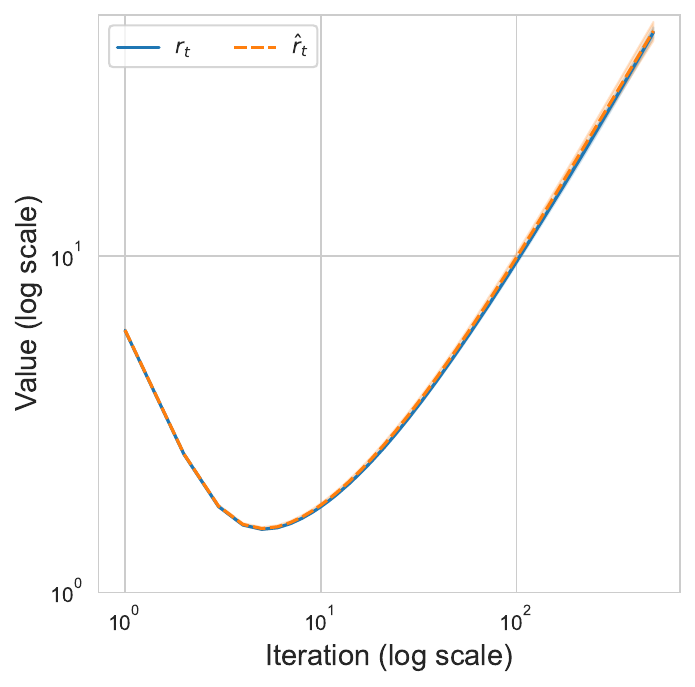}
      \caption{Risk curves versus iteration number.}
      \label{fig:LSE-AGD-risk-1200}
    \end{subfigure}
    \hfill
    \begin{subfigure}{0.49\linewidth}
      \centering
      \includegraphics[width=\linewidth]{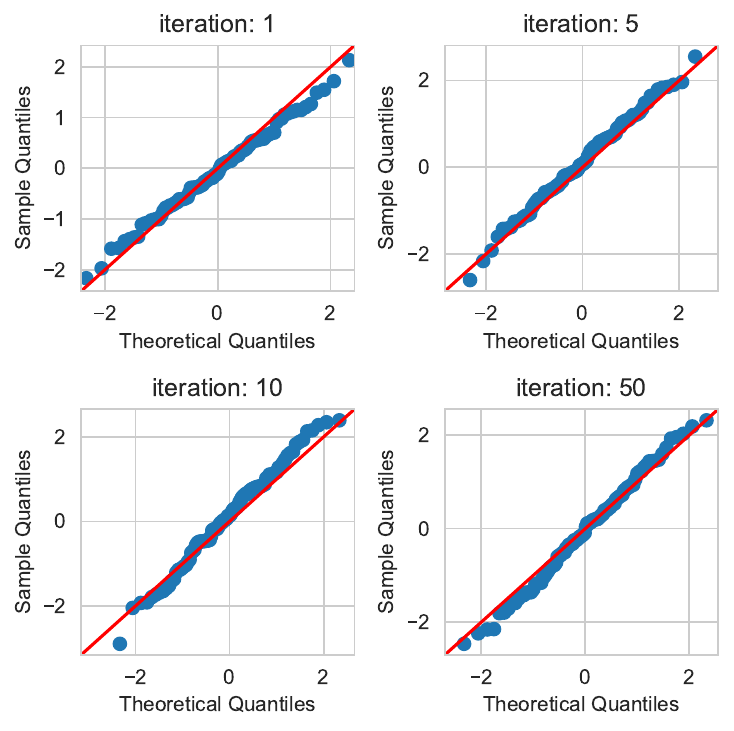}
      \caption{Q-Q plots of z-score for $\lambda = 0.1$.}
      \label{fig:LSE-AGD-zscore-1200}
    \end{subfigure}
    \caption{Risk curves and qq-plots of z-score of AGD for $(n,p)=(1200, 1200)$.}
    \label{fig:LSE-AGD-1200}
  \end{figure}

\subsubsection{ISTA and FISTA}
We present the simulation results for ISTA and FISTA in \Cref{fig:Lasso-ISTA-1200} and \Cref{fig:Lasso-FISTA-1200}, respectively.

\begin{figure}
    \centering
    \begin{subfigure}{0.5\linewidth}
      \centering
      \includegraphics[width=\linewidth]{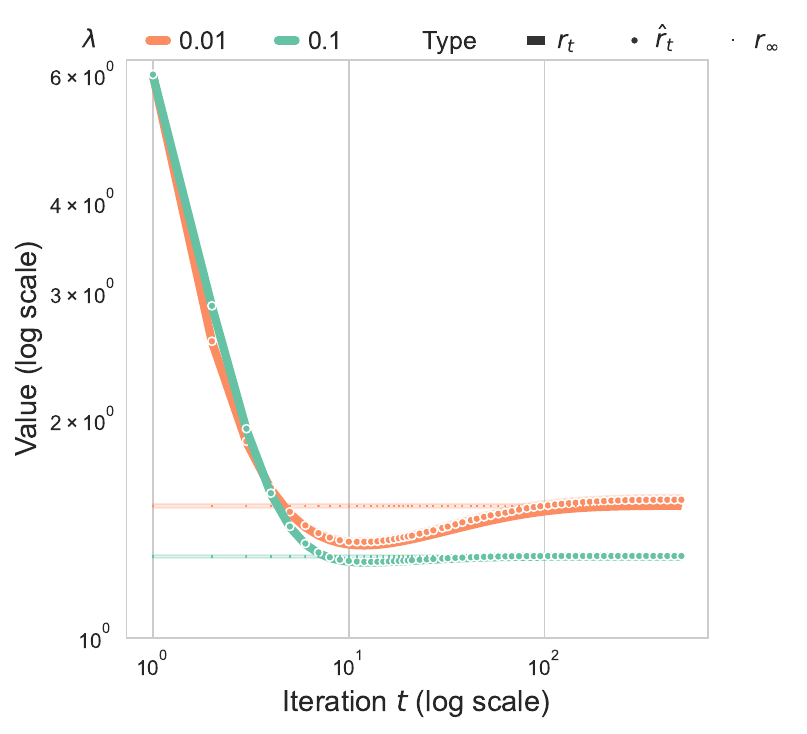}
      \caption{Risk curves versus iteration number.}
      \label{fig:Lasso-ISTA-risk-1200}
    \end{subfigure}
    \hfill
    \begin{subfigure}{0.49\linewidth}
      \centering
      \includegraphics[width=\linewidth]{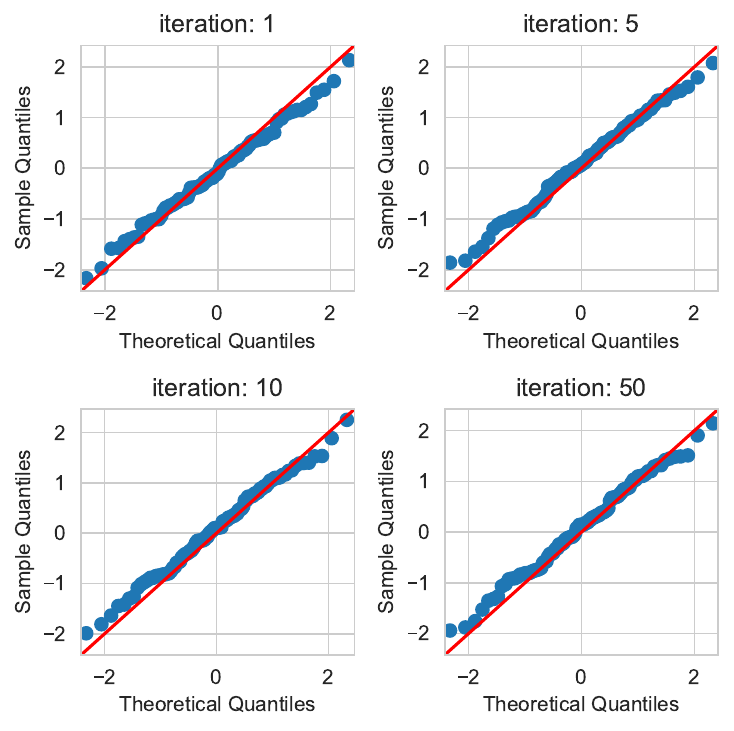}
      \caption{Q-Q plots of z-score for $\lambda = 0.1$.}
      \label{fig:Lasso-ISTA-zscore-1200}
    \end{subfigure}
    \caption{Risk curves and qq-plots of z-score of ISTA for $(n,p)=(1200, 1200)$.}
    \label{fig:Lasso-ISTA-1200}
  \end{figure}
  \begin{figure}
    \centering
    \begin{subfigure}{0.5\linewidth}
      \centering
      \includegraphics[width=\linewidth]{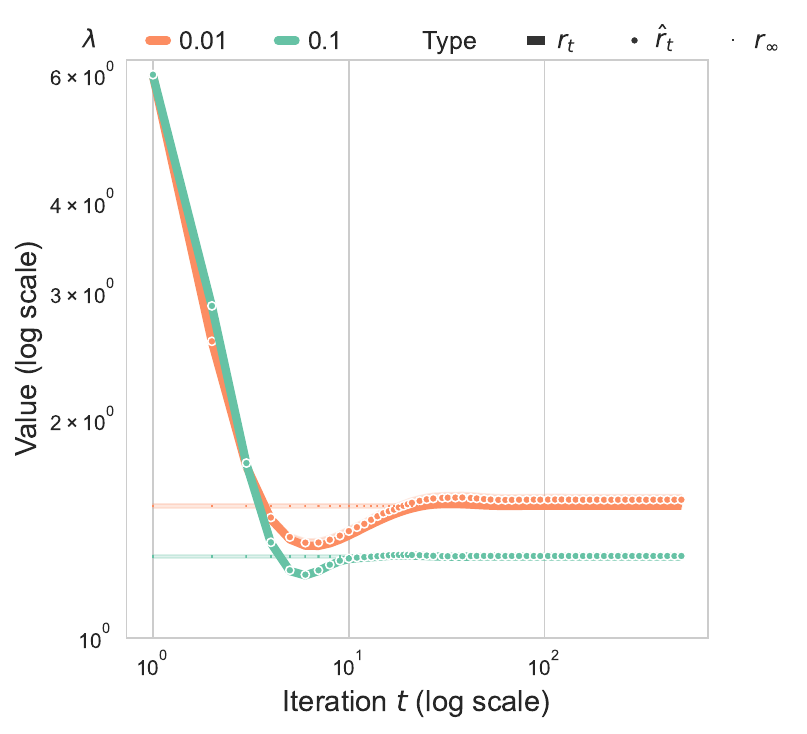}
      \caption{Risk curves versus iteration number.}
      \label{fig:Lasso-FISTA-risk-1200}
    \end{subfigure}
    \hfill
    \begin{subfigure}{0.49\linewidth}
      \centering
      \includegraphics[width=\linewidth]{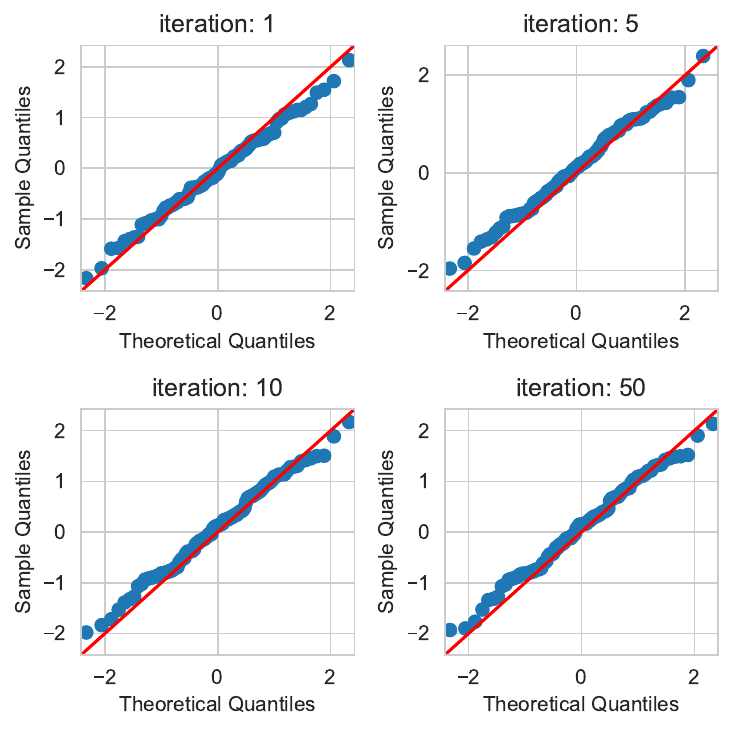}
      \caption{Q-Q plots of z-score for $\lambda = 0.1$.}
      \label{fig:Lasso-FISTA-zscore-1200}
    \end{subfigure}
    \caption{Risk curves and qq-plots of z-score of FISTA for $(n,p)=(1200, 1200)$.}
    \label{fig:Lasso-FISTA-1200}
  \end{figure}

\subsubsection{LQA}
We present the simulation results for LQA in \Cref{fig:MCP-LQA-1200}.
\begin{figure}
    \centering
    \begin{subfigure}{0.5\linewidth}
      \centering
      \includegraphics[width=\linewidth]{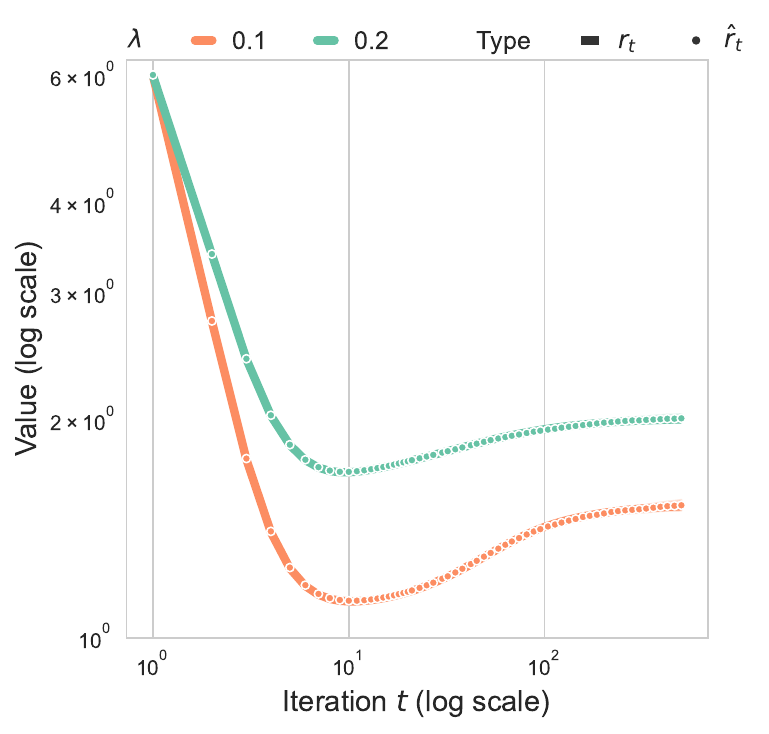}
      \caption{Risk curves versus iteration number.}
      \label{fig:MCP-LQA-risk-1200}
    \end{subfigure}
    \hfill
    \begin{subfigure}{0.49\linewidth}
      \centering
      \includegraphics[width=\linewidth]{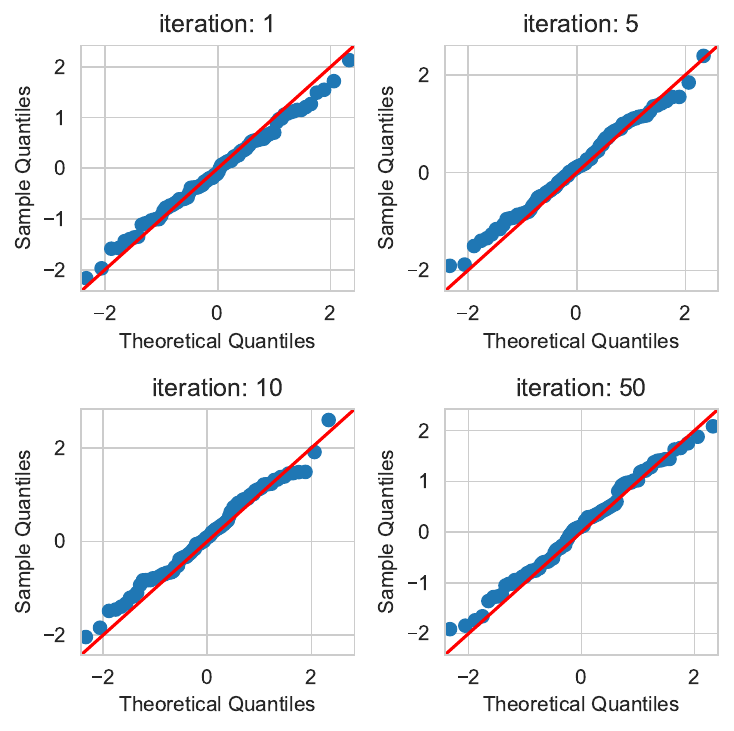}
      \caption{Q-Q plots of z-score for $\lambda = 0.1$.}
      \label{fig:MCP-LQA-zscore-1200}
    \end{subfigure}
    \caption{Risk curves and qq-plots of z-score of LQA for $(n,p)=(1200, 1200)$.}
    \label{fig:MCP-LQA-1200}
  \end{figure}

\section{Preliminary}
\subsection{Notation}
Notation and definitions that will be used in the rest of the paper are given here. 
Regular variables like $a, b, ...$ refer to scalars, bold lowercase letters such as $\ba, \bb, ...$ represent vectors, and bold uppercase letters like $\bA, \bB, ...$ indicate matrices.
Let $[n] = \{1, 2,\ldots, n \}$ for all $n\in\N$. 
The vectors $\be_i\in \R^n,\be_j\in\R^p, \be_t\in\R^T$ denote the canonical basis vector of the corresponding index. 
For a real vector $\ba \in \R^p$,  
$\norm*{\ba}$ 
denotes its Euclidean norm. 
For any matrix $\bA$, $\bA^\dagger$ is its Moore–Penrose inverse;  
$\fnorm*{\bA}$ 
,$\opnorm*{\bA}$, 
$\norm*{\bA}_*$ 
denote its Frobenius, operator and nuclear norm, respectively.  
Let $\bA\otimes \bB$ be the Kronecker product of $\bA$ and $\bB$.
For $\bA$ symmetric, $\phi_{\min}(\bA)$ and $\phi_{\max}(\bA)$ denote its smallest and largest eigenvalues, respectively. 
Let ${\bf 1}_n$ denote the all-ones vector in $\R^n$, and $\bI_n$ denote the identity matrix of size $n$. 
For a mapping $\R^{p} \to \R^{n}: \bx \mapsto \bff(\bx)$, 
we denote its Jacobian by 
$\pdv{\bff(\bx)}{\bx}\in \R^{n\times p}$, 
\ie, $(\pdv{\bff(\bx)}{\bx})_{i,j} \defas \pdv{\bff_i(\bx)}{\bx_j}$ for all $i\in[n], j\in[p]$.
For a random sequence $\xi_n$, we write $\xi_n = O_P(a_n)$ if $\xi_n/a_n$ is stochastically bounded. 
For two scalars $a,b$, $a\vee b$ denotes the maximum of $a$ and $b$.

Let $\I(\Omega)$ denote the indicator function of event $\Omega$. It takes the value 1 if the event $\Omega$ occurs and 0 otherwise. 
Let $C$ represent an absolute constant. Additionally, we use $C(\zeta, \gamma)$ to denote a positive constant that only depends on $\tau$ and $\gamma$. Similarly, we extend this notation to $C(\zeta, \gamma, \kappa, \ldots)$, representing positive constants dependent only on $\tau$, $\gamma$, $\kappa$, and so forth. The exact value of these constants may vary from place to place.
We write $a\lesssim b$ if $a\le C b$ for some absolute constant $C$.

Let $\mathsf{N}(\mu, \sigma^2)$ denote the univariate Gaussian distribution with mean $\mu$ and variance $\sigma^2$, and $\mathsf{N}_k(\bmu, \bSigma)$ denote the k-dimensional Gaussian distribution with mean $\bmu$ and covariance matrix $\bSigma$.

In this paper, we adopt the convention use of the expectation and conditional expectation. 
For a random variable $X$, the expressions $\E X$, $\E (X)$, and $\E[X]$ all refer to the expectation of $X$. Similarly, $\E X^2$, $\E (X^2)$, and $\E [X^2]$ mean that we first square the random variable $X$ and then compute its expectation.
In contrast, expressions like $\E(X)^{2}$ and $\E[X]^{2}$ denote a different operation; here, we first calculate the expectation of $X$ and then square the result.
This convention extends consistently to other power operations and conditional expectations throughout the paper.

\subsection{Review of Kronecker product}

In this section, we review the definitions and properties of the Kronecker product.
Let $\bA$ be an $m \times n$ matrix, and $\bB$ be a $p \times q$ matrix. The Kronecker product of $\bA$ and $\bB$, denoted by $\bA \otimes \bB$, is defined as:
\begin{equation*}
    \bA \otimes \bB = \begin{bmatrix}
        a_{11}\bB & \cdots & a_{1n}\bB \\
        \vdots & \ddots & \vdots \\
        a_{m1}\bB & \cdots & a_{mn}\bB
    \end{bmatrix},
\end{equation*}
where $a_{ij}$ represents the entry in the $i$-th row and $j$-th column of matrix $\bA$.
Below we list a few properties of the Kronecker product that will be useful in our proofs.
\begin{enumerate}
    \item The mixed-product property: If $\bA, \bB, \bC$ and $\bD$ are matrices of such size that the matrix products $\bA\bC$ and $\bB\bD$ make sense, then
    \begin{equation}\label{eq:kronecker-mix-product}
        (\bA \otimes \bB)(\bC \otimes \bD) = (\bA\bC) \otimes (\bB\bD)\in \R^{mp \times nq}.
    \end{equation}
    \item Inverse property: 
    \begin{align}\label{eq:kronecker-inverse}
        (\bA \otimes \bB)^{-1} = \bA^{-1} \otimes \bB^{-1} \text{ and }
        (\bA \otimes \bB)^{\dagger} = \bA^{\dagger} \otimes \bB^{\dagger},
    \end{align}
    where $\bA^{\dagger}$ means the Moore-Penrose inverse of $\bA$.
    \item Trace property: 
    \begin{align}\label{eq:kronecker-trace}
        \trace(\bA \otimes \bB) = \trace(\bA) \trace(\bB).
    \end{align}
    \item Norm property: 
    \begin{align}\label{eq:kronecker-norm}
        \opnorm{\bA \otimes \bB} = \opnorm{\bA} \opnorm{\bB} \text{ and }
        \fnorm{\bA \otimes \bB} = \fnorm{\bA} \fnorm{\bB}.
    \end{align}
    \item Relationship with vectorization operator: If the matrix product $\bA\bB\bC$ makes sense, then 
    \begin{align}\label{eq:kronecker-vectorization}
        \vec(\bA \bB \bC) = (\bC^\top \otimes \bA) \vec(\bB),
    \end{align}
    where $\vec(\cdot)$ is the vectorization operator such that $\vec(\bB)$ is the vector obtained by stacking vertically the columns of $\bB$ on top of one another.
\end{enumerate}

\section{Derivative formula}
In this section, we provide the derivative formulas for the iteration function $\bg_t$ in \eqref{eq:general-form}. 
These derivative formulas will be important ingredient in the proof of the main results in the next sections.
Recall $\hbb^1, \hbb^2, \ldots, \hbb^T$ represent the first $T$ iterates of an algorithm using the iteration function $\bg_t$
in \eqref{eq:iterates_all_previous},
and $\bv^t = \tfrac{1}{n} \bX^\top(\by - \bX\hbb^t)$ as in \eqref{v_t}.
The error matrix and residual matrix for the first $T$ iterates $\hbb^1, \hbb^2, \ldots, \hbb^T$ are defined as
\begin{equation}\label{eq:F-H}
    \bH = [\hbb^1 - \bb^*, \ldots, \hbb^T - \bb^*] \in \R^{p \times T}, \quad 
    \bF = [\by - \bX \hbb^1, \ldots, \by - \bX \hbb^T] \in \R^{n \times T}.
\end{equation}
We also define $\hbB = [\hbb^1, \ldots, \hbb^T]$ is a matrix of size $p \times T$, 
$\bB^* = [\bb^*, \ldots, \bb^*]$, 
$\bE = [\bep, ..., \bep]$
and $\bY = [\by, \ldots, \by]$ are matrices formed by repeating the vectors $\bb^*$, $\bep$, and $\by$ column-wise $T$ times.
Therefore, we have $\bH = \hbB - \bB^*$, and $\bF = \bY - \bX \hbB$. 
Recall also the matrices $\bD_{t,s},\bJ_{t,s}$ in \eqref{J_ts_D_ts_all_previous}
and the large matrices $\mathcal D,\mathcal J\in\R^{pT\times pT}$ in \eqref{eq:all_previous_calD_calJ}.
\begin{lemma}[Derivative of iterates]\label{lem:dot-b-4}
    For $(\hbb^1, ..., \hbb^T)$ in \eqref{eq:iterates_all_previous},
    for almost every $(\bX,\bep)$,
    \begin{align}
        \pdv{\hbb^t}{x_{ij}}
        &= n^{-1} (\be_t^\top \otimes \bI_p) \calM^{-1}\calD
        \Bigl[\Bigl((\bF^\top \be_i) \otimes \be_j\Bigr) - \Bigl((\bH^\top \be_j)  \otimes (\bX^\top\be_i)\Bigr)
        \Bigr],\label{eq:pdv-bt-x}\\
        \pdv{\hbb^t}{\eps_l} &= n^{-1}
        (\be_t^\top \otimes \bI_p) \calM^{-1}\calD (\bd1_T \otimes \bX^\top\be_l), \label{eq:pdv-bt-ep}
    \end{align}
    where $\be_i,\be_l\in\R^n,\be_j\in\R^p,\be_t\in\R^T$ and
    $\calM = \bI_{pT} + \calD (\bI_T \otimes \tfrac{\bX^\top \bX}{n}) - \calJ$. 
\end{lemma}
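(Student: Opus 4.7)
The plan is to apply the chain rule to the recursion \eqref{eq:iterates_all_previous}, differentiate the auxiliary quantity $\bv^s = n^{-1}\bX^\top(\by-\bX\hbb^s)$ directly, assemble the resulting linear system into block form in $\R^{pT}$, and invert the matrix $\calM$ (which is lower triangular with identity diagonal blocks, hence invertible).

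First, using $\by = \bX\bb^* + \bep$ and $\partial\bX/\partial x_{ij} = \be_i\be_j^\top$ (with $\be_i\in\R^n$, $\be_j\in\R^p$), a direct computation gives
\begin{align*}
\pdv{\bv^s}{x_{ij}}
&= \tfrac1n\Bigl[\be_j\be_i^\top(\by-\bX\hbb^s) + \bX^\top\bigl(-(\hbb^s-\bb^*)_j\be_i - \bX\pdv{\hbb^s}{x_{ij}}\bigr)\Bigr]\\
&= \tfrac1n\Bigl[(\bF^\top\be_i)_s\,\be_j - (\bH^\top\be_j)_s\,\bX^\top\be_i - \bX^\top\bX\pdv{\hbb^s}{x_{ij}}\Bigr],
\end{align*}
and likewise $\partial\bv^s/\partial\eps_l = n^{-1}\bigl[\bX^\top\be_l - \bX^\top\bX\,\partial\hbb^s/\partial\eps_l\bigr]$. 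The chain rule (in the almost-everywhere form of \cite[Corollary 3.2]{ambrosio1990general} as discussed in the paper) applied to \eqref{eq:iterates_all_previous} yields
\[
\pdv{\hbb^t}{x_{ij}} = \sum_{s=1}^{t-1}\bJ_{t,s}\pdv{\hbb^s}{x_{ij}} + \sum_{s=1}^{t-1}\bD_{t,s}\pdv{\bv^s}{x_{ij}}.
\]

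Next, stack the $T$ derivative vectors into $\bh_x \in \R^{pT}$ with $t$-th block $\partial\hbb^t/\partial x_{ij}$. Substituting the formula for $\partial\bv^s/\partial x_{ij}$ and rearranging, the system reads
\[
\Bigl(\bI_{pT} + \calD(\bI_T\otimes \tfrac{\bX^\top\bX}{n}) - \calJ\Bigr)\bh_x = \tfrac1n\,\calD\,\bu_x,
\]
where, using $(\bF^\top\be_i)_s = \be_s^\top\bF^\top\be_i$ to rewrite the sum over $s$ as a Kronecker product,
\[
\bu_x = \sum_{s=1}^T \be_s\otimes\bigl[(\bF^\top\be_i)_s\be_j - (\bH^\top\be_j)_s\bX^\top\be_i\bigr] = (\bF^\top\be_i)\otimes\be_j - (\bH^\top\be_j)\otimes(\bX^\top\be_i).
\]
Because $\calJ$ and $\calD(\bI_T\otimes\bX^\top\bX/n)$ are strictly lower triangular block matrices, $\calM$ is lower triangular with identity blocks on its diagonal and is therefore invertible. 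Solving $\bh_x = n^{-1}\calM^{-1}\calD\,\bu_x$ and extracting the $t$-th block via $(\be_t^\top\otimes\bI_p)$ yields \eqref{eq:pdv-bt-x}.

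The formula \eqref{eq:pdv-bt-ep} follows from the identical argument: stacking $\partial\hbb^s/\partial\eps_l$ leads to the same linear system with right-hand side driven by $\sum_s\be_s\otimes\bX^\top\be_l = \bd1_T\otimes\bX^\top\be_l$. The only real subtlety is the appeal to the modified chain rule for compositions of Lipschitz (not everywhere differentiable) maps, which the paper has already addressed in its discussion preceding the lemma; the bookkeeping that turns a sum indexed by $s$ into a Kronecker product is the main mechanical step but is routine once one identifies $(\be_s\be_s^\top)$-type slicings.
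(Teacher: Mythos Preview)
Your proposal is correct and follows essentially the same approach as the paper: differentiate the recursion via the chain rule, substitute the explicit derivative of $\bv^s$, assemble the resulting relations into the block lower-triangular linear system $\calM\,\bh = n^{-1}\calD\,\bu$, invert, and extract the $t$-th block. The paper's proof differs only cosmetically in that it works with a general directional derivative $(\dot\bX,\dot\bep)$ and specializes to $(\be_i\be_j^\top,\b0)$ and $(\b0,\be_l)$ at the very end, whereas you compute the two partial derivatives directly; both routes are equivalent.
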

The derivative results in \Cref{lem:dot-b-4} directly imply the following
by the product rule.

\begin{corollary}[Derivative of residuals]\label{lem:dot-F-4}
    Let $F_{lt}$ be the $(l,t)$-th entry of $\bF$, \ie, $F_{lt} = \be_l^\top \bF \be_t$. 
    Under the conditions of \Cref{lem:dot-b-4}, for each $i,l\in[n]$, $j\in[p]$, $t\in[T]$,
    we have
    \begin{equation}
        \label{eq:D2_decompoistion_Dij_Delta_ij}
        \frac{\partial F_{lt}}{\partial x_{ij}} = D_{ij}^{lt} + \Delta_{ij}^{lt} 
        \quad {and} \quad
        \pdv{\bF \be_t}{\eps_i }
        = \be_i - n^{-1} (\be_t^\top \otimes \bX) \calM^{-1}\calD
        (\bd1_T \otimes \bX^\top \be_i),
    \end{equation}
    where the expressions of $D_{ij}^{lt}$ and  $\Delta_{ij}^{lt}$ are given by
    \begin{align}
    D_{ij}^{lt} 
    &= -(\be_t^\top\otimes \be_l^\top) 
    [\bI_{nT} - n^{-1}(\bI_T \otimes \bX) \calM^{-1} \calD (\bI_T \otimes \bX^\top )] ((\bH^\top\be_j)   \otimes \be_i), \label{eq:D-ij-lt}
    \\
    \Delta_{ij}^{lt} 
    &= - n^{-1} (\be_t^\top \otimes \be_l^\top)(\bI_T\otimes \bX)
	\calM^{-1} \calD \bigl(\bF^\top \otimes \bI_p\bigr)(\be_i \otimes \be_j) \label{eq:Delta-ij-lt}.
    \end{align}
    Here $\calD$ is defined in \eqref{eq:calD-J} and $\calM = \bI_{pT} + \calD (\bI_T \otimes \tfrac{\bX^\top \bX}{n}) - \calJ$ is defined in \Cref{lem:dot-b-4}.
    It immediately follows that
    \begin{align}\label{eq:sum-D}
        \sum_{i=1}^n D_{ij}^{it} 
        = -\be_t^\top (n \bI_T - \hbA) \bH^\top \be_j 
        \quad\text{and}\quad
        \sum_{i=1}^n \sum_{t=1}^T D_{ij}^{it} \be_t^\top
        = - \be_j^\top \bH(n \bI_T - \hbA^\top),
        \\
            \label{eq:lem:div-F-eps}
            \trace\Bigl(\pdv{\bX \hbb^t}{\bep}\Bigr)=
            \sum_{l=1}^n\pdv{\be_l^\top\bX \hbb^t}{\eps_l}
            =
            \be_t^\top \hbA {\bf1}_T
        \quad \text{ and } \quad
            \trace\Bigl(\pdv{\bF \be_t}{\bep}\Bigr) = \be_t^\top (n\bI_T - \hbA) {\bf1}_T.
    \end{align}
\end{corollary}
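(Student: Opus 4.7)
The plan is to derive every identity directly from Lemma 3.1 via the product rule together with the Kronecker product identities \eqref{eq:kronecker-mix-product}--\eqref{eq:kronecker-vectorization}. The key preliminary observation is that, by the linear model \eqref{eq:linear-model},
\[
F_{lt} = \be_l^\top\bep - \be_l^\top\bX(\hbb^t - \bb^*) = \be_l^\top\bep - \be_l^\top\bX\bH\be_t,
\]
so that the dependence of $F_{lt}$ on $\bX$ splits cleanly into the linear factor $\be_l^\top\bX$ and the iterate $\hbb^t$ itself.

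For the decomposition $\partial F_{lt}/\partial x_{ij} = D_{ij}^{lt} + \Delta_{ij}^{lt}$, the product rule together with $\partial(\be_l^\top\bX)/\partial x_{ij} = \mathbb I\{l=i\}\be_j^\top$ gives
\[
\frac{\partial F_{lt}}{\partial x_{ij}} = -\mathbb I\{l=i\}\,\be_j^\top\bH\be_t - \be_l^\top\bX\frac{\partial\hbb^t}{\partial x_{ij}}.
\]
Substituting \eqref{eq:pdv-bt-x} and applying the mixed product identity in the forms $\be_l^\top\bX(\be_t^\top\otimes\bI_p) = \be_t^\top\otimes(\be_l^\top\bX)$ and $(\bI_T\otimes\bX^\top)((\bH^\top\be_j)\otimes\be_i) = (\bH^\top\be_j)\otimes(\bX^\top\be_i)$, the $\bF^\top\be_i$ piece becomes exactly $\Delta_{ij}^{lt}$ as in \eqref{eq:Delta-ij-lt}, while the $\bH^\top\be_j$ piece becomes the non-identity portion of $D_{ij}^{lt}$ in \eqref{eq:D-ij-lt}. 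The leftover indicator term $-\mathbb I\{l=i\}\be_j^\top\bH\be_t$ is absorbed into the identity portion of $D_{ij}^{lt}$ via $(\be_t^\top\otimes\be_l^\top)((\bH^\top\be_j)\otimes\be_i) = \mathbb I\{l=i\}\be_j^\top\bH\be_t$. The formula for $\partial(\bF\be_t)/\partial\eps_i$ follows the same way from $\bF\be_t = \bep - \bX\bH\be_t$ and \eqref{eq:pdv-bt-ep}, using $\bX(\be_t^\top\otimes\bI_p) = \be_t^\top\otimes\bX$.

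The summation identities in \eqref{eq:sum-D} and \eqref{eq:lem:div-F-eps} follow by matching sums against the definition \eqref{eq:hat-A} of $\hbA$. Summing $D_{ij}^{it}$ over $i$ splits into $-n\be_t^\top\bH^\top\be_j$ (from $\sum_i\be_i\be_i^\top = \bI_n$) and a second term which, after inserting $\bH^\top\be_j = \sum_s(\be_j^\top\bH\be_s)\be_s$ and using the identifications $\be_t^\top(\bI_T\otimes M) = \be_t^\top\otimes M$ and $(\bI_T\otimes M)\be_s = \be_s\otimes M$, equals $\be_t^\top\hbA\bH^\top\be_j$; combined, this gives $-\be_t^\top(n\bI_T-\hbA)\bH^\top\be_j$. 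Multiplying by $\be_t^\top$, summing over $t$, and taking the transpose then yields the second equation in \eqref{eq:sum-D}. The divergence formulas \eqref{eq:lem:div-F-eps} are handled identically using $\bd1_T = \sum_s\be_s$ in \eqref{eq:pdv-bt-ep}; the extra $\be_t^\top(n\bI_T)\bd1_T = n$ contribution for $\partial(\bF\be_t)/\partial\bep$ comes from $\partial(\be_l^\top\bep)/\partial\eps_l = 1$.

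The argument presents no real difficulty; the sole obstacle is careful bookkeeping of Kronecker factors, in particular inserting the rewritings $\be_t^\top = \be_t^\top\otimes 1$ and $\be_s = \be_s\otimes 1$ at the correct position before each use of the mixed product identity. I would organize the write-up so that all such rearrangements are performed first, leaving only elementary scalar arithmetic at the end.
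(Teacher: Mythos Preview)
Your proposal is correct and follows exactly the approach the paper indicates: the corollary is stated to follow ``directly... by the product rule'' from Lemma~\ref{lem:dot-b-4}, and you have simply written out that product-rule computation together with the Kronecker bookkeeping needed to match the definition \eqref{eq:hat-A} of $\hbA$. There is no alternative argument to compare.
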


\begin{proof}[Proof of \Cref{lem:dot-b-4}]
By composition of locally Lipschitz functions, the map
$(\bX,\bep)\mapsto \hbb^t$ is locally Lipschitz and thus differentiable
almost everywhere. Let $(\dot\bX,\dot\bep)$ be a perturbation direction.
We now compute the directional derivative of $\hbb^t$ with respect to
this direction by taking, for any function $F(\bep,\bX)$ the limit
$u^{-1}(F(\bep+u\dot\bep, \bX+u\dot\bX)-F(\bep,\bX))$ as $u\to 0$
and call the limit $\dot F$. For $\hbb^t$, we denote the corresponding
directional derivative by $\dot\bb^t$, and for $\bv^t$ by $\dot\bv^t$.
By definition of $\hbb^t$ in \eqref{eq:iterates_all_previous},
we have almost surely by the chain rule
(if necessary with modification from \cite[Corollary 3.2]{ambrosio1990general} as explained after
\eqref{chainrule_lipschitz_example}),
\begin{align}\label{eq:dot-b-1}
    \dot\bb^t
    =
    \sum_{s=1}^{t-1}
    \bJ_{t, s} \dot\bb^s + \bD_{t, s} \dot\bv^s.
\end{align}
Since $\bv^ t= n^{-1}\bX^\top\bF\be_t=n^{-1}\bX^\top(\bep - \bX\bH\be_t)$ 
and $\dot\bH\be_t = \dot\bb^t$, we have by the product rule
\begin{equation}
    \dot \bv^t
    = -\tfrac{1}{n} \bX^\top \bX \dot \bb^t + 
    \underbrace{\tfrac{1}{n} \big[\dot\bX^\top \bF \be_t + \bX^\top (\dot\bep - \dot\bX \bH \be_t) \big]}_{\ba^t}
    \label{def_a_t}
\end{equation}
where $\ba^t\in\R^p$.
Substituting the expressions for $\dot\bv^{s}$ into \eqref{eq:dot-b-1}, we have
\begin{align}
    \label{eq:linear-system}
    \dot\bb^{t} 
    = 
    \sum_{s=1}^{t-1}
    (\bJ_{t, s} - \bD_{t, s} \tfrac{\bX^\top \bX}{n}) \dot\bb^{s} 
    + \bD_{t, s} \ba^{s}. 
\end{align}
For the initial condition, we have $\dot \bb^1 = \b0$ since $\hbb^1$ is a constant independent of $\bX$. 
Similarly $\bD_{2,0} = \bJ_{2,0} =\b0$ since $\hbb^0$ and $\bv^0$ are set as constant vector $\b0$. 
Therefore
we can write \eqref{eq:linear-system}
as a linear system of size $pT$:
\begin{align*}
    \renewcommand{\arraystretch}{2.5}
    &
    \underbrace{
    \begin{bmatrix}
        \bI_p &\empty &\empty &\empty &\empty\\
        \bD_{2,1} \tfrac{\bX^\top \bX}{n} - \bJ_{2,1} & \bI_p &\empty &\empty&\empty\\
        \bD_{3,1} \tfrac{\bX^\top \bX}{n} - \bJ_{3,1} & \bD_{3,2} \tfrac{\bX^\top \bX}{n}- \bJ_{3,2} & \bI_p &\empty&\empty\\
        \vdots & \ddots & \ddots & \ddots &\empty\\
        \bD_{T,1}\tfrac{\bX^\top \bX}{n} - \bJ_{T, 1} & \cdots & \bD_{T,T-2} \tfrac{\bX^\top \bX}{n} - \bJ_{T, T-2} & \bD_{T, T-1} \tfrac{\bX^\top \bX}{n} -\bJ_{T, T-1}  &\bI_p
    \end{bmatrix}}_{\calM}
    \underbrace{
    \begin{bmatrix}
        \dot \bb^1\\
        \dot \bb^{2}\\
        \dot \bb^{3}\\
        \vdots\\
        \dot \bb^T\\
    \end{bmatrix}}_{\vec(\dot \bB)} \\
    &=
    \underbrace{\begin{bmatrix}
        \b0 & \b0 & \b0 & \b0 & \b0\\
        \bD_{2,1} & \b0 & \b0 & \b0 & \b0\\
        \bD_{3,1} & \bD_{3,2} & \b0 & \b0 & \b0\\
        \vdots & \ddots & \ddots & \ddots &\empty\\
        \bD_{T,1} & \cdots & \bD_{T, T-2} & \bD_{T, T-1} &\b0\\
    \end{bmatrix}}_{\calD}
    \underbrace{
    \begin{bmatrix}
        \ba^1\\
        \ba^2\\
        \ba^3\\
        \vdots\\
        \ba^T\\
    \end{bmatrix}}_{\ba}.\notag
\end{align*}
In the above equation, 
the matrix $\calM$ is the same as the one defined in \Cref{lem:dot-b-4},
\ie, $\calM = \bI_{pT} + \calD (\bI_T \otimes \tfrac{\bX^\top \bX}{n}) - \calJ$ 
for $\calD,\calJ$ in \eqref{eq:all_previous_calD_calJ}.
Solving the linear system \eqref{eq:linear-system} for $\dot \bb^t$, 
we obtain
    \begin{align*}
        \dot \bb^t
        = \vec(\dot \bB \be_t)
        =& (\be_t^\top \otimes \bI_p) \vec(\dot \bB) &&\mbox{by \eqref{eq:kronecker-vectorization}}\\
        =& (\be_t^\top \otimes \bI_p) \calM^{-1}\calD \ba &&\mbox{by \eqref{eq:linear-system}}
        .
    \end{align*}
    By definition of $\ba\in\R^{pT}$ and $\ba^t$ in \eqref{def_a_t},
    this completes the proof of 
    \eqref{eq:pdv-bt-x} by taking
    $(\dot\bep,\dot\bX) = (\b0_n,\be_i\be_j^\top)$ and of
    \eqref{eq:pdv-bt-ep} by taking $(\dot\bep,\dot\bX) = (\be_l,\b0_{n\times p})$.
\end{proof}

\section{Change of variables}
\label{sec:change-of-var}
For the linear model 
$\by = \bX\bb^* + \bep$, its design matrix $\bX$ may not be isotropic.
However,
we can always consider the following change of variables:

\begin{align}\label{eq:change-of-var}
    \bX  \rightsquigarrow	 \bX\bSigma^{-1/2}:=\bG,
    \quad 
    \bb^*  \rightsquigarrow	 \bSigma^{1/2}\bb^*:= \btheta^*.
\end{align}
This way, the original linear model can be rewritten as 
\begin{align}\label{eq:linear-model-G}
    \by = \bG\btheta^* + \bep,
\end{align}
where $\bG$ is the design matrix with \iid entries from $\mathsf{N}(0, 1)$, and $\btheta^*$ is the new unknown coefficient vector.
For the linear model \eqref{eq:linear-model-G} with isotropic design matrix $\bG$, we use $\hbtheta^1, ..., \hbtheta^T$ to denote the first $T$ iterates of an iterative algorithm detailed in next paragraph, where $\hbtheta^t$ are constructed such that $\hbtheta^t = \bSigma^{1/2}\hbb^t$.  
Similar to the definition of $\bF, \bH$ in \eqref{eq:F-H}, for the linear model \eqref{eq:linear-model-G}, we define
\begin{align*}
    \bF^* = [\by - \bG \hbtheta^1, ..., \by - \bG \hbtheta^T], \quad \bH^* = [\hbtheta^1 - \btheta^*, ..., 
    \hbtheta^T - \btheta^*].
\end{align*}
Denoting with a superscript$^*$ the quantities after the change of variable,
we have
$\bF^* = \bF$ and $\bH^* = \bSigma^{1/2}\bH$. 

We now describe the iteration to generate the iterates $\hbtheta^1, ..., \hbtheta^T$ that guarantees $\hbtheta^t = \bSigma^{1/2}\hbb^t$. 
We start with $\hbtheta^1 = \bSigma^{1/2}\hbb^1$, and generate $\hbtheta^t$ for $t\ge 2$ by the recursion:
$\hbtheta^t = \tbg_t(\hbtheta^{t-1}, \tbv^{t-1}, \hbtheta^{t-2}, \tbv^{t-2})$, where 
$\tbv^t = \frac{1}{n} \bG^\top(\by - \bG \hbtheta^t)$, and 
the iteration function $\tbg_t$ is defined as
\begin{equation}
\label{eq:tbg}
\begin{aligned}
    &\tbg_t\bigl( 
        \hbtheta^{t-1},\dots,\hbtheta^{1},~~
        \tbv^{t-1},\dots \tbv^{1}
\bigr) 
    \\= \bSigma^{1/2} &\bg_{t}\Bigl(
    \bSigma^{-1/2}\hbtheta^{t-1}, \dots 
    \bSigma^{-1/2}\hbtheta^{1},~~
    \bSigma^{1/2}\tbv^{t-1},
    \dots
    \bSigma^{1/2}\tbv^{1}
\Bigr).
\end{aligned}
\end{equation}
Since $\bg_t$ is $\zeta$-Lipschitz continuous from \Cref{assu:Lipschitz}, we have $\tbg_t$ is $\zeta\kappa$-Lipschitz continuous using that $\opnorm{\bSigma}\le \opnorm{\bSigma}\opnorm{\bSigma^{-1}}\le \kappa$ from \Cref{assu:design}. 
By construction, $\hbtheta^s=\bSigma^{1/2}\hbb^s$ for all $s\le t-1$
implies $\hbtheta^t=\bSigma^{1/2}\hbb^t$, so by induction
the relation holds for all $t\ge 1$.
For the iteration function $\tbg_t$ in \eqref{eq:tbg},
similarly to \eqref{J_ts_D_ts_all_previous},
we define the derivative matrices with respect to each argument as
\begin{equation}
    \bJ_{t, s}^*= \pdv{\tbg_t}{\btheta^{s}}\Bigl(\hbtheta^{t-1}, \dots,\hbtheta^{1},
        \tbv^{t-1}, \dots \tbv^{1}
    \Bigr),
    \qquad 
    \bD_{t, s}^* =
    \pdv{\tbg_t}{\tbv^{s}}\Bigl(\hbtheta^{t-1}, \dots,\hbtheta^{1},
        \tbv^{t-1}, \dots \tbv^{1}
    \Bigr)
    \label{eq:relation_jacobians_star_without}
    \end{equation}
    for each $s\le t-1$.
Furthermore, we have by chain rule, 
\begin{equation}\label{eq:J-D-star}
\begin{aligned}
    \bJ_{t, s}^* 
    &= \pdv{\tbg_{t}}{\hbtheta^{s}}
    = \bSigma^{1/2} \pdv{\bg_{t}}{\hbb^{s}} \bSigma^{-1/2} 
    = \bSigma^{1/2} \bJ_{t, s} \bSigma^{-1/2},\\
    \bD_{t, t-1}^* 
    &= \pdv{\tbg_{t}}{\tbv^{s}}
    = \bSigma^{1/2} \pdv{\bg_{t}}{\bv^{s}} \bSigma^{1/2} = \bSigma^{1/2} \bD_{t, s} \bSigma^{1/2}.
\end{aligned}
\end{equation}
We may apply \Cref{lem:dot-b-4} to
$\tbg_t$, $\btheta_t$, $\bG$ to obtain
with $\mathcal M^*=\bI_{pT} + \mathcal D^*
(\bI_T \otimes \bG^\top\bG/n)-\mathcal J^*$
\begin{align*}
    \pdv{\hbtheta^t}{g_{ik}} 
    &= n^{-1} (\be_t \otimes \bI_p) (\calM^*)^{-1} \calD^* 
    \sum_{j=1}^p
    \Bigl[\Bigl((\bF^*)^\top \be_i\Bigr) \otimes \be_k - \Bigl((\bH^*)^\top \be_k\Bigr)  \otimes (\bG^\top\be_i)\Bigr]
\end{align*}
where 
    $\calD^* = \sum_{t=2}^T 
    \sum_{s=1}^{t-1}
(\be_t \be_s^\top) \otimes
\bD_{t,s}^*$ and 
$\calJ^* = \sum_{t=2}^T 
\sum_{s=1}^{t-1}
(\be_t \be_s^\top) \otimes
\bJ_{t,s}^*$
as in \eqref{eq:all_previous_calD_calJ}.
Note that the relationships 
\begin{equation}
\calM^* = (\bI_T \otimes \bSigma^{1/2}) \calM (\bI_T \otimes \bSigma^{-1/2}),
\qquad \calD^* = (\bI_T \otimes \bSigma^{1/2}) \calD (\bI_T \otimes \bSigma^{1/2})
\label{relation_M_M^*_D_D^*}
\end{equation}
hold due to \eqref{eq:J-D-star}.
Now we verify that $\hbA$ is unchanged by the change of variable,
that is, $\hbA=\hbA^*$. 
By definition of $\hbA^*$ (as in \eqref{eq:hat-A}),
\begin{align*}
    \hbA^*
    \stackrel{\text{\tiny def}}{=} 
    \frac1n \sum_{i=1}^n (\bI_T \otimes \be_i^\top \bG)(\calM^*)^{-1} \calD^* (\bI_T \otimes \bG^\top \be_i)
    = \frac1n \sum_{i=1}^n (\bI_T \otimes (\be_i^\top \bX))\calM^{-1} \calD (\bI_T \otimes (\bX^\top \be_i))
\end{align*}
using \eqref{relation_M_M^*_D_D^*} for the second equality.
The rightmost quantity is exactly $\hbA$, so
$\hbA=\hbA^*$
stays the same after the change of variable. 

In summary, we present the relevant quantities for both the original model \eqref{eq:linear-model} and the new model \eqref{eq:linear-model-G} in \Cref{table:change-of-var}.
\begin{table}[h]
    \centering
    \begin{tabular}{|c|c|c|c|}
    \hline
    Model & $\by = \bX \bb^* + \bep$ & $\by = \bG \btheta^* + \bep$ & Relationship \\ \hline
    Design matrix & $\bX$ & $\bG$ & $\bX = \bG \bSigma^{1/2}$\\ \hline
    Covariance matrix & $\bSigma$ & $\bSigma^* = \bI_p$ & $\bSigma = \bSigma \bSigma^*$\\ \hline
    Coefficient vector & $\bb^*$ & $\btheta^*$ & $\bb^* = \bSigma^{-1/2}\btheta^*$\\ \hline
    $t$-th iterate & $\hbb^t$ &$\hbtheta^t$
    & $\hbb^t = \bSigma^{-1/2}\hbtheta^t$\\ \hline
    Error matrix & $\bH = \sum_{t} (\hbb^t - \bb^*)\be_t^\top$ 
    & $\bH^* = \sum_{t} (\hbtheta^t - \btheta^*) \be_t^\top $ & $\bH = \bSigma^{-1/2}\bH^*$\\ \hline
    Residual matrix & $\bF = \sum_{t}(\by - \bX \hbb^t)\be_t^\top$ & $\bF^* = \sum_{t}(\by - \bG \hbtheta^t)\be_t^\top$ & $\bF = \bF^*$\\ \hline
    Memory matrix & $\hbA$ & $\hbA^*$ &$\hbA = \hbA^*$\\ \hline
    \end{tabular}
    \caption{Change of variables.}
    \label{table:change-of-var}
\end{table}
The benefit of this change of variable is that, the transformed model \eqref{eq:linear-model-G} has isotropic design matrix $\bG$, and the following quantities stay the same:
$$\bF = \bF^*, \quad 
\hbA = \hbA^*, \quad
\bSigma^{1/2}\bH = \bH^*.
$$
Therefore, \Cref{thm:rt,thm:generalization-error} hold for general $\bSigma$ if they hold for $\bSigma = \bI_p$. 
The only change is the Lipschitz constant of 
the function $\bg_t$ changes to the Lipschitz constant of $\tbg_t$, with an extra factor of $\kappa$ by \Cref{assu:design} as we argued below \eqref{eq:tbg}.
Throughout the subsequent proofs of \Cref{thm:rt,thm:generalization-error}, we will assume $\bSigma = \bI_p$ without loss of generality. The proof for general $\bSigma$ follows by changing the Lipschitz constant from $\zeta$ to $\kappa\zeta$.

\section{Useful intermediate results}
\subsection{Deterministic operator norm bound}
\begin{lemma}\label{lem:opnorm-D-J}
    Under \Cref{assu:Lipschitz},
    for $\calJ,\calD$ in \eqref{eq:all_previous_calD_calJ} we have
    \begin{equation}\label{eq:opnorm-D-J}
        \max\{\opnorm{\calD}, \opnorm{\calJ} \} \le T\zeta.
    \end{equation}
\end{lemma}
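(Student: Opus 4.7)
The plan is to bound each of $\calJ$ and $\calD$ by a row-by-row decomposition, using that a $\zeta$-Lipschitz function has Jacobian of operator norm at most $\zeta$ wherever differentiable. Since the lemma is deterministic, no probabilistic input is needed.

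First I would decompose $\calJ = \sum_{t=2}^{T} \calJ^{(t)}$, where $\calJ^{(t)} \in \R^{pT\times pT}$ has the $t$-th block row equal to $[\bJ_{t,1},\bJ_{t,2},\ldots,\bJ_{t,t-1},\boldzero,\ldots,\boldzero]$ and is zero on every other block row. Because $\calJ^{(t)}$ has a single nonzero block row of height $p$, its operator norm equals the operator norm of that single block row, viewed as a matrix $[\bJ_{t,1},\ldots,\bJ_{t,t-1}] \in \R^{p\times p(t-1)}$. An identical decomposition $\calD = \sum_{t=2}^T \calD^{(t)}$ works for $\calD$, with $\|\calD^{(t)}\|_{\rm op}$ equal to the operator norm of $[\bD_{t,1},\ldots,\bD_{t,t-1}]$.

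Next I would bound these single-row concatenations using the Lipschitz hypothesis. By \Cref{assu:Lipschitz}, the function $\bg_t: \R^{2p(t-1)}\to\R^p$ appearing in \eqref{eq:iterates_all_previous} is $\zeta$-Lipschitz, so at any point of differentiability (and at the modified point granted by \cite[Corollary 3.2]{ambrosio1990general}) its full Jacobian, which is exactly the horizontal concatenation
\[
\bigl[\bJ_{t,1},\ldots,\bJ_{t,t-1},\ \bD_{t,1},\ldots,\bD_{t,t-1}\bigr] \in \R^{p\times 2p(t-1)},
\]
has operator norm at most $\zeta$. Since restricting a matrix to a subset of its columns can only decrease the operator norm, both $[\bJ_{t,1},\ldots,\bJ_{t,t-1}]$ and $[\bD_{t,1},\ldots,\bD_{t,t-1}]$ have operator norm at most $\zeta$.

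Combining the two steps via the triangle inequality gives
\[
\|\calJ\|_{\rm op} \le \sum_{t=2}^T \|\calJ^{(t)}\|_{\rm op} \le (T-1)\zeta \le T\zeta,
\]
and likewise for $\calD$, proving \eqref{eq:opnorm-D-J}. There is no real obstacle here; the only subtle point is making sure the Jacobian-of-Lipschitz bound applies to the possibly ``modified'' blocks $\bJ_{t,s},\bD_{t,s}$ defined via Ambrosio's chain-rule fix, but since that modification only restricts $\bg_t$ to an affine subspace, the restricted function inherits the Lipschitz constant $\zeta$ and hence the operator-norm bound on its Jacobian still holds.
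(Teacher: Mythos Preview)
Your proof is correct and follows essentially the same route as the paper's: bound each block row of $\calJ$ and $\calD$ in operator norm by $\zeta$ via the Lipschitz constant of $\bg_t$, then apply the triangle inequality over the $T$ rows. Your version is more explicit (spelling out the column-restriction step and the Ambrosio caveat), but the argument is the same.
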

\begin{proof}[Proof of \Cref{lem:opnorm-D-J}]
    By \Cref{assu:Lipschitz} the function $\bg_t$ is $\zeta$-Lipschitz, thus
    the block row of size $p \times pT$ corresponding to $\bg_t$ in $\calD$ and $\calJ$ is bounded in operator norm by $\zeta$. Since it has $T$ such row-blocks of size $p\times pT$, we obtain
    the result by the triangle inequality.
\end{proof}

\begin{lemma}\label{lem:opnorm-M-N}
    With $\calM$ as in \Cref{lem:dot-b-4} and 
    $\bN = n^{-1} (\bI_T \otimes \bX) \calM^{-1} \calD (\bI_T \otimes \bX^\top)$, 
    \begin{equation}\label{eq:opnorm-M-N}
        \|\calM^{-1}\|_{\rm op} 
        \le C(\zeta, T) (1 + \opnorm{\bX}^2/n)^{T-1}
        \text{ and }
        \|\bN\|_{\rm op} 
        \le C(\zeta, T) (1 + \opnorm{\bX}^2/n)^{T}
    \end{equation}
    where $C(\zeta, T)$ is a constant depending on $\zeta$ and $T$ only. 
\end{lemma}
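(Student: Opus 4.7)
The plan is to exploit the strictly block lower triangular structure of $\calD$ and $\calJ$ (see \eqref{eq:all_previous_calD_calJ}) to write $\calM^{-1}$ as a finite Neumann series, and then bound each term using \Cref{lem:opnorm-D-J}.

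First, I would set
\begin{equation*}
    \bK \defas \calJ - \calD(\bI_T \otimes \tfrac{\bX^\top\bX}{n}),
\end{equation*}
so that $\calM = \bI_{pT} - \bK$. Because $\calD$ and $\calJ$ are strictly block lower triangular as $T\times T$ arrays of $p\times p$ blocks (their block diagonals are zero), the product $\calD(\bI_T\otimes \bX^\top\bX/n)$ is strictly block lower triangular as well, and hence so is $\bK$. Consequently, $\bK$ is nilpotent with $\bK^T=\b0$, which yields the finite expansion
\begin{equation*}
    \calM^{-1} = \sum_{k=0}^{T-1} \bK^k.
\end{equation*}

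Next, I would bound $\opnorm{\bK}$ by the triangle inequality, the submultiplicativity of the operator norm, the Kronecker norm identity \eqref{eq:kronecker-norm}, and \Cref{lem:opnorm-D-J}:
\begin{equation*}
    \opnorm{\bK}
    \le \opnorm{\calJ} + \opnorm{\calD}\,\opnorm{\bI_T\otimes \tfrac{\bX^\top\bX}{n}}
    \le T\zeta + T\zeta\cdot \tfrac{\opnorm{\bX}^2}{n}
    = T\zeta\Bigl(1 + \tfrac{\opnorm{\bX}^2}{n}\Bigr).
\end{equation*}
Plugging this into the Neumann series and using $\sum_{k=0}^{T-1} a^k \le T\, a^{T-1}$ for $a\ge 1$ (and a trivial bound when $a<1$), I obtain
\begin{equation*}
    \opnorm{\calM^{-1}}
    \le \sum_{k=0}^{T-1} \opnorm{\bK}^k
    \le C(\zeta, T)\Bigl(1 + \tfrac{\opnorm{\bX}^2}{n}\Bigr)^{T-1},
\end{equation*}
which is the first claim.

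For the second claim, I use submultiplicativity once more together with \eqref{eq:kronecker-norm}: since $\opnorm{\bI_T\otimes \bX}=\opnorm{\bI_T\otimes \bX^\top}=\opnorm{\bX}$,
\begin{equation*}
    \opnorm{\bN}
    \le \tfrac{\opnorm{\bX}^2}{n}\,\opnorm{\calM^{-1}}\,\opnorm{\calD}
    \le \tfrac{\opnorm{\bX}^2}{n}\cdot C(\zeta,T)\Bigl(1+\tfrac{\opnorm{\bX}^2}{n}\Bigr)^{T-1}\cdot T\zeta,
\end{equation*}
and absorbing the leading factor $\opnorm{\bX}^2/n \le 1+\opnorm{\bX}^2/n$ gives $\opnorm{\bN}\le C(\zeta,T)(1+\opnorm{\bX}^2/n)^T$. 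No step is really an obstacle here; the only thing to be careful about is verifying that $\bK$ is genuinely strictly block lower triangular so that $\bK^T=\b0$, which is immediate from the definitions \eqref{eq:all_previous_calD_calJ}.
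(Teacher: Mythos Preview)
Your proof is correct and follows essentially the same argument as the paper: write $\calM^{-1}$ as the finite Neumann series $\sum_{k=0}^{T-1}\bK^k$ with $\bK=\calJ-\calD(\bI_T\otimes\bX^\top\bX/n)$ nilpotent by strict block lower triangularity, bound $\opnorm{\bK}$ via \Cref{lem:opnorm-D-J}, and then deduce the bound on $\bN$ by submultiplicativity.
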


\begin{proof}[Proof of \Cref{lem:opnorm-M-N}]
By definition, 
$\calM = \bI_{pT} - [\calJ - \calD (\bI_T \otimes \tfrac{\bX^\top \bX}{n})] $
and $(\bI_{pT}-\mathcal M)^T=\b0_{pT\times pT}$ since
$\bI_{pT}-\mathcal M$ is lower triangular with zero diagonal blocks of size
$p\times p$.
Using the matrix identity $(\bI - \bA)^{-1} = \sum_{k=0}^{\infty} \bA^k=\sum_{k=1}^{T-1}$ for any matrix $\bA$ with $\bA^T=0$, we have 
$\calM^{-1} 
    = \sum_{k=0}^{T-1} [\calJ - \calD (\bI_T \otimes \tfrac{\bX^\top \bX}{n})]^k
$.
Furthermore for each $k=0,...,T_1$,
\begin{align*}
    \|\calJ - \calD (\bI_T \otimes \tfrac{\bX^\top \bX}{n}) \|^{k}_{\rm op}
    &\le (\opnorm{\calJ} + \|\calD\|_{\rm op} \opnorm{\bX}^2/n )^{k} &&\mbox{by \eqref{eq:kronecker-norm}}\\
    &\le [(1 + \opnorm{\bX}^2/n) \zeta T]^k &&\mbox{by \eqref{eq:opnorm-D-J}}.
\end{align*}
This provides $\|\mathcal M\|_{\rm op}\le T (1 + \zeta T)^{T-1} (1 + \opnorm{\bX}^2/n)^{T-1}$.
The bound on $\opnorm{\bN}$ follows from
$
    \|\bN\|_{\rm op} 
    \le \|\calM^{-1}\|_{\rm op} \|\calD\|_{\rm op} \opnorm{\bX}^2/n
$
and the previous bound.
\end{proof}

\begin{lemma}\label{lem:opnorm-Ahat}
    Under \Cref{assu:Lipschitz}, we have 
    \begin{align}
        \|\bI_T - \hbA/n\|_{\rm op} 
        &\le C(\zeta, T) (1 + \opnorm{\bX}^2/n)^{T} 
        ,
        \label{eq:I-hbA/n_opnorm}
        \\
        \|(\bI_T - \hbA/n)^{-1}\|_{\rm op} 
        &\le C(\zeta, T) (1 + \opnorm{\bX}^2/n)^{T^2}.
        \end{align}
\end{lemma}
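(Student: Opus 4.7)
The plan is to reduce both bounds to the operator norm estimate $\|\bN\|_{\rm op}\le C(\zeta,T)(1+\|\bX\|_{\rm op}^2/n)^T$ from \Cref{lem:opnorm-M-N}, by viewing $\hbA$ as a matrix of traces of the $n\times n$ blocks of $n\bN$. The main observation, already noted after \eqref{large_nT_nT}, is that $\hbA_{t,t'}=\trace(\bN_{t,t'})$, where $\bN_{t,t'}\in\R^{n\times n}$ is the $(t,t')$ block of $\bN=n^{-1}(\bI_T\otimes\bX)\calM^{-1}\calD(\bI_T\otimes\bX^\top)$.

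The first bound is the easy half. Since each block $\bN_{t,t'}$ satisfies $\|\bN_{t,t'}\|_{\rm op}\le\|\bN\|_{\rm op}$, we have $|\hbA_{t,t'}|=|\trace(\bN_{t,t'})|\le n\|\bN_{t,t'}\|_{\rm op}\le n\|\bN\|_{\rm op}$, hence $|(\hbA/n)_{t,t'}|\le\|\bN\|_{\rm op}$. There are only $T^2$ entries, so $\|\hbA/n\|_{\rm op}\le\|\hbA/n\|_{\rm F}\le T\|\bN\|_{\rm op}$, and applying \Cref{lem:opnorm-M-N} and the triangle inequality gives $\|\bI_T-\hbA/n\|_{\rm op}\le 1+T\|\bN\|_{\rm op}\le C(\zeta,T)(1+\|\bX\|_{\rm op}^2/n)^T$.

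For the second bound, I would exploit the lower-triangular structure of $\hbA$ noted after \eqref{large_nT_nT}: $\calM^{-1}\calD$ is block lower triangular with zero diagonal blocks of size $p\times p$, so $\bN$ is block lower triangular with zero diagonal blocks of size $n\times n$, and therefore $\hbA\in\R^{T\times T}$ is strictly lower triangular. Consequently $(\hbA/n)^T=\b0$, the Neumann expansion terminates, and
\begin{equation*}
(\bI_T-\hbA/n)^{-1}=\sum_{k=0}^{T-1}(\hbA/n)^k.
\end{equation*}
Using $\|(\hbA/n)^k\|_{\rm op}\le\|\hbA/n\|_{\rm op}^k\le (T\|\bN\|_{\rm op})^k$ and the geometric bound, we obtain $\|(\bI_T-\hbA/n)^{-1}\|_{\rm op}\le T(1+T\|\bN\|_{\rm op})^{T-1}\le C(\zeta,T)(1+\|\bX\|_{\rm op}^2/n)^{T(T-1)}\le C(\zeta,T)(1+\|\bX\|_{\rm op}^2/n)^{T^2}$, as required.

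There is no real obstacle here; the only subtlety is remembering that the passage from $\|\bN\|_{\rm op}$ to $\|\hbA/n\|_{\rm op}$ costs a factor of $T$ (because the trace of an $n\times n$ block is controlled by $n$ times its operator norm, and summing the $T^2$ entrywise bounds via the Frobenius norm costs another factor of $T$). Once this is handled, the exponent $T^2$ in the second bound arises naturally from $(T-1)$ powers of a quantity that itself already grows as $(1+\|\bX\|_{\rm op}^2/n)^T$.
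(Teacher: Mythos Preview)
Your proposal is correct and follows essentially the same approach as the paper: reduce to the operator norm bound on $\bN$ from \Cref{lem:opnorm-M-N}, then use the strict lower-triangularity of $\hbA$ to terminate the Neumann series after $T$ terms. The only cosmetic difference is in the first step: the paper bounds the bilinear form $\bu^\top\hbA\bv=\sum_{i=1}^n(\bu\otimes\be_i)^\top\bN(\bv\otimes\be_i)$ directly to obtain $\|\hbA/n\|_{\rm op}\le\|\bN\|_{\rm op}$ without the extra factor of $T$, whereas you go through entrywise trace bounds and the Frobenius norm; both are absorbed into $C(\zeta,T)$.
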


\begin{proof}[Proof of \Cref{lem:opnorm-Ahat}]
    \label{proof:opnorm-Ahat}
    Since $\hbA = \sum_{i=1}^n (\bI_T\otimes \be_i^\top)\bN(\bI_T\otimes \be_i)$
    we have for any $\bu,\bu\in \R^T$ that
    $|\bu^\top\hbA\bv| =|\sum_{i=1}^n (\bu\otimes \be_i)^\top \bN (\bv\otimes \be_i)| \le \|\bN\|_{\rm op} \sum_{i=1}^n 
    \|\bu\otimes \be_i\|
    \|\bv\otimes \be_i\|
    \le n\|\bN\|_{\rm op}$ using 
    \eqref{eq:kronecker-mix-product}
    and \eqref{eq:kronecker-norm}.
    The bound on the operator norm of $\bN$ from \Cref{lem:opnorm-M-N}
    gives \eqref{eq:I-hbA/n_opnorm}.

    Since $\hbA$ is a lower triangular matrix, we know $\hbA$ is a nilpotent matrix with $\hbA^k = 0$ if $k\ge T$. 
    Consequently
    $(\bI_T - \hbA/n)^{-1} 
    = \sum_{k=0}^{\infty} (\hbA/n)^k
    = \sum_{k=0}^{T-1} (\hbA/n)^k$, 
    hence 
    \begin{align*}
        \|(\bI_T - \frac{\hbA}{n})^{-1}\|_{\rm op} 
    &\le \sum_{k=0}^{T-1} \|\frac{\hbA}{n}\|_{\rm op}^k
    \le C(\zeta, T) \sum_{k=0}^{T-1} \Bigl(1 + \frac{\opnorm{\bX}^2}{n}\Bigr)^{Tk}
    \le C(\zeta, T) (1 + \opnorm{\bX}^2/n)^{T^2}.
    \end{align*}
    This finishes the proof.
    \end{proof}

\begin{lemma}\label{lem:opnorm-Chat}
    Under \Cref{assu:Lipschitz}, 
    let $\hbC = \sum_{j=1}^p (\bI_T \otimes \be_j^\top) \calM^{-1} \calD (\bI_T \otimes \be_j)$, we have 
    \begin{align}
        \|\bI_T + \hbC/n\|_{\rm op} 
        &\le C(\zeta, T, \gamma) (1 + \opnorm{\bX}^2/n)^{T} 
        ,
        \label{eq:I-hbC/n_opnorm}
        \\
        \|(\bI_T + \hbC/n)^{-1}\|_{\rm op} 
        &\le C(\zeta, T, \gamma) (1 + \opnorm{\bX}^2/n)^{T^2}. \label{eq:I-hbC/n_inv_opnorm}
        \end{align}
\end{lemma}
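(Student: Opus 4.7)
The plan is to follow the template established by \Cref{lem:opnorm-Ahat}, with the key observation being that $\hbC$ has the same lower-triangular nilpotent structure as $\hbA$, while the index of summation changes from $i\in[n]$ to $j\in[p]$ (which is where the $\gamma$ dependence will enter).

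First, I will derive a bound on $\|\hbC/n\|_{\rm op}$ analogously to the operator-norm bound in the proof of \Cref{lem:opnorm-Ahat}, but using $\be_j\in\R^p$ in place of $\be_i\in\R^n$. For any $\bu,\bv\in\R^T$ we have $|\bu^\top\hbC\bv| = |\sum_{j=1}^p (\bu\otimes\be_j)^\top \calM^{-1}\calD (\bv\otimes\be_j)| \le p\|\calM^{-1}\calD\|_{\rm op}\|\bu\|\|\bv\|$ using \eqref{eq:kronecker-mix-product} and \eqref{eq:kronecker-norm}. Combining $p/n\le\gamma$ with the factor $\|\calM^{-1}\|_{\rm op}\|\calD\|_{\rm op} \le C(\zeta,T)(1+\opnorm{\bX}^2/n)^{T-1}$ from \Cref{lem:opnorm-M-N} and \Cref{lem:opnorm-D-J} yields $\|\hbC/n\|_{\rm op} \le \gamma C(\zeta,T)(1+\opnorm{\bX}^2/n)^{T-1}$, and the first claim \eqref{eq:I-hbC/n_opnorm} follows by the triangle inequality after absorbing the constant 1 into the polynomial factor.

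Next, for \eqref{eq:I-hbC/n_inv_opnorm} I will observe that $\hbC$ is itself lower triangular with zero diagonal entries, hence nilpotent with $\hbC^T=\b0$. This holds because $\calD$ and $\calJ$ are both block lower triangular with zero diagonal blocks by \eqref{eq:all_previous_calD_calJ}, so $\calM$ is block lower triangular with identity diagonal blocks, and therefore $\calM^{-1}\calD$ is block lower triangular with zero diagonal blocks. Taking the $(t,s)$ entry of $\hbC$ as $\sum_{j=1}^p \be_j^\top [\calM^{-1}\calD]_{t,s}\be_j$ where $[\calM^{-1}\calD]_{t,s}$ denotes the $(t,s)$ block of $\calM^{-1}\calD$ of size $p\times p$, we see the entry vanishes whenever $s\ge t$.

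Given nilpotency, I can expand $(\bI_T+\hbC/n)^{-1} = \sum_{k=0}^{T-1}(-\hbC/n)^k$, and bound $\|(\bI_T+\hbC/n)^{-1}\|_{\rm op} \le \sum_{k=0}^{T-1}\|\hbC/n\|_{\rm op}^k \le C(\zeta,T,\gamma)(1+\opnorm{\bX}^2/n)^{T(T-1)}$, which is dominated by $C(\zeta,T,\gamma)(1+\opnorm{\bX}^2/n)^{T^2}$, giving \eqref{eq:I-hbC/n_inv_opnorm}. No step here is a real obstacle; the only subtlety is noting that the index set of summation becomes $[p]$ (introducing the $\gamma$ dependence) while the lower-triangular structure underpinning nilpotency is inherited directly from $\calM^{-1}\calD$ and is independent of whether we contract with $\be_i\in\R^n$ on the left/right through $\bX$ or with $\be_j\in\R^p$ directly.
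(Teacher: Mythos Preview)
Your proposal is correct and follows essentially the same approach as the paper. The only cosmetic difference is that the paper bounds $\|\hbC/n\|_{\rm op}$ directly via the triangle inequality $\|\hbC\|_{\rm op}\le\sum_{j=1}^p\|(\bI_T\otimes\be_j^\top)\calM^{-1}\calD(\bI_T\otimes\be_j)\|_{\rm op}\le (p/n)\|\calM^{-1}\|_{\rm op}\|\calD\|_{\rm op}$ rather than through the bilinear-form argument you borrowed from \Cref{lem:opnorm-Ahat}, and for \eqref{eq:I-hbC/n_inv_opnorm} it simply invokes ``the same argument as in the proof of \Cref{lem:opnorm-Ahat}'' after noting $\hbC$ is lower triangular, which is exactly the nilpotency-plus-geometric-series argument you wrote out.
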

\begin{proof}[Proof of \Cref{lem:opnorm-Chat}]
    By triangular inequality, we have 
    \begin{align*}
        \opnorm{\hbC}/n
        &\le n^{-1} \sum_{j=1}^p \opnorm{(\bI_T \otimes \be_j^\top) \calM^{-1} \calD (\bI_T \otimes \be_j)}\\
        &\le p/n \opnorm{\calM^{-1}} \opnorm{\calD} \\
        &\le C(\zeta, T, \gamma) (1 + \opnorm{\bX}^2/n)^{T-1} 
        &&\mbox{by \Cref{lem:opnorm-M-N,lem:opnorm-D-J}}.
    \end{align*}
    This directly implies \eqref{eq:I-hbC/n_opnorm} by the triangle inequality.
    Since $\hbC$ is lower triangular by its definition, the same argument in the proof of \Cref{lem:opnorm-Ahat} gives \eqref{eq:I-hbC/n_inv_opnorm}. 
\end{proof}

\begin{lemma}\label{lem:opnorm-dF/de}
    Under \Cref{assu:Lipschitz}, we have 
    \begin{align}
        \label{eq:opnorm-dF/de}
        \big\|\pdv{\bF \be_t}{\bep}\big\|_{\rm op}
        =
        \big\|\bI_n - (\be_t\otimes \bI_n)^\top\bN(\bd1_T\otimes \bI_n)\|_{\rm op}
        &\le C(\zeta, T) (1 + \opnorm{\bX}^2/n)^{T},
        \\
        \big\|\pdv{\bH \be_t}{\bep}\big\|_{\rm op}
        =
        \big\|\frac1n (\be_t\otimes \bI_p)^\top\calM^{-1}\calD(\bd1_T\otimes \bX^\top)
        \big\|_{\rm op}
        &\le \frac{C(\zeta, T)}{\sqrt n} (1 + \opnorm{\bX}^2/n)^{T}.
        \label{eq:opnorm-dH/de}
    \end{align}
\end{lemma}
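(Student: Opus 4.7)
The plan is to derive both bounds directly from the expressions already established in \Cref{lem:dot-b-4,lem:dot-F-4}, combined with the Kronecker norm identities in \eqref{eq:kronecker-norm} and the operator norm bounds in \Cref{lem:opnorm-D-J,lem:opnorm-M-N}. The equalities claimed in \eqref{eq:opnorm-dF/de} and \eqref{eq:opnorm-dH/de} are just a restatement of \eqref{eq:D2_decompoistion_Dij_Delta_ij} and \eqref{eq:pdv-bt-ep} assembled row-by-row over $i\in[n]$, using the definition $\bN = n^{-1}(\bI_T\otimes \bX)\calM^{-1}\calD(\bI_T\otimes \bX^\top)$. So the only work is the norm estimates.

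For the first bound, I would apply the triangle inequality to get
$\|\bI_n - (\be_t\otimes\bI_n)^\top \bN (\bd1_T\otimes\bI_n)\|_{\rm op} \le 1 + \|(\be_t\otimes\bI_n)^\top\|_{\rm op}\|\bN\|_{\rm op}\|\bd1_T\otimes\bI_n\|_{\rm op}$. By \eqref{eq:kronecker-norm}, $\|\be_t\otimes\bI_n\|_{\rm op} = \|\be_t\|\cdot 1 = 1$ and $\|\bd1_T\otimes\bI_n\|_{\rm op} = \sqrt{T}$. Applying the bound on $\|\bN\|_{\rm op}$ from \Cref{lem:opnorm-M-N} yields $1 + \sqrt{T}\,C(\zeta,T)(1+\opnorm{\bX}^2/n)^T$, which is of the required form after absorbing constants into $C(\zeta,T)$.

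For the second bound, submultiplicativity gives
\begin{equation*}
\bigl\|\tfrac1n(\be_t^\top\otimes\bI_p)\calM^{-1}\calD(\bd1_T\otimes\bX^\top)\bigr\|_{\rm op}
\le \tfrac1n \cdot 1 \cdot \|\calM^{-1}\|_{\rm op}\|\calD\|_{\rm op}\cdot \sqrt{T}\,\opnorm{\bX},
\end{equation*}
using \eqref{eq:kronecker-norm} as above and $\|\bd1_T\otimes\bX^\top\|_{\rm op} = \sqrt{T}\,\opnorm{\bX}$. Applying \Cref{lem:opnorm-D-J,lem:opnorm-M-N}, this is at most
$C(\zeta,T)\,\opnorm{\bX}/n \cdot (1+\opnorm{\bX}^2/n)^{T-1}$. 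To convert $\opnorm{\bX}/n$ into the target form, I would use the elementary inequality $x \le 1 + x^2$ for $x\ge 0$ with $x = \opnorm{\bX}/\sqrt{n}$, which gives $\opnorm{\bX}/n = n^{-1/2}(\opnorm{\bX}/\sqrt{n}) \le n^{-1/2}(1 + \opnorm{\bX}^2/n)$. Absorbing the extra factor into the power $T$ yields the desired $C(\zeta,T)n^{-1/2}(1+\opnorm{\bX}^2/n)^T$.

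There is no real obstacle; the only care needed is to track the Kronecker norm identities for $\be_t\otimes\bI_n$ versus $\bd1_T\otimes\bI_n$ (one has norm $1$, the other $\sqrt{T}$), and to produce the $n^{-1/2}$ factor in the second bound by a single elementary inequality rather than by sharper spectral bounds on $\opnorm{\bX}$ (which would introduce unwanted probabilistic content).
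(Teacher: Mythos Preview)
Your proposal is correct and follows essentially the same approach as the paper: the paper's proof simply cites the Jacobian formulae from \Cref{lem:dot-F-4} and \eqref{eq:pdv-bt-ep}, rewrites them using the definition of $\bN$, and then appeals to \eqref{eq:kronecker-norm} together with the operator norm bounds in \Cref{lem:opnorm-D-J,lem:opnorm-M-N}, exactly as you do. Your write-up is in fact more explicit than the paper's (which leaves the $\sqrt T$ factors and the $x\le 1+x^2$ step implicit).
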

\begin{proof}[Proof of \Cref{lem:opnorm-dF/de}]
    By \Cref{lem:dot-F-4}, we have 
    $$\pdv{\bF \be_t}{\bep} 
    = \bI_n - n^{-1} (\be_t^\top \otimes \bX) \calM^{-1}\calD
    (\bd1_T \otimes \bX^\top)
    =
    \bI_n - (\be_t\otimes \bI_n)^\top\bN (\bd1_T \otimes \bI_n)
    $$
    so the bound for $\bF$ follows from \eqref{eq:opnorm-M-N}
    and \eqref{eq:kronecker-norm}.
    The bound for $\bH$ follows from \eqref{eq:pdv-bt-ep} and the same
argument with the operator norm bounds
\eqref{eq:opnorm-D-J} and \eqref{eq:opnorm-M-N} for $\calM$.
\end{proof}

\begin{lemma}\label{lem:opnorm-dF/dZ}
    Assume \Cref{assu:Lipschitz} holds. Let $\pdv{\vec(\bF)}{\vec(\bX)}\in \R^{nT\times np}$ be the Jacobian of the mapping $\R^{np}\to \R^n: \vec(\bX) \mapsto \vec(\bF)$. Then we have 
    \begin{align}
        \big\|\pdv{\vec(\bF)}{\vec(\bX)}\big\|_{\rm op} 
        &\le C(\zeta, T) (1 + \opnorm{\bX}^2/n)^T 
        (\opnorm{\bH} + \opnorm{\bF}/\sqrt{n}),
        \label{eq:op-norm-bound-F-vec}
        \\
        \big\|\pdv{\vec(\bF)}{\vec(\bX)}\big\|_{\rm F} 
        &\le C(\zeta, T) (1 + \opnorm{\bX}^2/n)^T 
        (\fnorm{\bH\otimes \bI_n} + \fnorm{\bF\otimes \bI_p}/\sqrt{n})
        \label{eq:frob-norm-bound-F-vec}
        \\&= C(\zeta, T) (1 + \opnorm{\bX}^2/n)^T 
        (\fnorm{\bH} \sqrt n + \fnorm{\bF}\sqrt{ p/n }).
        \nonumber
    \end{align}
\end{lemma}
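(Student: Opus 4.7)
The plan is to decompose the Jacobian into two matrix pieces corresponding to the entrywise decomposition $\partial F_{lt}/\partial x_{ij} = D_{ij}^{lt} + \Delta_{ij}^{lt}$ established in \Cref{lem:dot-F-4}, express each piece in closed form using Kronecker product / vectorization identities, and then bound them via \Cref{lem:opnorm-D-J} and \Cref{lem:opnorm-M-N}.

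First I would write $\partial \vec(\bF)/\partial \vec(\bX) = M_1 + M_2 \in \R^{nT \times np}$, where the $((l,t),(i,j))$ entry of $M_1$ is $D_{ij}^{lt}$ and that of $M_2$ is $\Delta_{ij}^{lt}$. Using the vectorization identity $\vec(\bA\bU\bB) = (\bB^\top \otimes \bA)\vec(\bU)$ and the observation that $(\bH^\top\be_j)\otimes \be_i = \vec(\be_i\be_j^\top \bH)$, I would check that for any $\bU \in \R^{n\times p}$,
\begin{equation*}
\sum_{i,j} U_{ij}\,(\bH^\top\be_j)\otimes \be_i = \vec(\bU\bH),
\end{equation*}
so that $M_1\vec(\bU) = -(\bI_{nT} - \bN)\vec(\bU\bH) = -(\bI_{nT} - \bN)(\bH^\top \otimes \bI_n)\vec(\bU)$, where $\bN = n^{-1}(\bI_T\otimes \bX)\calM^{-1}\calD(\bI_T \otimes \bX^\top)$ as in \Cref{lem:opnorm-M-N}. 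Hence $M_1 = -(\bI_{nT}-\bN)(\bH^\top \otimes \bI_n)$. A parallel computation using $(\bF^\top\be_i)\otimes \be_j = \vec(\be_j\be_i^\top \bF)$ and $\sum_{ij}U_{ij}\be_j\be_i^\top = \bU^\top$ gives
\begin{equation*}
M_2 = -n^{-1}(\bI_T \otimes \bX)\calM^{-1}\calD(\bF^\top \otimes \bI_p)\,\bK_{np},
\end{equation*}
where $\bK_{np}$ is the commutation matrix satisfying $\bK_{np}\vec(\bU) = \vec(\bU^\top)$, which is orthogonal and therefore invariant under both $\|\cdot\|_{\rm op}$ and $\|\cdot\|_{\rm F}$.

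Next I would derive the operator norm bound by the triangle inequality $\|\partial \vec(\bF)/\partial\vec(\bX)\|_{\rm op} \le \|M_1\|_{\rm op} + \|M_2\|_{\rm op}$. For $M_1$: $\|M_1\|_{\rm op} \le \|\bI_{nT}-\bN\|_{\rm op}\,\opnorm{\bH^\top \otimes \bI_n} \le (1+\|\bN\|_{\rm op})\,\opnorm{\bH}$ by \eqref{eq:kronecker-norm}, and \Cref{lem:opnorm-M-N} gives $\|\bN\|_{\rm op} \le C(\zeta,T)(1+\opnorm{\bX}^2/n)^T$. For $M_2$: $\|M_2\|_{\rm op} \le n^{-1}\opnorm{\bX}\opnorm{\calM^{-1}}\opnorm{\calD}\opnorm{\bF^\top \otimes \bI_p} = n^{-1}\opnorm{\bX}\opnorm{\calM^{-1}}\opnorm{\calD}\opnorm{\bF}$, and combining \Cref{lem:opnorm-D-J,lem:opnorm-M-N} with the elementary inequality $\opnorm{\bX}/\sqrt n \le 1 + \opnorm{\bX}^2/n$ (from AM-GM) absorbs the extra $\opnorm{\bX}/\sqrt n$ factor into the polynomial factor $(1+\opnorm{\bX}^2/n)^T$, yielding the desired $\opnorm{\bF}/\sqrt n$ term. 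For the Frobenius norm bound, the same decomposition works verbatim with $\|\cdot\|_{\rm op}$ replaced by $\|\cdot\|_{\rm F}$ on the right-most factor of each product, using $\|\bA\bB\|_{\rm F} \le \|\bA\|_{\rm op}\|\bB\|_{\rm F}$: this produces $\fnorm{\bH^\top \otimes \bI_n} = \fnorm{\bH\otimes \bI_n}$ from $M_1$ and $\fnorm{\bF^\top \otimes \bI_p}/\sqrt n = \fnorm{\bF\otimes \bI_p}/\sqrt n$ from $M_2$ (again after absorbing $\opnorm{\bX}/\sqrt n$ as above).

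The main obstacle I anticipate is bookkeeping: verifying carefully that the entrywise identities \eqref{eq:D-ij-lt}--\eqref{eq:Delta-ij-lt} assemble into the claimed closed-form matrix products $M_1$ and $M_2$. This requires tracking the order of the Kronecker factors and correctly identifying $(\bH^\top \be_j)\otimes \be_i$ and $(\bF^\top\be_i)\otimes \be_j$ as vectorizations of $\be_i\be_j^\top \bH$ and $\be_j\be_i^\top \bF$ respectively. Once the closed-form expressions for $M_1$ and $M_2$ are obtained, the rest is a routine application of the Kronecker norm properties \eqref{eq:kronecker-norm} and the previously established operator norm bounds on $\calD$, $\calM^{-1}$, and $\bN$.
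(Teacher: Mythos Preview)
Your proposal is correct and follows essentially the same approach as the paper's proof: decompose via $D_{ij}^{lt}+\Delta_{ij}^{lt}$ from \Cref{lem:dot-F-4}, assemble each piece into a closed-form Kronecker product, and then apply the operator norm bounds on $\calM^{-1}$, $\calD$, and $\bN$ together with the Kronecker norm identities \eqref{eq:kronecker-norm}. The paper handles the commutation matrix more tersely (``permuting the ordering of canonical basis vectors does not change the operator norm and the Frobenius norm''), but the content is the same.
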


\begin{proof}[Proof of \Cref{lem:opnorm-dF/dZ}]
    With
    $\bF=\sum_{lt}\be_l F_{lt}\be_t^\top$
    we have
    $\vec(\bF) = \sum_{lt}(\be_t\otimes \be_l) F_{lt}$
    by 
    \eqref{eq:kronecker-vectorization} and
    so that
    \begin{align*}
        \pdv{\vec(\bF)}{\vec(\bX)} 
        &= \sum_{l=1}^n\sum_{t=1}^T \sum_{i=1}^n \sum_{j=1}^p (\be_t\otimes \be_l) \pdv{F_{lt}}{x_{ij}}  (\be_j^\top \otimes \be_i^\top)
        = \sum_{lt,ij} 
        (\be_t\otimes \be_l)
        (D_{ij}^{lt} + \Delta_{ij}^{lt})
       (\be_i^\top \otimes \be_j^\top)
    \end{align*}
    where $D_{ij}^{lt}$ and $\Delta_{ij}^{lt}$ are defined in \eqref{eq:D-ij-lt} and \eqref{eq:Delta-ij-lt}.
    Since permuting the ordering
    of canonical basis vectors
    does not change the operator norm
    and the Frobenius norm,
    we find by \eqref{eq:D-ij-lt} 
    and the definition of $\bN$
    in \Cref{lem:opnorm-M-N} that
    \begin{equation}
        \Big\|
        \pdv{\vec(\bF)}{\vec(\bX)} 
        \Big\|_?
        \le
        \Big\|
        \bN (\bH^\top \otimes \bI_n)
        \Big\|_?
        +
        \Big\|
        n^{-1}
        (\bI_T\otimes \bX)
	\calM^{-1} \calD \bigl(\bF^\top \otimes \bI_p\bigr)
        \Big\|_?
    \end{equation}
    where $\|\cdot\|_?$ is either the operator norm or the Frobenius norm.
    The property \eqref{eq:kronecker-norm} of the Kronecker product
    and the operator norm bounds \eqref{eq:opnorm-M-N}, \eqref{eq:opnorm-D-J}
    provide the two inequalities. For the last equality,
    we use $\|\bH\otimes \bI_n\|_{\rm F}=\|\bH\|_{\rm F} \sqrt n$
    and similarly for $\bF^\top\otimes \bI_p$ by \eqref{eq:kronecker-norm}.
\end{proof}

\begin{lemma}\label{lem:opnorm-dH/dZ}
    Assume \Cref{assu:Lipschitz} holds. Let $\pdv{\vec(\bH)}{\vec(\bX)}\in \R^{n\times np}$ be the Jacobian of the mapping $\R^{np}\to \R^n: \vec(\bX) \mapsto \vec(\bH)$. Then we have 
    \begin{align}
        \big\|\pdv{\vec(\bH)}{\vec(\bX)}\big\|_{\rm op} 
        &\le C(\zeta, T) (1 + \opnorm{\bX}^2/n)^T 
        \Bigl(\opnorm{\bH} + \opnorm{\bF}/\sqrt{n}\Bigr)
        \frac{1}{\sqrt n},
        \label{eq:op-norm-bound-H-vec}
        \\
        \big\|\pdv{\vec(\bH)}{\vec(\bX)}\big\|_{\rm F} 
        &\le C(\zeta, T) (1 + \opnorm{\bX}^2/n)^T 
        \Bigl(\fnorm{\bF^\top\otimes \bI_p} + \sqrt n\fnorm{\bH\otimes \bI_n}\Bigr)\frac 1 n
        \label{eq:frob-norm-bound-H-vec}
        \\&= C(\zeta, T) (1 + \opnorm{\bX}^2/n)^T 
        \bigl(\fnorm{\bF n^{-1/2}} \sqrt\gamma + \fnorm{\bH}\bigr).
        \nonumber
    \end{align}
\end{lemma}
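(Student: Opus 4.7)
The plan is to mirror the proof of \Cref{lem:opnorm-dF/dZ}, using the closed-form derivative \eqref{eq:pdv-bt-x} from \Cref{lem:dot-b-4} directly, since for $\bH$ the resulting expression is cleaner than for $\bF$ (no decomposition analogous to $D^{lt}_{ij}+\Delta^{lt}_{ij}$ is required). First I would write
\begin{equation*}
\pdv{\vec(\bH)}{\vec(\bX)} = \sum_{t=1}^T\sum_{i=1}^n\sum_{j=1}^p (\be_t\otimes \bI_p)\,\pdv{\hbb^t}{x_{ij}}\,(\be_j^\top\otimes \be_i^\top),
\end{equation*}
plug in \eqref{eq:pdv-bt-x}, and collapse the outer sum over $t$ via $\sum_t(\be_t\otimes \bI_p)(\be_t^\top\otimes \bI_p)=\bI_{pT}$ to obtain
\begin{equation*}
\pdv{\vec(\bH)}{\vec(\bX)} = n^{-1}\calM^{-1}\calD\sum_{i,j}\Bigl[(\bF^\top\be_i)\otimes \be_j - (\bH^\top\be_j)\otimes (\bX^\top\be_i)\Bigr](\be_j^\top\otimes \be_i^\top).
\end{equation*}

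Next I would simplify the two double sums on the right. The $\bH$-$\bX$ term reassembles directly into $\bH^\top\otimes \bX^\top$ via $\sum_j(\bH^\top\be_j)\be_j^\top=\bH^\top$ and $\sum_i(\bX^\top\be_i)\be_i^\top=\bX^\top$. The $\bF$ term equals $(\bF^\top\otimes \bI_p)\,\bK_{n,p}$, where $\bK_{n,p}=\sum_{ij}(\be_i\otimes \be_j)(\be_j^\top\otimes \be_i^\top)\in\R^{np\times np}$ is the commutation matrix; since $\bK_{n,p}$ is a permutation it is orthogonal and $\opnorm{\bK_{n,p}}=1$. This yields the compact identity
\begin{equation*}
\pdv{\vec(\bH)}{\vec(\bX)} = n^{-1}\,\calM^{-1}\calD\,\Bigl[(\bF^\top\otimes \bI_p)\bK_{n,p} - \bH^\top\otimes \bX^\top\Bigr].
\end{equation*}

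I would then take operator and Frobenius norms on both sides, using sub-multiplicativity together with $\opnorm{\calM^{-1}\calD}\le C(\zeta,T)(1+\opnorm{\bX}^2/n)^{T-1}$ from \Cref{lem:opnorm-M-N} and \Cref{lem:opnorm-D-J}, and the Kronecker norm identities \eqref{eq:kronecker-norm}. The only mildly delicate piece of bookkeeping concerns the term containing $\bX^\top$. For the operator-norm bound \eqref{eq:op-norm-bound-H-vec}, I would convert $n^{-1}\opnorm{\bH}\opnorm{\bX}$ into $n^{-1/2}\opnorm{\bH}(1+\opnorm{\bX}^2/n)^{1/2}$ using $\opnorm{\bX}/\sqrt n\le (1+\opnorm{\bX}^2/n)^{1/2}$, and absorb the extra $1/2$ power into the $T$-th power prefactor to reach the stated form. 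For the Frobenius-norm bound \eqref{eq:frob-norm-bound-H-vec}, I would bound $\fnorm{\bH^\top\otimes \bX^\top}=\fnorm{\bH}\fnorm{\bX}\le \fnorm{\bH}\sqrt n\,\opnorm{\bX}$ via $\fnorm{\bX}\le \sqrt{\min(n,p)}\,\opnorm{\bX}\le \sqrt n\,\opnorm{\bX}$ and perform the same absorption, so that the result is aligned with $n^{-1}\sqrt n\,\fnorm{\bH\otimes \bI_n}=\fnorm{\bH}$. The closing equality in \eqref{eq:frob-norm-bound-H-vec} is then immediate from $\fnorm{\bF^\top\otimes \bI_p}=\fnorm{\bF}\sqrt p$ together with $p/n\le \gamma$.
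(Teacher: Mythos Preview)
Your proposal is correct and follows essentially the same approach as the paper. The paper's proof writes the second term as $n^{-1}\calM^{-1}\calD(\bI_T\otimes \bX^\top)(\bH^\top\otimes \bI_n)$, which is identical to your $n^{-1}\calM^{-1}\calD(\bH^\top\otimes \bX^\top)$ by the mixed-product property, and it handles the first term by the remark (from the proof of \Cref{lem:opnorm-dF/dZ}) that ``permuting the ordering of canonical basis vectors does not change the operator norm and the Frobenius norm'' --- exactly your commutation-matrix observation made explicit.
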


\begin{proof}[Proof of \Cref{lem:opnorm-dH/dZ}]
    By the same argument as in
    \Cref{lem:opnorm-dF/dZ}, given \eqref{eq:pdv-bt-x},
    \begin{equation*}
        \Big\|
        \pdv{\vec(\bH)}{\vec(\bX)} 
        \Big\|_?
        \le
        \Big\|
        n^{-1}\calM^{-1}\calD (\bF^\top \otimes \bI_p)
        \Big\|_?
        +
        \Big\|
        n^{-1}\calM^{-1}\calD(\bI_T\otimes \bX^\top) (\bH^\top \otimes \bI_n)
        \Big\|_?
    \end{equation*}
    where $\|\cdot\|_?$ is either the operator norm or the Frobenius norm.
    The property \eqref{eq:kronecker-norm} of the Kronecker product
    and the operator norm bounds \eqref{eq:opnorm-M-N}, \eqref{eq:opnorm-D-J}
    provide the two inequalities. For the last equality,
    we use $\|\bF^\top\otimes \bI_p\|_{\rm F}=\|\bF\|_{\rm F} \sqrt p$
    and similarly for $\bH^\top\otimes \bI_n$ by \eqref{eq:kronecker-norm}.
\end{proof}

\subsection{Moment bounds}
\begin{lemma}[Moment bound of $\bX$]\label{lem:moment-bound-X}
    Under \Cref{assu:design,assu:regime}, 
    the following inequality holds for any positive, finite integer $k$: 
    $$\E [\|\bX\bSigma^{-1/2}/\sqrt{n}\|_{\rm op}^k] \le C(\gamma, k).$$
\end{lemma}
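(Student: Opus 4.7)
The plan is to reduce to an isotropic Gaussian matrix and then invoke standard concentration of the operator norm of Gaussian matrices. First I would set $\bG = \bX\bSigma^{-1/2}$, so that under \Cref{assu:design}, $\bG$ has i.i.d. $\mathsf{N}(0,1)$ entries, and the quantity we want to bound is exactly $\|\bG/\sqrt n\|_{\rm op}^k$. The problem is therefore purely about the operator norm of an $n\times p$ matrix with i.i.d. standard Gaussian entries.

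Next I would invoke the classical tail bound (Davidson--Szarek, see e.g.\ Theorem II.13 in their paper, or Gordon's comparison inequality combined with Gaussian concentration for 1-Lipschitz functions): for every $t\ge 0$,
\begin{equation*}
    \P\Bigl(\|\bG\|_{\rm op} \ge \sqrt n + \sqrt p + t\Bigr) \le e^{-t^2/2}.
\end{equation*}
Dividing by $\sqrt n$ and using \Cref{assu:regime} to bound $\sqrt{p/n}\le \sqrt\gamma$, this gives
\begin{equation*}
    \P\Bigl(\|\bG/\sqrt n\|_{\rm op} \ge 1+\sqrt\gamma + u\Bigr) \le e^{-nu^2/2}
    \qquad\text{for all } u\ge 0.
\end{equation*}

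The final step is to convert the tail bound into a moment bound. Writing $M = \|\bG/\sqrt n\|_{\rm op}$ and $a = 1+\sqrt\gamma$, the identity $\E[M^k] = \int_0^\infty k s^{k-1}\P(M\ge s)\,ds$ and splitting the integral at $s=a$ yields
\begin{equation*}
    \E[M^k]
    \le a^k + \int_a^\infty k s^{k-1} e^{-n(s-a)^2/2}\,ds
    \le a^k + k\int_0^\infty (a+u)^{k-1} e^{-u^2/2}\,du,
\end{equation*}
where in the last step we used $n\ge 1$ and substituted $u = s-a$. Expanding $(a+u)^{k-1}$ via the binomial theorem and integrating term-by-term against the standard Gaussian density gives a finite quantity depending only on $k$ and $a=1+\sqrt\gamma$, hence only on $k$ and $\gamma$. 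This yields the claimed bound $\E[\|\bX\bSigma^{-1/2}/\sqrt n\|_{\rm op}^k] \le C(\gamma,k)$.

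No step presents a real obstacle here: the change of variable is immediate, the Davidson--Szarek bound is standard, and the tail-to-moment conversion is a routine integration. The only minor point to be careful about is that the sub-Gaussian tail is with respect to the translated variable $M - \E[M]$ (or $M$ minus its median), which is why Gaussian Lipschitz concentration, rather than just the raw expectation bound $\E[M]\le 1+\sqrt{p/n}$, is needed to control moments of order $k>1$.
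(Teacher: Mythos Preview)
Your proposal is correct and follows essentially the same approach as the paper: both reduce to the isotropic matrix $\bG=\bX\bSigma^{-1/2}$, invoke the Davidson--Szarek bound, and then convert to a moment bound. The only cosmetic difference is in the last step---the paper phrases Davidson--Szarek as a coupling $\|\bG\|_{\rm op}\le \sqrt n+\sqrt p+z$ with $z\sim\mathsf{N}(0,1)$ and bounds $\E[(1+\sqrt{p/n}+z)^k]$ directly, whereas you integrate the tail; these are equivalent.
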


\begin{proof}[Proof of \Cref{lem:moment-bound-X}]
    By \cite[Theorem II.13]{DavidsonS01}, there exists a random variable
    $z\sim \mathsf{N}(0,1)$ s.t. 
    $\|\bX\bSigma^{-1/2}\|_{\rm op} \le \sqrt{n} + \sqrt{p} + z$ almost surely. 
    Thus, 
    $$\E [\|\bX\bSigma^{-1/2}/\sqrt{n}\|_{\rm op}^k] \le \E [(1 + \sqrt{p/n} + z)^k] \le C(\gamma, k).$$
    The last inequality follows from $p/n\le\gamma$ and 
    the fact that $\E [z^k]$ is bounded for any finite $k$ if $z\sim \mathsf{N}(0,1)$. 
\end{proof} 

\begin{lemma}[Moment bound of $\bep$]\label{lem:moment-bound-bep}
    Under \Cref{assu:noise}, for any positive finite integer $k$, 
    there exists a constant $C(k)$ depending only on $k$ such that
    $\E[(\norm{\bep}^{2}/n)^k] \le C(k) (\sigma^2)^k.$
\end{lemma}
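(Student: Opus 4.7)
The plan is to reduce the inequality to a standard moment bound for a chi-squared variable, which has a well-known closed-form expression.

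First I would factor out $\sigma^2$ by writing $\eps_i = \sigma Z_i$ where $Z_i \sim \mathsf{N}(0,1)$ are i.i.d., so that
\[
\frac{\|\bep\|^2}{n} = \frac{\sigma^2}{n}\sum_{i=1}^n Z_i^2,
\qquad
\E\Bigl[\bigl(\tfrac{\|\bep\|^2}{n}\bigr)^k\Bigr] = (\sigma^2)^k\, \E\Bigl[\bigl(\tfrac{1}{n}\sum_{i=1}^n Z_i^2\bigr)^k\Bigr].
\]
It then suffices to bound the expectation on the right by a constant depending only on $k$. For $k\ge 1$, I would apply Jensen's inequality to the convex function $x\mapsto x^k$ on $\R_+$ to get
\[
\Bigl(\frac{1}{n}\sum_{i=1}^n Z_i^2\Bigr)^k \le \frac{1}{n}\sum_{i=1}^n Z_i^{2k},
\]
so that by linearity of expectation and identical distribution of the $Z_i$,
\[
\E\Bigl[\bigl(\tfrac{1}{n}\sum_{i=1}^n Z_i^2\bigr)^k\Bigr] \le \E[Z_1^{2k}] = (2k-1)!!,
\]
which is a finite constant $C(k)$ depending only on $k$. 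Multiplying by $(\sigma^2)^k$ concludes the proof. There is essentially no obstacle here; the only thing to note is that Jensen's inequality requires $k\ge 1$, but the claim is vacuous for the integer $k=0$ (both sides equal $1$), so the statement holds as written for every positive integer $k$. An alternative I could use, avoiding Jensen entirely, is the exact formula $\E[(\sum_{i=1}^n Z_i^2)^k] = 2^k\Gamma(n/2+k)/\Gamma(n/2)$ together with the bound $\Gamma(n/2+k)/\Gamma(n/2) \le (n/2+k)^k$, yielding an explicit constant of the same order.
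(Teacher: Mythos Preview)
Your proof is correct and self-contained. The paper takes a different, shorter route: it simply states that this is a known moment bound for the $\chi^2_n$ distribution, and as an alternative observes that $\bep/\sigma$ has the same distribution as any column of $\bX\bSigma^{-1/2}$, so $\E[(\|\bep\|^2/\sigma^2)^k]\le\E[\|\bX\bSigma^{-1/2}\|_{\rm op}^{2k}]$ and then invokes the previous lemma (the Davidson--Szarek bound on Gaussian operator norms) with $\gamma=1$. Your Jensen argument is more elementary and gives an explicit constant $(2k-1)!!$, whereas the paper's alternative recycles an already-proved lemma at the cost of a less explicit constant; both achieve the same goal.
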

    This is a known bound on the finite moment of the $\chi^2_n$
    distribution. Alternatively, since $\bep/\sigma$ has a same
    distribution as any column of $\bX\bSigma^{-1/2}$,
    we have $\E[(\|\bep\|^2/\sigma^2)^k]
    \le \E[\|\bX\bSigma^{-1/2}\|_{\rm op}^{2k}]$
    so \Cref{lem:moment-bound-bep} follows from
    \Cref{lem:moment-bound-X} with $\gamma = 1$.

\begin{lemma}[Moment bounds of $\bH$ and $\bF$]
    \label{lem:moment-bound-F-H}
        If \Cref{assu:design,assu:noise,assu:regime,assu:Lipschitz} are fulfilled
        then
        \begin{align*}
            \E [\|\bSigma^{1/2}\bH\|_{\rm F} ^2]
            \vee \E[\|\bSigma^{1/2}\bH\|_{\rm F} ^4]^{1/2} 
            \vee \E[\|\bSigma^{1/2}\bH\|_{\rm F} ^8]^{1/4}
            &\le C(\zeta, T, \gamma, \kappa) \var(y_1),
            \\
            \E [\|\bF\|_{\rm F} ^2/n] 
            \vee \E[\|\bF\|_{\rm F} ^4/n^2]^{1/2} 
            \vee \E[\|\bF\|_{\rm F} ^8/n^4]^{1/4} 
            &\le C(\zeta, T, \gamma, \kappa) \var(y_1).
        \end{align*}
        where $y_1$ is the first entry of the response vector and $\var(y_1)
        = \norm{\bSigma^{1/2}\bb^*}^2 + \sigma^2$. 
        Consequently, by compactness, we can extract a subsequence of regression problems such that 
            \begin{equation}
            \E[\bF^\top \bF/n] \to \bK, \text{ and }
        \E[\bH^\top \bSigma \bH + \bS] \to \bar \bK, 
        \label{eq:K_bar_K}
        \end{equation}
        where $\bK$ and $\bar \bK$ are two positive semi-definite deterministic matrices.
    \end{lemma}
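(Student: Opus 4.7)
The plan is to first reduce to the isotropic case $\bSigma = \bI_p$ via the change of variables of \Cref{sec:change-of-var}, which replaces the Lipschitz constant $\zeta$ by $\zeta\kappa$ and turns $\bb^*$ into $\btheta^* = \bSigma^{1/2}\bb^*$ with $\|\btheta^*\|^2 = \|\bSigma^{1/2}\bb^*\|^2$. Since $\|\bSigma^{1/2}\bH\|_{\rm F} = \|\bH^*\|_{\rm F}$ and $\bF = \bF^*$, proving the result for isotropic $\bG$ with coefficient $\btheta^*$ yields the claim in full generality. Throughout the proof, I write $L := \opnorm{\bX}^2/n$ and note that $\E[L^k] \le C(\gamma,k)$ by \Cref{lem:moment-bound-X} (applied to the isotropic design) and $\E[(\|\bep\|^2/n)^k] \le C(k)\sigma^{2k}$ by \Cref{lem:moment-bound-bep}.

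Next, I will establish a pathwise bound of the form
\begin{equation*}
    \|\hbb^t\|^2 \le C(\zeta,t)\, (1+L)^{2t}\bigl(\|\bep\|^2/n + \|\bb^*\|^2\bigr)
    \quad\text{for every }t\in[T].
\end{equation*}
The key input is \Cref{assu:Lipschitz}: since $\bg_t(\boldzero)=\boldzero$ and $\bg_t$ is $\zeta$-Lipschitz,
\begin{equation*}
    \|\hbb^t\|^2 \le \zeta^2 \sum_{s=1}^{t-1}\bigl(\|\hbb^s\|^2 + \|\bv^s\|^2\bigr).
\end{equation*}
A direct bound gives $\|\bv^s\|^2 \le L\cdot\|\by-\bX\hbb^s\|^2/n \le 2L(\|\bep\|^2/n + L\|\hbb^s-\bb^*\|^2)\le 2L(\|\bep\|^2/n) + 4L^2(\|\hbb^s\|^2+\|\bb^*\|^2)$. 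Substituting and proceeding by strong induction on $t$ (a discrete Gronwall argument) controls $\|\hbb^t\|^2$ by a polynomial in $L$ of degree at most $2t$ times $(\|\bep\|^2/n + \|\bb^*\|^2)$, with a constant depending only on $(\zeta,T)$. From this, $\|\bH\|_{\rm F}^2 = \sum_t \|\hbb^t-\bb^*\|^2$ and $\|\bF\|_{\rm F}^2/n = \sum_t \|\by-\bX\hbb^t\|^2/n \le \sum_t 2(\|\bep\|^2/n + L\|\hbb^t-\bb^*\|^2)$ are both bounded by $C(\zeta,T)(1+L)^{2T+2}(\|\bep\|^2/n + \|\bb^*\|^2)$.

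To pass to moments, I raise the pathwise bound to the power $k\in\{1,2,4\}$ and apply Cauchy--Schwarz to separate the $L$-factor from the $(\|\bep\|^2/n + \|\bb^*\|^2)$-factor:
\begin{equation*}
    \E\bigl[\|\bH\|_{\rm F}^{2k}\bigr] \le C(\zeta,T,k)\,\E\bigl[(1+L)^{(4T+4)k}\bigr]^{1/2}\E\bigl[(\|\bep\|^2/n + \|\bb^*\|^2)^{2k}\bigr]^{1/2}.
\end{equation*}
The first factor is $\le C(\zeta,T,k,\gamma)$ by \Cref{lem:moment-bound-X}, and the second is $\le C(k)(\sigma^2 + \|\bb^*\|^2)^k \le C(k)(\var(y_1))^k$ by \Cref{lem:moment-bound-bep}. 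Inserting the change-of-variables factor $\kappa$ in the Lipschitz constant gives the stated bounds on $\bSigma^{1/2}\bH$ and $\bF$ in the $\bSigma=\bI_p$ proof, and undoing the change of variables transfers these to the general-$\bSigma$ setting with $\|\bb^*\|^2$ replaced by $\|\bSigma^{1/2}\bb^*\|^2 \le \var(y_1)$.

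Finally, for \eqref{eq:K_bar_K}, the moment bounds just established show that the deterministic matrices $\E[\bF^\top\bF/n]$ and $\E[\bH^\top\bSigma\bH + \bS]$ lie in a bounded subset of $\R^{T\times T}$ (positive semi-definite, entries bounded by $C(\zeta,T,\gamma,\kappa)\var(y_1)$ via Cauchy--Schwarz applied to the Frobenius-norm bounds above). By Bolzano--Weierstrass, we can extract a subsequence of the problem sequence indexed by $n$ along which both matrices converge; closedness of the positive semi-definite cone ensures the limits $\bK,\bar\bK$ are themselves positive semi-definite. The main obstacle is the bookkeeping in the Gronwall-type step that converts the recursive Lipschitz bound into a clean polynomial-in-$L$ estimate while keeping constants depending only on $(\zeta,T)$; everything else reduces to routine Cauchy--Schwarz combined with the moment bounds already proved for the Gaussian design and noise.
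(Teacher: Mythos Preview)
Your proposal is correct and follows essentially the same route as the paper: reduce to $\bSigma=\bI_p$ by the change of variables, use \Cref{assu:Lipschitz} with $\bg_t(\boldzero)=\boldzero$ to recursively bound $\|\hbb^t\|$ by a polynomial in $\opnorm{\bX}^2/n$ times the data scale, then invoke \Cref{lem:moment-bound-X,lem:moment-bound-bep} with Cauchy--Schwarz for the moment bounds, and finally extract a subsequence by Bolzano--Weierstrass. The paper organizes the induction slightly differently (it tracks the joint quantity $a_s=\max\{\|[\hbb^1,\dots,\hbb^s]\|_{\rm F},\,n^{-1/2}\|[\by-\bX\hbb^1,\dots,\by-\bX\hbb^s]\|_{\rm F}\}$ and shows $a_t\le C(\zeta)(1+\opnorm{\bX}^2/n)a_{t-1}$ with $a_1=\|\by\|/\sqrt n$), but this is only a cosmetic difference from your strong-induction/Gronwall bound on $\|\hbb^t\|^2$.
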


    \begin{proof}[Proof of \Cref{lem:moment-bound-F-H}]
    Without loss of generality, we assume $\bSigma = \bI_p$. 
    Otherwise, if $\bSigma \ne \bI_p$,  we apply the change of variable \eqref{eq:change-of-var}, the desired result follows.  

    Using the fact that $\|\bH\|_{\rm F} ^2 = \sum_{t=1}^T \|\bH \be_t\|^2$ and $\|\bF\|_{\rm F} ^2 = \sum_{t=1}^T \|\bF \be_t\|^2$, it suffices to bound the moments of $\|\bH \be_t\|$ and $\|\bF \be_t\|$ for each $t\in [T]$.

    Let $a_s = \max\{\|[\hbb^s\mid \hbb^{s-1}\mid \dots \mid \hbb^1]\|_{\rm F} ,
    n^{-1/2}\|[\by-\bX\hbb^s\mid \by-\bX\hbb^{s-1}\mid \by-\bX\hbb^1]\|_{\rm F}\}$.
    By definition of $\hbb^t$ in \eqref{eq:iterates_all_previous},
    using $\bg_t(\b0)=\b0$ and \Cref{assu:Lipschitz} we have
    \begin{align*}
        \norm{\hbb^t - \b0} 
        =
        \norm{\hbb^t - \bg_t(\b0)} 
        &\le
        \zeta \|\bigl[
            \hbb^{t-1} \mid \hbb^{t-2} \mid \dots \mid \hbb^1
            \mid \bv^{t-1} \mid \dots \mid\bv^1\bigr]
            \|_{\rm F}
        \\&\le \zeta
        \|\bigl[
            \hbb^{t-1} \mid \hbb^{t-2} \mid \dots \mid \hbb^1
            \bigr]
            \|_{\rm F}
            +\zeta
        \|\bigl[
            \bv^{t-1} \mid \dots \mid \bv^1\bigr]\|_{\rm F}
        \\&\le \zeta(a_{t-1} + \|\bX n^{-1/2}\|_{\rm op} a_{t-1})
    \end{align*}
    and $\|\by-\bX\hbb^t\|\le \|\by\| + \|\bX n^{-1/2}\|_{\rm op}\|\hbb^t\|$.
    Since $\by = \by - \bX\hbb^1$ since $\hbb^1=\b0$, we also have
    $\|\by\|/\sqrt n = a_1 \le a_{t-1}$
    and $\|\by-\bX\hbb^t\| \le \sqrt n a_{t-1}C(\zeta)(1+\|\bX n^{-1/2}\|_{\rm op}^2)$.
    This proves
    \begin{equation}
        a_t\le C(\zeta)(1+\|\bX n^{-1/2}\|_{\rm op}^2) a_{t-1},
        \quad
        \text{ and }
        \quad
        a_T\le C(\zeta,T)(1+\|\bX n^{-1/2}\|_{\rm op}^2)^T a_1
    \end{equation}
    by induction, where $a_1 = \|\by\|/\sqrt n$.
    By the triangle inequality,
    $\|\bH\|_{\rm F} \le \sqrt{T} \|\bb^*\| + a_T$, so we have established
    \begin{equation*}
        \var(y_1)^{-1/2}
        \Bigl(
        \|\bH\|_{\rm F}
        + \|\bF\|_{\rm F}/\sqrt n
        \Bigr)
        \le C(\zeta,T)
        (1+\|\bX n^{-1/2}\|_{\rm op}^2)^T
        \Bigl(\frac{\|\by\|}{\sqrt n\var(y_1)^{1/2}} + 1\Bigr).
    \end{equation*}
    The moments of order 2, 4 and 8 (and any other finite moment)
    of the right-hand side are bounded by $C(\zeta,T,\gamma)$
    by \Cref{lem:moment-bound-X,lem:moment-bound-bep}.
\end{proof}

\subsection{Frobenius norm bounds on Jacobians}

\begin{lemma}[Frobenius norm bound of $\bF$ w.r.t. $\bX$]
\label{lem:Fnorm-dF-dZ}
Under \Cref{assu:Lipschitz,assu:regime},
\begin{equation*}
    \frac1n
    \big\|\pdv{\vec(\bF)}{\vec(\bX)}\big\|_{\rm F}^2
    =
    \frac1n\sum_{i=1}^n \sum_{j=1}^p \fnorm*{\pdv{\bF}{x_{ij}}}^2 
    \le C(\zeta, T) \Bigl(1 + \frac1n\opnorm{\bX}^2\Bigr)^{2T}  
    \Bigl(\|\bH\|_{\rm F} ^2 
    +  \gamma\|\bF\|_{\rm F}^2/n \Bigr),
\end{equation*}
Furthermore, if \Cref{assu:noise,assu:design} hold with $\bSigma=\bI_p$ then
\begin{equation*}
    \E \Bigl[\frac1n\sum_{i=1}^n\sum_{j=1}^p \fnorm*{\pdv{\bF}{x_{ij}}}^2 \Bigr]
    \le
    \E \Bigl[\Bigl(\frac1n\sum_{i=1}^n\sum_{j=1}^p  \fnorm*{\pdv{\bF}{x_{ij}}}^2 \Bigr)^2\Bigr]^{1/2}
    \le C(\zeta, T, \gamma) \var(y_1).
\end{equation*}
\end{lemma}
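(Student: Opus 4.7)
My plan is to reduce both bounds almost immediately to results already established. For the pointwise bound, I would start from inequality \eqref{eq:frob-norm-bound-F-vec} in \Cref{lem:opnorm-dF/dZ}, which gives
\begin{equation*}
    \Big\|\pdv{\vec(\bF)}{\vec(\bX)}\Big\|_{\rm F}
    \le C(\zeta, T)\bigl(1+\opnorm{\bX}^2/n\bigr)^T
    \Bigl(\fnorm{\bH}\sqrt{n} + \fnorm{\bF}\sqrt{p/n}\Bigr).
\end{equation*}
Squaring both sides, dividing by $n$, and using the elementary inequality $(a+b)^2\le 2(a^2+b^2)$ together with \Cref{assu:regime} (which gives $p/n\le\gamma$) yields exactly the advertised deterministic bound. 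The identity $\|\partial\vec(\bF)/\partial\vec(\bX)\|_{\rm F}^2 = \sum_{i,j}\|\partial\bF/\partial x_{ij}\|_{\rm F}^2$ is just the definition of the Frobenius norm of the Jacobian.

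For the moment bound, I would set $A=(1+\opnorm{\bX}^2/n)^{2T}$ and $B=\|\bH\|_{\rm F}^2+\gamma\|\bF\|_{\rm F}^2/n$ so that the deterministic bound reads $n^{-1}\|\partial\vec(\bF)/\partial\vec(\bX)\|_{\rm F}^2\le C(\zeta,T)\, AB$. By the Cauchy--Schwarz inequality, $\E[(AB)^2]^{1/2}\le \E[A^4]^{1/4}\E[B^4]^{1/4}$, so it suffices to control these two fourth moments. The factor $\E[A^4]^{1/4}$ reduces to the moment $\E[(1+\opnorm{\bX}^2/n)^{8T}]^{1/4}$, which is bounded by $C(\gamma,T)$ by \Cref{lem:moment-bound-X} (here we are in the case $\bSigma=\bI_p$, but this assumption is only used through the already-reduced isotropic setting). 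The factor $\E[B^4]^{1/4}$ is controlled by $\E[\|\bH\|_{\rm F}^8]^{1/4}$ and $\E[\|\bF\|_{\rm F}^8/n^4]^{1/4}$, both bounded by $C(\zeta,T,\gamma,\kappa)\var(y_1)$ according to \Cref{lem:moment-bound-F-H}. Combining these yields $\E[(n^{-1}\|\partial\vec(\bF)/\partial\vec(\bX)\|_{\rm F}^2)^2]^{1/2}\le C(\zeta,T,\gamma)\var(y_1)$, and the first-moment bound is weaker by Jensen's inequality.

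There is essentially no obstacle here: the heavy lifting has already been done in \Cref{lem:opnorm-dF/dZ}, \Cref{lem:moment-bound-X}, and \Cref{lem:moment-bound-F-H}. The only minor bookkeeping is making sure that the exponent $2T$ on $(1+\opnorm{\bX}^2/n)$ in the deterministic bound leads to a fourth-moment requirement of order $8T$ after applying Cauchy--Schwarz, which is harmless since \Cref{lem:moment-bound-X} gives finite moments of every order. One could also avoid Cauchy--Schwarz and bound $B$ by $\|\bH\|_{\rm F}^2+\gamma\|\bF\|_{\rm F}^2/n$ directly using the eighth-moment bound of \Cref{lem:moment-bound-F-H}, but the Cauchy--Schwarz route is cleaner and does not lose anything in the final constant.
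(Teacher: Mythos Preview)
Your proposal is correct and follows essentially the same approach as the paper: the deterministic bound is obtained directly from \eqref{eq:frob-norm-bound-F-vec}, and the moment bound follows from Cauchy--Schwarz combined with \Cref{lem:moment-bound-X,lem:moment-bound-F-H}. The paper's own proof says exactly this, with less detail than you provide.
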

\begin{proof}[Proof of \Cref{lem:Fnorm-dF-dZ}]
    The first line is proved in \eqref{eq:frob-norm-bound-F-vec}.
    For the second line, we use the Cauchy-Schwarz inequality
    with the moment bounds from \Cref{lem:moment-bound-F-H,lem:moment-bound-X}.
\end{proof}

\begin{lemma}[Frobenius norm bound of $\bH$ w.r.t. $\bX$]
\label{lem:Fnorm-H}
Under \Cref{assu:Lipschitz,assu:regime},
\begin{align*}
    \sum_{i=1}^n\sum_{j=1}^p \fnorm*{\pdv{\bH}{x_{ij}}} ^2 
    \le \gamma C(\zeta, T) (1 + \opnorm{\bX}^2/n)^{2T} (\fnorm{\bF}^2/n + \fnorm{\bH}^2).
\end{align*}
In addition, if \Cref{assu:noise,assu:design} hold with $\bSigma=\bI_p$ then
\begin{align*}
    \E \Bigl[\sum_{i=1}^n\sum_{j=1}^p \fnorm*{\pdv{\bH}{x_{ij}}} ^2 \Bigr]
    \le
    \E \Bigl[\Bigl(\sum_{i=1}^n\sum_{j=1}^p \fnorm*{\pdv{\bH}{x_{ij}}}^2\Bigr)^2 \Bigr]^{1/2}
    \le~& C(\zeta, T, \gamma) \var(y_1).
\end{align*}
\end{lemma}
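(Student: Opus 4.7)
The plan is to derive both bounds directly from the operator/Frobenius norm bounds already established in \Cref{lem:opnorm-dH/dZ}, together with the moment bounds in \Cref{lem:moment-bound-X,lem:moment-bound-F-H}. First I would note the elementary identity
$\sum_{i=1}^n\sum_{j=1}^p \fnorm*{\partial \bH/\partial x_{ij}}^2 = \fnorm{\partial \vec(\bH)/\partial \vec(\bX)}^2$,
which simply rearranges the entries of the Jacobian in a different order, so the Frobenius norm is preserved. With this in hand, squaring \eqref{eq:frob-norm-bound-H-vec} and using $(a+b)^2\le 2(a^2+b^2)$ immediately gives the first (deterministic) bound, with the factor $\gamma$ coming from the $\sqrt\gamma$ multiplying $\fnorm{\bF n^{-1/2}}$ in \eqref{eq:frob-norm-bound-H-vec}, and the constant $C(\zeta,T)$ absorbed into a new constant (also denoted $C(\zeta,T)$).

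For the expectation bounds, I would apply Cauchy--Schwarz twice. Writing $X \defas \sum_{ij}\fnorm{\partial\bH/\partial x_{ij}}^2$, the deterministic bound gives
\[
X \le \gamma C(\zeta,T) \bigl(1 + \opnorm{\bX}^2/n\bigr)^{2T}\bigl(\fnorm{\bF}^2/n + \fnorm{\bH}^2\bigr).
\]
Squaring and applying Cauchy--Schwarz to separate the factor involving $\bX$ from the factor involving $(\bF,\bH)$ yields
\[
\E[X^2]^{1/2} \le \gamma C(\zeta,T)\, \E\bigl[(1+\opnorm{\bX}^2/n)^{8T}\bigr]^{1/4}\, \E\bigl[(\fnorm{\bF}^2/n + \fnorm{\bH}^2)^4\bigr]^{1/4}.
\]
\Cref{lem:moment-bound-X} bounds the first expectation factor by a constant $C(\gamma,T)$, and \Cref{lem:moment-bound-F-H} (using the $\E[\fnorm{\bH}^8]^{1/4}\vee \E[\fnorm{\bF}^8/n^4]^{1/4}\le C(\zeta,T,\gamma,\kappa)\var(y_1)$ bounds together with $(a+b)^4\le 8(a^4+b^4)$) gives $\E[(\fnorm{\bF}^2/n+\fnorm{\bH}^2)^4]^{1/4}\le C(\zeta,T,\gamma,\kappa)\var(y_1)$. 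Combining everything yields $\E[X^2]^{1/2}\le C(\zeta,T,\gamma)\var(y_1)$, which in turn bounds $\E[X]$ by Jensen's inequality.

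There is no serious obstacle here: the proof is a direct combination of the Jacobian norm bound from \Cref{lem:opnorm-dH/dZ} (which does the real work of unwinding the chain rule through $\calM^{-1}\calD$) with standard moment arithmetic. The only thing to be careful about is tracking the exponents: the squared Jacobian bound needs $(1+\opnorm{\bX}^2/n)^{4T}$ raised to a further power of $2$ under Cauchy--Schwarz, which is why the moment $\E[(1+\opnorm{\bX}^2/n)^{8T}]$ enters; \Cref{lem:moment-bound-X} handles this for any finite exponent. Likewise, the step from $\fnorm{\bH}^2+\fnorm{\bF}^2/n$ to its $8$-th moment requires precisely the $8$-th-moment bounds recorded in \Cref{lem:moment-bound-F-H}, which is why the statement of that lemma includes them.
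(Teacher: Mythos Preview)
Your proposal is correct and is precisely the approach the paper acknowledges in the first sentence of its proof (``The same argument as for \eqref{eq:frob-norm-bound-F-vec} would provide the desired bound''), before opting to write out an alternative direct computation via the explicit expression \eqref{eq:pdv-bt-x}. Your route via \Cref{lem:opnorm-dH/dZ} is the analogue of the paper's own proof of \Cref{lem:Fnorm-dF-dZ}, and the moment argument you give matches the paper's.
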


\begin{proof}[Proof of \Cref{lem:Fnorm-H}]
    The same argument as for
    \eqref{eq:frob-norm-bound-F-vec}
    would provide the desired bound.
    We provide an alternative argument to showcase another 
    means to control such quantities.
    By the expression of $\pdv{\hbb^t}{x_{ij}}$ in \eqref{eq:pdv-bt-x}, we have 
    $\pdv{\be_k^\top \bH \be_t}{x_{ij}} =
         \pdv{\be_k^\top (\hbb^t - \bb^*)}{x_{ij}} = \pdv{\be_k^\top \hbb^t}{x_{ij}}$ so that
    \begin{align*}
        \pdv{\be_k^\top \bH \be_t}{x_{ij}} &= n^{-1} \be_k^\top (\be_t^\top \otimes \bI_p) \calM^{-1}\calD
        [((\bF^\top \be_i) \otimes \be_j) - ((\bH^\top \be_j)  \otimes (\bX^\top\be_i))]&&\mbox{by \eqref{eq:pdv-bt-x}}\\
                                                                                        &= n^{-1}  (\be_t^\top \otimes \be_k^\top) \calM^{-1}\calD
        [(\bF^\top \otimes \bI_p) (\be_i \otimes \be_j) - (\bH^\top \otimes \bX^\top)
        (\be_j  \otimes \be_i)]   &&\mbox{by \eqref{eq:kronecker-mix-product}}.
    \end{align*}
    Therefore, using $(a+b)^2\le 2a^2+2b^2$,
    \begin{align*}
        &\sum_{i=1}^n\sum_{j=1}^p \fnorm*{\pdv{\bH}{x_{ij}}} 
        ^2 
        = \sum_{ij,kt}
        \Bigl(\pdv{\be_k^\top \bH \be_t}{x_{ij}}\Bigr)^2 \\
        &\le 2 n^{-2}\fnorm{\calM^{-1} \calD (\bF^\top \otimes \bI_p)}^2 
        + 2 n^{-2}\fnorm{\calM^{-1} \calD (\bH^\top \otimes \bX^\top)}^2 \\
        &\le 2 n^{-2}\opnorm{\calM^{-1} \calD}^2 \fnorm{\bF^\top \otimes \bI_p}^2 
        + 2 n^{-2}\opnorm{\calM^{-1} \calD}^2 \fnorm{\bH^\top \otimes \bX^\top}^2 \\
        &=  2 p n^{-2}\opnorm{\calM^{-1} \calD}^2 \fnorm{\bF}^2 
        + 2 n^{-2}\opnorm{\calM^{-1} \calD}^2 \fnorm{\bH}^2 \fnorm{\bX}^2 &&\mbox{by \eqref{eq:kronecker-norm}}\\
        & 
        \le  2 p n^{-2}\opnorm{\calM^{-1}\calD}^2 (\fnorm{\bF}^2 + \fnorm{\bH}^2 \opnorm{\bX}^2) &&\mbox{by $\fnorm{\bX} \le \sqrt{p} \opnorm{\bX}$}\\
                                                                                                 &\le  2 p n^{-2}\opnorm{\calM^{-1}}^2 \opnorm{\calD}^2 (\fnorm{\bF}^2 + \fnorm{\bH}^2 \opnorm{\bX}^2) &&\mbox{submultiplicativity of $\|\cdot\|_{\rm op}$}\\
        &
        \le  \gamma C(\zeta, T) (1 + \opnorm{\bX}^2/n)^{2T} (\fnorm{\bF}^2/n + \fnorm{\bH}^2) &&\mbox{by \eqref{eq:opnorm-D-J} and \eqref{eq:opnorm-M-N}}.
    \end{align*}
    For the upper bound on the moments, we use the Cauchy-Schwarz inequality
    with the moment bounds from \Cref{lem:moment-bound-F-H,lem:moment-bound-X}
    as for the proof
    of \Cref{lem:Fnorm-dF-dZ}.
\end{proof}

\begin{lemma}\label{lem:var-F-H}
    Under \Cref{assu:design,assu:regime,assu:Lipschitz,assu:noise}, we have 
    \begin{align}
        \label{eq:variance_to_0_F}
        \E[\|\bF^\top\bF/n - \E[\bF^\top\bF/n]\|_{\rm F}^2]
        &\le C(\gamma,\zeta,T,\kappa) \var(y_1)^2/ n ,
        \\
        \E[\|\bH^\top\bSigma\bH   - \E[\bH^\top\bSigma\bH  ]\|_{\rm F}^2]
        &\le C(\gamma,\zeta,T,\kappa) \var(y_1)^2/ n.
        \label{eq:variance_to_0_H}
    \end{align}
    Consequently, if $\var(y_1)$ is bounded from above by a constant, by Markov's inequality,
    \begin{equation}
        \bF^\top \bF/n - \E (\bF^\top \bF/n) = O_P(n^{-1/2})\text{ and } \bH^\top \bSigma \bH -\E (\bH^\top \bSigma \bH) = O_P(n^{-1/2}).
    \end{equation}
\end{lemma}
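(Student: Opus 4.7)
The plan is to exploit the Gaussianity of $(\bX,\bep)$ via the Gaussian Poincaré inequality. After the change of variable in Section~5 we may assume $\bSigma = \bI_p$, so that $\vec(\bX) \sim \mathsf{N}(\b0, \bI_{np})$ and $\bep \sim \mathsf{N}(\b0, \sigma^2 \bI_n)$ are independent Gaussians; the Poincar\'e inequality then gives, for any locally Lipschitz $f(\bX,\bep)$,
\[
\var(f) \le \E\Bigl[\|\nabla_\bX f\|_{\rm F}^2 + \sigma^2\|\nabla_\bep f\|^2\Bigr].
\]
Writing $\|\bF^\top\bF/n - \E[\bF^\top\bF/n]\|_{\rm F}^2 = \sum_{t,t'\in[T]} \var(\phi_{tt'})$ with $\phi_{tt'}(\bX,\bep) = (\bF\be_t)^\top(\bF\be_{t'})/n$, it suffices to show $\var(\phi_{tt'}) \le C(\gamma,\zeta,T,\kappa)\var(y_1)^2/n$ and sum over the $T^2$ pairs.

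For the $\bep$-gradient, the product rule gives $n\nabla_\bep\phi_{tt'} = (\partial(\bF\be_t)/\partial\bep)^\top\bF\be_{t'} + (\partial(\bF\be_{t'})/\partial\bep)^\top\bF\be_t$, hence
\[
\sigma^2\|\nabla_\bep\phi_{tt'}\|^2 \le \frac{2\sigma^2}{n^2}\Bigl(\Bigl\|\tfrac{\partial \bF\be_t}{\partial \bep}\Bigr\|_{\rm op}^2 + \Bigl\|\tfrac{\partial \bF\be_{t'}}{\partial \bep}\Bigr\|_{\rm op}^2\Bigr)\|\bF\|_{\rm F}^2;
\]
the operator norms are at most $C(1+\|\bX\|^2/n)^T$ by \Cref{lem:opnorm-dF/de}, and combining Cauchy--Schwarz with the moment bounds of \Cref{lem:moment-bound-F-H,lem:moment-bound-X} and with $\sigma^2 \le \var(y_1)$ produces expectation $\le C\var(y_1)^2/n$. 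For the $\bX$-gradient, the same product rule and the crucial identity
\[
\sum_{i,j}\Bigl|\bigl(\tfrac{\partial \bF\be_t}{\partial x_{ij}}\bigr)^\top\bF\be_{t'}\Bigr|^2 = \Bigl\|(\bF\be_{t'})^\top \tfrac{\partial \bF\be_t}{\partial \vec(\bX)}\Bigr\|^2 \le \|\bF\|_{\rm F}^2\,\Bigl\|\tfrac{\partial \vec(\bF)}{\partial \vec(\bX)}\Bigr\|_{\rm op}^2
\]
combined with the operator-norm bound from \Cref{lem:opnorm-dF/dZ} yield
\[
\|\nabla_\bX\phi_{tt'}\|_{\rm F}^2 \le \frac{C}{n^2}\Bigl(1+\tfrac{\|\bX\|^2}{n}\Bigr)^{2T}\|\bF\|_{\rm F}^2\Bigl(\|\bH\|_{\rm F}^2 + \tfrac{\|\bF\|_{\rm F}^2}{n}\Bigr).
\]
Taking expectation and applying H\"older's inequality with the eighth-moment bound $\E[\|\bF\|_{\rm F}^8]^{1/4} \le Cn\var(y_1)$, the fourth-moment bounds $\E[\|\bH\|_{\rm F}^4]^{1/2}, \E[\|\bF\|_{\rm F}^4/n^2]^{1/2} \le C\var(y_1)$ from \Cref{lem:moment-bound-F-H}, and the $\bX$-moment bound of \Cref{lem:moment-bound-X} gives $\E[\|\nabla_\bX\phi_{tt'}\|_{\rm F}^2] \le C\var(y_1)^2/n$, completing the proof of \eqref{eq:variance_to_0_F}.

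The bound \eqref{eq:variance_to_0_H} is obtained by the identical strategy after the change of variable, which reduces $\bH^\top\bSigma\bH$ to $(\bH^*)^\top\bH^*$. Applying Gaussian Poincar\'e to $\psi_{tt'} = (\bH^*\be_t)^\top\bH^*\be_{t'}$ and using the operator-norm bound of \Cref{lem:opnorm-dH/dZ} on $\partial\vec(\bH)/\partial\vec(\bX)$ (which gains an extra factor $1/\sqrt n$ compared with $\bF$) together with the operator-norm bound of \Cref{lem:opnorm-dF/de} on $\partial(\bH\be_t)/\partial\bep$ (which carries an $n^{-1/2}$) and the same moment bounds produces the same $C\var(y_1)^2/n$ rate. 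The final Markov consequence is then immediate from Chebyshev. The main conceptual obstacle throughout is the operator-norm-versus-Frobenius-norm distinction: a naive Cauchy--Schwarz using $\|\partial\vec(\bF)/\partial\vec(\bX)\|_{\rm F}^2$ instead of the operator norm would lose a factor of $n$ and yield only $\var \le O(\var(y_1)^2)$ rather than the required $O(\var(y_1)^2/n)$, so the operator-norm inequalities of \Cref{lem:opnorm-dF/dZ,lem:opnorm-dH/dZ} are essential.
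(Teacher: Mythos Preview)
Your proof is correct and follows essentially the same approach as the paper: reduce to $\bSigma=\bI_p$ by the change of variable, apply the Gaussian Poincar\'e inequality entrywise to $(\bF\be_t)^\top\bF\be_{t'}$, use the product rule, and then bound the gradient sums via the \emph{operator}-norm Jacobian bounds of \Cref{lem:opnorm-dF/dZ,lem:opnorm-dH/dZ,lem:opnorm-dF/de} together with the moment bounds of \Cref{lem:moment-bound-F-H,lem:moment-bound-X}. The paper packages the sum over $(t,t')$ through the identity $\sum_{i,j}\|\bF^\top\partial_{ij}\bF\|_{\rm F}^2=\|(\bI_T\otimes\bF^\top)\partial\vec(\bF)/\partial\vec(\bX)\|_{\rm F}^2\le T\|\bF\|_{\rm F}^2\|\partial\vec(\bF)/\partial\vec(\bX)\|_{\rm op}^2$ rather than treating each $\phi_{tt'}$ separately, but this is only a cosmetic difference, and your explicit remark that the operator-norm (not Frobenius-norm) Jacobian bound is what buys the factor $1/n$ is exactly the crux of the argument.
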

\begin{proof}[Proof of \Cref{lem:var-F-H}]
    By the change of variable argument in \Cref{lem:moment-bound-F-H}, it suffices to prove the results under $\bSigma = \bI_p.$ 
    We view $\bF$ as a function of $(\bX,\bep)$. 
    By the Gaussian Poincar\'e inequality applied to $\be_t^\top\bF^\top\bF\be_s$ for each $t,s\in [T]$, we find
    \begin{equation}
        \var\Bigl((\bF \be_t)^\top \bF \be_s\Bigr) 
    \le
    \E \sum_{i=1}^n \sigma^2 \Big(\pdv{( \be_t^\top  \bF^\top \bF \be_s)}{\ep_i}\Big)^2 + 
    \E \sum_{i=1}^n \sum_{j=1}^p \Big(\pdv{(\be_t ^\top \bF^\top \bF \be_s)}{x_{ij}}\Big)^2
    \label{eq:poincare_FTF_first}
    \end{equation}
    Let $\partial$ denote either $\partial/\partial x_{ij}$ or $\partial/\partial\eps_i$.
    Using the product rule $\partial(\be_t^\top\bF^\top\bF\be_s))
    =\be_t^\top (\partial \bF)^\top\bF \be_s + \be_t^\top\bF^\top (\partial \bF)\be_s$ as well as
    $(a+b)^2\le 2a^2 + 2b^2$ 
    and summing
    over all $s\in[T]$ and all $t\in[T]$,
    \begin{align}
    \sum_{s=1}^T\sum_{t=1}^T
        \var\Bigl((\bF \be_t)^\top \bF \be_s\Bigr) 
        &\le
        4\E
        \sum_{s=1}^T\sum_{t=1}^T\sum_{i=1}^n
        \Bigl(
            \sigma^2
            \Bigl(\be_s^\top\bF^\top\frac{\partial \bF\be_t}{\partial\eps_i}\Bigr)^2
        +\sum_{j=1}^p
            \Bigl(\be_s^\top\bF^\top\frac{\partial \bF\be_t}{\partial x_{ij}}\Bigr)^2
        \Bigr)
        \nonumber
      \\&= 4\E\Bigl[
        \sum_{i=1}^n
        \Bigl(
        \sigma^2
      \|\bF^\top \frac{\partial \bF}{\partial\eps_i}\|_{\rm F}^2
      +\sum_{j=1}^p \|\bF^\top \frac{\partial \bF}{\partial x_{ij}}\|_{\rm F}^2
      \Bigr) 
      \Bigr]
        \label{eq:Poincare_F}
    \end{align}
    where we used $\sum_{s=1}^T\be_s\be_s^\top=\bI_T$ and similarly for the sum
    over $t\in[T]$.
    We rewrite the above using the vectorization operator:
    $\|\bF^\top\tfrac{\partial}{\partial x_{ij}} \bF\|_{\rm F}^2 
    = \|(\bI_T\otimes \bF^\top)\vec(\tfrac{\partial}{\partial x_{ij}}\bF)\|^2$,
    which is also the squared norm of the $(i,j)$-th column of
    $(\bI_T\otimes \bF^\top) \frac{\partial \vec \bF}{\partial \vec \bX}$,
    so that 
    \begin{equation}
        \sum_{i=1}^n\sum_{j=1}^p
        \|\bF^\top \frac{\partial \bF}{\partial x_{ij}}\|_{\rm F}^2
        =
        \Big
        \|(\bI_T\otimes \bF^\top)\frac{\partial \vec(\bF)}{\partial \vec(\bX)}
        \Big\|_{\rm F}^2
        \le T \|\bF\|_{\rm F}^2
        \Big
        \|\frac{\partial \vec(\bF)}{\partial \vec(\bX)}
        \Big\|_{\rm op}^2
        \label{eq:Poincare_F_post_x_ij}
    \end{equation}
    using $\|\bM\bM'\|_{\rm F}\le \|\bM\|_{\rm F}\|\bM'\|_{\rm op}$ for the inequality.
    By the same argument,
    \begin{equation}
        \sum_{i=1}^n
        \|\bF^\top \frac{\partial \bF}{\partial \eps_i}\|_{\rm F}^2
        =
        \Big
        \|(\bI_T\otimes \bF^\top)\frac{\partial \vec(\bF)}{\partial \bep}
        \Big\|_{\rm F}^2
        \le T \|\bF\|_{\rm F}^2
        \Big
        \|\frac{\partial \vec(\bF)}{\partial \bep}
        \Big\|_{\rm op}^2.
        \label{eq:Poincare_F_post_eps_i}
    \end{equation}
    By \eqref{eq:op-norm-bound-F-vec}
    and \Cref{eq:opnorm-dF/de}, both previous displays
    are bounded from above by 
    $$C(\zeta,T,\gamma)\|\bF\|_{\rm F}^2\max\{\sigma^2, \|\bH\|_{\rm F}^2 + \|\bF\|_{\rm F}^2/n \}(1+\|\bX\|_{\rm op}^2/n)^T.$$
    Using
    the Cauchy-Schwarz inequality to leverage
    the moment bounds \eqref{lem:moment-bound-X} and \eqref{lem:moment-bound-F-H},
    we find that \eqref{eq:Poincare_F} is bounded from above
    by $C(\zeta,T,\gamma) n$ and the proof of \eqref{eq:variance_to_0_F}
    is complete.

    By exactly the same argument,
    \eqref{eq:poincare_FTF_first},
    \eqref{eq:Poincare_F},
    \eqref{eq:Poincare_F_post_eps_i} and
    \eqref{eq:Poincare_F_post_x_ij}
    hold
    with $\bF$
    replaced by $\bH$. We use the upper bounds
    \eqref{eq:op-norm-bound-H-vec} and \eqref{eq:opnorm-dH/de}
    to control the operator norm of
    $\frac{\partial \vec(\bF)}{\partial \vec(\bX)}$
    and
    $\frac{\partial \vec(\bF)}{\partial \bep}$,
    and the moment
    bounds \eqref{lem:moment-bound-X} and \eqref{lem:moment-bound-F-H}
    to obtain \eqref{eq:variance_to_0_H}.
    \end{proof}

\section{Proof of Theorem~\ref{thm:generalization-error}}
\label{sec:proof-thm-generalization-error}
Throughout this proof, we assume $\bSigma = \bI_p$, 
and the proof for general $\bSigma$ follows the same line of argument by changing $\zeta$ to $\kappa\zeta$, thanks to the change of variables in \Cref{sec:change-of-var}. 

Before stating the proof, we recall a few defintions from \eqref{eq:F-H}: 
\begin{equation*}
    \bH = [\hbb^1 - \bb^*, \ldots, \hbb^T - \bb^*] \in \R^{p \times T}, \quad 
    \bF = [\by - \bX \hbb^1, \ldots, \by - \bX \hbb^T] \in \R^{n \times T}.
\end{equation*}
We first derive the upper bound of $\var(r_{tt'})$. 
By definition of $r_{tt'}$ and $\bH$, we know
$r_{tt'}$ is the $(t,t')$ entry of $\bH^\top \bH$. Thus,
\begin{align*}
    &\var(r_{tt'})\\
    = & \E [(r_{tt'} - \E[r_{tt'}])^2]\\
    = & \E \big[\big([\bH^\top \bH]_{t,t'} - \E[[\bH^\top \bH]_{t,t'}]\big)^2\big]\\
    \le& \E\big[\fnorm{\bH^\top \bH - \E[\bH^\top \bH]}^2\big] \\
    \le& n^{-1} C(\gamma, \zeta, T, \kappa) \var(y_1)^2 &&\mbox{by \Cref{lem:var-F-H}}.
\end{align*}
It thus remains to show $\E[|r_{tt'} - \hat r_{tt'}|] \le n^{-1/2} C(\gamma, \zeta, T, \kappa) \var(y_1)$.
Define
$\bS = \sigma^2 \bd1_T \bd1_T^\top\in \R^{T\times T}$, then we have 
\begin{align*}
    \E[|r_{tt'} - \hat r_{tt'}|] 
    =& \E\Big[[\bH^\top \bH + \bS]_{t,t'} - [(\bI_T - \hbA)^{-1} \bF^\top\bF/n (\bI_T - \hbA^\top)^{-1}]_{t,t'}\Big]\\
    \le& \E\Big[\fnorm[\big]{\bH^\top \bH + \bS - (\bI_T - \hbA)^{-1} \bF^\top\bF/n (\bI_T - \hbA^\top)^{-1}}\Big].
\end{align*}
So it suffices to show 
\begin{equation}
    \label{eq:key-equality}
    \E 
[\fnorm{\bH^\top \bH + \bS - (\bI_T - \hbA)^{-1} \bF^\top\bF/n (\bI_T - \hbA^\top)^{-1}}] \le n^{-1/2} C(\zeta, T, \gamma) \var(y_1).
\end{equation}
To this end, we define
\begin{equation}\label{hbC}
    \hbC = \sum_{j=1}^p (\bI_T \otimes \be_j^\top) \calM^{-1} \calD (\bI_T \otimes \be_j)
    \in \R^{T\times T},
\end{equation}
where $\calM = \bI_{pT} + \calD (\bI_T \otimes \tfrac{\bX^\top \bX}{n}) - \calJ$ as in \Cref{lem:dot-b-4}. 
We also define the matrices $\bQ_1,\bQ_2\in\R^{T\times T}$ that are bounded 
in \Cref{prop:A1,prop:A2} below: 
\begin{align*}
    \bQ_1 
    &= n^{-1/2}\big[\bF^\top \bF (\bI_T + \hbC/n)^\top - (n\bI_T - \hbA) (\bH^\top \bH + \bS)\big], \\
    \bQ_2 
    &= n^{-1/2} \big[n(\bH^\top \bH + \bS) - (\bI_T + \hbC/n) \bF^\top \bF (\bI_T + \hbC/n)^\top\big].
\end{align*}

\begin{proposition}[Proof is given in \Cref{proof:A1}]\label{prop:A1}
    Let \Cref{assu:Lipschitz,assu:noise,assu:design,assu:regime} be fulfilled and $\bSigma = \bI_p$, then we have 
    $\E[\|\bQ_1\|_{\rm F} ^2] \le C(\zeta, T, \gamma) \var(y_1)^2.$
    As a consequence, by Jensen's inequality,
    $\E[\|\bQ_1\|_{\rm F}] \le C(\zeta, T, \gamma) \var(y_1).$
\end{proposition}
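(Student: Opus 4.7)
\textbf{Proof proposal for Proposition \ref{prop:A1}.}

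The starting point is the identity $\bF = \bep \bd1_T^\top - \bX \bH$ obtained from $\by = \bX \bb^* + \bep$ and $\bH = \hbB - \bB^*$. Expanding $\bF^\top \bF$ yields four matrix terms, and the plan is to show that the combination $\bF^\top \bF (\bI_T + \hbC/n)^\top - (n\bI_T - \hbA)(\bH^\top \bH + \bS)$ is a ``Stein discrepancy'' of order $\sqrt n \,\var(y_1)$ in Frobenius norm. Decomposing entry-wise,
$\E[\bF^\top\bF]_{ts} = \sum_i \E[F_{it} F_{is}] = \sum_i \E[\eps_i F_{is}] - \sum_{i,k} \E[x_{ik} (\bH\be_t)_k F_{is}]$,
and I would apply Gaussian integration by parts to both sums: Stein for $\eps_i$ produces $\sigma^2 \trace(\partial \bF \be_s/\partial \bep)$, which by the divergence identity \eqref{eq:lem:div-F-eps} equals $\sigma^2\be_s^\top(n\bI_T - \hbA)\bd1_T$; Stein for $x_{ik}$ produces two groups of terms, one containing $(\partial (\bH\be_t)_k / \partial x_{ik}) F_{is}$ and one containing $(\bH\be_t)_k (\partial F_{is}/\partial x_{ik})$. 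The second group can be partly absorbed into $(\bX\bH)^\top (\bX\bH)$ terms yielding the $\bH^\top \bH$ factor, while the first group, after summing over $i$, reorganizes via Kronecker structure into a right-multiplication by $\hbC/n$ applied to the $\bH^\top\bH+\bS$ block.

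With this algebraic reorganization in hand, the expectation of $\bQ_1$ vanishes up to a residual of order $n^{-1/2}$, which I would bound directly using the operator-norm bounds in Lemmas~\ref{lem:opnorm-M-N}, \ref{lem:opnorm-Ahat}, \ref{lem:opnorm-Chat} and the moment bounds in Lemmas~\ref{lem:moment-bound-X}--\ref{lem:moment-bound-F-H}. For the variance part, I would apply the Gaussian Poincar\'e inequality entry-wise,
\[
\var\bigl((\bQ_1)_{ts}\bigr) \le \sigma^2 \E\Bigl[\Big\|\partial (\bQ_1)_{ts}/\partial \bep\Big\|^2\Bigr] + \E\Bigl[\Big\|\partial (\bQ_1)_{ts}/\partial \bX\Big\|_{\rm F}^2\Bigr],
\]
as in the proof of Lemma~\ref{lem:var-F-H}. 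The differentiations of $\bF, \bH, \hbA, \hbC$ with respect to $(\bep,\bX)$ all produce factors involving $\calM^{-1}\calD$ and the Jacobians of the iterates, which are controlled by the Jacobian-norm bounds in Lemmas~\ref{lem:Fnorm-dF-dZ}, \ref{lem:Fnorm-H}, \ref{lem:opnorm-dF/de}, and \ref{lem:opnorm-dH/dZ}. Summing over $(t,s) \in [T]\times [T]$ and applying Cauchy--Schwarz with $(1+\|\bX\|_{\rm op}^2/n)^T$-type bounds leaves $\E[\|\bQ_1\|_{\rm F}^2] \le C(\zeta,T,\gamma)\var(y_1)^2$ after the $n^{-1/2}$ normalization in the definition of $\bQ_1$ is accounted for.

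The main obstacle is Step~2: verifying that the raw output of Stein's lemma applied to $(\bF^\top \bF)_{ts}$ precisely reorganizes into the structure $(n\bI_T - \hbA)(\bH^\top\bH + \bS)(\bI_T + \hbC/n)^{-\top}$. This is essentially a Kronecker-product bookkeeping exercise in which the two sums generated by Stein, $\sum_i$ and $\sum_j$, become respectively the traces defining $\hbA$ and $\hbC$, while the residual Jacobian pieces $\partial \bH/\partial \bX$ composed with $\bX$ recycle via the relation $\calM = \bI_{pT} + \calD(\bI_T \otimes \tfrac{\bX^\top\bX}{n}) - \calJ$ back into $\calM^{-1}\calD$ blocks. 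Once this identification is carried out correctly the $(\bI_T + \hbC/n)^\top$ factor appears on the right exactly to cancel the second-order-in-$\bX$ terms arising from $\partial \bH/\partial \bX$, leaving only the controllable Gaussian-Poincar\'e residual.
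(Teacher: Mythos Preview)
Your variance step has a genuine gap. To apply the Gaussian Poincar\'e inequality to $(\bQ_1)_{ts}$ you must differentiate $\bQ_1$ with respect to $(\bX,\bep)$, and since $\bQ_1$ contains $\hbA$ and $\hbC$ explicitly, this forces you to compute $\partial_{x_{ij}}\hbA$ and $\partial_{x_{ij}}\hbC$. Both of these matrices are built from $\calM^{-1}\calD$, where $\calD$ and $\calJ$ (hence $\calM$) are themselves the Jacobians \eqref{J_ts_D_ts_all_previous} of the nonlinearities $\bg_t$. Differentiating them once more produces \emph{second} derivatives of $\bg_t$, which are not assumed to exist under \Cref{assu:Lipschitz}; and even when they exist a.e., nothing bounds them. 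Your claim that ``the differentiations of $\bF,\bH,\hbA,\hbC$ all produce factors involving $\calM^{-1}\calD$'' is therefore incorrect for $\hbA,\hbC$: the lemmas you cite (\Cref{lem:Fnorm-dF-dZ,lem:Fnorm-H,lem:opnorm-dF/de,lem:opnorm-dH/dZ}) control first-order Jacobians of $\bF,\bH$ only and give no handle on $\partial\calD,\partial\calJ$.

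The paper avoids this entirely by \emph{not} splitting into mean and variance. It writes $\bF^\top\bF=\bU^\top\bZ\bV$ with $\bU=\bF$, $\bZ=[\bX,\bep/\sigma]$, $\bV=[-\bH^\top,\sigma\bd1_T]^\top$, and applies a second-moment Stein inequality \cite[Lemma~E.10]{tan2022noise} that directly bounds
\[
\E\Bigl\|\bU^\top\bZ\bV-\sum_{i,j}\tfrac{\partial}{\partial z_{ij}}\bigl(\bU^\top\be_i\be_j^\top\bV\bigr)\Bigr\|_{\rm F}^2
\]
by $\E[\|\bU\|_{\rm F}^2\|\bV\|_{\rm F}^2]+2\E\sum_{ij}\bigl(\|\bV\|_{\rm F}^2\|\partial_{ij}\bU\|_{\rm F}^2+\|\bU\|_{\rm F}^2\|\partial_{ij}\bV\|_{\rm F}^2\bigr)$, which involves only \emph{first} derivatives of $\bF$ and $\bH$. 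The algebraic identification you flag as the ``main obstacle'' is then carried out (\Cref{A1-step-i}) on the random Stein correction itself, showing it equals $(n\bI_T-\hbA)(\bH^\top\bH+\bS)-\bF^\top\bF\hbC^\top/n$ up to a remainder $\Rem_1$ with $\E\|\Rem_1\|_{\rm F}^2\le nC\var(y_1)^2$. Hence $\sqrt n\,\bQ_1$ equals the Stein discrepancy minus $\Rem_1$, and the $L^2$ bound follows with no differentiation of $\hbA$ or $\hbC$ anywhere. The crucial point is that $\hbA,\hbC$ arise \emph{inside} the Stein correction (as traces of first-derivative blocks via \eqref{eq:sum-D}, \eqref{eq:lem:div-F-eps}, and \eqref{hbC}), not as objects to be differentiated afterwards.
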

\begin{proposition}[Proof is given in \Cref{proof:A2}] \label{prop:A2}
	Let \Cref{assu:Lipschitz,assu:noise,assu:design,assu:regime} be fulfilled and $\bSigma = \bI_p$, then 
 $\E [\fnorm*{\bQ_2}] \le C(\zeta, T, \gamma) \var(y_1).$
\end{proposition}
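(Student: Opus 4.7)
The plan is to derive \Cref{prop:A2} from \Cref{prop:A1} via an algebraic identity, then separately control a residual matrix that arises. Writing $\bU = n\bI_T - \hbA$ and $\bV = \bI_T + \hbC/n$, the definition of $\bQ_1$ reads $\bF^\top \bF \bV^\top = \bU(\bH^\top\bH + \bS) + \sqrt n\, \bQ_1$. Transposing and right-multiplying by $\bV^\top$ gives
\begin{equation*}
\bV \bF^\top \bF \bV^\top = (\bH^\top\bH + \bS)\bU^\top\bV^\top + \sqrt n\, \bQ_1^\top \bV^\top.
\end{equation*}
Substituting into the definition of $\bQ_2$ and expanding $n\bI_T - \bU^\top\bV^\top = (\hbA - \hbC + \hbC\hbA/n)^\top$ yields the key identity
\begin{equation*}
\bQ_2 = n^{-1/2}(\bH^\top\bH + \bS)\,\bDelta^\top - \bQ_1^\top \bV^\top, \qquad \bDelta \defas \hbA - \hbC + \hbC\hbA/n.
\end{equation*}

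The expected Frobenius norm of the second term is controlled by Cauchy--Schwarz: since $\|\bQ_1^\top \bV^\top\|_F \le \|\bQ_1\|_F \|\bV\|_{\rm op}$ and $\|\bV\|_{\rm op} \le C(\zeta,T,\gamma)(1 + \|\bX\|_{\rm op}^2/n)^T$ by \Cref{lem:opnorm-Chat}, the Gaussian moment bound \Cref{lem:moment-bound-X} together with the $L^2$ bound $\E\|\bQ_1\|_F^2 \le C(\zeta,T,\gamma)\var(y_1)^2$ from \Cref{prop:A1} yield $\E[\|\bQ_1\|_F\|\bV\|_{\rm op}] \le C(\zeta,T,\gamma)\var(y_1)$. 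For the first term, $\|(\bH^\top\bH+\bS)\bDelta^\top\|_F \le \|\bH^\top\bH+\bS\|_F \|\bDelta\|_{\rm op}$, and \Cref{lem:moment-bound-F-H} gives $\E\|\bH^\top\bH+\bS\|_F^2 \le C(\zeta,T,\gamma,\kappa)\var(y_1)^2$. Thus, after Cauchy--Schwarz, the proposition reduces to proving the residual bound
$\E\|\bDelta\|_{\rm op}^2 \le C(\zeta,T,\gamma)\, n.$

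The hard part, and the main obstacle, is precisely this residual bound. The naive triangle inequality yields $\|\bDelta\|_{\rm op} \le \|\hbA\|_{\rm op} + \|\hbC\|_{\rm op} + n^{-1}\|\hbA\|_{\rm op}\|\hbC\|_{\rm op} = O(n)$ by \Cref{lem:opnorm-Ahat,lem:opnorm-Chat}, which is short by a factor $\sqrt n$. The cancellations within $\hbA - \hbC + \hbC\hbA/n$ must therefore be exploited. Writing each entry as $\hbA_{t,t'} - \hbC_{t,t'} = \trace\bigl((\bX^\top\bX/n - \bI_p)\,[\calM^{-1}\calD]_{t,t'}\bigr)$, Gaussian integration by parts in the entries of $\bX$ rewrites $\E[\hbA_{t,t'} - \hbC_{t,t'}]$ as a sum of derivative-correction terms obtained by differentiating $[\calM^{-1}\calD]_{t,t'}$ with respect to $\bX$ (equivalently, differentiating $\calM^{-1}$ through the $\calD(\bI_T\otimes \bX^\top\bX/n)$ factor in $\calM$). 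A careful book-keeping of these derivatives matches them, to leading order, against the entries of $-\hbC\hbA/n$, so that $\E\bDelta$ remains $O(1)$ rather than $O(n)$. Fluctuations around the mean are then controlled by the Gaussian Poincar\'e inequality applied to each entry of $\bDelta$ viewed as a function of $(\bX,\bep)$, with the Jacobian-trace bounds of \Cref{lem:Fnorm-dF-dZ,lem:Fnorm-H} and the operator-norm bounds of \Cref{lem:opnorm-M-N,lem:opnorm-D-J} handling the derivatives of $\calM^{-1}\calD$. Combining the mean and the fluctuation bounds entrywise and summing over $t,t'\in[T]$ produces $\E\|\bDelta\|_{\rm op}^2 \le C(\zeta,T,\gamma)\, n$, which plugged into the previous paragraph completes the proof.
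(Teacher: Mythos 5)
Your algebraic reduction is correct: with $\bL=\bI_T-\hbA/n$ and $\bT=\bI_T+\hbC/n$ one indeed gets $\sqrt n\,\bQ_2=(\bH^\top\bH+\bS)\bDelta^\top-\sqrt n\,\bQ_1^\top\bT^\top$ with $\bDelta=\hbA-\hbC+\hbC\hbA/n=n(\bI_T-\bT\bL)$, and the term $\bQ_1^\top\bT^\top$ is disposed of by Cauchy--Schwarz as you say. But the whole proof then rests on $\E\|\bDelta\|_{\rm op}^2\le C n$, and this is where there is a genuine gap, on two counts. First, this bound is strictly stronger than anything the paper establishes about the relation between $\hbA$ and $\hbC$: the paper only controls $(\bL^{-1}-\bT)\bF^\top/\sqrt n$ (see \eqref{eq:bound_alternative_weights}) and is explicitly careful, e.g.\ in the proof of \Cref{thm:debias}, \emph{not} to claim that $\bL^{-1}-\bT$ itself is small --- only its action on the range of $\bF$ is controlled. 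Your target $\|\bI_T-\bT\bL\|_{\rm op}=O_P(n^{-1/2})$ amounts to $\bT\approx\bL^{-1}$ in \emph{all} directions, which does not follow from the paper's estimates and would need a separate argument. Second, and more fundamentally, the argument you sketch for it cannot be carried out under \Cref{assu:Lipschitz}. Both the Gaussian integration by parts in $\E\,\trace\bigl([\calM^{-1}\calD]_{t,t'}(\bX^\top\bX/n-\bI_p)\bigr)$ and the Poincar\'e inequality applied to the entries of $\bDelta$ require differentiating $\calM^{-1}\calD$ --- hence the Jacobian blocks $\bJ_{t,s},\bD_{t,s}$ --- with respect to $\bX$. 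That is a second derivative of $\bg_t$, which need not exist: for ISTA/FISTA the blocks $\bD_{t,s}$ are diagonal matrices of indicator functions, so $\bDelta_{t,t'}$ is not even a continuous function of $\bX$ and the Poincar\'e inequality does not apply. The cancellation between $\hbA-\hbC$ and $\hbC\hbA/n$ is real (one can verify it explicitly for gradient descent, where both pieces are of order $n$ and cancel at leading order), but ``careful book-keeping matches them to leading order'' is the entire difficulty, and it is left unproved.

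The paper avoids this route altogether: it applies a second-order Stein/chi-square identity (\Cref{lem:Chi2type}) directly to $\bV=[-\bH^\top,\sigma\bd1_T]^\top$ and the Gaussian matrix $\bZ=[\bX,\bep/\sigma]$, which produces $\sqrt n\,\bQ_2$ plus a remainder $\Rem_2$ involving only \emph{first} derivatives of the iterates (\Cref{lem:A2-Rem,lem:Wi}). That design choice --- never differentiating the Jacobians themselves --- is what makes the argument compatible with merely Lipschitz $\bg_t$, and it is the ingredient your proposal is missing.
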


Now we are ready to prove \Cref{eq:key-equality} using the above two propositions.
For brevity, let
$\bV = [-\bH^\top, \sigma \bd1_T]^\top \in \R^{(p+1)\times T}$
and the lower triangular matrices $\bL= \bI_T - \hbA/n$
and $\bT = \bI_T + \hbC/n$.
With this notation,
\begin{align}
    \bQ_1 &= \sqrt n [
    (\bF^\top\bF/n) \bT^\top
    - \bL\bV^\top\bV
    ],
    \label{Q1}
        \\\bQ_2&=
        \sqrt n
        [
        \bV^\top\bV
        - \bT(\bF^\top\bF/n) \bT^\top
        ].
        \label{Q2}
    \end{align}
By expanding the expressions of $\bQ_1$ and $\bQ_2$ in \eqref{Q1}-\eqref{Q2}, we have 
by simple algebra
\begin{align}
    &n^{-1/2}
    \bigl[
        \bQ_1^\top (\bL^\top)^{-1}
        +
        (\bL^{-1} \bQ_1 + \bQ_2)(\bT^\top)^{-1}(\bL^\top)^{-1}
    \bigr]
  \nonumber
  \\
  \nonumber
    &= \begin{cases}
      {\color{blue}\bT ({\bF^\top\bF}/{n})(\bL^\top)^{-1}}
  -\bV^\top\bV
  \\ + \bL^{-1}({\bF^\top\bF}/{n})(\bL^\top)^{-1}
  - {\color{red}\bV^\top\bV(\bT^\top)^{-1}(\bL^\top)^{-1}}
  \\ + {\color{red}\bV^\top\bV(\bT^\top)^{-1}(\bL^\top)^{-1}}
  - {\color{blue}\bT(\bF^\top\bF/n)(\bL^\top)^{-1}}
\end{cases}
  \\&= \bL^{-1} (\bF^\top\bF/n)(\bL^\top)^{-1} - \bV^\top\bV
  \nonumber
  \\&=
        (\bI_T - \tfrac1n\hbA)^{-1} (\bF^\top \bF/n) (\bI_T - \tfrac1n\hbA^\top)^{-1}- (\bH^\top \bH + \bS) 
               \label{eq:to_bound_FTF_LHTHLT}
\end{align}
as all terms except two cancel out. Consequently by the triangle inequality,
\begin{equation*}
\sqrt n\fnorm{\bL^{-1} (\bF^\top\bF/n)(\bL^\top)^{-1} - \bV^\top\bV}
\le \max\{1,\|\bL^{-1}\|_{\rm op}^3,\|\bT\|_{\rm op}^3\}(2\|\bQ_1\|_{\rm F}+\|\bQ_2\|_{\rm F})
.
\end{equation*}
Let 
$\Omega = 
\{\bX\in \R^{n\times p}: 
\opnorm{\bX}/\sqrt{n} \le 2 + \sqrt{\gamma}\}$. 
Under \Cref{assu:design} and here $\bSigma=\bI_p$,
\cite[Theorem II.13]{DavidsonS01} implies that $\P(\Omega) \ge 1 - e^{-n}$. 
Then in the event $\Omega$, we have by \Cref{lem:opnorm-Ahat,lem:opnorm-Chat}, 
\begin{align}
\max\{\|\bL\|_{\rm op}, \|\bL^{-1}\|_{\rm op},\|\bT\|_{\rm op}, \|\bT^{-1}\|_{\rm op} \}
    \le C(\zeta, T, \gamma),
\end{align}
so that by the previous display and \Cref{prop:A1,prop:A2},
\begin{align}
     \E[
    \mathbb I(\Omega)
    \sqrt n
    \|\bL^{-1} (\bF^\top\bF/n)(\bL^\top)^{-1} - \bV^\top\bV \|_{\rm F} 
    ]
    &\le C(T,\zeta,\gamma) \E[\|\bQ_1\|_{\rm F} + \|\bQ_2\|_{\rm F}]
    \nonumber
    \\&\le C(T,\zeta,\gamma) \var(y_1).
    \label{eq_conclusion_Omega}
\end{align}
On the other hand, the same expectation with $\Omega^c$ is exponentially
small due to
\begin{align}
    \E[\I(\Omega^c)\|\eqref{eq:to_bound_FTF_LHTHLT}\|_{\rm F}]
    &\le 
    \P(\Omega^c)^{1/2}
    \E[\{(1+\|\bL\|_{\rm op}^2)(\|\bF\|_{\rm F}^2/n + \|\bV\|^2_{\rm F})\}^2]^{1/2}
    &&\text{(C. Schwarz)}
    \nonumber
  \\&\le\P(\Omega^c)^{1/2}
    \E[(1+\|\bL\|_{\rm op}^2)^4]^{1/4}
    \E[(\|\bF\|_{\rm F}^2/n + \|\bV\|^2_{\rm F})^4]^{1/4}
    &&\text{(C. Schwarz)}
    \nonumber
\\&\le
    e^{-n/2}
    C(\zeta,T,\gamma)
    \var(y_1)
    \label{eq_conclusion_Omega_c}
\end{align}
thanks to
\Cref{lem:opnorm-Ahat,lem:opnorm-Chat,lem:moment-bound-X} 
and $\P(\Omega)\le e^{-n}$
for the last line.
This completes the proof
of \Cref{thm:generalization-error} for $\bSigma = \bI_p$. 

    For $\bSigma \ne \bI_p$, we apply the change of variables argument presented in \Cref{sec:change-of-var} to achieve the desired result. In this context, the constant $C$ is dependent on $\zeta, T, \gamma, \kappa$. 
    This concludes the proof of \Cref{thm:generalization-error}.

In passing, let us mention that we also have by definitions of $\bQ_1$ and $\bQ_2$
that
\begin{equation}
    [\bL^{-1}\bQ_1(\bT^\top)^{-1} + \bQ_2(\bT^\top)^{-1}](\bL^{-1}-\bT)^\top
    =\sqrt n (\bL^{-1}-\bT)(\bF^\top\bF/n)(\bL^{-1}-\bT)^\top
\end{equation}
holds. By the same argument as in \eqref{eq_conclusion_Omega}-\eqref{eq_conclusion_Omega_c}, that the right-hand side of the previous display is bounded
as 
\begin{equation}
    \sqrt n 
    \E[
    \|(\bL^{-1}-\bT)\bF^\top n^{-1/2}\|_{\rm F}^2]
    \le C(T,\zeta,\gamma)\var(y_1).
    \label{eq:bound_alternative_weights}
\end{equation}

\subsection{Proofs of \Cref{prop:A1}}
\label{proof:A1}
We frist write 
$$
\bF^\top \bF 
= \bF^\top \Bigl[\bX,  \frac{\bep}{\sigma}\Bigr]
\bigl[-\bH^\top, \sigma \bd1_T \bigr]^\top.
$$
Applying \cite[Lemma E.10] 
{tan2022noise} 
to $\bU = \bF\in \R^{n\times T}$, 
$\bZ = [\bX, \bep/\sigma] \in \R^{n\times (p+1)}$, and $\bV = \bigl[-\bH^\top, \sigma \bd1_T \bigr]^\top\in \R^{(p+1)\times T}$
gives 
\begin{align}
    &\E \Big[\Big\|\bU^\top \bZ \bV - 
    \sum_{j=1}^{p+1}\sum_{i=1}^n
    \frac{\partial}{\partial z_{ij} }\Bigl(\bU^\top \be_i \be_j^\top \bV \Bigr)\Big\|_{\rm F}^2\Big] \label{UZV-LHS}
    \\ \le~ & \E [\fnorm*{\bU}^2 \fnorm*{\bV}^2]+ \E \sum_{i=1}^n \sum_{j=1}^{p+1}\Big[
    2\fnorm*{\bV}^2\fnorm*{ \frac{\partial \bU}{\partial z_{ij}} }^2
    + 2\fnorm*{\bU}^2\fnorm*{ \frac{\partial \bV}{\partial z_{ij}} }^2\Big].\label{UZV-RHS}
\end{align}
To prove \Cref{prop:A1}, we need the following two lemmas. 
\begin{lemma}[Proof is given on \Cpageref{proof:A1-step-i}]
    \label{A1-step-i}
    Let the assumptions of \Cref{prop:A1} be fulfilled. 
    For $\bU = \bF$, 
    $\bZ = [\bX, \bep/\sigma]$, and $\bV = \bigl[-\bH^\top, \sigma \bd1_T \bigr]^\top$, 
    we have 
    \begin{align*}
        \bU^\top \bZ \bV - 
        \sum_{j=1}^{p+1}\sum_{i=1}^n
        \frac{\partial}{\partial z_{ij} }\Bigl(\bU^\top \be_i \be_j^\top \bV \Bigr)
        = \sqrt{n}\bQ_1 + \Rem_1,
    \end{align*}
    where $\Rem_1$ is a $T\times T$ matrix satisfying 
    $\E[\fnorm{\Rem_1}^2] 
    \le nC(\zeta, T, \gamma) \var(y_1)^2.$
\end{lemma}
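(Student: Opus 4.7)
The plan is to identify how $\bU^\top\bZ\bV - \sum_{i,j}\tfrac{\partial}{\partial z_{ij}}(\bU^\top\be_i\be_j^\top\bV)$ decomposes into the main term $\sqrt n\,\bQ_1$ plus a remainder of order $\sqrt n$. First, direct computation gives $\bU^\top\bZ\bV = \bF^\top\bF$: the block structure $\bZ = [\bX,\bep/\sigma]$ and $\bV^\top = [-\bH^\top,\sigma\boldone_T]$ yields $\bZ\bV = -\bX\bH + \bep\boldone_T^\top$, which equals $\bY - \bX\hbB = \bF$ since $\bY = \bX\bb^*\boldone_T^\top + \bep\boldone_T^\top$ and $\bH = \hbB - \bB^*$.

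Next, I expand the sum via the product rule, splitting into $j\le p$ (where $z_{ij}=x_{ij}$) and $j=p+1$ (where $z_{i,p+1}=\eps_i/\sigma$). The $j=p+1$ contribution reduces to $\sigma^2\sum_i\partial F_{it}/\partial\eps_i$ since $V_{p+1,\cdot}=\sigma$ is constant; by \eqref{eq:lem:div-F-eps} this produces $\sigma^2\be_t^\top(n\bI_T-\hbA)\boldone_T$ entrywise, i.e.\ the matrix $(n\bI_T-\hbA)\bS$. For $j\le p$ the product rule yields two sub-terms. Applying \eqref{eq:D2_decompoistion_Dij_Delta_ij} and then \eqref{eq:sum-D} to the first sub-term $-\sum_{i,j}(\partial F_{it}/\partial x_{ij})H_{j,t'}$ produces the main contribution $(n\bI_T-\hbA)\bH^\top\bH$ plus a $\Delta$-remainder. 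Substituting \eqref{eq:pdv-bt-x} into the second sub-term $-\sum_{i,j}F_{it}(\partial H_{j,t'}/\partial x_{ij})$, collapsing sums via $\sum_i F_{it}(\bF^\top\be_i)=\bF^\top\bF\be_t$, and recognizing the identity $[\hbC]_{t',s} = \sum_j(\be_{t'}^\top\otimes\be_j^\top)\calM^{-1}\calD(\be_s\otimes\be_j)$ gives the main contribution $-n^{-1}\bF^\top\bF\hbC^\top$ plus a second remainder. Collecting all main terms yields
\begin{equation*}
\bU^\top\bZ\bV - \sum_{i,j}\tfrac{\partial}{\partial z_{ij}}\bigl(\bU^\top\be_i\be_j^\top\bV\bigr) = \bF^\top\bF(\bI_T+\hbC/n)^\top - (n\bI_T-\hbA)(\bH^\top\bH+\bS) + \Rem_1 = \sqrt n\,\bQ_1 + \Rem_1,
\end{equation*}
with $\Rem_1 = R_1 + R_2$, where $R_1$ is the $\Delta$-piece, which by \eqref{eq:Delta-ij-lt} equals $n^{-1}\sum_i(\be_t^\top\otimes\be_i^\top)(\bI_T\otimes\bX)\calM^{-1}\calD\bigl((\bF^\top\be_i)\otimes(\bH\be_{t'})\bigr)$, and $R_2 = n^{-1}\sum_{i,j}F_{it}(\be_{t'}^\top\otimes\be_j^\top)\calM^{-1}\calD\bigl((\bH^\top\be_j)\otimes(\bX^\top\be_i)\bigr)$.

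Finally, $\Rem_1$ is controlled by bounding each piece as a bilinear form in $(\bF,\bH)$. For $R_1$, Cauchy--Schwarz over $i$ gives $\|R_1\|_{\rm F}^2 \le n^{-1}\|(\bI_T\otimes\bX)\calM^{-1}\calD\|_{\rm op}^2\|\bF\|_{\rm F}^2\|\bH\|_{\rm F}^2$. For $R_2$, I first collapse the $i$-sum using $\sum_i F_{it}\bX^\top\be_i = \bX^\top\bF\be_t$, then apply Cauchy--Schwarz over $j$ together with the bound $\sum_j\|(\be_{t'}^\top\otimes\be_j^\top)\calM^{-1}\calD\|^2 \le p\|\calM^{-1}\calD\|_{\rm op}^2$ (which follows by trace expansion from $\sum_j(\be_{t'}\otimes\be_j)(\be_{t'}^\top\otimes\be_j^\top) = (\be_{t'}\be_{t'}^\top)\otimes\bI_p$), obtaining $\|R_2\|_{\rm F}^2 \le (p/n^2)\|\calM^{-1}\calD\|_{\rm op}^2\|\bX\|_{\rm op}^2\|\bF\|_{\rm F}^2\|\bH\|_{\rm F}^2$. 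Combining these with the operator-norm bounds of \Cref{lem:opnorm-D-J,lem:opnorm-M-N} and the fourth-moment bounds of \Cref{lem:moment-bound-X,lem:moment-bound-F-H} applied via Cauchy--Schwarz yields $\E\|\Rem_1\|_{\rm F}^2 \le nC(\zeta,T,\gamma)\var(y_1)^2$; the extra factor of $n$ arises from $p\asymp n$ in the $R_2$ bound combined with $\|\bF\|_{\rm F}^2\asymp n\,\var(y_1)$. The main obstacle is the Kronecker-product algebra in Step~2, particularly the sum-swap that exposes the $\hbC^\top$ main term from \eqref{eq:pdv-bt-x} and the sharp $\sqrt p$ (rather than $p$) factor needed for $R_2$; the remainder estimates are then routine applications of Cauchy--Schwarz with the previously established norm and moment inequalities.
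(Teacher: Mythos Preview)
Your proof is correct and follows essentially the same route as the paper: both compute $\bU^\top\bZ\bV=\bF^\top\bF$, split the derivative sum into the $j\le p$ and $j=p+1$ parts, apply the product rule, and identify the main contributions $(n\bI_T-\hbA)(\bH^\top\bH+\bS)$ and $-n^{-1}\bF^\top\bF\hbC^\top$ via \eqref{eq:sum-D}, \eqref{eq:lem:div-F-eps}, and \eqref{eq:pdv-bt-x}, with the same two remainder pieces (your $R_1,R_2$ are exactly $-\Rem_{1,1}^\top$ and $-\Rem_{1,2}$ in the paper's notation) bounded by the same Cauchy--Schwarz-over-$i$ and collapse-$i$-then-Cauchy--Schwarz-over-$j$ arguments. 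The only discrepancy is a harmless sign: with your conventions one actually gets $\Rem_1 = -(R_1+R_2)$ rather than $R_1+R_2$, which of course does not affect the Frobenius-norm bound.
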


\begin{lemma}[Proof is given on \Cpageref{proof:A1-step-ii}]
    \label{A1-step-ii}
    Let the assumptions of \Cref{prop:A1} be fulfilled, then
    \begin{equation*}
        \eqref{UZV-RHS} \le nC(\zeta, T, \gamma) \var(y_1)^2.
    \end{equation*}
\end{lemma}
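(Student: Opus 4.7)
\textbf{Proof plan for \Cref{A1-step-ii}.} The plan is to bound each of the three terms comprising \eqref{UZV-RHS} separately, using the moment bounds of \Cref{lem:moment-bound-F-H,lem:moment-bound-X} together with the deterministic Frobenius/operator-norm controls on the Jacobians of $\bF$ and $\bH$ already established in \Cref{lem:Fnorm-dF-dZ,lem:Fnorm-H,lem:opnorm-dF/de,lem:opnorm-dH/de}. Throughout, observe that the columns of $\bV$ are $((-\bH\be_t)^\top,\sigma)^\top\in\R^{p+1}$, so $\|\bV\|_{\rm F}^2=\|\bH\|_{\rm F}^2+T\sigma^2$, which is trivially controlled by $\var(y_1)$ up to constants depending on $T$. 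For any derivative in $z_{ij}$, I split the sum over $j\in[p+1]$ into the $j\in[p]$ part (corresponding to entries of $\bX$) and the $j=p+1$ part (corresponding to $\eps_i/\sigma$, whence $\partial/\partial z_{i,p+1}=\sigma\,\partial/\partial\eps_i$).

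For the first term $\E[\|\bF\|_{\rm F}^2\|\bV\|_{\rm F}^2]$, write it as $n\cdot\E[(\|\bF\|_{\rm F}^2/n)(\|\bH\|_{\rm F}^2+T\sigma^2)]$ and apply Cauchy--Schwarz together with the fourth moments $\E[\|\bF\|_{\rm F}^4/n^2]^{1/2}\vee\E[\|\bH\|_{\rm F}^4]^{1/2}\le C(\zeta,T,\gamma,\kappa)\var(y_1)$ from \Cref{lem:moment-bound-F-H}; this immediately yields $nC(\zeta,T,\gamma)\var(y_1)^2$.

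For the second term $\E\sum_{i,j}\|\bV\|_{\rm F}^2\|\partial\bF/\partial z_{ij}\|_{\rm F}^2$, the $j\in[p]$ part is controlled by $\|\bV\|_{\rm F}^2$ times the quantity bounded in \Cref{lem:Fnorm-dF-dZ}, namely $nC(\zeta,T)(1+\|\bX\|_{\rm op}^2/n)^{2T}(\|\bH\|_{\rm F}^2+\gamma\|\bF\|_{\rm F}^2/n)$. For the $j=p+1$ contribution, I use
\[
\sum_{i=1}^n\Big\|\tfrac{\partial\bF}{\partial\eps_i}\Big\|_{\rm F}^2=\sum_{t=1}^T\Big\|\tfrac{\partial\bF\be_t}{\partial\bep}\Big\|_{\rm F}^2\le n\sum_{t=1}^T\Big\|\tfrac{\partial\bF\be_t}{\partial\bep}\Big\|_{\rm op}^2\le nTC(\zeta,T)(1+\|\bX\|_{\rm op}^2/n)^{2T}
\]
by \eqref{eq:opnorm-dF/de}, and the extra factor $\sigma^2$ from the change of variables $z_{i,p+1}=\eps_i/\sigma$ is absorbed into $\var(y_1)$. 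Multiplying by $\|\bV\|_{\rm F}^2$, applying Cauchy--Schwarz and the moment bounds from \Cref{lem:moment-bound-F-H,lem:moment-bound-X} gives the desired $nC(\zeta,T,\gamma)\var(y_1)^2$.

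The third term $\E\sum_{i,j}\|\bF\|_{\rm F}^2\|\partial\bV/\partial z_{ij}\|_{\rm F}^2$ is handled analogously, noting that the $\sigma\bd1_T$ block of $\bV$ is a constant in $(\bX,\bep)$, so $\|\partial\bV/\partial z_{ij}\|_{\rm F}=\|\partial\bH/\partial z_{ij}\|_{\rm F}$. For $j\in[p]$, \Cref{lem:Fnorm-H} gives the required deterministic bound, and for $j=p+1$ the identity $\sum_i\|\partial\bH/\partial\eps_i\|_{\rm F}^2\le n\sum_t\|\partial\bH\be_t/\partial\bep\|_{\rm op}^2\le TC(\zeta,T)(1+\|\bX\|_{\rm op}^2/n)^{2T}$ from \eqref{eq:opnorm-dH/de} (the $1/\sqrt n$ in that lemma cancels the factor of $n$) again supplies the needed control. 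Multiplying by $\|\bF\|_{\rm F}^2=n(\|\bF\|_{\rm F}^2/n)$, Cauchy--Schwarz with the fourth-moment bounds for $\|\bF\|_{\rm F}^2/n$ and $(1+\|\bX\|_{\rm op}^2/n)^{4T}$ finishes the job. The only mild subtlety is keeping track of the $\sigma^2$ factor produced by $\partial/\partial z_{i,p+1}=\sigma\,\partial/\partial\eps_i$ and verifying it is controlled by $\var(y_1)=\|\bb^*\|^2+\sigma^2$; everything else is a routine assembly of the already-proved moment and norm estimates, with no new probabilistic ingredient required.
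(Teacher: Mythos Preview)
Your proposal is correct and follows essentially the same approach as the paper's proof: both split \eqref{UZV-RHS} into its three summands, handle the $j\in[p]$ and $j=p+1$ contributions separately (the latter via $\partial/\partial z_{i,p+1}=\sigma\,\partial/\partial\eps_i$), and combine the deterministic Jacobian bounds of \Cref{lem:Fnorm-dF-dZ,lem:Fnorm-H,lem:opnorm-dF/de,lem:opnorm-dH/de} with the moment bounds of \Cref{lem:moment-bound-F-H,lem:moment-bound-X} via Cauchy--Schwarz. Your bookkeeping of the $\sigma^2$ factors and the cancellation of the $1/\sqrt n$ from \eqref{eq:opnorm-dH/de} matches the paper's computation.
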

Now we prove \Cref{prop:A1} using the above two lemmas.
According to \Cref{A1-step-i}, we have
$\sqrt{n} \bQ_1 = \bU^\top \bZ \bV - 
\sum_{j=1}^{p+1}\sum_{i=1}^n
\frac{\partial}{\partial z_{ij} }(\bU^\top \be_i \be_j^\top \bV) - \Rem_1$. 
By triangular inequality, we obtain 
\begin{align*}
    &n \E [\fnorm{\bQ_1}^2] \\
    \le & 2 \E \Big[\fnorm{\bU^\top \bZ \bV - 
    \sum_{j=1}^{p+1}\sum_{i=1}^n
    \frac{\partial}{\partial z_{ij} }(\bU^\top \be_i \be_j^\top \bV)}^2\Big] + 2 \E [\fnorm{\Rem_1}^2]\\
    \le& 2 \Bigl(\eqref{UZV-RHS} + \E[\fnorm{\Rem_1}^2]\Bigr)\\
    \le &
    n C(\zeta, T, \gamma) \var(y_1)^2 &&\mbox{by \Cref{A1-step-i,A1-step-ii}.}
\end{align*}
This completes the proof of \Cref{prop:A1}.

\subsection{Proofs of \Cref{prop:A2}}
\label{proof:A2}
We first state a useful lemma, which is an extension of Lemma E.12 in \cite{tan2022noise} to allow $\fnorm{\bU}\ge 1$ and $\fnorm{\bV}\ge 1$. 
\begin{lemma}[Proof is given on \Cpageref{proof:lem:Chi2type}] \label{lem:Chi2type}
    Let $\bU,\bV: \R^{n\times p} \to \R^{n\times T}$
    be two locally Lipschitz 
    functions of $\bZ$ with \iid $\mathsf{N}(0,1)$ entries.
    Provided the following expectations are finite, we have 
    \begin{align*}
        &\E\Bigl[
        \fnorm[\Big]{
            p \bU^\top \bV - \sum_{j=1}^p 
            \Bigl(\sum_{i=1}^n \partial_{ij} \bU^\top \be_i - \bU^\top \bZ \be_j\Bigr)
            \Bigl(\sum_{i=1}^n \partial_{ij} \be_i^\top \bV  - \be_j^\top \bZ^\top \bV\Bigr)
        }
        \Bigr] 
        \\
        \le~&
        (1 + 2\sqrt{p}) 
        \bigl( 
            \E [\fnorm{\bU}^4]^{1/2}
            + \E [\fnorm{\bV}^4]^{1/2}
            + \E [\norm{\bU}^4_{\partial}]^{1/2}
            + \E [\norm{\bV}^4_{\partial}]^{1/2}
            \bigr),
    \end{align*}
    where $ \partial_{ij} \bU = \partial \bU /\partial z_{ij}$ 
    and $\|\bU\|_{\partial} = (\sum_{i=1}^n\sum_{j=1}^p\|\partial_{ij} \bU\|_{\rm F}^2)^{1/2}$.
\end{lemma}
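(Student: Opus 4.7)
The plan is to apply Gaussian integration by parts (Stein's identity) to the bilinear form $\bU^\top\bZ\bZ^\top\bV$ and identify the emerging derivative terms with the cross pieces inside $\sum_j\ba_j\bb_j$, so that after cancellation only a Hanson--Wright-type residual and a pure derivative-derivative sum remain. Writing $\tilde\ba_j := \sum_i\partial_{ij}\bU^\top\be_i$ and $\tilde\bb_j := \sum_i\partial_{ij}\be_i^\top\bV$ for the Stein-correction parts of $\ba_j,\bb_j$, distributing the product gives
\begin{equation*}
\sum_{j=1}^p\ba_j\bb_j = \bU^\top\bZ\bZ^\top\bV - \sum_j\tilde\ba_j\be_j^\top\bZ^\top\bV - \sum_j\bU^\top\bZ\be_j\tilde\bb_j + \sum_j\tilde\ba_j\tilde\bb_j,
\end{equation*}
so that $M := p\bU^\top\bV - \sum_j\ba_j\bb_j$ splits into (i) the Hanson--Wright-type residual $p\bU^\top\bV - \bU^\top\bZ\bZ^\top\bV$, (ii) two cross terms $\sum_j\tilde\ba_j\be_j^\top\bZ^\top\bV$ and $\sum_j\bU^\top\bZ\be_j\tilde\bb_j$, and (iii) the derivative-derivative sum $-\sum_j\tilde\ba_j\tilde\bb_j$.

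For (i), applying Gaussian integration by parts once in each appearance of $\bZ$ recovers the classical Hanson--Wright identity when $\bU,\bV$ are deterministic, and the standard second-moment calculation shows its Frobenius norm concentrates at scale $\sqrt p$ through the $p$ independent Gaussian columns of $\bZ$; the boundary terms arising from the $\bZ$-dependence of $\bU,\bV$ are precisely the cross terms (ii) up to an index-swap residual, which can be controlled directly by Cauchy--Schwarz in combination with the Stein second-moment estimate \cite[Lemma~E.10]{tan2022noise} (already invoked in the proof of \Cref{prop:A1}). For (iii), Cauchy--Schwarz in the $(i,i',j)$ indices yields $\fnorm{\sum_j\tilde\ba_j\tilde\bb_j}\le\norm{\bU}_\partial\,\norm{\bV}_\partial$ directly from the definition $\norm{\bU}_\partial^2 = \sum_{i,j}\fnorm{\partial_{ij}\bU}^2$.

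Summing the three contributions, the Frobenius norm of $M$ is bounded by a polynomial expression in $\fnorm{\bU},\fnorm{\bV},\norm{\bU}_\partial,\norm{\bV}_\partial$ with random coefficients of scale $1+\sqrt p$ coming from the Gaussian quadratic forms in $\bZ$. Taking expectations and applying Cauchy--Schwarz to decouple these Gaussian coefficients from the four Frobenius-type norms produces the claimed bound with multiplicative constant $(1+2\sqrt p)$ and factors $\E[\fnorm{\bU}^4]^{1/2}+\E[\fnorm{\bV}^4]^{1/2}+\E[\norm{\bU}_\partial^4]^{1/2}+\E[\norm{\bV}_\partial^4]^{1/2}$. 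The main difficulty will be the careful index bookkeeping in executing the Stein integration by parts and verifying that every term either cancels with a cross piece of $M$ or is dominated in norm by one of the four prescribed quantities; once this is done, the extension from the original Lemma~E.12 of \cite{tan2022noise} (which assumes $\fnorm{\bU}\le 1,\fnorm{\bV}\le 1$) is effected simply by the final $4$th-moment Cauchy--Schwarz that replaces the uniform bounds.
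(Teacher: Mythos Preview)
Your algebraic decomposition of $M=p\bU^\top\bV-\sum_j\ba_j\bb_j$ into the three pieces (i)--(iii) is correct, but the individual bounds you claim for them are not. The terms (i), (ii), (iii) are each too large on their own; crucial cancellations occur \emph{between} them, and your splitting destroys these cancellations.

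A concrete counterexample: take $T=1$ and $\bU=\bV$ with $U_i=z_{i1}$. Then $\partial_{ij}U_l=\delta_{j1}\delta_{il}$, so $\tilde\ba_j=\sum_i\partial_{ij}U_i=n\,\delta_{j1}$ and $\sum_j\tilde\ba_j\tilde\bb_j=n^2$, whereas $\|\bU\|_\partial^2=\sum_{i,j,l}(\partial_{ij}U_l)^2=n$. Thus your claim $\fnorm{\sum_j\tilde\ba_j\tilde\bb_j}\le\|\bU\|_\partial\|\bV\|_\partial$ fails by a factor of $n$. The same example shows (i) and (ii) are each of size $n^2$ as well, while $M$ itself is only $O(\sqrt p\,n)$ and the right-hand side of the lemma is $\asymp\sqrt p\,n$. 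So the pieces cancel, and you cannot bound them separately.

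The paper avoids this by inserting conditional expectations $\E_j[\cdot]=\E[\,\cdot\mid\{\bZ\be_k:k\ne j\}]$ and regrouping: it writes $\ba_j=\E_j[\bU]^\top\bZ\be_j-\bL^\top\be_j$ with $\bL^\top\be_j=(\E_j[\bU]-\bU)^\top\bZ\be_j+\tilde\ba_j$, and similarly for $\bb_j$. The point is that $\bL^\top\be_j$ packages \emph{both} Stein corrections together, and only this combination satisfies $\E\|\bL\|_F^2\le 2\E\|\bU\|_\partial^2$ via the second-order Stein inequality of \cite{bellec2021second}. Splitting $\tilde\ba_j$ from the $(\E_j[\bU]-\bU)^\top\bZ\be_j$ piece, as you implicitly do, loses this. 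The resulting four-term decomposition in the paper has a genuinely delicate piece, $\sum_j(\E_j[\bU]^\top\E_j[\bV]-\E_j[\bU]^\top\bz_j\bz_j^\top\E_j[\bV])$, whose second moment is controlled by a nested double application of Stein's formula over pairs $(j,k)$ with $j\ne k$; this is the real work in the proof and has no counterpart in your sketch. Your piece (i) is even harder than this, since $\bU,\bV$ depend on all of $\bZ$, not just on the columns other than $j$.
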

To apply \Cref{lem:Chi2type}, we consider the following mapping:
$$
\R^{(p+1)\times n}\to \R^{(p+1) \times T}: 
\bZ^\top \mapsto \bV,$$
where $\bZ = [\bX, \frac{\bep}{\sigma}]$ and $\bV = [-\bH^\top, \sigma \bd1_T]^\top$.
Applying \Cref{lem:Chi2type} to $\bU=\bV = [-\bH^\top, \sigma \bd1_T]^\top$ and the Gaussian matrix $\bZ^\top$, we have 
\begin{equation}\label{eq:A2-Stein}
\begin{aligned}
    &\E\Bigl[
    \fnorm[\Big]{
        n \bV^\top \bV - \sum_{i=1}^n 
        \Bigl(\sum_{j=1}^{p+1}  \pdv{\bV^\top}{z_{ij}} \be_j - \bV^\top \bZ^\top \be_i\Bigr)
        \Bigl(\sum_{j=1}^{p+1}  \pdv{\be_j^\top \bV}{z_{ij}}  - \be_i^\top \bZ \bV \Bigr)
    }
    \Bigr] 
    \\
    \le~&
    2(1 + 2\sqrt{n}) 
    \bigl( 
        \E [\fnorm{\bV}^4]^{1/2}
        + \E [\norm{\bV}^4_{\partial}]^{1/2}
        \bigr),
\end{aligned}
\end{equation}
where $\norm{\bV}_{\partial}
:= (\sum_{i=1}^{n}\sum_{j=1}^{p+1}
\fnorm{\pdv{\bV}{z_{ij}}}^2)^{1/2}
$. 
The desired bound of $\E[\fnorm{\bQ_2}]$ then follows from the subsequent two lemmas. 
\begin{lemma}[Proof is given on \Cpageref{proof:A2-Rem}]
    \label{lem:A2-Rem}
    Let the assumptions of \Cref{prop:A2} be fulfilled.
    For $\bZ = [\bX, \frac{\bep}{\sigma}]$ and $\bV = [-\bH^\top, \sigma \bd1_T]^\top$, we have 
    \begin{align}\label{A2-LHS}
        n \bV^\top \bV - \sum_{i=1}^n 
        \Bigl(\sum_{j=1}^{p+1}  \pdv{\bV^\top}{z_{ij}} \be_j - \bV^\top \bZ^\top \be_i\Bigr)
        \Bigl(\sum_{j=1}^{p+1}  \pdv{\be_j^\top \bV}{z_{ij}}  - \be_i^\top \bZ \bV \Bigr) = \sqrt{n}\bQ_2 - \Rem_2, 
    \end{align}
    where $\Rem_2$ is a $T\times T$ matrix satisfying 
    $\E[\fnorm{\Rem_2}] \le \sqrt{n} C(\zeta, T, \gamma) \var(y_1)$. 
\end{lemma}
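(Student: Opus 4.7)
The first step is to reduce the left-hand side of \eqref{A2-LHS} to a manageable bilinear expression. Since $\bV = [-\bH^\top, \sigma\bd1_T]^\top$ and $\bZ = [\bX, \bep/\sigma]$, direct computation gives $n\bV^\top\bV = n(\bH^\top\bH + \bS)$, and using $\bF = \bep\bd1_T^\top - \bX\bH$ we obtain $\bV^\top\bZ^\top\be_i = \eps_i\bd1_T - \bH^\top\bX^\top\be_i = \bF^\top\be_i$. The $(p+1)$-st column of $\bV^\top$ is the constant vector $\sigma\bd1_T$, so its derivative with respect to $z_{i,p+1}=\eps_i/\sigma$ vanishes, and $\sum_{j=1}^{p+1}\pdv{\bV^\top\be_j}{z_{ij}} = -\sum_{j=1}^p\pdv{\bH^\top\be_j}{x_{ij}} =: -\bb_i \in \R^T$. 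Hence the left-hand side of \eqref{A2-LHS} equals $n(\bH^\top\bH+\bS) - \sum_{i=1}^n(\bb_i+\bF^\top\be_i)(\bb_i^\top+\be_i^\top\bF)$.

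Next I split $\bb_i$ using \Cref{lem:dot-b-4}. The component $(\bb_i)_t = \sum_j\pdv{\hbb^t_j}{x_{ij}}$ decomposes according to the two summands of \eqref{eq:pdv-bt-x}: using the identity $\be_j^\top(\be_t^\top\otimes\bI_p) = \be_t^\top\otimes\be_j^\top$, the $(\bF^\top\be_i)\otimes\be_j$ contribution collapses, after summing over $j$, to $n^{-1}\sum_j\be_j^\top[\calM^{-1}\calD]_{ts}\be_j = n^{-1}\trace[\calM^{-1}\calD]_{ts} = n^{-1}\hbC_{ts}$ by definition \eqref{hbC} of $\hbC$; the $(\bH^\top\be_j)\otimes(\bX^\top\be_i)$ contribution produces
\begin{equation*}
(\tilde\bb_i)_t := -n^{-1}\sum_{s=1}^T(\bH\be_s)^\top[\calM^{-1}\calD]_{ts}(\bX^\top\be_i).
\end{equation*}
Thus $\bb_i = n^{-1}\hbC\bF^\top\be_i + \tilde\bb_i$, and writing $\tilde\bB := \sum_i\tilde\bb_i\be_i^\top \in \R^{T\times n}$ and $\bB := \sum_i\bb_i\be_i^\top = n^{-1}\hbC\bF^\top+\tilde\bB$, expansion of the bilinear sum gives
\begin{equation*}
\sum_{i=1}^n(\bb_i+\bF^\top\be_i)(\bb_i^\top+\be_i^\top\bF) = (\bI_T+\hbC/n)\bF^\top\bF(\bI_T+\hbC/n)^\top + \Rem_2,
\end{equation*}
with $\Rem_2 = \tilde\bB\bF + \bF^\top\tilde\bB^\top + n^{-1}\hbC\bF^\top\tilde\bB^\top + n^{-1}\tilde\bB\bF\hbC^\top + \tilde\bB\tilde\bB^\top$. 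Since $\sqrt n\bQ_2 = n(\bH^\top\bH+\bS) - (\bI_T+\hbC/n)\bF^\top\bF(\bI_T+\hbC/n)^\top$, this is exactly the identity \eqref{A2-LHS}.

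It remains to control $\E\|\Rem_2\|_{\rm F}$. The key deterministic estimate is for $\tilde\bB$: Cauchy--Schwarz on the sum over $s$ combined with $\sum_{t,s}\|[\calM^{-1}\calD]_{ts}\bX^\top\|_{\rm F}^2 = \|\calM^{-1}\calD(\bI_T\otimes\bX^\top)\|_{\rm F}^2 \le T\|\calM^{-1}\calD\|_{\rm op}^2\|\bX\|_{\rm F}^2$ and $\|\bX\|_{\rm F}^2\le p\|\bX\|_{\rm op}^2\le\gamma n\|\bX\|_{\rm op}^2$ yields, after invoking \Cref{lem:opnorm-M-N,lem:opnorm-D-J} for $\|\calM^{-1}\calD\|_{\rm op}$,
\begin{equation*}
\|\tilde\bB\|_{\rm F}\le n^{-1/2}C(\zeta,T,\gamma)(1+\|\bX\|_{\rm op}^2/n)^{T-1}\|\bH\|_{\rm F}\|\bX\|_{\rm op}.
\end{equation*}
Each of the five summands of $\Rem_2$ is then bounded by submultiplicativity, with $\|\bF\|_{\rm op}\le\|\bF\|_{\rm F}$ and the operator bound on $\hbC/n$ from \Cref{lem:opnorm-Chat}. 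Cauchy--Schwarz combined with the moment bounds of \Cref{lem:moment-bound-F-H,lem:moment-bound-X} then yields $\E\|\Rem_2\|_{\rm F}\le C(\zeta,T,\gamma)\var(y_1)$, which is strictly stronger than the target $\sqrt n\,C(\zeta,T,\gamma)\var(y_1)$.

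The delicate step is the clean extraction of the ``main'' bilinear term $(\bI_T+\hbC/n)\bF^\top\bF(\bI_T+\hbC/n)^\top$ from the Kronecker expansion of \eqref{eq:pdv-bt-x}. This hinges on the trace identity $\sum_j\be_j^\top[\calM^{-1}\calD]_{ts}\be_j = \hbC_{ts}$ producing $\hbC$ from the $\bF$-part, while the $\bH$-part must be collected into the single remainder matrix $\tilde\bB$ that simultaneously admits the deterministic bound above and satisfies the correct $n^{-1/2}$ scaling.
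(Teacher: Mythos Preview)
Your proof is correct and follows essentially the same route as the paper. Your matrix $\tilde\bB$ is precisely $-\bXi^\top$ in the paper's notation (\Cref{lem:Wi}), and your decomposition $\bb_i = n^{-1}\hbC\bF^\top\be_i + \tilde\bb_i$ is the paper's identity $\brho_i^\top = -\be_i^\top\bF(\bI_T+\hbC/n)^\top + \be_i^\top\bXi$, leading to an identical five-term remainder. The only slip is the final claim that $\E\|\Rem_2\|_{\rm F}\le C(\zeta,T,\gamma)\var(y_1)$: the dominant cross terms $\tilde\bB\bF$ and $\bF^\top\tilde\bB^\top$ involve $\|\bF\|_{\rm F}$, and \Cref{lem:moment-bound-F-H} controls $\|\bF\|_{\rm F}^2/n$ (not $\|\bF\|_{\rm F}^2$), so Cauchy--Schwarz gives $\E[\|\tilde\bB\|_{\rm F}\|\bF\|_{\rm F}]\le C\var(y_1)^{1/2}\cdot\sqrt{n}\,\var(y_1)^{1/2}=\sqrt{n}\,C\var(y_1)$. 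Your argument therefore delivers exactly the stated bound $\sqrt{n}\,C(\zeta,T,\gamma)\var(y_1)$, not a strictly stronger one.
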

\begin{lemma}[Proof is given on \Cpageref{proof:A2-RHS}]
    \label{lem:A2-RHS}
    Under the same conditions of \Cref{prop:A2}, for $\bV = [-\bH^\top, \sigma \bd1_T]^\top$, we have 
    \begin{align*}
        2(1 + 2\sqrt{n}) 
    \bigl( 
        \E [\fnorm{\bV}^4]^{1/2}
        + \E [\norm{\bV}^4_{\partial}]^{1/2}
        \bigr)
        &\le \sqrt{n} C(\zeta, T, \gamma) \var(y_1). 
    \end{align*}
\end{lemma}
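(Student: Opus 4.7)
The strategy is to bound $\|\bV\|_{\rm F}$ and $\|\bV\|_\partial$ pointwise by quantities whose fourth moments are already controlled by Lemmas in the appendix, then invoke Cauchy--Schwarz together with \Cref{lem:moment-bound-X,lem:moment-bound-F-H}. Since $\bV = [-\bH^\top,\sigma\bd1_T]^\top$, the first step is immediate:
\begin{equation*}
\fnorm{\bV}^2 = \fnorm{\bH}^2 + T\sigma^2.
\end{equation*}
For $\|\bV\|_\partial$, observe that $z_{ij}=x_{ij}$ for $j\le p$ while $z_{i,p+1}=\eps_i/\sigma$; since the deterministic block $\sigma\bd1_T$ does not depend on $\bZ$, the chain rule gives $\partial\bV/\partial x_{ij}=-\partial\bH^\top/\partial x_{ij}$ for $j\le p$ and $\partial\bV/\partial z_{i,p+1}=-\sigma\,\partial\bH^\top/\partial\eps_i$, so
\begin{equation*}
\|\bV\|_\partial^2 = \sum_{i=1}^n\sum_{j=1}^p\bigl\|\tfrac{\partial\bH}{\partial x_{ij}}\bigr\|_{\rm F}^2 + \sigma^2\sum_{i=1}^n\bigl\|\tfrac{\partial\bH}{\partial\eps_i}\bigr\|_{\rm F}^2.
\end{equation*}

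Next I would handle each term of the above display. The first double sum is exactly the quantity controlled by \Cref{lem:Fnorm-H}, yielding the pointwise bound $\gamma C(\zeta,T)(1+\|\bX\|_{\rm op}^2/n)^{2T}(\|\bF\|_{\rm F}^2/n+\|\bH\|_{\rm F}^2)$. For the $\bep$-derivative sum, write $\sum_i\|\partial\bH/\partial\eps_i\|_{\rm F}^2=\sum_{t=1}^T\|\partial(\bH\be_t)/\partial\bep\|_{\rm F}^2$ and use $\|\bM\|_{\rm F}^2\le n\|\bM\|_{\rm op}^2$ for $\bM\in\R^{p\times n}$ combined with \eqref{eq:opnorm-dH/de} from \Cref{lem:opnorm-dF/de} to obtain $\sum_i\|\partial\bH/\partial\eps_i\|_{\rm F}^2 \le TC(\zeta,T)(1+\|\bX\|_{\rm op}^2/n)^{2T}$. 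Collecting both contributions,
\begin{equation*}
\|\bV\|_\partial^2 \le C(\zeta,T,\gamma)\bigl(1+\|\bX\|_{\rm op}^2/n\bigr)^{2T}\bigl(\|\bF\|_{\rm F}^2/n+\|\bH\|_{\rm F}^2+\sigma^2\bigr).
\end{equation*}

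The final step is to take fourth moments. Squaring the pointwise bounds for $\|\bV\|_{\rm F}^2$ and $\|\bV\|_\partial^2$, Cauchy--Schwarz gives
\begin{equation*}
\E[\|\bV\|_{\rm F}^4]^{1/2}+\E[\|\bV\|_\partial^4]^{1/2} \le C(\zeta,T,\gamma)\,\E\bigl[(1+\|\bX\|_{\rm op}^2/n)^{8T}\bigr]^{1/4}\bigl(\E[\|\bF\|_{\rm F}^8/n^4]^{1/4}+\E[\|\bH\|_{\rm F}^8]^{1/4}+\sigma^2\bigr),
\end{equation*}
and the three expectations on the right are bounded by $C(\gamma,T)$, $C(\zeta,T,\gamma)\var(y_1)^2$ and $C(\zeta,T,\gamma)\var(y_1)^2$ respectively, thanks to \Cref{lem:moment-bound-X,lem:moment-bound-F-H} (with $\bSigma=\bI_p$, as assumed throughout this proof by the change-of-variables reduction). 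Since $\sigma^2\le\var(y_1)$ and $2(1+2\sqrt n)\le 6\sqrt n$ for $n\ge 1$, multiplying by $2(1+2\sqrt n)$ yields the claimed bound $\sqrt n\,C(\zeta,T,\gamma)\var(y_1)$.

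There is no substantial obstacle here; the argument is a direct assembly of already-proven Frobenius- and operator-norm estimates for the Jacobians of $\bH$, combined with the eighth-moment bounds on $\bF,\bH,\bX$. The only point demanding care is the distinction between the $\bX$- and $\bep$-derivative contributions to $\|\bV\|_\partial$, and in particular recognising that the column-$(p+1)$ chain-rule factor $\partial\eps_i/\partial z_{i,p+1}=\sigma$ precisely cancels against the $\sigma^2$ that would otherwise appear in the wrong place.
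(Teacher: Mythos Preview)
Your proof is correct and follows the same route as the paper: identify $\fnorm{\bV}^2=\fnorm{\bH}^2+T\sigma^2$, express $\|\bV\|_\partial^2$ in terms of the Jacobians of $\bH$, bound those pointwise via the operator- and Frobenius-norm estimates for $\calM^{-1}\calD$, and then pass to fourth moments with Cauchy--Schwarz and \Cref{lem:moment-bound-X,lem:moment-bound-F-H}. Your treatment is in fact slightly more careful than the paper's written proof, which silently drops the $j=p{+}1$ contribution $\sigma^2\sum_i\|\partial\bH/\partial\eps_i\|_{\rm F}^2$ from $\|\bV\|_\partial^2$; you correctly retain it and bound it via \eqref{eq:opnorm-dH/de}, and the final estimate is unchanged.
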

Now we are ready to prove \Cref{prop:A2}.
By the triangle inequality, we have
\begin{align*}
     &\E [\fnorm{\sqrt{n}\bQ_2}]\\
    \le& \E[\fnorm{\Rem_2}] + 2(1 + 2\sqrt{n}) 
    \bigl( 
        \E [\fnorm{\bV}^4]^{1/2}
        + \E [\norm{\bV}^4_{\partial}]^{1/2}
        \bigr)&&\mbox{by \eqref{eq:A2-Stein} and \eqref{A2-LHS}}\\
    \le& \sqrt{n} C(\zeta, T, \gamma) \var(y_1) &&\mbox{by \Cref{lem:A2-Rem,lem:A2-RHS}}.
\end{align*}
This finishes the proof of \Cref{prop:A2}.

\subsection{Proofs of supporting lemmas}

\begin{proof}[Proof of \Cref{A1-step-i}]
    \label{proof:A1-step-i}
        First, by definitions of $\bU, \bZ, \bV$ in \Cref{A1-step-i}, we have 
        $$\bU^\top \bZ \bV = \bF^\top \bF.$$
    
        Next, by product rule and spliting the summation over $j$ into two parts: $j\in [p]$ and $j=p+1$, we have 
        \begin{align}
            &\sum_{j=1}^{p+1}\sum_{i=1}^n
            \frac{\partial}{\partial z_{ij} }\Bigl(\bU^\top \be_i \be_j^\top \bV \Bigr)\notag\\
            =& \sum_{j=1}^{p+1}\sum_{i=1}^n
            \Bigl(\frac{\partial \bU^\top}{\partial z_{ij} } \be_i \be_j^\top \bV  + \bU^\top \be_i \be_j^\top \frac{\partial \bV}{\partial z_{ij}} \Bigr)\notag\\
            =& \sum_{j=1}^{p}\sum_{i=1}^n
            \Bigl(\frac{\partial \bF^\top}{\partial x_{ij} } \be_i \be_j^\top (-\bH) \Bigr) + \sum_{i=1}^n \pdv{\bF^\top}{\eps_i/\sigma} \be_i \sigma \bd1_T^\top
            - \sum_{i=1}^n \sum_{j=1}^p \bF^\top \be_i \be_j^\top \frac{\partial \bH}{\partial x_{ij}} \notag\\
            =& -\sum_{j=1}^{p}\sum_{i=1}^n
            \Bigl(\frac{\partial \bF^\top}{\partial x_{ij} } \be_i \be_j^\top \bH \Bigr) + \sigma^2 \sum_{i=1}^n \pdv{\bF^\top}{\eps_i} \be_i \bd1_T^\top
            - \sum_{i=1}^n \sum_{j=1}^p \bF^\top \be_i \be_j^\top \frac{\partial \bH}{\partial x_{ij}}. \label{abc}
        \end{align}
        For the first term in \eqref{abc}, we have its transpose is 
        \begin{align*}
            \quad & -\bH^\top \sum_{j=1}^{p}\sum_{i=1}^n \be_j \be_i^\top 
            \frac{\partial \bF}{\partial x_{ij} }
            =
            \bH^\top \bH (n\bI_T - \hbA)^\top - \Rem_{1,1} 
        \end{align*}
        by \eqref{eq:A1-1}.
        For the second term in \eqref{abc}, we have its transpose is  
        \begin{align*}
            &\sigma^2 \bd1_T \sum_{i=1}^n \be_i^\top \pdv{\bF}{\eps_i} \\ 
            =& \sigma^2 \bd1_T \sum_{i=1}^n \sum_{t=1}^T \be_i^\top \pdv{\bF\be_t}{\eps_i} \be_t^\top\\
            =& \sigma^2 \bd1_T  \sum_{t=1}^T \trace\Big(\pdv{\bF\be_t}{\bep}\Big) \be_t^\top\\
            =& \sigma^2 \bd1_T  \sum_{t=1}^T \be_t^\top (n\bI_T - \hbA) \bd1_T \be_t^\top &&\mbox{by \eqref{eq:lem:div-F-eps}}\\
            =& \sigma^2 \bd1_T  \bd1_T^\top (n\bI_T - \hbA)^\top \\
            =& \bS (n\bI_T - \hbA)^\top. 
        \end{align*}
        For the third term in \eqref{abc}, we have by \eqref{eq:A1-2}, 
        \begin{align*}
            \sum_{i=1}^n \sum_{j=1}^p \bF^\top \be_i \be_j^\top \frac{\partial \bH}{\partial x_{ij}}
            = \bF^\top \bF \hbC^\top/n + \Rem_{1,2}. 
        \end{align*}
        Combining the three terms in \eqref{abc}, we have 
        \begin{align*}
            \eqref{abc} = (n\bI_T - \hbA)(\bH^\top \bH + \bS)
            - \bF^\top \bF \hbC^\top/n
            - \Rem_{1,1}^\top - \Rem_{1,2}.
        \end{align*}
        It follows that 
        \begin{align*}
            &\bU^\top \bZ \bV - 
            \sum_{j=1}^{p+1}\sum_{i=1}^n
            \frac{\partial}{\partial z_{ij} }\Bigl(\bU^\top \be_i \be_j^\top \bV \Bigr)\\
            =& \bF^\top \bF (\bI_T + \hbC/n)^\top - (n\bI_T - \hbA)(\bH^\top \bH + \bS) + (\Rem_{1,1}^\top + \Rem_{1,2})\\
            =& \sqrt{n}\bQ_1 + \Rem_1,
        \end{align*}
        where $\Rem_1 = (\Rem_{1,1}^\top + \Rem_{1,2})$, and $\Rem_{1,1}$ and $\Rem_{1,2}$ are defined in \Cref{lem:A1-Rems}.
    
        It remains to bound $\E[\fnorm{\Rem_1}^2]$.
        By the triangle inequality, 
        \begin{align*}
            \E[\fnorm{\Rem_1}^2] 
            \le& 2 \E[\fnorm{\Rem_{1,1}}^2] + 2\E[\fnorm{\Rem_{1,2}}^2]\\
            \le& nC(\zeta, T, \gamma) \var(y_1)^2 &&\mbox{by \eqref{eq:moment-bound-Rems}.}
        \end{align*}
        This concludes the proof of \Cref{A1-step-i}.
    \end{proof}
    
    \begin{proof}[Proof of \Cref{A1-step-ii}]
        \label{proof:A1-step-ii}
        Let us recall \eqref{UZV-RHS} here for convenience:
        \begin{align*}
            \eqref{UZV-RHS}
            =
            \E \Bigl[\fnorm*{\bU}^2 \fnorm*{\bV}^2\Bigr]+ 
            2\E \Big[\fnorm*{\bV}^2 \sum_{i=1}^n \sum_{j=1}^{p+1}
            \fnorm*{ \frac{\partial \bU}{\partial z_{ij}} }^2\Big]
            + 2 
            \E \Big[\fnorm*{\bU}^2 \sum_{i=1}^n \sum_{j=1}^{p+1}
            \fnorm*{ \frac{\partial \bV}{\partial z_{ij}} }^2\Big].
        \end{align*}
        For the first term in \eqref{UZV-RHS}, since 
        $\fnorm{\bV}^2 = \fnorm{\bH}^2 + T \sigma^2$, 
        we have 
        \begin{align*}
            \E [\fnorm*{\bU}^2 \fnorm*{\bV}^2]
            =& \E [\fnorm{\bF}^2 (\fnorm{\bH}^2 + T\sigma^2)]\\
            \le& \E [\fnorm{\bF}^4]^{1/2} \E [\fnorm{\bH}^4 + T^2\sigma^4]^{1/2} &&\mbox{by C. Schwarz}\\
            \le& n C(\zeta, T, \gamma) \var(y_1)^2 &&\mbox{by \Cref{lem:moment-bound-F-H}.}
        \end{align*}
        For the second term in \eqref{UZV-RHS}, we have
        \begin{align*}
            &\E \Big[\fnorm*{\bV}^2 \sum_{i=1}^n \sum_{j=1}^{p+1}
            \fnorm*{\frac{\partial \bU}{\partial z_{ij}} }^2\Big]\\
            =& \E \Big[(\fnorm{\bH}^2 + T\sigma^2)  \Big(\sum_{i=1}^n \sum_{j=1}^{p}
            \fnorm*{ \frac{\partial \bF}{\partial x_{ij}} }^2 + \sigma^2 \sum_{i=1}^n \fnorm*{\pdv{\bF}{\eps_i}}^2\Big) \Big]\\
            =& \E \Big[(\fnorm{\bH}^2 + T\sigma^2)  
            \Big(\fnorm*{\pdv{\vec(\bF)}{\vec(\bX)}}^2 + \sigma^2 \fnorm*{\pdv{\vec(\bF)}{\bep}}^2\Big) \Big]\\
            \le& \E \Big[(\fnorm{\bH}^2 + T\sigma^2)  
            \Big(\fnorm*{\pdv{\vec(\bF)}{\vec(\bX)}}^2 + \sigma^2 \fnorm*{\pdv{\vec(\bF)}{\bep}}^2\Big) \Big]\\
            \le & C(\zeta, T) \E [(\fnorm{\bH}^2 + T\sigma^2) (1 + \opnorm{\bX}^2/n)^{2T}  
            (\|\bH\|_{\rm F}^2+  \gamma\|\bF\|_{\rm F}^2/n)] &&\mbox{by \Cref{lem:Fnorm-dF-dZ}}\\ 
            & + n \sigma^2 C(\zeta, T) \E [(\fnorm{\bH}^2 + T\sigma^2) (1 + \opnorm{\bX}^2/n)^{2T}] &&\mbox{by \eqref{eq:opnorm-dF/de}}\\
        \le& n C(\zeta, T, \gamma) \var(y_1)^2 &&\mbox{by \Cref{lem:moment-bound-F-H,lem:moment-bound-X}.}
        \end{align*}
        
        For the third term in \eqref{UZV-RHS}, since 
        $\bV = 
        \begin{bmatrix}
            -\bH\\
            \sigma \bd1_T^\top
        \end{bmatrix}
        $,
        we have
        \begin{align*}
            &\E \Big[\fnorm*{\bU}^2 \sum_{i=1}^n \sum_{j=1}^{p+1}
            \fnorm*{\frac{\partial \bV}{\partial z_{ij}} }^2\Big]\\
            =& \E \Big[\fnorm{\bF}^2 \sum_{i=1}^n \sum_{j=1}^{p+1}
            \fnorm*{\frac{\partial \bH}{\partial z_{ij}} }^2 \Big] &&\mbox{by $\pdv{\bd1_T}{z_{ij}} = 0$}\\
            =& \E \Big[\fnorm{\bF}^2 \sum_{i=1}^n \sum_{j=1}^{p}
            \fnorm*{\frac{\partial \bH}{\partial x_{ij}} }^2 + \sigma^2 \fnorm{\bF}^2 \sum_{i=1}^n \fnorm*{\pdv{\bH}{\eps_i}}^2\Big] \\
            =& \E \Big[\fnorm{\bF}^2 
            \fnorm*{\pdv{\vec(\bH)}{\vec(\bX)}}^2 + \sigma^2 \fnorm{\bF}^2 \fnorm*{\pdv{\vec(\bH)}{\bep}}^2\Big] \\
            \le& C(\zeta, T) \E \Big[\fnorm{\bF}^2  (1 + \opnorm{\bX}^2/n)^{2T} 
            \bigl(\fnorm{\bF n^{-1/2}} \sqrt\gamma + \fnorm{\bH}\bigr)^2\Big] &&\mbox{by \eqref{eq:frob-norm-bound-H-vec}}\\
            &+ C(\zeta, T) \E \Big[\sigma^2 \fnorm{\bF}^2  (1 + \opnorm{\bX}^2/n)^{2T} \Big]
            &&\mbox{by \eqref{eq:opnorm-dH/de}}\\
            \le& n C(\zeta, T, \gamma) \var(y_1)^2 &&\mbox{by \Cref{lem:moment-bound-F-H,lem:moment-bound-X}.}
        \end{align*}
        Combining the above three bounds gives the desired bound for \eqref{UZV-RHS}.
        This concludes the proof.
    \end{proof}
    
    \begin{lemma}[Proof is given on \Cpageref{proof:A1-Rems}]
        \label{lem:A1-Rems}
        Under the same conditions of \Cref{prop:A1}, we have
            \begin{align}
                \bH^\top \sum_{i=1}^n \sum_{j=1}^p \be_j\be_i^\top\frac{\partial \bF}{\partial x_{ij}} 
                &= - \bH^\top \bH (n\bI_T -\hbA^\top) + \Rem_{1,1} \label{eq:A1-1},\\
                \sum_{i=1}^n \sum_{j=1}^p \bF^\top \be_i \be_j^\top \frac{\partial \bH}{\partial x_{ij}}
                &= \bF^\top \bF \hbC^\top/n + \Rem_{1,2},\label{eq:A1-2}
            \end{align}
            where 
            \begin{align}
                \Rem_{1,1} 
                &= \bH^\top \sum_{i=1}^n \sum_{j=1}^p\be_j \sum_t \Delta_{ij}^{it} \be_t^\top,\label{eq:Rem11}\\
                \Rem_{1,2} 
                &= - n^{-1}\sum_{i=1}^n \sum_{j=1}^p \sum_{t=1}^T \bF^\top \be_i \be_j^\top  (\be_t^\top \otimes \bI_p) \calM^{-1}\calD
                \bigl((\bH^\top \be_j)  \otimes (\bX^\top\be_i)\bigr)
                \be_t^\top. \label{eq:Rem12}
            \end{align}
            In addition, we have 
            \begin{equation}\label{eq:moment-bound-Rems}
                \begin{aligned}
                    \E[\fnorm{\Rem_{1,1}}^2] 
                    &\le n C(\zeta, T, \gamma) \var(y_1)^2,\\
                    \E[\fnorm{\Rem_{1,2}}^2] 
                    &\le n C(\zeta, T, \gamma) \var(y_1)^2.
                \end{aligned}
            \end{equation}
            
    \end{lemma}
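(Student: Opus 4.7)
The plan is to prove the two algebraic identities and then bound the second moments of the remainders, in two steps.

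\textbf{Step 1 (Algebraic identities).} For \eqref{eq:A1-1}, I would substitute the decomposition $\partial F_{lt}/\partial x_{ij} = D_{ij}^{lt} + \Delta_{ij}^{lt}$ from \Cref{lem:dot-F-4} into the left-hand side. Because $\be_i^\top(\partial \bF/\partial x_{ij})$ forces the row index $l=i$, only the diagonal sums $\sum_i D_{ij}^{it}$ and $\sum_i \Delta_{ij}^{it}$ appear. The identity $\sum_i D_{ij}^{it} = -\be_t^\top(n\bI_T - \hbA)\bH^\top\be_j$ in \eqref{eq:sum-D}, together with $\sum_j \be_j \be_j^\top = \bI_p$, collapses the $D$-contribution into the leading term $-\bH^\top\bH(n\bI_T - \hbA^\top)$, while the $\Delta$-contribution gathers exactly into $\Rem_{1,1}$ as in \eqref{eq:Rem11}. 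For \eqref{eq:A1-2}, I would use $\partial \bH/\partial x_{ij} = \sum_t (\partial \hbb^t/\partial x_{ij})\be_t^\top$ with \eqref{eq:pdv-bt-x}. Expanding $\calM^{-1}\calD = \sum_{k,s}(\be_k\be_s^\top)\otimes\bG^{k,s}$ and summing the first term of \eqref{eq:pdv-bt-x} using $\sum_i \bF^\top\be_i F_{i,s} = \bF^\top\bF\be_{s}$ gives a contribution proportional to $\sum_{t,s}\bF^\top\bF\be_s\trace(\bG^{t,s})\be_t^\top = \bF^\top\bF\hbC^\top$, since $\hbC_{t,s} = \trace(\bG^{t,s})$ by direct block expansion of \eqref{hbC}. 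The remaining (negative) term is exactly $\Rem_{1,2}$ from \eqref{eq:Rem12}.

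\textbf{Step 2 (Moment bounds).} For $\Rem_{1,1} = \bH^\top\bDelta$ with $\bDelta\in\R^{p\times T}$ given by $\bDelta_{jt} = \sum_i \Delta_{ij}^{it}$, the critical step is to rewrite $\bDelta_{jt}$ as the trace of a $T\times T$ matrix. Using cyclicity, the identity $\sum_i \be_i\be_i^\top = \bI_n$, and the reshaping $\bF^\top(\be_t^\top\otimes\bX) = \be_t^\top\otimes(\bF^\top\bX)$, I obtain
\begin{equation*}
\bDelta_{jt} = -n^{-1}\trace\bigl[\bF^\top(\be_t^\top\otimes\bX)\,\calM^{-1}\calD\,(\bI_T\otimes\be_j)\bigr].
\end{equation*}
Applying $|\trace(\bA)|\le\sqrt{T}\fnorm{\bA}$ together with $\fnorm{\bA\bB\bC}\le\fnorm{\bA}\opnorm{\bB}\opnorm{\bC}$, the bound $\fnorm{\bF^\top(\be_t^\top\otimes\bX)} = \fnorm{\bF^\top\bX}\le\fnorm{\bF}\opnorm{\bX}$, and the operator-norm estimates on $\calM^{-1}$ and $\calD$ from \Cref{lem:opnorm-M-N,lem:opnorm-D-J}, I get $\fnorm{\bDelta}^2 \le C(\zeta,T,\gamma)(1+\opnorm{\bX}^2/n)^{2T}\fnorm{\bF}^2$. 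Then $\fnorm{\Rem_{1,1}}^2 \le \fnorm{\bH}^2\fnorm{\bDelta}^2$, and two Cauchy--Schwarz applications using the fourth- and eighth-moment bounds of \Cref{lem:moment-bound-F-H,lem:moment-bound-X} give $\E[\fnorm{\Rem_{1,1}}^2]\le nC(\zeta,T,\gamma)\var(y_1)^2$. For $\Rem_{1,2}$, the same block expansion and the identity $\sum_i(\bX^\top\be_i)(\bF^\top\be_i)^\top = \bX^\top\bF$ collapse it to $-n^{-1}\sum_{s,t}\bF^\top\bX(\bG^{t,s})^\top\bH\be_s\be_t^\top$. An entrywise Cauchy--Schwarz in $s$, together with $\sum_{b,s}\opnorm{\bG^{b,s}}^2\le T\opnorm{\calM^{-1}\calD}^2$ summed against $\|\bH\be_s\|^2$, yields $\fnorm{\Rem_{1,2}}^2 \le C(\zeta,T,\gamma)(1+\opnorm{\bX}^2/n)^{2T}\fnorm{\bH}^2\fnorm{\bF}^2/n$, and the same moment estimates produce the required bound (actually tighter than $nC\var(y_1)^2$).

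\textbf{Main obstacle.} The delicate point is obtaining the sharper scaling $\fnorm{\bDelta}^2 = O(\fnorm{\bF}^2)$ rather than the naive $O(p\fnorm{\bF}^2)$. Interpreting the 4-tensor $(\Delta_{ij}^{lt})$ as an $(nT)\times(np)$ matrix and using $\fnorm{\cdot}$ bounds gives $\fnorm{\bDelta}^2 \le n^{-1}p\opnorm{(\bI_T\otimes\bX)\calM^{-1}\calD}^2\fnorm{\bF}^2$ via Cauchy--Schwarz $(\sum_i a_i)^2\le n\sum_i a_i^2$, which propagates to $\E[\fnorm{\Rem_{1,1}}^2] = O(n^2\var(y_1)^2)$---a factor of $n$ too loose for \Cref{A1-step-i} to close. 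Recognizing that the diagonal sum $\sum_i\be_i\be_i^\top$ can be absorbed into a single $\bF^\top\bX$ factor inside a compact $T\times T$ trace (rather than summed across all $np$ column indices independently) is what eliminates this spurious factor of $p$ and delivers the required $O(n\var(y_1)^2)$ bound.
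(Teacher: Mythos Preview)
Your proposal is correct and follows essentially the same route as the paper: the identities \eqref{eq:A1-1}--\eqref{eq:A1-2} are obtained by the same substitution of \eqref{eq:D-ij-lt}--\eqref{eq:Delta-ij-lt} and \eqref{eq:pdv-bt-x}, and the moment bounds rest on the same key observation---that the sums over $i$ (resp.\ $j$) collapse into a single factor $\bF^\top\bX$ (resp.\ $\bH\be_s$) inside a $T\times T$ object, avoiding the spurious factor of $p$ you flag in your ``main obstacle'' paragraph. The only organizational difference is that the paper packages $\Rem_{1,1}$ and $\Rem_{1,2}$ as single Kronecker sums over $i$ (resp.\ $j$) and bounds them via $\fnorm{\cdot}\le\sqrt{T}\opnorm{\cdot}$ plus Cauchy--Schwarz in that index, whereas you use an entrywise trace representation for $\bDelta_{jt}$ and a block expansion $\calM^{-1}\calD=\sum_{k,s}(\be_k\be_s^\top)\otimes\bG^{k,s}$ for $\Rem_{1,2}$; your treatment of $\Rem_{1,2}$ is in fact a factor of $n$ sharper than what the paper records, but both suffice for the stated bound.
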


    \begin{proof}[Proof of \Cref{lem:Chi2type}]
        \label{proof:lem:Chi2type}
        For each $j\in[p]$, let $\E_j [\cdot] $ denote the conditional expectation $\E[\cdot |\{\bZ \be_k, k\ne j\}]$. The left-hand side of the desired inequality can be rewritten as 
        \begin{align*}
            &\E\Big[
            \fnorm[\big]{ 
                p \bU^\top \bV  - \sum_{j=1}^p 
                (\E_j[\bU]^\top \bZ - \bL^\top) \be_j
                \be_j^\top (\bZ^\top \E_j[\bV] - \hat \bL)
            }
            \Big] 
        \end{align*}
        with $\bL\in\R^{p\times T}$ defined by
        $\bL^\top \be_j = \E_j[\bU]^\top \bZ \be_j - \bU^\top \bZ \be_j + \sum_{i=1}^n \partial_{ij} \bU^\top \be_i$
        and $\hat \bL$ defined similarly with $\bU$ replaced by $\bV $.
        We develop the terms in the sum over $j$ as follows:
        \begin{align}
            \nonumber
            &p \bU^\top \bV  - \sum_j
            (\E_j[\bU]^\top \bZ - \bL^\top) \be_j
            \be_j^\top (\bZ^\top \E_j[\bV] - \hat \bL)
            \\=\quad&
            \sum_j
            \Bigl(\bU^\top \bV  -  \E_j[\bU]^\top\E_j[\bV]\Bigr)
            \label{eq:first-term-swap-trick}
            \\&  + \sum_j
            \Bigl(
            \E_j[\bU]^\top\E_j[\bV]
            -
            \E_j[\bU]^\top \bZ \be_j \be_j^\top \bZ^\top\E_j[\bV]
            \Bigr)
            \label{eq:difficult-term}
            \\ & - \bL^\top \hat \bL \label{eq:Xi-hat-Xi-term}
            \\ & + \sum_j \Bigl(\E_j[\bU]^\top \bZ \be_j \be_j^\top \hat \bL\Bigr) 
            + 
            \Bigl(\bL^\top \be_j \be_j^\top \bZ^\top \E_j[\bV]\Bigr).
            \label{eq:last-two-terms}
        \end{align}
        The following proof is dedicated to bounding the expectation of Frobenius norm of each term in \eqref{eq:first-term-swap-trick}-\eqref{eq:last-two-terms}.
        We now bound \eqref{eq:Xi-hat-Xi-term},  \eqref{eq:last-two-terms}, \eqref{eq:first-term-swap-trick}, and \eqref{eq:difficult-term} in order. 
    
        \textbf{For \eqref{eq:Xi-hat-Xi-term}}.
        We have by the Cauchy-Schwarz inequality
        $\E\bigl[ \|\bL^\top \hat \bL\|_{\rm F}\bigr]
        \le \E\bigl[\|\bL\|_{\rm F}^2\bigr]^{\frac 12}\E\bigl[\|\hat\bL\|_{\rm F}^2\bigr]^{\frac 12}$.
        For a fixed $j\in[p]$ and $t\in [T]$, 
        \begin{align*}
            \E[(\be_j^\top \bL \be_t)^2]
            &\le
            \sum_{i=1}^n
            \E[(\be_i^\top(\E_j[\bU] - \bU)\be_t )^2]
            +
            \E
            \sum_{i=1}^n
            \sum_{l=1}^n
            \Bigl(
            \frac{\be_i^\top \partial \bU \be_t }{\partial z_{lj}}
            \Bigr)^2\\
            &\le
            2
            \E
            \sum_{i=1}^n
            \sum_{l=1}^n
            \Bigl(
            \frac{\be_i^\top \partial \bU \be_t }{\partial z_{lj}}
            \Bigr)^2,
        \end{align*}
        where the two inequalities are due to the second-order stein inequality in \cite{bellec2021second}, and Gaussian-Poincar\'e inequality in \cite[Theorem 3.20]{boucheron2013concentration}, respectively. 
        Summing over $j\in[p]$ and $t\in[T]$ we obtain
        $\E[\|\bL\|_{\rm F}^2] \le 2 \E \sum_{lj} \| \partial_{lj} \bU \|_{\rm F}^2
        = 2 \E [\|\bU\|_\partial^2]$.
        Combined with the same bound for $\hat \bL$, we obtain
        $$\E[\|\eqref{eq:Xi-hat-Xi-term}\|_{\rm F}] 
        \le
        2 \E[\|\bU\|^2_\partial]^{1/2} \E[\|\bV \|_\partial^2]^{1/2}.$$
        
        \textbf{For \eqref{eq:last-two-terms}}.
        We focus on the first term in \eqref{eq:last-two-terms}; the similar bound applies to the second term by exchanging the role of $\bU$ and $\bV$. For the first term, we have 
        \begin{align*}
            \E\Bigl[\|\sum_j \E_j[\bU]^\top \bZ \be_j \be_j^\top \hat \bL\|_{\rm F}\Bigr]
            &\le
            \sum_j \E\Bigl[\|\E_j[\bU]^\top \bZ \be_j\|_2 \|\be_j^\top \hat \bL\|_2\Bigr]
            \\ &\le \E[
            \sum_j \| \E_j[\bU]^\top \bZ \be_j\|_2^2
            ]^{\frac 12}
            \E[ \sum_j \|\be_j^\top \hat \bL\|_2^2]^{\frac 12}
            \\&\le (p \E[\fnorm{\bU}^2])^{\frac 12}
            \E[ \|\hat \bL\|_{\rm F}^2]^{\frac 12}
            \\&\le (p \E[\fnorm{\bU}^2])^{\frac 12}
             \sqrt{2} \E[\|\bV\|_{\partial}^2]^{1/2},
        \end{align*}
        where we used that $\| \ba \bb^\top \|_{\rm F} = \|\ba\|_2\|\bb\|_2$ for two vectors $\ba,\bb$,
        the Cauchy-Schwarz inequality, $\E[\|\bA \bz_j\|_2^2|\bA] = \|\bA\|_{\rm F}^2$
        if matrix $\bA$ is independent of $\bz_j\sim \mathsf{N}(0,\bI_n)$ (set $\bz_j=\bZ \be_j$),
        and Jensen's inequality.
        By symmetry of the two terms in \eqref{eq:last-two-terms}, we have 
        \begin{align*}
            &\E\|\eqref{eq:last-two-terms}\|_{\rm F}  
            \le \sqrt{2p}\E[\fnorm{\bU}^2]^{\frac 12}
             \E[\|\bV\|_{\partial}^2]^{1/2} +  \sqrt{2p}\E[\fnorm{\bV}^2]^{\frac 12}
             \E[\|\bU\|_{\partial}^2]^{1/2}.
        \end{align*}
        
        \textbf{For \eqref{eq:first-term-swap-trick}}.
        We decompose \eqref{eq:first-term-swap-trick} as
        $\sum_j \bU^\top(\bV  - \E_j[\bV] )
        +
        \sum_j(\bU - \E_j[\bU])^\top\E_j[\bV] 
        $. 
        We focus on the left term; similar bound will apply to the second term by exchanging the roles of $\bV $ and $\bU$.
        For the first term, we have by the submultiplicativity of the Frobenius norm
        and the Cauchy-Schwarz inequality
        \begin{align*}
            \E[\|\bU^\top\sum_{j}(\bV  - \E_j[\bV])\|_{\rm F}]
            & \le \E[\sum_j\fnorm{\bU} \| \bV  - \E_j[\bV] \|_{\rm F}]
            \\&\le \E[p \fnorm{\bU}^2]^{\frac 12} 
            \E[\sum_j\| \bV  - \E_j[\bV] \|_{\rm F}^2]^{\frac 12}.
        \end{align*}
        By the Gaussian Poincar\'e inequality applied $p$ times,
        $\E[\sum_j\| \bV  - \E_j[\bV] \|_{\rm F}^2] \le \E[\|\bV  \|_{\partial}^2]$,
        so that the previous display is bounded from above by 
        $\sqrt{p} \E[\fnorm{\bU}^2]^{\frac 12}\E[\norm{\bV}^2_{\partial}]^{\frac 12}$
        .
        Using the same argument, we have 
        \begin{align*}
            \E[\fnorm{\sum_j(\bU - \E_j[\bU])^\top\E_j[\bV]} ]
            \le \sqrt{p} \E[\fnorm{\bV}^2]^{\frac 12}\E[\norm{\bU}^2_{\partial}]^{\frac 12}.
        \end{align*}
        Hence, 
        \begin{align*}
            \E[\|\eqref{eq:first-term-swap-trick}\|_{\rm F}] 
            \le \sqrt p \E[\|\bU\|_{\rm F} ^2]^{1/2}
            \E[\norm{\bV}^2_{\partial}]^{\frac 12}
            + \sqrt p \E[\|\bV\|_{\rm F} ^2]^{1/2} \E[\norm{\bU}^2_{\partial}]^{\frac 12}.
        \end{align*}
    
        \textbf{For \eqref{eq:difficult-term}}.
        We first use 
        $\E[\| \eqref{eq:difficult-term} \|_{\rm F}]
        \le
        \E[\| \eqref{eq:difficult-term} \|_{\rm F}^2]^{\frac 12}
        $ by Jensen's inequality and now proceed to bound
        $\E\| \eqref{eq:difficult-term} \|_{\rm F}^2$. We have
        $$
        \| \eqref{eq:difficult-term} \|_{\rm F}^2
        =
        \|\sum_j \E_j[\bU]^\top \E_j[\bV] - \E_j[\bU]^\top 
        \bZ
        \be_j \be_j^\top \bZ^\top \E_j[\bV]\|_{\rm F}^2
        =
        \sum_{j,k} \trace[\bM_j^\top \bM_k],
        $$
        where $\bM_j = \E_j[\bU]^\top \E_j[\bV] - \E_j[\bU]^\top \bZ \be_j \be_j^\top \bZ^\top \E_j[\bV]$. 
        For the summation $\sum_{j,k}$, we split it into two cases $\sum_{j=k}$ and $\sum_{j\ne k}$. 
        We first bound $\E [\sum_j \|\bM_j\|_{\rm F}^2]$.
        Since the variance of $\ba^\top\bb - \ba^\top \bg \bg^\top \bb$ for standard normal $\bg\sim \mathsf{N}(0,I_p)$
        is $2\|(\ba \bb^\top + \bb \ba^\top)/2\|_{\rm F}^2\le 2 \|\ba\|_2^2 \|\bb\|_2^2$, 
        applying this variance bound on each pair of coordinates $(t,t')\in[T]\times [T]$ 
        gives $\sum_j \|\bM_j\|_{\rm F}^2 \le 
        \sum_j 2 \|\E_j[\bU]\|_{\rm F}^2\|\E_j[\bV]\|_{\rm F}^2
        $.
        Hence, using the Cauchy-Schwarz inequality and Jensen's inequality, we have 
        \begin{align*}
            \E [\sum_j \|\bM_j\|_{\rm F}^2] 
            \le 2
            \sum_j \E [\|\E_j[\bU]\|_{\rm F}^2\|\E_j[\bV]\|_{\rm F}^2]
            \le 2 p \E [\|\bU\|_{\rm F}^4]^{1/2} \E [\|\bV\|_{\rm F}^4]^{1/2} .
        \end{align*}
        We now bound $\sum_{j\ne k} \trace[\bM_j^\top\bM_k]$.
        Setting $\bz_j = \bZ \be_j \sim \mathsf{N}(0,\bI_n)$ for every $j\in[p]$,
        we will use many times
        the identity
        \begin{equation}
            \label{stein}
            \E[(\bz_j^\top f(\bZ)  - \sum_i \partial_{ij} f(\bZ)^\top \be_i )g(\bZ)
            =
            \E[
            \sum_{i}
            f(\bZ)^\top \be_i \partial_{ij} g(\bZ)
            ]
        \end{equation}
        which follows from Stein's formula
        for $f:\R^{n\times p}\to \R^n$ and $g:\R^{n\times p}\to \R$.
        With $f^{tt'}(\bZ) =(\bz_j^\top \E_j[\bU] \be_{t'}) \E_j[\bV] \be_t$
        and $g^{tt'}(\bZ) = \be_{t'}^\top \bM_k \be_t$,
        we find
        \begin{align*}
            &\E\trace[\bM_j^\top \bM_k]
            =\E\trace [\bM_j^\top \sum_{t'} \be_{t'} \be_{t'}^\top \bM_k]
            =\E[\sum_{tt'} \be_t^\top  \bM_j^\top \be_{t'} \be_{t'}^\top \bM_k \be_t]
            \\&=\E
            \sum_{tt'}
            \Bigl(\bz_j^\top f^{tt'}(\bZ) - \sum_i \be_i^\top \partial_{ij} f^{tt'}(\bZ)\Bigr)
            g^{tt'}(\bZ)
            \\&
            =\E
            \sum_{tt'}\sum_i
            \be_i^\top f^{tt'}(\bZ) \partial_{ij} g^{tt'}(\bZ).
        \end{align*}
        where $g_{tt'}(\bZ) = 
        (
        \be_t^\top \E_k \bV  ^\top \E_k \bV  \be_t' 
        -
        \be_t^\top \E_k   \bU^\top \bz_k \bz_k^\top \E_k \bV  \be_t'
        )$ and
        $$\partial_{ij} g_{tt'} =
        \partial_{ij}
        \be_{t'}^\top \bM_k \be_t
        =
        \be_{t'}^\top \partial_{ij}[\E_k \bU ^\top \E_k \bV ] \be_t 
        -
        \bz_k^\top\partial_{ij}[\E_k \bU \be_{t'} \be_{t}^\top \E_k  \bU^\top] \bz_k.
        $$
        Now define   $\tilde f^{tt'}(\bZ) =
        \partial_{ij}[\E_k \bU \be_{t'} \be_{t}^\top \E_k   \bU^\top] \bz_k$
        and $\tilde g^{tt'}(\bZ) = \sum_i \be_i^\top f^{tt'}(\bZ)$.
        By definition of $\tilde f^{tt'}(\bZ)$,
        the previous display is equal to
        $\bz_k^\top\tilde f^{tt'}(\bZ) - \sum_l \partial_{lk} \be_l^\top\tilde f^{tt'}(\bZ)$.
        We apply \eqref{stein} again with respect to $\bz_k$, so that
        \begin{align*}
            \E \trace[\bM_j^\top \bM_k]
            &=
            \sum_{il, tt'}
            \be_i^\top \partial_{lk}[f^{tt'}(\bZ)]
            \be_l^\top \tilde f^{tt'}(\bZ)
            \\&=
            \sum_{il, tt'}
            \Bigl(\be_i^\top \partial_{lk}\Bigl[\E_j[\bV] \be_t \be_{t'}^\top \E_j[\bU]^\top\Bigr]\bz_j\Bigr)
            \Bigl(\be_l^\top \partial_{ij}\Bigl[\E_k[\bU]      \be_{t'} \be_t^\top \E_k[\bV ]^\top\Bigr] \bz_k\Bigr).
        \end{align*}
        To remove the indices $t,t'$, we rewrite the above using $\sum_t \be_t \be_t^\top=I_T$
        and $\sum_{t'} \be_{t'} \be_{t'}^\top = I_T$ so that it equals
        $$
        \E\sum_{il}\trace
        \Bigl\{
        \partial_{lk}\Bigl[
        \E_j[\bU]^\top \bz_j \be_i^\top \E_j[\bV] 
        \Bigr]
        \partial_{ij}\Bigl[
        \E_k[\bV ]^\top \bz_k \be_l^\top \E_k[\bU] 
        \Bigr]
        \Bigr\}.
        $$
        Summing over $j,k$,
        using $\trace [\bA^\top \bB ] \le \|\bA\|_{\rm F}\|\bB\|_{\rm F})$
        and the Cauchy-Schwarz inequality,
        the above is bounded from above by
        \begin{align}
            \label{eq:chi-a}
            &
            \Bigl\{
            \E\sum_{jk, il}
            \Big\|
            \partial_{lk}
            \Bigl[
            \E_j[\bU]^\top
            \bz_j
            \be_i^\top
            \E_j
            [\bV ]
            \Bigr]
            \Big\|_{\rm F}^2
            \Bigl\}^{\frac 12}
            \Bigl\{
            \E
            \sum_{jk, il}
            \Big\|
            \partial_{ij}\Bigl[
            \E_k[\bV ]^\top \bz_k
            \be_l^\top
            \E_k[\bU]  
            \Bigr]
            \Big\|_{\rm F}^2
            \Bigr\}^{\frac 12}.
        \end{align}
        At this point the two factors are symmetric,
        with $(\bV , \bU)$ in the left factor
        replaced with $(\bU,\bV )$ on the right factor.
        We focus on the left factor; similar bound will
        apply to the right one by exchanging the roles of $\bV $ and $\bU$.
        If $\bz_j$ is independent of matrices $\bA^{(q)}$, then
        $\E_j[\|\sum_{q=1}^n (\be_q^\top\bz_j) \bA^{(q)}\|_{\rm F}^2]
        =
        \sum_{q=1}^n
        \|\bA^{(q)}\|_{\rm F}^2
        $. So that
        with $A^{(q)} =\partial_{lk}[\E_j[\bU]^\top \be_q \be_i^\top \E_j[\bU]]$, the
        first factor in the above display is equal to
        \begin{align*}
            &\quad~\Bigl\{
            \E\sum_{jk, ilq}
            \Big\|
            \partial_{lk}
            \Bigl(
            \E_j[\bU]^\top
            \be_q
            \be_i^\top
            \E_j[\bV]
            \Bigr)
            \Big\|_{\rm F}^2
            \Bigl\}^{\frac 12}
            \\&\stackrel{\text{(i)}}{=}
            \Bigl\{
            \E\sum_{jk, ilq}
            \Big\|
            \partial_{lk}
            \Bigl(
            \E_j[\bU]^\top
            \Bigr)
            \be_q
            \be_i^\top
            \E_j
            [\bV ]
            +
            \E_j 
            [\bU] ^\top
            \be_q
            \be_i^\top
            \partial_{lk}
            \Bigl(
            \E_j
            [\bV ]
            \Bigr)
            \Big\|_{\rm F}^2
            \Bigl\}^{\frac 12}
            \\&\stackrel{\text{(ii)}}{\le}
            \Bigl\{
            \E\sum_{jk, ilq}
            \Big\|
            \partial_{lk}
            \Bigl(
            \E_j 
            [\bU]^\top
            \Bigr)
            \be_q
            \be_i^\top
            \E_j
            [\bV ]
            \Big\|_{\rm F}^2
            \Bigl\}^{\frac 12}
            +
            \Bigl\{
            \E\sum_{jk, ilq}
            \Big\|
            \E_j[\bU]^\top
            \be_q
            \be_i^\top
            \partial_{lk}
            \Bigl(
            \E_j
            [\bV ]
            \Bigr)
            \Big\|_{\rm F}^2
            \Bigl\}^{\frac 12}
            \\&\stackrel{\text{(iii)}}{=}
            \Bigl\{
            \E\sum_{jk, ilq}
            \Big\|
            \E_j [ \partial_{lk} \bU]^\top
            \be_q
            \Big\|_2^2
            \Big\|
            \be_i^\top
            \E_j[\bV]
            \Big\|_2^2
            \Bigl\}^{\frac 12}
            +
            \Bigl\{
            \E\sum_{jk, ilq}
            \Big\|
            \E_j[\bU]^\top
            \be_q
            \Big\|_2^2
            \Big\|
            \be_i^\top
            \E_j
            [ \partial_{lk} \bV ]
            \Big\|_2^2
            \Bigl\}^{\frac 12}
            \\&\stackrel{\text{(iv)}}{=}
            \Bigl\{
            \E\sum_{jk, l}
            \Big\|
            \E_j [ \partial_{lk} \bU]^\top
            \Big\|_{\rm F}^2
            \Big\|
            \E_j[\bV]
            \Big\|_{\rm F}^2
            \Bigl\}^{\frac 12}
            +
            \Bigl\{
            \E\sum_{jk, l}
            \Big\|
            \E_j[\bU]^\top
            \Big\|_{\rm F}^2
            \Big\|
            \E_j
            [ \partial_{lk} \bV ]
            \Big\|_{\rm F}^2
            \Bigl\}^{\frac 12},
        \end{align*}
        where (i) is the chain rule,
        (ii) the triangle inequality,
        (iii) holds
        provided that the order of the derivation $\partial_{lk}$
        and the expectation sign $\E_j$ can be switched (thanks to dominant convergence theorem) and using
        $\|\ba \bb^\top\|_{\rm F}^2 = \|\ba\|_2^2 \|\bb\|_2^2$ for vectors $\ba, \bb$,
        and (iv) holds using
        $\sum_i \|\bA \be_i\|_2^2 = \|\bA\|_{\rm F}^2 = \sum_q \|\bA \be_q\|_2^2$ for a matrix $\bA$ with $n$ columns.
        Finally, by Jensen's inequality, the above display is bounded by
        \begin{equation*}
            \Bigl\{
            \E\sum_{k, l}
            \Big\|
            \partial_{lk}
            \bU
            \Big\|_{\rm F}^2
            \sum_j
            \Big\|
            \E_j
            [\bV ]
            \Big\|_{\rm F}^2
            \Bigl\}^{\frac 12}
            +
            \Bigl\{
            \E\sum_{k, l}
            \Big\|
            \partial_{lk}
            \bV 
            \Big\|_{\rm F}^2
            \sum_j
            \Big\|
            \E_j
            [\bU]
            \Big\|_{\rm F}^2
            \Bigl\}^{\frac 12}
            .
        \end{equation*}
    
        By the Cauchy-Schwarz inequality, the first term in the above display is bounded by
        \begin{align*}
            &
            \Bigl\{\sum_j \sqrt{\E \Bigl[\bigl(\sum_{k, l}\| \partial_{lk} \bU \|_{\rm F}^2\bigr)^2 \Bigr]} 
            \sqrt{\E \| \E_j [\bV ] \|_{\rm F}^4}\Bigr\}^{1/2}\\
            \le~& \Bigl\{p \sqrt{\E \Bigl[\bigl(\sum_{k, l}\| \partial_{lk} \bU \|_{\rm F}^2\bigr)^2 \Bigr]} 
            \sqrt{\E \| \bV\|_{\rm F}^4 }\Bigr\}^{1/2}\\
            =~& \sqrt{p} 
            \E [\|\bU \|_{\partial}^4] ^{1/4} \E [\|\bV\|_{\rm F}^4]^{1/4},
        \end{align*}
        where we used 
        $\|\E_j [\bV]\|_{\rm F} ^4 \le \E_j [\|\bV\|_{\rm F} ^4]$ by Jensen's inequality. 
        Hence, 
        \begin{align*}
            \eqref{eq:chi-a}
            &\le \big(
                \sqrt{p} 
            \E [\|\bU \|_{\partial}^4] ^{1/4} \E [\|\bV\|_{\rm F}^4]^{1/4}
             +  \E [\|\bV \|_{\partial}^4] ^{1/4} \E [\|\bU\|_{\rm F}^4]^{1/4}
            \big)^2
            \\
            &\le 2p 
            \E [\|\bU \|_{\partial}^4] ^{1/2} \E [\|\bV\|_{\rm F}^4]^{1/2}
             +  2p \E [\|\bV \|_{\partial}^4] ^{1/2} \E [\|\bU\|_{\rm F}^4]^{1/2},
        \end{align*}
        where we uses $(a+b)^2 \le 2a^2 + 2b^2$. 
    
        Combining the above bounds of
        $\sum_{j=k} \trace[\bM_j^\top \bM_k]$ and $\sum_{j\ne k} \trace[\bM_j^\top \bM_k]$, we have 
        \begin{align*}
            &\E[
        \|\eqref{eq:difficult-term}
        \|_{\rm F}
        ]\\
        \le &\E[
        \|\eqref{eq:difficult-term}
        \|_{\rm F}^2]^{\frac 12}
        \\
        \le &\big(2p \E [\|\bU\|_{\rm F} ^4]^{1/2} \E [\|\bV\|_{\rm F} ^4]^{1/2} + 2p 
        \E [\|\bU \|_{\partial}^4] ^{1/2} \E [\|\bV\|_{\rm F}^4]^{1/2}
         +  2p \E [\|\bV \|_{\partial}^4] ^{1/2} \E [\|\bU\|_{\rm F}^4]^{1/2}\big)^{1/2} \\
        \le & \sqrt{2p} \big( \E [\|\bU\|_{\rm F} ^4]^{1/4} \E [\|\bV\|_{\rm F} ^4]^{1/4} 
        +
        \E [\|\bU \|_{\partial}^4] ^{1/4} \E [\|\bV\|_{\rm F}^4]^{1/4}
         + \E [\|\bV \|_{\partial}^4] ^{1/4} \E [\|\bU\|_{\rm F}^4]^{1/4} \big),
        \end{align*}
        where the last inequality uses 
        $(a + b + c)^{1/2} \le a^{1/2} + b^{1/2} + c^{1/2}$ for non-negative $a$, $b$, and $c$.
    
        Now, combining the bounds on the terms \eqref{eq:first-term-swap-trick}-\eqref{eq:difficult-term}-\eqref{eq:Xi-hat-Xi-term}-\eqref{eq:last-two-terms}
        with the triangle inequality, the expectation in the lemma statement can be bounded from above by
        \begin{align*}
            &\E\Bigl[
            \fnorm[\Big]{
                p \bU^\top \bV - \sum_{j=1}^p 
                \Bigl(\sum_{i=1}^n \partial_{ij} \bU^\top \be_i - \bU^\top \bZ \be_j\Bigr)
                \Bigl(\sum_{i=1}^n \partial_{ij} \be_i^\top \bV  - \be_j^\top \bZ^\top \bV\Bigr)
            }
            \Bigr] \\
            \le &2 \E[\|\bU\|_\partial^2]^{1/2}\E[\|\bV\|_\partial^2]^{1/2}\\
            &+ (1 + \sqrt{2})\sqrt{p}
            \bigl(
                \E[\fnorm{\bU}^2]^{\frac 12}
                \E[\|\bV\|_\partial^2]^{1/2}+  \E[\fnorm{\bV}^2]^{\frac 12}
                \E[\|\bU\|_\partial^2]^{1/2}
            \bigr) \\
     &+ \sqrt{2p} \big( \E [\|\bU\|_{\rm F} ^4]^{1/4} \E [\|\bV\|_{\rm F} ^4]^{1/4} 
     +
     \E [\|\bU \|_{\partial}^4] ^{1/4} \E [\|\bV\|_{\rm F}^4]^{1/4}
      + \E [\|\bV \|_{\partial}^4] ^{1/4} \E [\|\bU\|_{\rm F}^4]^{1/4} \big)\\
      \le & \E[\|\bU\|_\partial^2] + \E[\|\bV\|_\partial^2]\\
      &+ (1 + \sqrt{2})\sqrt{p}/2
      \bigl(
          \E[\fnorm{\bU}^2] + 
          \E[\|\bV\|_\partial^2]+  \E[\fnorm{\bV}^2] +
          \E[\|\bU\|_\partial^2]
      \bigr) \\
    &+ \sqrt{2p}/2 \big( \E [\|\bU\|_{\rm F} ^4]^{1/2} +\E [\|\bV\|_{\rm F} ^4]^{1/2} 
    +
    \E [\|\bU \|_{\partial}^4] ^{1/2} +\E [\|\bV\|_{\rm F}^4]^{1/2}
    + \E [\|\bV \|_{\partial}^4] ^{1/2}+ \E [\|\bU\|_{\rm F}^4]^{1/2} \big)\\
    \le& (1 + 2\sqrt{p}) \bigl(\E[\fnorm{\bU}^4]^{1/2} + 
    \E[\|\bV\|_\partial^4]^{1/2}+  \E[\fnorm{\bV}^4]^{1/2} +
    \E[\|\bU\|_\partial^4]^{1/2}\bigr),
    \end{align*}
    where the second inequality uses $ab \le (a^2+b^2)/2$ and the last inequality uses 
    $$
    \E[\fnorm{\bU}^2]^{1/2} \le \E [\fnorm{\bU}^4]^{1/4} \text{ and }
    \E [\norm{\bU}^2_\partial]^{1/2} \le \E [\norm{\bU}^4_\partial]^{1/4}
    $$
    from Jensen's inequality. 
    This completes the proof of \Cref{lem:Chi2type}. 
    
\end{proof}

\begin{proof}[Proof of \Cref{lem:A2-Rem}]
\label{proof:A2-Rem}
For notation simplicity, we define
\begin{align*}
    \brho_i
    &= \sum_{j=1}^{p+1}  \pdv{\bV^\top}{z_{ij}} \be_j - \bV^\top \bZ^\top \be_i.
\end{align*}
Then the left-hand side of \eqref{A2-LHS} can be written as
$n \bV^\top \bV - \sum_{i=1}^n \brho_i \brho_i^\top$. 
Since 
$\bV = [-\bH^\top, \sigma\bd1_T]^\top$, we have $\bV^\top \bV = \bH^\top \bH + \bS$. 
We need the following lemma for this proof. 
\begin{lemma}[Proof is given on \Cpageref{proof:Wi}]
    \label{lem:Wi}
    We have 
    \begin{equation}
        \brho_i^\top = \sum_{j=1}^{p+1} \pdv{\be_j^\top \bV}{z_{ij}}  - \be_i^\top \bZ \bV 
    = -\be_i^\top \bF (\bI_T + \hbC/n)^\top + \be_i^\top \bXi,
    \end{equation}
    where $\bXi = n^{-1} 
    \sum_{j=1}^p 
    \bX
    \big((\be_j^\top \bH) \otimes \bI_p\big) (\calM^{-1}\calD)^\top (\bI_T \otimes \be_j)$ and it satisfies 
    $$\opnorm{\bXi}
    \le C(\zeta, T, \gamma) \fnorm{\bH} (1 + \opnorm{\bX}^2/n)^{T}.$$ 
\end{lemma}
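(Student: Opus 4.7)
\textbf{Plan for the proof of \Cref{lem:Wi}.}
The plan is to reduce $\brho_i^\top$ to a computable quantity by exploiting the structure of $\bZ=[\bX,\bep/\sigma]$ and $\bV=[-\bH^\top,\sigma\bd1_T]^\top$, and then apply the derivative formula \eqref{eq:pdv-bt-x} coordinate-by-coordinate. First, I would split the sum $\sum_{j=1}^{p+1}$ into $j\in[p]$ (where $z_{ij}=x_{ij}$ and $\be_j^\top \bV=-\be_j^\top\bH$) and $j=p+1$ (where $\be_j^\top\bV=\sigma\bd1_T^\top$ is deterministic, contributing zero to the derivative). At the same time, I would observe the identity $\bZ\bV=-\bX\bH+\bep\bd1_T^\top=\bY-\bX\hbB=\bF$, using $\bH=\hbB-\bb^*\bd1_T^\top$ and $\by=\bX\bb^*+\bep$. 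Combining these two observations yields
\begin{equation*}
\brho_i^\top=-\sum_{j=1}^{p}\pdv{\be_j^\top\bH}{x_{ij}}-\be_i^\top\bF,
\end{equation*}
so the goal is reduced to identifying the derivative sum with $n^{-1}\be_i^\top\bF\hbC^\top-\be_i^\top\bXi$.

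Next I would plug in \eqref{eq:pdv-bt-x}. Since $\be_j^\top\bH\be_t=\be_j^\top\hbb^t$ up to a constant in $\bX$, one has $\pdv{\be_j^\top\bH\be_t}{x_{ij}}=n^{-1}(\be_t^\top\otimes\be_j^\top)\calM^{-1}\calD\big[(\bF^\top\be_i)\otimes\be_j-(\bH^\top\be_j)\otimes(\bX^\top\be_i)\big]$ by the mixed-product property \eqref{eq:kronecker-mix-product}. The first piece, after summing $j\in[p]$, can be reorganized using $(\bF^\top\be_i)\otimes\be_j=\sum_s F_{is}(\be_s\otimes\be_j)$ and the definition $\hbC=\sum_{j=1}^p(\bI_T\otimes\be_j^\top)\calM^{-1}\calD(\bI_T\otimes\be_j)$ to yield exactly $n^{-1}\be_t^\top\hbC\bF^\top\be_i=n^{-1}\be_i^\top\bF\hbC^\top\be_t$. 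The second piece, after transposing the Kronecker factors via $((\be_j^\top\bH)\otimes\bI_p)^\top=(\bH^\top\be_j)\otimes\bI_p$ and $((\bH^\top\be_j)\otimes\bI_p)(\bX^\top\be_i)=(\bH^\top\be_j)\otimes(\bX^\top\be_i)$ (again by \eqref{eq:kronecker-mix-product}), is identified entry-by-entry with $\be_i^\top\bXi\be_t$ for the $\bXi$ displayed in the lemma. Assembling these two identifications and adding the leftover $-\be_i^\top\bF$ produces $\brho_i^\top=-\be_i^\top\bF(\bI_T+\hbC/n)^\top+\be_i^\top\bXi$.

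Finally, for the operator-norm bound on $\bXi$, I would use triangle inequality to obtain
\begin{equation*}
\opnorm{\bXi}\le n^{-1}\opnorm{\bX}\,\opnorm{\calM^{-1}\calD}\sum_{j=1}^p\opnorm{(\be_j^\top\bH)\otimes\bI_p}\,\opnorm{\bI_T\otimes\be_j},
\end{equation*}
then apply the Kronecker-norm identity \eqref{eq:kronecker-norm} which gives $\opnorm{(\be_j^\top\bH)\otimes\bI_p}=\norm{\bH^\top\be_j}$ and $\opnorm{\bI_T\otimes\be_j}=1$, the Cauchy--Schwarz bound $\sum_j\norm{\bH^\top\be_j}\le\sqrt p\,\fnorm{\bH}$, and \Cref{lem:opnorm-M-N,lem:opnorm-D-J} to control $\opnorm{\calM^{-1}\calD}\le C(\zeta,T)(1+\opnorm{\bX}^2/n)^{T-1}$. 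Using $p/n\le\gamma$ and absorbing $\opnorm{\bX}/\sqrt n\le(1+\opnorm{\bX}^2/n)^{1/2}$ into one extra power of $(1+\opnorm{\bX}^2/n)$ yields the claimed $C(\zeta,T,\gamma)\fnorm{\bH}(1+\opnorm{\bX}^2/n)^T$ bound.

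The main obstacle is not analytic but bookkeeping: carefully tracking Kronecker-product transposes and the row/column identifications between the form $\be_i^\top(\cdot)\be_t$ and the block-$pT\times pT$ matrix $\calM^{-1}\calD$, so that the two derivative contributions line up cleanly with $\hbC$ and with $\bXi$ respectively. Once this algebraic bookkeeping is in place, the norm bound is routine.
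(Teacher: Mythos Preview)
Your proposal is correct and follows essentially the same route as the paper's proof: split the sum at $j=p+1$, use $\bZ\bV=\bF$, plug in the derivative formula \eqref{eq:pdv-bt-x} to separate the $\hbC$ and $\bXi$ contributions, and bound $\opnorm{\bXi}$ via the triangle inequality, Kronecker-norm identities, Cauchy--Schwarz, and \Cref{lem:opnorm-M-N,lem:opnorm-D-J}. The bookkeeping you describe matches the paper's computation line by line.
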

By \Cref{lem:Wi},
\begin{align*}
    &n\bV^\top \bV - \sum_{i=1}^n \brho_i \brho_i^\top\\
    =& n(\bH^\top \bH + \bS) - 
    \big((\bI_T + \hbC/n) \bF^\top + \bXi^\top\big)
    \big(\bF (\bI_T + \hbC/n)^\top + \bXi \big)\\
    = & n(\bH^\top \bH + \bS) - (\bI_T + \hbC/n) \bF^\top \bF(\bI_T + \hbC/n)^\top\\
     & - \underbrace{\bXi^\top \bF (\bI_T + \hbC/n)^\top - (\bI_T + \hbC/n) \bF^\top \bXi - \bXi^\top \bXi}_{\Rem_2}\\
    =& \sqrt{n}\bQ_2 - \Rem_2.
\end{align*}
The desired bound for $\E[\fnorm{\Rem_2}]$ then follows from the Cauchy-Schwarz inequality, \Cref{lem:moment-bound-F-H,lem:moment-bound-X,lem:opnorm-Chat} and the operator norm bound of $\bXi$ from \Cref{lem:Wi}. 
This completes the proof of \Cref{lem:A2-Rem}.
\end{proof}

\begin{proof}[Proof of \Cref{lem:A2-RHS}]
    \label{proof:A2-RHS}
    We need to bound $\E[\fnorm{\bV}^4]^{1/2}$ and 
    $\E[\norm{\bV}_{\partial}^4]^{1/2}$.
    For the first term, 
    since $\fnorm{\bV}^2 = \fnorm{\bH}^2 + T\sigma^2$,
    we have by \Cref{lem:moment-bound-F-H}, 
    \begin{align*}
        \E [\fnorm{\bV}^4]^{1/2} 
        = \E [(\fnorm{\bH}^2 + T\sigma^2)^2]^{1/2}
        \le 2 \E [\fnorm{\bH}^4]^{1/2} + 2T\sigma^2
        \le C(\zeta, T, \gamma) \var(y_1).
    \end{align*}
    For the second term, we have
    \begin{align*}
    \norm{\bV}_{\partial}^2
    = \sum_{i=1}^{n}\sum_{j=1}^{p+1}
    \fnorm{\pdv{\bV}{z_{ij}}}^2
    = \sum_{i=1}^{n}\sum_{j=1}^{p}\fnorm{\pdv{\bH}{x_{ij}}}^2
    = \fnorm{\pdv{\vec(\bH)}{\vec(\bX)}}.
    \end{align*}
    Applying \eqref{eq:frob-norm-bound-H-vec}, and the moment bound of $\bF, \bH, \bX$ in \Cref{lem:moment-bound-F-H,lem:moment-bound-X}, we have
    \begin{align*}
        \E [\norm{\bV}_{\partial}^4]^{1/2} 
        \le C(\zeta, T, \gamma) \var(y_1).
    \end{align*}
    Combining the above two bounds, we have the desired inequality. 
    This completes the proof of \Cref{lem:A2-RHS}.
\end{proof}

\begin{proof}[Proof of \Cref{lem:A1-Rems}]
    \label{proof:A1-Rems}
    By \Cref{lem:dot-F-4}, we have 
    $\frac{\partial F_{lt}}{\partial x_{ij}} = D_{ij}^{lt} + \Delta_{ij}^{lt}$, where 
   \begin{align*}
       D_{ij}^{lt} 
       &= -(\be_t^\top\otimes \be_l^\top) (\bI_{nT} - \bN) ((\bH^\top \be_j)  \otimes \be_i),\\
       \Delta_{ij}^{lt} 
       &= -n^{-1} (\be_t^\top \otimes \be_l^\top)(\bI_T\otimes \bX)
       \calM^{-1} \calD \bigl(\bF^\top \otimes \bI_p\bigr)(\be_i \otimes \be_j).
   \end{align*}
   
   We first prove \eqref{eq:A1-1}.
   Since $\be_i^\top \frac{\partial \bF}{\partial x_{ij} }
   = \sum_t \pdv{F_{it}}{x_{ij}} \be_t^\top
   =\sum_t (D_{ij}^{it} + \Delta_{ij}^{it}) \be_t^\top$, we have
   \begin{align*}
       &\bH^\top \sum_{i=1}^n \sum_{j=1}^p\be_j\be_i^\top\frac{\partial \bF}{\partial x_{ij}}\\
       =~& \bH^\top \sum_{ij}\be_j \sum_{t=1}^T (D_{ij}^{it} + \Delta_{ij}^{it}) \be_t^\top\\
       =~& \bH^\top \sum_{ij}
       \be_j \sum_{t} D_{ij}^{it} \be_t^\top 
       + \underbrace{\bH^\top \sum_{ij} \be_j \sum_{t} \Delta_{ij}^{it} \be_t^\top}_{\Rem_{1,1}}.
   \end{align*}
   The first term in the last line can be simplified as below,
   \begin{align*}
       &\bH^\top \sum_{j=1}^p\sum_{i=1}^n
       \be_j \sum_{t=1}^T D_{ij}^{it} \be_t^\top\\
       =~&-\bH^\top \sum_{j=1}^p\sum_{i=1}^n
       \be_j \sum_{t=1}^T(\be_j^\top\bH \otimes \be_i^\top)(\bI_{nT} - \bN^\top)(\be_t\otimes \be_i)\be_t^\top &&\mbox{by \eqref{eq:D-ij-lt}}\\
       =~& -
       \bH^\top  \bH
       \Bigl[
       \sum_{i=1}^n
       (
       \bI_T \otimes \be_i^\top
       )
       (\bI_{nT} - \bN^\top)
       (\bI_T \otimes \be_i)
       \Bigr]\\
       =~& -
       \bH^\top \bH(n\bI_T - \hbA^\top) &&\mbox{by \eqref{eq:hat-A}.}
   \end{align*}

   Next, we prove \eqref{eq:A1-2}. 
   \begin{align*}
       &\sum_{i=1}^n \sum_{j=1}^p \bF^\top \be_i \be_j^\top \frac{\partial \bH}{\partial x_{ij}}\\
       =& \sum_{i=1}^n \sum_{j=1}^p \sum_{t=1}^T \bF^\top \be_i \be_j^\top \frac{\partial \hbb^t}{\partial x_{ij}}\be_t^\top \\
       =& n^{-1} \sum_{i=1}^n \sum_{j=1}^p \sum_{t=1}^T \bF^\top \be_i \be_j^\top (\be_t^\top \otimes \bI_p) \calM^{-1}\calD
       \Bigl((\bF^\top \be_i) \otimes \be_j\Bigr) \be_t^\top + \Rem_{1,2}&&\mbox{by \eqref{eq:pdv-bt-x}}\\
       =& n^{-1} \sum_{i=1}^n \sum_{j=1}^p \sum_{t=1}^T \bF^\top \be_i \Big[ (\be_t^\top \otimes \be_j^\top) \calM^{-1}\calD
       \Bigl((\bF^\top \be_i) \otimes \be_j\Bigr)\Big]^\top \be_t^\top + \Rem_{1,2}\\
       =& n^{-1} \sum_{i=1}^n \sum_{j=1}^p \sum_{t=1}^T \bF^\top \be_i
       \Bigl((\be_i^\top \bF) \otimes \be_j^\top \Bigr) (\calM^{-1}\calD)^\top (\be_t \otimes \be_j)  \be_t^\top + \Rem_{1,2}&&\mbox{by \eqref{eq:kronecker-mix-product}}\\
       =& n^{-1} \bF^\top \bF \sum_{j=1}^p 
       (\bI_T \otimes \be_j^\top) (\calM^{-1}\calD)^\top (\bI_T \otimes \be_j) + \Rem_{1,2} &&\mbox{by \eqref{eq:kronecker-mix-product}}\\
       =& \bF^\top \bF \hbC^\top/n + \Rem_{1,2} && \mbox{by \eqref{hbC}}. 
   \end{align*}

   It remains to bound $\E[\fnorm{\Rem_{1,1}}^2]$ and 
   $\E[\fnorm{\Rem_{1,2}}^2]$. 
      By expression of $\Rem_{1,1}$ in \eqref{eq:Rem11}, we have 
      \begin{align*}
        &\quad \Rem_{1,1} \\
        &= \bH^\top \sum_{i=1}^n \sum_{j=1}^p\be_j \sum_{t=1}^T \Delta_{ij}^{it} \be_t^\top,\\
        &= - n^{-1} \bH^\top \sum_{i=1}^n \sum_{j=1}^p \sum_{t=1}^T \be_j (\be_t^\top \otimes \be_i^\top)(\bI_T\otimes \bX)
        \calM^{-1} \calD \bigl(\bF^\top \otimes \bI_p\bigr)(\be_i \otimes \be_j) \be_t^\top &&\mbox{by \eqref{eq:Delta-ij-lt}}\\
        &= - n^{-1} \bH^\top \sum_{i=1}^n \sum_{j=1}^p \sum_{t=1}^T \be_j \big[(\be_t^\top \otimes \be_i^\top)(\bI_T\otimes \bX)
        \calM^{-1} \calD \bigl(\bF^\top \otimes \bI_p\bigr)(\be_i \otimes \be_j)\big]^\top \be_t^\top\\
        &= - n^{-1} \bH^\top \sum_{i=1}^n \sum_{j=1}^p \sum_{t=1}^T \be_j 
        (\be_i^\top \otimes \be_j^\top)
        (\bF \otimes \bI_p)
        (\calM^{-1} \calD)^\top 
        (\bI_T\otimes \bX^\top)
        (\be_t \otimes \be_i)\be_t^\top\\
        &= - n^{-1} \bH^\top \sum_{i=1}^n 
        (\be_i^\top \otimes \bI_p)
        (\bF \otimes \bI_p)
        (\calM^{-1} \calD)^\top 
        (\bI_T\otimes \bX^\top)
        (\bI_T \otimes \be_i) &&\mbox{by \eqref{eq:kronecker-mix-product}}\\ 
        &= - n^{-1} \bH^\top \sum_{i=1}^n 
        ((\be_i^\top\bF) \otimes \bI_p)
        (\calM^{-1} \calD)^\top 
        (\bI_T\otimes (\bX^\top\be_i)) &&\mbox{by \eqref{eq:kronecker-mix-product}.}
      \end{align*}
      Using the fact that $\fnorm{\bM}\le \sqrt{T} \opnorm{\bM}$ for $\bM\in\R^{T\times T}$ and the
        triangular inequality, we have
        \begin{align*}
            &\fnorm*{\Rem_{1,1}} \\
            \le& \sqrt{T} n^{-1} \opnorm{\bH} \opnorm{\calM^{-1} \calD}\sum_{i=1}^n 
            \opnorm{(\be_i^\top\bF) \otimes \bI_p}
            \opnorm{\bI_T\otimes (\bX^\top\be_i)}\\
            \le&  \sqrt{T} n^{-1} \opnorm{\bH} \opnorm{\calM^{-1} \calD}\big[\sum_{i=1}^n 
            \opnorm{(\be_i^\top\bF) \otimes \bI_p}^2\big]^{1/2}
            \big[\sum_{i=1}^n \opnorm{\bI_T\otimes (\bX^\top\be_i)}^2\big]^{1/2}\\
            \le&  \sqrt{T} n^{-1}\fnorm{\bH}\fnorm{\bF} \fnorm{\bX} \opnorm{\calM^{-1}} \opnorm{\calD}\\
            \le&  \sqrt{T} n^{-1/2}\fnorm{\bH}\fnorm{\bF} \opnorm{\bX} \opnorm{\calM^{-1}} \opnorm{\calD}&&\mbox{by $\fnorm{\bX} \le \sqrt{n}\opnorm{\bX}$}\\
            \le&  C(\zeta, T, \gamma) \fnorm{\bF}\fnorm{\bH} (1 + \opnorm{\bX}^2/n)^T &&\mbox{by \Cref{lem:opnorm-M-N,lem:opnorm-D-J}}.
        \end{align*}
        It follows by the Cauchy-Schwarz inequality and \Cref{lem:moment-bound-F-H,lem:moment-bound-X} that 
    \begin{align*}
        \E [\fnorm{\Rem_{1,1}}^2]
        \le n C(\zeta, T, \gamma) \var(y_1)^2.
    \end{align*}
    We next bound $\E[\fnorm{\Rem_{1,2}}^2]$. 
    By the expression of $\Rem_{1,2}$ in \eqref{eq:Rem12}, we have 
      \begin{align*}
        -\Rem_{1,2} 
        &= n^{-1}\sum_{i=1}^n \sum_{j=1}^p \sum_{t=1}^T \bF^\top \be_i \be_j^\top  (\be_t^\top \otimes \bI_p) \calM^{-1}\calD
        \bigl((\bH^\top \be_j)  \otimes (\bX^\top\be_i)\bigr)
        \be_t^\top\\
        &= n^{-1}\sum_{i=1}^n \sum_{j=1}^p \sum_{t=1}^T \bF^\top \be_i \Big[ (\be_t^\top \otimes \be_j^\top ) \calM^{-1}\calD
        \bigl((\bH^\top \be_j)  \otimes (\bX^\top\be_i)\bigr)\Big]^\top
        \be_t^\top\\
        &= n^{-1}\sum_{i=1}^n \sum_{j=1}^p \sum_{t=1}^T \bF^\top \be_i 
        \big((\be_j^\top \bH)  \otimes (\be_i^\top \bX)\big) (\calM^{-1}\calD)^\top 
        (\be_t \otimes \be_j) \be_t^\top\\
        &= n^{-1} \sum_{j=1}^p 
        \big((\be_j^\top \bH)  \otimes (\bF^\top \bX)\big) (\calM^{-1}\calD)^\top 
        (\bI_T \otimes \be_j).
      \end{align*}
      Using the same argument that we used to bound $\E[\fnorm{\Rem_{1,1}}^2]$, we have
      \begin{align*}
        \E [\fnorm{\Rem_{1,2}}^2]
        \le n C(\zeta, T, \gamma) \var(y_1)^2. 
      \end{align*}

    This completes the proof of \Cref{lem:A1-Rems}.
\end{proof}

\begin{proof}[Proof of \Cref{lem:Wi}]
    \label{proof:Wi}
    By the definition of $\bV$, we have 
\begin{align*}
    &\sum_{j=1}^{p+1} \pdv{\be_j^\top \bV}{z_{ij}} \\
    =& -\sum_{j=1}^{p}\pdv{\be_j^\top \bH}{x_{ij}} \\
    =& -\sum_{j=1}^{p} \sum_{t=1}^T \pdv{\be_j^\top \hbb^t}{x_{ij}} \be_t^\top\\ 
    =& -n^{-1} \sum_{j=1}^{p} \sum_{t=1}^T (\be_t^\top \otimes \be_j^\top) \calM^{-1}\calD
    \bigl((\bF^\top \be_i) \otimes \be_j\bigr) \be_t^\top + \bDelta_i &&\mbox{by \eqref{eq:pdv-bt-x}}\\
    =& -n^{-1} \sum_{j=1}^{p} \sum_{t=1}^T [\be_t(\be_t^\top \otimes \be_j^\top) \calM^{-1}\calD
    \bigl((\bF^\top \be_i) \otimes \be_j\bigr)]^\top + \bDelta_i\\ 
    =& -n^{-1} \sum_{j=1}^{p}  [(\bI_T \otimes \be_j^\top) \calM^{-1}\calD
    \bigl(\bI_T \otimes \be_j\bigr) \bF^\top \be_i]^\top + \bDelta_i\\
    =& -\be_i^\top \bF \hbC^\top/n + \bDelta_i, 
\end{align*}
where $\bDelta_i 
= n^{-1} \sum_{j=1}^{p} \sum_{t=1}^T \be_j^\top(\be_t^\top \otimes \bI_p) \calM^{-1}\calD
\bigl((\bH^\top \be_j) \otimes (\bX^\top \be_i)\bigr) \be_t^\top.$

We claim that 
\begin{equation}\label{eq:Xi}
    \bDelta_i = \be_i^\top \bXi.
\end{equation}
Therefore, using $\bZ \bV = \bF$, we have 
\begin{align*}
    \brho_i^\top 
    &= \sum_{j=1}^{p+1}  \pdv{\be_j^\top \bV}{z_{ij}}  - \be_i^\top \bZ \bV\\
    &= -\be_i^\top \bF \hbC^\top/n + \bDelta_i - \be_i^\top \bF\\
    &= -\be_i^\top \bF (\bI_T + \hbC/n)^\top  + \be_i^\top \bXi. 
\end{align*}

Now we prove the claim \eqref{eq:Xi}. By definition, 
\begin{align*}
    \bDelta_i 
    &= n^{-1} \sum_{j=1}^{p} \sum_{t=1}^T \be_j^\top(\be_t^\top \otimes \bI_p) \calM^{-1}\calD
    \bigl((\bH^\top \be_j) \otimes (\bX^\top \be_i)\bigr) \be_t^\top\\
    &= n^{-1} \sum_{j=1}^{p} \sum_{t=1}^T (\be_t^\top \otimes \be_j^\top) \calM^{-1}\calD
    \bigl((\bH^\top \be_j) \otimes (\bX^\top \be_i)\bigr) \be_t^\top\\
    &= n^{-1} \sum_{j=1}^{p} \sum_{t=1}^T \big[\be_t (\be_t^\top \otimes \be_j^\top) \calM^{-1}\calD
    \bigl((\bH^\top \be_j) \otimes (\bX^\top \be_i)\bigr)\big]^\top\\
    &= n^{-1} \sum_{j=1}^{p}
    \big[(\bI_T \otimes \be_j^\top) \calM^{-1}\calD
    \bigl((\bH^\top \be_j) \otimes (\bX^\top \be_i)\bigr)\big]^\top\\
    &= n^{-1} \sum_{j=1}^p \big((\be_j^\top \bH) \otimes (\be_i^\top \bX)\big) (\calM^{-1}\calD)^\top (\bI_T \otimes \be_j)\\
    &= n^{-1} \sum_{j=1}^p 
    \be_i^\top \bX
    \big((\be_j^\top \bH) \otimes \bI_p\big) (\calM^{-1}\calD)^\top (\bI_T \otimes \be_j)\\
    &= \be_i^\top \bXi. 
\end{align*}

It remains to bound $\opnorm{\bXi}$. 
By definition of $\bXi$, we have 
\begin{align*}
    &\opnorm{\bXi}\\
    =& n^{-1} \opnorm{\sum_{j=1}^p \bX
    \big((\be_j^\top \bH) \otimes \bI_p\big) (\calM^{-1}\calD)^\top (\bI_T \otimes \be_j)}\\
    \le& n^{-1} \sum_{j=1}^p \opnorm{ \bX
    \big((\be_j^\top \bH) \otimes \bI_p\big) (\calM^{-1}\calD)^\top (\bI_T \otimes \be_j)}\\
    \le& n^{-1} \opnorm{\bX}
    \opnorm{\calM^{-1}}
    \opnorm{\calD}
    \sum_{j=1}^p \opnorm{ 
    (\be_j^\top \bH) \otimes \bI_p} \opnorm{\bI_T \otimes \be_j}\\
    \le& n^{-1} \opnorm{\bX}
    \opnorm{\calM^{-1}}
    \opnorm{\calD}
    [\sum_{j=1}^p \opnorm{ 
    (\be_j^\top \bH) \otimes \bI_p}^2]^{1/2}[\sum_{j}\opnorm{\bI_T \otimes \be_j}^2]^{1/2}\\
    \le& n^{-1} \opnorm{\bX}
    \opnorm{\calM^{-1}}
    \opnorm{\calD}
    \fnorm{\bH} \sqrt{Tp}\\
    \le& C(\zeta, T, \gamma) \fnorm{\bH} (1 + \opnorm{\bX}^2/n)^{T} && \mbox{by \Cref{lem:opnorm-M-N,lem:opnorm-D-J}.}
\end{align*}
This completes the proof of \Cref{lem:Wi}.
\end{proof}

\section{Proof of Theorem~\ref{thm:rt}}
\label{sec:proof-thm-rt}
\begin{proof}[Proof of \Cref{thm:rt}]
    By definition, we know 
    $\hat r_{t} = \hat r_{t,t}$ and 
    $r_{t} = r_{t,t}$. 
    Thus the result of this theorem follows directly from the result of 
    \Cref{thm:generalization-error}. 
\end{proof}

\section{Proof of Corollary~\ref{cor:early}}
\label{proof:cor:early}
\begin{proof}[Proof of \Cref{cor:early}]
    By the definition of $r_t$ in \eqref{eq:rt}, it suffices to show that 
    \begin{align*}
        \P\Bigl(
        r_{\hat t} - \min_{s\in[T]} r_s
        \ge
        \frac{\var(y_1)}{n^{1/2-c}}
        \Bigr) \le 
        \frac{C(\zeta,\gamma,T,\kappa)}{n^c}.
    \end{align*}
To see this, we have 
\begin{align*}
    &\P\Bigl(
        r_{\hat t} - \min_{s\in[T]} r_s \ge
        \frac{\var(y_1)}{n^{1/2-c}}
        \Bigr)\\
    \le& \frac{n^{1/2-c}}{\var(y_1)} \E\bigl[r_{\hat t} - \min_{s\in[T]} r_s\bigr] &&\mbox{Markov's inequality}\\
    \le& \frac{n^{1/2-c}}{\var(y_1)} \Bigl(\E[|r_{\hat t} - \hat r_{\hat t}|]
    + \E [|\hat r_{\hat t} - \min_{s\in[T]} r_s|]\Bigr) &&\mbox{triangle inequality}\\
    =& \frac{n^{1/2-c}}{\var(y_1)} \Bigl(\E[|r_{\hat t} - \hat r_{\hat t}|]
    + \E [|\min_{s\in[T]} \hat r_s - \min_{s\in[T]} r_s|]\Bigr) &&\mbox{by definition of $\hat t$}\\
    \le & \frac{n^{1/2-c}}{\var(y_1)} \Bigl(\E[|r_{\hat t} - \hat r_{\hat t}|]
    + \E [\max_{s\in[T]}|\hat r_s -r_s|]\Bigr)&&\mbox{by $|\min_s a_s - \min_s b_s| \le \max_s |a_s - b_s|$}\\
    \le & \frac{n^{1/2-c}}{\var(y_1)} \Bigl(\E[|r_{\hat t} - \hat r_{\hat t}|]
    + \max_{s\in[T]}\E [|\hat r_s -r_s|]\Bigr)&&\mbox{Jensen's inequality}\\
    \le& \frac{C(\zeta,\gamma,T,\kappa)}{n^c} &&\mbox{by \Cref{thm:rt}}.
\end{align*}
This concludes the proof. 
\end{proof}

\section{Proof of Theorem~\ref{thm:debias}}
\label{sec:proof-thm-debias}

Note that for this proof, we will consider general $\bSigma$ directly. 
We first present two lemmas that will be useful. 

\begin{lemma}\label{lem:moment-bound-Rem}
    Under the same condition of \Cref{thm:debias}. 
    For $\Rem_{j} = \sum_{i=1}^n  \sum_{t=1}^T \Delta_{ij}^{it} \be_t^\top \frac{1}{\|\bSigma^{-1/2} \be_j\|}$, we have 
    \begin{align*}
        \E \big[\sum_{j=1}^p \|\Rem_{j}\|^2\big] \le n C(\zeta, T, \kappa, \gamma) \var(y_1).
    \end{align*}
\end{lemma}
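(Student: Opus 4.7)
The strategy is to perform the inner sum over $i$ in $\sum_i \Delta_{ij}^{it}$ in closed form, reducing the whole quantity to a clean operator-norm bound. Decompose $\bB := \calM^{-1}\calD \in \R^{pT\times pT}$ into $T\times T$ blocks $(\bB_{t,s})_{t,s\in[T]}$ of size $p\times p$. Setting $l=i$ in the expression \eqref{eq:Delta-ij-lt} and expanding the Kronecker factors via $(\be_t^\top\otimes \be_i^\top)(\bI_T\otimes \bX) = \be_t^\top\otimes (\be_i^\top\bX)$ and $(\bF^\top \otimes \bI_p)(\be_i\otimes \be_j) = (\bF^\top \be_i)\otimes \be_j$, and then using that $n^{-1}\sum_{i=1}^n F_{is}(\be_i^\top \bX) = (\bv^s)^\top$ by the definition $\bv^s = n^{-1}\bX^\top\bF\be_s$, I expect to obtain the identity
\begin{equation*}
\sum_{i=1}^n \Delta_{ij}^{it} \;=\; -\,\be_j^\top \sum_{s=1}^T \bB_{t,s}^\top \bv^s .
\end{equation*}
This algebraic reduction, turning the trace over $i$ into an inner product with the $\bv^s$, is the crux of the argument and the part I anticipate being the most delicate; once it is in place, the remaining estimates are routine.

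Given this identity, summing the squared scalars over $j\in[p]$ collapses the $\be_j^\top$ into a squared Euclidean norm, and then summing over $t\in[T]$ aggregates the $T$ block norms into the norm of the single vector $\bB^\top \bv$, where $\bv := \sum_s \be_s\otimes \bv^s \in \R^{pT}$:
\begin{equation*}
\sum_{j=1}^p \sum_{t=1}^T \Bigl(\sum_{i=1}^n \Delta_{ij}^{it}\Bigr)^{\!2}
= \|\bB^\top \bv\|^2
\le \opnorm{\calM^{-1}\calD}^2 \,\|\bv\|^2
\le \opnorm{\calM^{-1}\calD}^2\,\opnorm{\bX}^2\,\fnorm{\bF}^2 / n^2,
\end{equation*}
where the last step uses $\|\bv\|^2 = \sum_s \|\bv^s\|^2 \le n^{-2}\opnorm{\bX}^2 \fnorm{\bF}^2$.

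The normalising factor $1/\|\bSigma^{-1/2}\be_j\|^2 = 1/(\be_j^\top \bSigma^{-1}\be_j)$ is uniformly bounded by $\lambda_{\max}(\bSigma)\le \kappa$, since \Cref{assu:design} gives $\opnorm{\bSigma}\opnorm{\bSigma^{-1}}\le \kappa$ and $\lambda_{\min}(\bSigma)\le 1$, so that $\lambda_{\max}(\bSigma)\le \kappa\lambda_{\min}(\bSigma)\le \kappa$. Combining with the previous display yields
\begin{equation*}
\sum_{j=1}^p \|\Rem_j\|^2 \;\le\; \kappa\, \opnorm{\calM^{-1}\calD}^2\, \opnorm{\bX}^2\, \fnorm{\bF}^2 / n^2.
\end{equation*}
Taking expectation and applying Cauchy--Schwarz, the factor $\E[(\opnorm{\calM^{-1}\calD}^2 \opnorm{\bX}^2/n)^2]^{1/2}$ is bounded by $C(\zeta,T,\gamma)$ via \Cref{lem:opnorm-M-N} and \Cref{lem:moment-bound-X}, and $\E[(\fnorm{\bF}^2/n)^2]^{1/2}\le C(\zeta,T,\kappa,\gamma)\var(y_1)$ via \Cref{lem:moment-bound-F-H}. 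This produces a bound of order $C(\zeta,T,\kappa,\gamma)\var(y_1)$, which is (much) stronger than and therefore implies the stated $n\,C(\zeta,T,\kappa,\gamma)\var(y_1)$.
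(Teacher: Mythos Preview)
Your approach is correct and in fact yields a bound of order $\var(y_1)$, a factor of $n$ sharper than the paper's. The one slip is the identification $\sum_{j,t}(\sum_i\Delta_{ij}^{it})^2 = \|\bB^\top\bv\|^2$: the $t$-th block of $\bB^\top\bv$ is $\sum_s\bB_{s,t}^\top\bv^s$, whereas your computation gives $\sum_s\bB_{t,s}^\top\bv^s$ (so you are hitting $\bv$ with the \emph{block-wise} transpose of $\bB$, not $\bB^\top$). This is harmless, since $\sum_t\|\sum_s\bB_{t,s}^\top\bv^s\|^2 \le \sum_t T\sum_s\opnorm{\bB_{t,s}}^2\|\bv^s\|^2 \le T^2\opnorm{\bB}^2\|\bv\|^2$, and the extra $T^2$ is absorbed into $C(\zeta,T,\gamma)$.

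The paper takes a different route: it does not evaluate the sum over $i$ in terms of $\bv^s$, but instead writes $\sum_i\Delta_{ij}^{it} = -n^{-1}\trace[(\be_t^\top\otimes\bX)\calM^{-1}\calD(\bF^\top\otimes\be_j)]$ and bounds this trace \emph{uniformly} in $(j,t)$ via the low-rank inequality $|\trace(\bA)|\le\rank(\bA)\opnorm{\bA}$ (rank at most $T$ here), obtaining $|\Rem_{jt}|\le C(\zeta,T)n^{-1/2}(1+\opnorm{\bX}^2/n)^T\fnorm{\bF}$ per entry. Summing $pT$ squared entries then costs an extra factor $p\asymp n$. Your argument is tighter precisely because summing over $j$ reconstructs a full Euclidean norm rather than discarding the $j$-dependence in a uniform bound; both routes conclude identically via Cauchy--Schwarz and \Cref{lem:opnorm-M-N,lem:moment-bound-X,lem:moment-bound-F-H}.
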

\begin{proof}[Proof of \Cref{lem:moment-bound-Rem}]
    By definition, $\Rem_j$ is a row vector in $\R^T$, and its $t$-th entry is 
    \begin{align*}
        \Rem_{jt} 
        &= \sum_{i=1}^n \Delta_{ij}^{it} \frac{1}{\|\bSigma^{-1/2} \be_j\|}\\
        &= - n^{-1} \sum_{i=1}^n (\be_t^\top\otimes (\be_i^\top\bX))
        \calM^{-1} \calD ((\bF^\top \be_i) \otimes \be_j ) \frac{1}{\|\bSigma^{-1/2} \be_j\|} 
        &&\mbox{by \eqref{eq:Delta-ij-lt}}\\
        &= - n^{-1} \sum_{i=1}^n \be_i^\top(\be_t^\top\otimes \bX)
        \calM^{-1} \calD (\bF^\top  \otimes \be_j ) \be_i \frac{1}{\|\bSigma^{-1/2} \be_j\|} &&\mbox{by \eqref{eq:kronecker-mix-product}}\\
        &= - n^{-1} \frac{1}{\|\bSigma^{-1/2} \be_j\|} \trace\bigl((\be_t^\top\otimes \bX) \calM^{-1} \calD (\bF^\top  \otimes \be_j) \bigr).
    \end{align*}
    For the trace in the last line above, we have for each $j\in [p], t\in [T]$, the following holds, 

Since $(\bF^\top \otimes \be_j) = (\bI_T \otimes \be_j) \bF^\top$, whose rank is at most $T$, we have 
\begin{align*}
    &\quad \trace\bigl((\be_t^\top\otimes \bX) \calM^{-1} \calD (\bF^\top  \otimes \be_j) \bigr)\\
    &\le T \opnorm{(\be_t^\top\otimes \bX) \calM^{-1} \calD (\bF^\top  \otimes \be_j)} 
    \\
    &\le T \opnorm{(\be_t^\top\otimes \bX)} \opnorm{\calM^{-1}} \opnorm{\calD} 
    \opnorm{(\bF^\top  \otimes \be_j)}
    &&\mbox{submultiplicativity of $\|\cdot\|_{\rm op}$}
    \\
    &= \sqrt{T} \opnorm{\bX} \opnorm{\calM^{-1}} \opnorm{\calD} \opnorm{\bF}
    &&\mbox{by \eqref{eq:kronecker-norm}}\\
    &\le C(\zeta, T) \opnorm{\bX} (1 + \opnorm{\bX}^2/n)^{T-1} \fnorm{\bF}
    &&\mbox{by \eqref{eq:opnorm-D-J} and \eqref{eq:opnorm-M-N}}\\
    &\le C(\zeta, T) \sqrt{n} (1 + \opnorm{\bX}^2/n)^{T} \fnorm{\bF}.
    &&\mbox{by $\opnorm{\bX}/\sqrt{n} \le (1 + \opnorm{\bX}^2/n)/2$ }
\end{align*}

    Thus, using $\max_{j\in [p]} \tfrac{1}{\|\bSigma^{-1/2} \be_j\|^2} \le \opnorm{\bSigma}$ and above inequality for the trace, we have
    \begin{align*}
        &\quad \sum_{j=1}^p \|\Rem_{j}\|^2 \\
        &= \sum_{j=1}^p \sum_{t=1}^T (\Rem_{jt})^2\\
        &\le n^{-2} \opnorm{\bSigma}\sum_{j=1}^p \sum_{t=1}^T \trace\bigl((\be_t^\top\otimes \bX) \calM^{-1} \calD (\bF^\top  \otimes \be_j) \bigr)^2\\
        &\le n^{-1}p \opnorm{\bSigma} C(\zeta, T) (1 + \opnorm{\bX}^2/n)^{2T} \fnorm{\bF}^2\\
        &\le \gamma \kappa C(\zeta, T) (1 + \opnorm{\bX}^2/n)^{2T} \fnorm{\bF}^2. &&\mbox{by $p/n\le \gamma$ and $\opnorm{\bSigma}\le \kappa$ from \Cref{assu:design}}
    \end{align*}
    Therefore, Taking expectation using the Cauchy-Schwarz inequality and \Cref{lem:moment-bound-X,lem:moment-bound-F-H}, we have
    \begin{align*}
        \E [\sum_{j=1}^p \|\Rem_{j}\|^2] \le n C(\zeta, T, \kappa, \gamma)  \var(y_1).
    \end{align*}
    
    This completes the proof of \Cref{lem:moment-bound-Rem}. 
\end{proof}

\begin{lemma}
    \label{lem:moment-bound-F-X}
    Under \Cref{assu:design,assu:noise,assu:regime,assu:Lipschitz}, we have 
    \begin{align*}
        n^{-1}  \E \Big[\sum_{t=1}^T \sum_{i=1}^n \sum_{j=1}^p (\be_j^\top \bSigma^{-1} \be_j)^{-1} \fnorm*{\pdv{\bF \be_t}{x_{ij}} }^2\Big]
        \le C(\zeta, T, \kappa, \gamma) \var(y_1).
    \end{align*}
\end{lemma}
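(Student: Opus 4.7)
The plan is to decompose $\partial F_{lt}/\partial x_{ij} = D_{ij}^{lt} + \Delta_{ij}^{lt}$ as in \Cref{lem:dot-F-4}, handle the weight $(\be_j^\top\bSigma^{-1}\be_j)^{-1}$ uniformly by the operator-norm assumption on $\bSigma$, and then reduce each of the two pieces to the operator-norm bounds of \Cref{lem:opnorm-M-N,lem:opnorm-D-J} combined with the moment bounds of \Cref{lem:moment-bound-F-H,lem:moment-bound-X}.

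First, \Cref{assu:design} implies $\lambda_{\min}(\bSigma^{-1}) = 1/\opnorm{\bSigma} \ge 1/\kappa$, so
$(\be_j^\top\bSigma^{-1}\be_j)^{-1} \le \opnorm{\bSigma} \le \kappa$ uniformly in $j\in[p]$. Using $\|\pdv{\bF\be_t}{x_{ij}}\|^2 \le 2\sum_l (D_{ij}^{lt})^2 + 2\sum_l (\Delta_{ij}^{lt})^2$, it is enough to bound the sum of each square separately, after multiplying by $\kappa$.

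Second, for the $D$-part I would use the identity $\sum_l D_{ij}^{lt}\be_l = -(\be_t^\top\otimes\bI_n)(\bI_{nT}-\bN)((\bH^\top\be_j)\otimes\be_i)$ with $\bN$ as in \Cref{lem:opnorm-M-N}. Summing over $l,t$ and then over $i\in[n]$ gives
\begin{equation*}
    \sum_{i,l,t}(D_{ij}^{lt})^2 = \big\|(\bI_{nT}-\bN)\bigl((\bH^\top\be_j)\otimes \bI_n\bigr)\big\|_{\rm F}^2 \le n\,\|\bI_{nT}-\bN\|_{\rm op}^2\,\|\bH^\top\be_j\|^2,
\end{equation*}
by the Kronecker norm identity \eqref{eq:kronecker-norm}. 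Weighting by $(\be_j^\top\bSigma^{-1}\be_j)^{-1}\le \kappa$ and summing over $j$, together with $\sum_j\|\bH^\top\be_j\|^2 = \|\bH\|_{\rm F}^2 \le \opnorm{\bSigma^{-1}}\|\bSigma^{1/2}\bH\|_{\rm F}^2 \le \kappa \|\bSigma^{1/2}\bH\|_{\rm F}^2$, gives an upper bound of $n\kappa^2\|\bI_{nT}-\bN\|_{\rm op}^2\|\bSigma^{1/2}\bH\|_{\rm F}^2$.

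Third, for the $\Delta$-part I would write $\sum_l \Delta_{ij}^{lt}\be_l = -(\be_t^\top\otimes\bI_n)\bQ((\bF^\top\be_i)\otimes\be_j)$ with $\bQ = n^{-1}(\bI_T\otimes\bX)\calM^{-1}\calD$. Summing over $l,t$ then over $i\in[n]$,
\begin{equation*}
    \sum_{i,l,t}(\Delta_{ij}^{lt})^2 = \|\bQ(\bF^\top\otimes\be_j)\|_{\rm F}^2 \le \|\bQ\|_{\rm op}^2\,\|\bF\|_{\rm F}^2.
\end{equation*}
Weighting by $(\be_j^\top\bSigma^{-1}\be_j)^{-1}\le \kappa$ and summing over $j\in[p]$ introduces a factor $p\kappa \le \gamma\kappa n$, yielding the bound $\gamma\kappa n\,\|\bQ\|_{\rm op}^2\,\|\bF\|_{\rm F}^2$.

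Finally, I would apply \Cref{lem:opnorm-D-J,lem:opnorm-M-N} to obtain $\|\bI_{nT}-\bN\|_{\rm op}\le C(\zeta,T)(1+\opnorm{\bX}^2/n)^{T}$ and, using AM--GM ($\opnorm{\bX}/n \le (1+\opnorm{\bX}^2/n)/(2\sqrt n)$), $\sqrt n\,\|\bQ\|_{\rm op} \le C(\zeta,T)(1+\opnorm{\bX}^2/n)^{T}$. Dividing by $n$, the two contributions combine into
\begin{equation*}
    n^{-1}\sum_{t,i,j}(\be_j^\top\bSigma^{-1}\be_j)^{-1}\Big\|\pdv{\bF\be_t}{x_{ij}}\Big\|^2 \le C(\zeta,T,\kappa,\gamma)(1+\opnorm{\bX}^2/n)^{2T}\bigl(\|\bSigma^{1/2}\bH\|_{\rm F}^2 + \|\bF\|_{\rm F}^2/n\bigr).
\end{equation*}
Taking expectation and using Cauchy--Schwarz together with the moment bounds $\E[(1+\opnorm{\bX}^2/n)^{4T}]\le C(\gamma,T)$ from \Cref{lem:moment-bound-X} and $\E[\|\bSigma^{1/2}\bH\|_{\rm F}^4]^{1/2}\vee \E[\|\bF\|_{\rm F}^4/n^2]^{1/2}\le C(\zeta,T,\gamma,\kappa)\var(y_1)$ from \Cref{lem:moment-bound-F-H} yields the desired bound. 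The only mildly delicate step is the anisotropy-handling in the $D$-part: the reweighting by $(\be_j^\top\bSigma^{-1}\be_j)^{-1}$ produces $\|\bH\|_{\rm F}^2$ rather than the natural quantity $\|\bSigma^{1/2}\bH\|_{\rm F}^2$, and one must pay a second factor of $\opnorm{\bSigma^{-1}}\le\kappa$ to convert between them.
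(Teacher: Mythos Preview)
Your proposal is correct and follows essentially the same skeleton as the paper: bound the weight $(\be_j^\top\bSigma^{-1}\be_j)^{-1}$ uniformly by $\opnorm{\bSigma}\le\kappa$, reduce the derivative sum to operator-norm bounds on $\calM^{-1}\calD$ and $\bN$, and conclude by Cauchy--Schwarz together with \Cref{lem:moment-bound-F-H,lem:moment-bound-X}. The only difference in execution is that you split $\partial F_{lt}/\partial x_{ij}=D_{ij}^{lt}+\Delta_{ij}^{lt}$ and compute each block sum directly, whereas the paper recognizes $\sum_{i,j}\|\partial(\bF\be_t)/\partial x_{ij}\|^2=\|\partial(\bF\be_t)/\partial\vec(\bX)\|_{\rm F}^2$ and then invokes the rank inequality $\|\cdot\|_{\rm F}\le\sqrt n\,\|\cdot\|_{\rm op}$ together with the Jacobian operator-norm bound from \Cref{lem:opnorm-dF/dZ}. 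Your route is slightly more hands-on (it essentially re-derives the content of \Cref{lem:opnorm-dF/dZ} inline), while the paper's route is a bit shorter; both land on the same almost-sure bound $C(\zeta,T,\kappa,\gamma)(1+\opnorm{\bX}^2/n)^{2T}(\|\bSigma^{1/2}\bH\|_{\rm F}^2+\|\bF\|_{\rm F}^2/n)$ before taking expectations. One small cosmetic point: \Cref{lem:moment-bound-X} bounds moments of $\|\bX\bSigma^{-1/2}\|_{\rm op}/\sqrt n$, so the constant in $\E[(1+\opnorm{\bX}^2/n)^{4T}]$ picks up a $\kappa$-dependence as well, which is harmless since your final constant already depends on $\kappa$.
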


\begin{proof}[Proof of \Cref{lem:moment-bound-F-X}]

By \Cref{lem:opnorm-dF/dZ} the mapping $\R^{n\times p} \to \R^{n}: \bX \mapsto \bF \be_t$ is Lipschitz. 
Since the Frobenius norm of the Jacobian of an $L_*$-Lipschitz function valued in $\R^n$ is at most $L_* \sqrt{n}$, we have 
the Frobenius norm of the Jacobian of the mapping $\R^{n\times p} \to \R^{n}: \bX \mapsto \bF\be_t$ is bounded by
\begin{align*}
    \sqrt{n} C(\zeta, T) (1 + \opnorm{\bX}^2/n)^T 
    (\fnorm{\bH} + \fnorm{\bF}/\sqrt{n}).
\end{align*}
Therefore, using $\max_{j\in[p]} (\be_j^\top \bSigma^{-1} \be_j)^{-1} \le (\lam_{\min}(\bSigma^{-1}))^{-1} = \opnorm{\bSigma}$, 
we have 
\begin{align*}
    &\quad n^{-1} \sum_{t=1}^T \sum_{i=1}^n \sum_{j=1}^p (\be_j^\top \bSigma^{-1} \be_j)^{-1} \fnorm*{\pdv{\bF \be_t}{x_{ij}} }^2\\
    &\le n^{-1} \opnorm{\bSigma} \sum_{t=1}^T \sum_{i=1}^n \sum_{j=1}^p \fnorm*{\pdv{\bF \be_t}{x_{ij}} }^2\\
    &\le n^{-1} \opnorm{\bSigma} \sum_{t=1}^T  \fnorm*{\pdv{\bF \be_t}{\vec(\bX)}}^2\\
    &\le \opnorm{\bSigma} 
    C(\zeta, T) (1 + \opnorm{\bX}^2/n)^{2T} 
    (\fnorm{\bH} + \fnorm{\bF}/\sqrt{n})^2.
\end{align*}
Taking expectations, using the Cauchy-Schwarz inequality and \Cref{lem:moment-bound-X,lem:moment-bound-F-H}, we have
\begin{align*}
    &\quad n^{-1} \E \Big[\sum_{t=1}^T \sum_{i=1}^n \sum_{j=1}^p (\be_j^\top \bSigma^{-1} \be_j)^{-1} \fnorm*{\pdv{\bF \be_t}{x_{ij}} }^2\Big]\\
    &\le  
    C(\zeta, T, \kappa, \gamma)
    \var(y_1).
\end{align*}

\end{proof}

\begin{proof}[Proof of \Cref{thm:debias}]
    \label{proof:thm-debias}
    
    For any $j\in[p]$,  $\be_j\in \R^p$, 
    let $\bz_j = \bX \bSigma^{-1} \be_j /\|\bSigma^{-1/2} \be_j\|$, then $\bz_j \sim \mathsf{N}(\b0, \bI_n)$. 
    Let $\bF(\bz_j) = \bY - \bX \hbB \in \R^{n\times T}$.
    Since 
    $$\bX 
    = \bX(\bI_p - \frac{\bSigma^{-1} \be_j \be_j^\top}{\|\bSigma^{-1/2} \be_j\|^2}) + \bX \frac{\bSigma^{-1} \be_j \be_j^\top}{\|\bSigma^{-1/2} \be_j\|^2}
    := \bX \bPi_j + \bz_j \frac{\be_j^\top}{\|\bSigma^{-1/2} \be_j\|},
    $$
    where $\bPi_j = \bI_p - \frac{\bSigma^{-1} \be_j \be_j^\top}{\|\bSigma^{-1/2} \be_j\|^2}$. 

    We now show 
    $\vec(\bX \bPi_j)$ is independent of $\bz_j$. 
    Let $\bG = \bX\bSigma^{-1/2}$, we have 
    \begin{align*}
        \vec(\bX \bPi_j) 
        = \vec(\bG \bSigma^{1/2} \bPi_j)
        =(\bSigma^{1/2}\bPi_j \otimes \bI_n) \vec(\bG)
    \end{align*}
    and 
    \begin{align*}
        \bz_j 
        = \bG \bSigma^{-1/2} \be_j/\|\bSigma^{-1/2} \be_j\|
        = \frac{1}{\|\bSigma^{-1/2} \be_j\|}
        (\be_j^\top \bSigma^{-1/2} \otimes \bI_n) \vec(\bG).
    \end{align*}
    Since $\vec(\bG)$ is a standard Gaussian vector in $\R^{np}$,
    the above two vectors
    $\vec(\bX \bPi_j)$ and $\bz_j$ are two Gaussian vectors. 
    It suffices to show that $\vec(\bX \bPi_j)$ and $\bz_j$ are uncorrelated,
    which can be shown by verifying that 
    $$
    (\be_j^\top \bSigma^{-1/2} \otimes \bI_n) (\bSigma^{1/2}\bPi_j \otimes \bI_n)
    = 
    (\be_j^\top \bPi_j \otimes \bI_n) = \b0
    $$
    thanks to $ \bPi_j^\top \be_j = \b0$ from definition of $\bPi_j$.

    Applying \cite[Lemma S5.3]{tan2023multinomial} to the mapping: $\bz_j \mapsto \bF(\bz_j)/\sqrt{n}$, using $z_{ij} = \be_i^\top \bz_j$, we have
    \begin{equation}
        \label{SOS_normality}
    \E \Big[\norm[\Big]{ \frac{\bz_j ^\top \bF(\bz_j)}{\sqrt{n}} - \frac{1}{\sqrt n}\sum_{i=1}^n \pdv{\be_i^\top \bF(\bz_j)}{z_{ij}} - \frac{\bz_j^\top \bF(\tilde\bz_j)}{\sqrt{n}}}^2\Big]
        \le \frac3n \E \sum_{i=1}^n \fnorm*{\pdv{\bF(\bz_j)}{z_{ij}}}^2,
    \end{equation}
    where $\tilde \bz_j\sim \mathsf{N}(\boldzero, \bI_n)$
    is independent of $(\bX,\by)$.

    We want to compute the derivative of $\bF(\bz_j)$ with respect to $z_{ij}$. 
    Since $\bX= \bX\bPi_j+ \bz_j \frac{\be_j^\top}{\|\bSigma^{-1/2} \be_j\|}$. 
    Conditional on $\bX\bPi_j$ (that is, with $\bX\bPi_j$ held fixed), we have 
    $\pdv{\bF}{z_{ij}} = \pdv{\bF}{x_{ij}} \frac{1}{\|\bSigma^{-1/2} \be_j\|}$. 
    According to the expression of $\pdv{F_{lt}}{x_{ij}}$ in \Cref{lem:dot-F-4}, we have
    \begin{align}
        \sum_{i=1}^n\pdv{\be_i^\top \bF}{z_{ij}}
        &= \sum_{i=1}^n \pdv{\be_i^\top \bF}{x_{ij}} \frac{1}{\|\bSigma^{-1/2} \be_j\|}
        \nonumber\\
        &= \sum_{i=1}^n  \sum_{t=1}^T (D_{ij}^{it} + \Delta_{ij}^{it} ) \be_t^\top \frac{1}{\|\bSigma^{-1/2} \be_j\|}
        \nonumber\\
        & = \frac{-\be_j^\top (\hbB - \bB^*) (n \bI_T - \hbA)^\top}{\|\bSigma^{-1/2} \be_j\|} 
        + \Rem_{j} && \mbox{by \eqref{eq:sum-D}}
        ,
        \label{eq:expression_divergence_N01}
    \end{align}
    where $\Rem_{j} = \sum_{i=1}^n  \sum_{t=1}^T \Delta_{ij}^{it} \be_t^\top \frac{1}{\|\bSigma^{-1/2} \be_j\|}$.
    Define $\bw_j\in\R^T$ by
    $$\bw_j^\top
    \defas
    \frac{\be_j^\top  \bSigma^{-1}\bX^\top  (\bY - \bX \hbB) + \be_j^\top (\hbB - \bB^*)(n\bI_T - \hbA)^\top}{\sqrt{n}\|\bSigma^{-1/2} \be_j\|} - \frac{\bz_j^\top \bF(\tilde\bz_j)}{\sqrt{n}}
.
    $$
    Combining \eqref{SOS_normality} and \eqref{eq:expression_divergence_N01},
    by the triangle inequality, 
    \begin{align*}
        \sum_{j=1}^p\E \Big[\Big\|\bw_j^\top\Big\|^2\Big] 
        &=\sum_{j=1}^p\E \Big[\Big\|\frac{\bz_j ^\top \bF(\bz_j)}{\sqrt n} - \sum_{i} \pdv{\be_i^\top \bF(\bz_j)}{z_{ij}}/\sqrt{n} - \frac{\bz_j^\top \bF(\tilde\bz_j)}{\sqrt{n}}
        - \frac{\Rem_j}{\sqrt{n}}
        \Big\|^2\Big] \\
        &\le \frac2n \E \Big[ \sum_{j=1}^p\|\Rem_{j}\|^2\Big] + \frac6n \E \Big[\sum_{i=1}^n\sum_{j=1}^p \fnorm*{\pdv{\bF(\bz_j)}{z_{ij}}}^2\Big].
    \end{align*}
    We now bound the two terms in the last line.
    For the first term, it is bounded from above by $C(\zeta, T, \kappa, \gamma) \var(y_1)$ thanks to \Cref{lem:moment-bound-Rem}. 
    For the second term, we first rewrite the derivative with respect to $z_{ij}$ using chain rule of differentiation. 
    Recall $\bX= \bX\bPi_j+ \bz_j \frac{\be_j^\top}{\|\bSigma^{-1/2} \be_j\|}$. 
    Conditional on $\bX\bPi_j$, we have 
    $\pdv{\bF}{z_{ij}} = \pdv{\bF}{x_{ij}} \frac{1}{\|\bSigma^{-1/2} \be_j\|}$.
    Thus,
    \begin{align*}
        \frac1n \E \Big[\sum_{i=1}^n \sum_{j=1}^p\fnorm*{\pdv{\bF(\bz_j)}{z_{ij}}}^2\Big]
        =\frac1n  \E \Big[\sum_{t=1}^T \sum_{i=1}^n \sum_{j=1}^p (\be_j^\top \bSigma^{-1} \be_j)^{-1}\fnorm*{\pdv{\bF \be_t}{x_{ij}} }^2\Big].
    \end{align*}
    According to \Cref{lem:moment-bound-F-X}, the last line is bounded by $C(\zeta, T, \kappa, \gamma) (\|\bSigma^{1/2}b^*\| + \sigma^2)$.
    In summary, $\sum_{j=1}^p \E[\|\bw_j^\top\|^2]
        \le C(\zeta, T, \kappa, \gamma) \var(y_1).$
    Let
    $$\bL_n = (\bI_T - \hbA/n)^{-1}.$$
    Left multiplying $(\bI_T- \hbA/n)^{-1}$ inside the $\ell_2$ norm
    of $\bw_j$, 
    we have 
    \begin{equation*}
        \sum_{j=1}^p\E \Big[\Big\|\frac{\bL_n(\bY - \bX \hbB)^\top \bX \bSigma^{-1} \be_j +  n(\hbB - \bB^*)^\top \be_j}{\sqrt{n}\|\bSigma^{-1/2} \be_j\|} -  \frac{\bL_n\bF(\tilde\bz_j)^\top \bz_j}{\sqrt{n}}\Big\|^2\Big]
        =
        \sum_{j=1}^p\E \Big[\Big\|\bL_n\bw_j\Big\|^2\Big].
    \end{equation*}
    Since $\|(\bI_T- \hbA/n)^{-1}\|_{\rm op}\le C(\zeta, T) (1 + \opnorm{\bX}^2/n)^{T^2}$ from \Cref{lem:opnorm-Ahat},
    define the event 
    $\Omega = 
    \{\bX\in \R^{n\times p}: 
    \opnorm{\bX}/\sqrt{n} \le 2 + \sqrt{\gamma}\}$ as before,
    so that
    $$\sum_{j=1}^p
    \E[\|\bL_n\bw_j\|^2]
    \le 
    \sum_{j=1}^p
    C(\zeta,T,\gamma)
    \E\Bigl[\|\mathbb I\{\Omega\}\bw_j\|^2\Bigr]
    +
    \sum_{j=1}^p
    \E\Bigl[\|\mathbb I\{\Omega^c\}\bL_n\bw_j\|^2\Bigr]
    .
    $$
    For the first term with $\mathbb I\{\Omega\}$,
    the previously established bound  $\sum_{j=1}^p \E[\|\bw_j\|^2]
    \le C(\zeta,T,\kappa,\gamma) \var(y_1)$ bounds from above the first term.
    Each summand in the second term is exponentially small,
    using $\E[\mathbb I\{\Omega^c\} \|\bw_j\|^2]
    \le \P(\Omega^c)^{1/2} \E[\|\bw_j\|^4]^{1/2}$
    with $\P(\Omega^c)^{1/2} \le e^{-n/4}$,
    while 
    $\E[\|\bw_j\|^4]^{1/2}/\var(y_1)$ is at most polynomial
    in $(n,p)$ with multiplicate constant $C(\zeta,T,\kappa,\gamma)$
    thanks to several applications of the Cauchy-Schwarz
    inequality to combine the moment bounds in
    \Cref{lem:moment-bound-X,lem:moment-bound-F-H}
    and
    the equality in distribution $\bF(\tilde \bz_j)=^d \bY-\bX\hbB$.

    We wish to show that for
    the 2-Wasserstein distance,
    \begin{equation}
        \label{conv_W_2_announce_proof}
    \max_{j\in [p]}
    W_2\Bigl(
        \mathsf{Law}\bigl(\frac{\bL_n\bF(\tbz_j)^\top\bz_j}{\sqrt n} \bigr),
        ~~ \mathsf{N}\bigl(\boldzero_T, \bS+\E[\bH^\top\bSigma\bH]\bigr)
    \Bigr) \to 0.
    \end{equation}
    By \Cref{lem:moment-bound-F-H}, we may extract a subsequence
    of regression problems such that there exists psd matrices
    $\bK,\bar\bK\in\R^{T\times T}$ such that
    \eqref{eq:K_bar_K} holds.
    We will not adapt a particular notation for the extracted
    subsequence, but keep in mind that we are now working
    along this subsequence and that $n\to+\infty$ with $n$
    being restricted to this subsequence.

    By \eqref{eq:variance_to_0_F}
    and \eqref{eq:variance_to_0_H},
    \begin{equation}
    \E[\|\bF^\top\bF/n -\bK\|_{\rm F}^2]\to 0,
    \qquad
    \E[\|\bH^\top\bSigma\bH + \bS -\bar\bK\|_{\rm F}^2]\to 0.
    \end{equation}
    Note that \Cref{thm:generalization-error} further shows that
    \begin{equation}\E[\| \bL_n(\bF^\top\bF/n)\bL_n^\top - \bar\bK\|_{\rm F}]\to 0.
        \label{convergenceL_n_F_to_bar_K}
    \end{equation}

    Consider the LDLT decompositions $\bK = \bL\bD\bL^\top$
    and $\bar\bK = \bar\bL \bar \bD\bar \bL^\top$ for the psd matrices
    $\bK$ and $\bar\bK$, so that $\bD,\bar\bD$ are diagonal with
    non-negative entries and
    $\bL,\bar\bL$ are lower triangular with all diagonal entries equal to 1.
    Since \Cref{lem:opnorm-Ahat} gives $\|\bL_n\|_{\rm op} = O_P(1)$,
    \begin{equation}
        \bL_n \bK \bL_n^\top - \bar \bK\limp 0,
        \qquad
        \bar\bL^{-1} \bL_n \bL\bD - \bar \bD \bar\bL^\top (\bL_n^\top)^{-1}
        (\bL^\top)^{-1}
        \limp 0.
    \end{equation}
    On the right, we have the difference of a lower triangular matrix
    with diagonal $\bD$
    and an upper triangular matrix with diagonal $\bar\bD$. Thus
    the convergence to 0 in probability gives
    $\bD=\bar\bD$ and $\bar \bL^{-1}\bL_n\bL \bD \limp \bD$.
    We also have
    $\bar \bL^{-1}\bL_n\bL \sqrt{\bD} \limp \sqrt{\bD}$
    by multiplying on the right by the pseudo-inverse of
    $\sqrt{\bD}$.
    Now let $\bL_\infty = \bar\bL \bL^{-1}$, so that
    \begin{align*}
        (\bL_n - \bL_\infty)\bL\sqrt{\bD}
        &=
        \bar\bL \bar\bL^{-1}(\bL_n - \bL_\infty)\bL\sqrt{\bD}
      \\&= \bar\bL( \bar\bL^{-1}\bL_n \bL \sqrt{\bD} -\sqrt{\bD})
      \limp 0.
    \end{align*}
    By the continuous mapping theorem, $(\bL_n-\bL_\infty)\bK(\bL_n-\bL_\infty)^\top\limp0$, 
    and since \Cref{lem:opnorm-Ahat} gives $\|\bL_n\|_{\rm op} = O_P(1)$,
    \begin{equation}
        (\bL_n-\bL_\infty)(\bF^\top\bF/n)(\bL_n-\bL_\infty)^\top
        \limp 0,
        \qquad
        \|(\bL_n-\bL_\infty)\bF^\top/\sqrt n\|\limp 0.
        \label{eq:convergence_L_infty}
    \end{equation}
    Although we have not shown that $\bL_n\limp \bL_\infty$,
    this means that we can still replace $\bL_n$ by the deterministic
    $\bL_\infty$ up to when multiplied on the right by $\bF^\top/\sqrt n$.
    The previous display also holds with $\bF$ replaced by
    $\bF(\tbz_j)$, uniformly over $j\in[p]$ since
    \begin{align}
        &
        \E
        \|
        (\bL_n-\bL_\infty)\tfrac{\bF^\top\bF}{n}(\bL_n-\bL_\infty)^\top
        -
        (\bL_n-\bL_\infty)\tfrac{\bF(\tbz_j)^\top\bF(\tbz_j)}{n}(\bL_n-\bL_\infty)^\top
        \|_{\rm op}
        \label{bound_L_n_L_infty_with_tilde_F}
      \\&\le
        \E[
        \|\bL_n-\bL_\infty\|_{\rm op}^2
        \|\bF^\top\bF/n - \bF(\tbz_j)^\top\bF(\tbz_j)/n\|_{\rm op}
        ]
        \nonumber
      \\&\le
        \E[
        \|\bL_n-\bL_\infty\|_{\rm op}^4]^{1/2}
        2
        \E[\|\bF^\top\bF/n-\E[\bF^\top\bF/n]\|_{\rm op}^2]
        \to 0
        \nonumber
    \end{align}
    thanks to the Cauchy-Schwarz inequality and
    the equality in distribution $\bF=\bF(\tbz_j)$
    for the last inequality.
    The last line converges to 0 thanks to 
    \Cref{lem:opnorm-Ahat} to show that
    $\E[ \|\bL_n-\bL_\infty\|_{\rm op}^4]^{1/2}$ is bounded,
    and thanks to
    \eqref{eq:variance_to_0_F} which gives
    $\E[\|\bF^\top\bF/n-\E[\bF^\top\bF/n]\|_{\rm op}^2]\to 0$.
    This gives that
    $\|\bL_n\bF(\tbz_j)/\sqrt n\|_{\rm op} \limp 0$
    uniformly over $j\in[p]$, so that
    $$\bL_n\bF(\tbz_j)^\top\bz_j/\sqrt n =
    \bL_\infty \bF(\tbz_j)^\top \bz_j /\sqrt n + o_P(1).$$
    Since $\bL_\infty$ is deterministic,
    the distribution $\bL_\infty \bF(\tbz_j)^\top \bz_j /\sqrt n$
    is independent of $j$ and is the same as the distribution
    of $\bL_\infty\bF^\top \bg/\sqrt n$ where $\bg\sim \mathsf{N}(\boldzero,
    \bI_n)$ is independent of $\bF$.
    For any $\bw\in\R^T$,
    the characteristic function of $\bL_\infty\bF^\top \bg/\sqrt n$ 
    evaluated at $\bw$ is
    \begin{align*}
        \E [\exp (\sqrt{-1} \bw^\top \bF^\top\bg /\sqrt n )] 
        &=\E \big[\E [\exp (\sqrt{-1} \bw^\top\bL_\infty \bF^\top\bg/\sqrt n) \mid \bg]\big]\\
        &=\E \big[ \exp(-\tfrac1{2n} \bw^\top
        \bL_\infty \bF^\top\bF\bL_\infty^\top \bw)\big].
    \end{align*} 
    This converges to $\exp(-\frac12\bw^\top\bar\bK\bw)$
    by \eqref{eq:convergence_L_infty}
    and \eqref{convergenceL_n_F_to_bar_K}.
    We have established the weak convergence
    $\bL_n\bF(\tbz_j)^\top \bz_j/\sqrt n\limd \mathsf{N}(\boldzero,\bar\bK)$
    uniformly over $j\in[p]$.

    To prove convergence in 2-Wasserstein distance to $\mathsf{N}(\boldzero,\bar\bK)$, it is enough to establish convergence in distribution 
    and convergence in the
    second moment \cite[Def. 6.8 and Theorem 6.9]{villani2009optimal}.
    Since we have already established convergence in distribution,
    it is enough to prove
    \begin{equation}
        \E[\|\bL_n\bF(\tbz_j)^\top\bz_j/\sqrt n\|^2]\to \trace \bar\bK = \E[\|\mathsf{N}(\boldzero,\bar\bK)\|^2]
        .
    \end{equation}
    Since $\bF$ is equal in distribution to $\bF(\tbz_j)$,
    and $\tbz_j$ is independent of $\bF(\tbz_j)$, we have
    \[
        \E[
        \|\bL_\infty \bF(\tbz_j)^\top\bz_j/\sqrt n\|^2 
        ]
        =
        \trace[\bL_\infty\E[\bF^\top\bF/n]\bL_\infty^\top]
        \to
        \trace[\bL_\infty \bK \bL_\infty^\top]
        =
        \trace \bar \bK.
    \]
    where we used that $\bD=\bar\bD$ and $\bL_\infty=\bar\bL \bL^{-1}$
    for the last inequality.
    By the triangle inequality in $L^2(\R^T)$,
    $$
    \Big|
        \E\Bigl[
        \|\bL_\infty \bF(\tbz_j)^\top\bz_j/\sqrt n\|^2 
        \Bigr]^{\frac12}
        -
        \E\Bigl[\|\bL_n\bF(\tbz_j)^\top\bz_j/\sqrt n\|^2\Bigr]^{\frac12}
    \Big|
    \le
    \E\Bigl[\|(\bL_n-\bL_\infty)\bF(\tbz_j)^\top\bz_j/\sqrt n\|^2\Bigr]^{\frac12}.
    $$
    Let $\tbQ\in\R^{n\times n}$ be the orthogonal projector
    on the image of $\bF(\tbz_j)$, which is of rank at most $T$
    and such that $\bF(\tbz_j)^\top\tbQ = \bF(\tbz_j)$.
    Let $\chi^2_T = \|\tbQ\bz_j\|^2$ which has chi-square distribution
    with at most $T$ degrees of freedom.
    We may bound from above the integrand in the right-hand side
    by $\|\bL_n-\bL_\infty\bF(\tbz_j)/\sqrt n\|^2 \chi^2_T$.
    Since $T$ is bounded, $\chi^2_T= O_P(1)$ and
    $\|\bL_n-\bL_\infty\bF(\tbz_j)/\sqrt n\|\limp 0$ by
    \eqref{bound_L_n_L_infty_with_tilde_F}.
    Since the integrand has finite second moment by
    \Cref{lem:moment-bound-F-H,lem:opnorm-Ahat}
    and $\E[(\chi^2_T)^4]<\infty$, it is uniformly
    integrable so the last display converges to 0.

    We have established convergence in distribution and in the second
    moment, hence
    by \cite[Def. 6.8 and Theorem 6.9]{villani2009optimal},
    \begin{equation}
        \label{conv_W_2_barK}
    \max_{j\in [p]}
    W_2\Bigl(
        \mathsf{Law}\Bigl(\frac{\bL_n\bF(\tbz_j)^\top\bz_j}{\sqrt n} \Bigr),
        ~~ \mathsf{N}(0, \bar\bK)
    \Bigr) \to 0.
    \end{equation}
    Since $\bS+\E[\bH^\top\bSigma\bH]\to \bar \bK$ 
    and
    $\bC\mapsto \mathsf{N}(\boldzero,\bC)$ is continuous
    in 2-Wasserstein distance,
    so that $W_2(\mathsf{N}(\b0, \bar\bK), \bS+\E[\bH^\top\bSigma\bH])\to 0$.
    Combined with \eqref{conv_W_2_barK} and
    \cite[Corollary 6.11]{villani2009optimal},
    we obtain
    \eqref{conv_W_2_announce_proof} along any subsequence
    such that \eqref{eq:K_bar_K} holds.

    Actually, the convergence \eqref{conv_W_2_announce_proof}
    holds without extracting a subsequence for the following reason.
    Since both distributions inside $W_2$ in
    \eqref{conv_W_2_announce_proof} have uniformly bounded second
    (thanks to \Cref{lem:moment-bound-F-H,lem:opnorm-Ahat}),
    the $W_2$ distance in \eqref{conv_W_2_announce_proof}
    is uniformly bounded. To prove \eqref{conv_W_2_announce_proof},
    it is sufficient to prove that 0 is the only limit
    point of the sequence in the left-hand side of \eqref{conv_W_2_announce_proof}. For any subsequence such that the left-hand side of
    \eqref{conv_W_2_announce_proof} converges to a limit
     $\nu$, we may extract further a subsequence such that 
     \eqref{eq:K_bar_K} and the above argument holds, showing
     that the limit point must be $\nu =0$.
     Since the bounded sequence in the left-hand side of \eqref{conv_W_2_announce_proof} has a unique limit point equal to 0,
     the convergence \eqref{conv_W_2_announce_proof} holds.
    \end{proof}

\section{Proof of Corollary~\ref{cor:debias}}\label{sec:proof_cor_debias}
\begin{proof}[Proof of \Cref{cor:debias}]
        Let $J_{n,p}\subset [p]$ be the set of $j\in[p]$ such that
        \begin{equation}
            \label{eq:previous_display_W2_proof}
        \E\Bigl[
        \Bigl(
            \sqrt{\frac{n}{\|\bSigma^{-1/2} \be_j\|^2}}
            \Bigl(\hbb^{t, \rm{debias}}_j - \bb^*_j\Bigr)
            - \bzeta_{jt}
        \Bigr)^2
        \Bigr] 
        \le \frac{1}{a_p} C(\zeta,T,\kappa,\gamma)\var(y_1) 
        \end{equation}
        for the same constant $C(\zeta,T,\kappa,\gamma)$ as in \eqref{eq:debias-normality}. Bounding the left-hand side of
        \eqref{eq:debias-normality} from below by the
        sum indexed over $[p]\setminus J_{n,p}$, we get
        $\frac{1}{a_p} |[p]\setminus J_{n,p}| \le 1$.
        On the one hand,
        $W_2(\mathsf{Law}(\bzeta_{jt}), \mathsf{N}(0, \E[r_t]))
            \to 0$ by \eqref{convergence_Zeta_j_W2_to_0},
            uniformly over $j\in [p]$.
        On the other hand,
            \eqref{eq:previous_display_W2_proof}
            provides a vanishing upper bound on the $W_2$ distance
            between the law of $\bzeta_{jt}$ and the law
            of 
            $\frac{\sqrt n}{\|\bSigma^{-1/2} \be_j\|}
            (\hbb^{t, \rm{debias}}_j - \bb^*_j)$,
            uniformly over $j\in J_{n,p}$.
        The triangle inequality for the 2-Wasserstein distance
        completes the proof of
        \eqref{eq:Zj}.
    \end{proof}

\section{Asymptotic normality and state evolution in the separable case}
\label{sec:appendix_state_ev}
\begin{theorem}
    \label{thm:state_ev}
    Let \Cref{assu:design,assu:Lipschitz,assu:noise,assu:regime}
    be fulfilled with $\bSigma=\bI_p$,
    assume $\var(y_1)\le v_0$,
    and
    assume that 
    $T,v_0$ are fixed constant as \( n,p\to+\infty \).
    Then there exists deterministic weights $\bar w_{s,t}$ (possibly
    depending on $n,p$) such that
    \begin{equation}
        \max_{t=1,...,T}
        \Bigl(
        \frac1n
        \|\sum_{s=1}^t (\hat w_{t,s} - \bar w_{t,s}) (\by - \bX\hbb^s)\|^2
        \Bigr)
        \limp 0.
        \label{eq:convergence_deterministic_weights}
    \end{equation}
    Given these deterministic weights $\bar w_{t,s}$, define the
    debiased estimate $\bar\bb^{T, \rm{debias}}\in\R^p$ by
    $\bar\bb_j^{T, \rm{debias}}
    = \hbb^T_j + n^{-1} (\bX\be_j)^\top\sum_{s=1}^T \bar w_{T,s}
    (\by-\bX\hbb^s)$.

    Let $(\eta_{j})_{j\in [p]}$ be $\zeta$-Lipschitz functions.
    Choose the $T+1$-th nonlinear function $\bg_{T+1}$
    in \eqref{eq:iterates_all_previous} at $t=T+1$
    by applying the functions $\eta_j$ componentwise to 
    $\bar\bb_j^{T, \rm{debias}}$,
    \ie, $\hbb^{T+1}_j
        =
        \eta_j(\bar\bb_j^{T, \rm{debias}}).$ Then the following asymptotic recursion holds between
    $\E[r_T]$ and
    $\E[r_{T+1}]$,
    \begin{align}
        \label{eq:risk_evolution}
    \Bigl(
    \sigma^2
    +
    \E_{Z_j\sim \mathsf{N}(0,\E[r_t])}\Bigl[\sum_{j=1}^p\Bigl\{
            \eta_j\bigl(\bb_j^* + \frac{Z_j}{\sqrt n}\bigr)
    -\bb_j^*
\Bigr\}^2\Bigr]
\Bigr)^{1/2}
    -
    \E\Bigl[r_{T+1}\Bigr]^{1/2}
    &\to 0
    \end{align}
    as $n,p\to+\infty$, as well as
    $
    (
    \sigma^2
    +
    \E[\sum_{j=1}^p\{
            \eta_j(\hbb^{t, \rm{debias}}_j)
    -\bb_j^*
\}^2]
    )^{1/2}
    -
    \E[r_{T+1}]^{1/2}
    \to 0
    $.
\end{theorem}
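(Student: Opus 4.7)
The plan is to proceed in three stages: construct the deterministic weights $\bar w_{t,s}$ satisfying \eqref{eq:convergence_deterministic_weights}; use \Cref{thm:debias} to compare the iterate $\hbb^{T+1}$ with an idealized Gaussian ensemble in $\ell^2(\R^p)$; and close the state-evolution identity \eqref{eq:risk_evolution} via a marginal Wasserstein argument.

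For the first stage, the construction mirrors the subsequence-plus-LDLT argument from the proof of \Cref{thm:generalization-error}. By \Cref{lem:moment-bound-F-H}, along any subsequence we may further extract so that $\E[\bF^\top\bF/n]\to\bK$ and $\E[\bH^\top\bSigma\bH+\bS]\to\bar\bK$ for deterministic psd matrices $\bK,\bar\bK$. The derivation leading to \eqref{eq:convergence_L_infty} then provides a deterministic $T\times T$ matrix $\bL_\infty$ with $\|(\bL_n-\bL_\infty)\bF^\top/\sqrt n\|_{\rm F}\limp 0$, where $\bL_n=(\bI_T-\hbA/n)^{-1}$. Choosing $\bar w_{t,s}=(\bL_\infty)_{t,s}$ yields \eqref{eq:convergence_deterministic_weights} along the subsequence, and the uniqueness-of-limit-point argument at the end of the proof of \Cref{thm:debias} extends the conclusion to the full sequence with $\bar w_{t,s}$ allowed to depend on $(n,p)$. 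Since $\bL_n$ has bounded moments by \Cref{lem:opnorm-Ahat,lem:moment-bound-X}, uniform integrability upgrades the convergence in probability to convergence in $L^1$.

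For the second stage, let $\bv=\sum_{s=1}^T(\hat w_{T,s}-\bar w_{T,s})(\by-\bX\hbb^s)$; then $\bar\bb_j^{T,\rm{debias}}-\hbb_j^{T,\rm{debias}}=n^{-1}(\bX\be_j)^\top\bv$ and hence $\sum_j(\bar\bb_j^{T,\rm{debias}}-\hbb_j^{T,\rm{debias}})^2\le n^{-2}\|\bX\|_{\rm op}^2\|\bv\|^2$, which tends to $0$ in $L^1$ by step one, \Cref{lem:moment-bound-X}, and Cauchy--Schwarz. Combined with the isotropic rewriting of \Cref{thm:debias} as $\sum_j\E[(\hbb_j^{T,\rm{debias}}-\bb_j^*-\bzeta_{jT}/\sqrt n)^2]\to 0$ and the $\zeta$-Lipschitz property of $\eta_j$, this gives $\sum_j\E[(\hbb_j^{T+1}-\eta_j(\bb_j^*+\bzeta_{jT}/\sqrt n))^2]\to 0$. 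Applying $|\|\bu\|-\|\bv\||\le\|\bu-\bv\|$ in the $L^2(\R^p)$-norm yields the second displayed convergence (with $\hbb_j^{t,\rm{debias}}$) and reduces \eqref{eq:risk_evolution} to comparing $\sum_j\E f_j(\bzeta_{jT})$ with $\sum_j\E_{Z_j\sim\mathsf{N}(0,\E[r_T])}f_j(Z_j)$, where $f_j(x)=(\eta_j(\bb_j^*+x/\sqrt n)-\bb_j^*)^2$.

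The main obstacle is this last replacement, because the marginal $W_2$-convergence from \Cref{thm:debias} is only uniform over $j$ and must be summed across $p$ coordinates. Noting that $f_j$ is absolutely continuous with $|f_j'(x)|\le 2\zeta|\eta_j(\bb_j^*)-\bb_j^*|n^{-1/2}+2\zeta^2|x|n^{-1}$, an optimal $W_2$-coupling together with Cauchy--Schwarz gives
\begin{equation*}
\bigl|\E f_j(\bzeta_{jT})-\E_{Z_j}f_j(Z_j)\bigr|\lesssim W_2(\bzeta_{jT},Z_j)\Bigl(|\eta_j(\bb_j^*)-\bb_j^*|\,n^{-1/2}+n^{-1}\Bigr).
\end{equation*}
Summing over $j$ and one further Cauchy--Schwarz bounds the first contribution by $\max_j W_2(\bzeta_{jT},Z_j)\cdot(p/n)^{1/2}\cdot\bigl(\sum_j(\eta_j(\bb_j^*)-\bb_j^*)^2\bigr)^{1/2}$, in which $p/n\le\gamma$, $\max_j W_2\to 0$ by \Cref{thm:debias}, and $\sum_j(\eta_j(\bb_j^*)-\bb_j^*)^2=O(1)$ follows from step two together with boundedness of $\E[r_{T+1}]$ and $\E[r_T]$ via \Cref{lem:moment-bound-F-H} applied to the extended recursion; the second contribution is handled analogously since $p/n$ is bounded. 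Hence the aggregate error vanishes as $n,p\to\infty$, closing \eqref{eq:risk_evolution}.
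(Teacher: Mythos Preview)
Your proposal is correct and closely parallels the paper's argument, but differs in two places worth comparing. For the construction of $\bar w_{t,s}$, the paper takes a more concrete route: it fixes, for each $n$, an outcome $\omega_n$ in the high-probability event $\Omega_n$ on which $\|\bL_n\|_{\rm op}$ is bounded and $\bL_n(\bF^\top\bF/n)\bL_n^\top\approx \bH^\top\bH+\bS$, and sets $\bar w_{t,s}=\hat w_{t,s}(\omega_n)$. This makes $\bar\bW$ well defined for every $n$ without a subsequence argument; the paper then runs the same LDLT comparison (indeed the one from the proof of \Cref{thm:debias}, not of \Cref{thm:generalization-error}) between $\bar\bW$ and $\hbW$ to get $\|(\bar\bW-\hbW)\bF^\top/\sqrt n\|_{\rm F}\limp0$. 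Your version via $\bL_\infty$ works, but as stated it only produces weights along subsequences; to make it rigorous you should phrase it as ``$\inf_{\bar\bW}\E[1\wedge\|(\hbW-\bar\bW)\bF^\top/\sqrt n\|_{\rm F}^2]\to0$, then pick near-minimizers'', which is what the uniqueness-of-limit-point argument really yields. For the Gaussian comparison, the paper continues the triangle inequality for $(\sigma^2+\E\|\cdot\|^2)^{1/2}$ one more step: with an optimal $W_2$-coupling $(Z_j,\bzeta_{jT})$ it bounds the difference of square roots directly by $\zeta\,\E[\sum_j(Z_j-\bzeta_{jT})^2/n]^{1/2}\le \zeta\sqrt{p/n}\,\max_j W_2(\bzeta_{jT},Z_j)\to0$. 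This is shorter than your derivative-of-$f_j$ argument and sidesteps the auxiliary bound $\sum_j(\eta_j(\bb_j^*)-\bb_j^*)^2=O(1)$; your justification of the latter via \Cref{lem:moment-bound-F-H} on the extended recursion is valid (it needs $\bg_{T+1}(\b0)=\b0$ and bounded $|\bar w_{T,s}|$, both of which hold), but the paper's route avoids it entirely.
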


\begin{proof}[Proof of \Cref{thm:state_ev}]
    Without loss of generality, assume that the sequence
    of regression problems and data $(\bX,\by)$ is defined
    on the same probability space $(\Omega,\mathcal A,\mathbb P)$.
    We have already proved in \Cref{lem:moment-bound-F-H,lem:opnorm-Ahat,lem:var-F-H}
    and \Cref{thm:generalization-error} that the event $\Omega_n$ defined as
    \begin{align}
        \|(\bI_T-\hbA/n)^{-1}\|_{\rm op} + \|\bI_T-\hbA/n\|_{\rm op}\le C(\zeta,T,\gamma,v_0),
        \\
    \|(\bI_T-\hbA/n)^{-1}\bF^\top\bF/n(\bI_T-\hbA/n)^{-1}
    -(\sigma^2 \bd1_T\bd1_T^\top+\bH^\top\bH)\|_{\rm F}\le n^{-0.49}
    \\
    \|\bF^\top\bF/n - \E[\bF^\top\bF/n]\|_{\rm F}
    +
    \|\bH^\top\bH - \E[\bH^\top\bH]\|_{\rm F}\le n^{-0.49}
    \end{align}
    for a large enough constant $C(\zeta,T,\gamma,v_0)$
    has probability approaching one. In particular, this event
    $\Omega_n\in \mathcal A$ is non-empty and contains some outcome $\omega_n$.
    For each $s,t\le T$,
    for a given regression problem in the sequence (indexed by $n$),
    we may define the weight $\bar w_{t,s}$ as the random variable
    $\hat w_{t,s}=\be_t^\top(\bI_T-\hbA/n)^{-1}\be_s$
    taken as the outcome $\omega_n$, so that
    $\bar w_{t,s}$ is deterministic.
    Let us emphasize that this weight implicitly depends on
    $n$, \ie, it depends on $n,p,\bb^*,(\bg_{t})_{t\le T}$
    and all other parameters that are allowed to change as $n,p\to+\infty$.
    Since $\omega_n$ is an outcome in $\Omega_n$, the above inequalities
    give that $\bar \bW = (\bar w_{t,s})_{t\in T, s\in [T]}$ satisfies
    \begin{equation}
        \|\bar\bW \E[\bF^\top\bF/n] \bar\bW^\top - 
        (\sigma^2 \bd1_T\bd1_T^\top+\E[\bH^\top\bH])\|_{\rm F} 
        \le C(\zeta,T,\gamma,v_0) n^{-0.49}
            \label{bar_W_limit}
    \end{equation}
    in the sense of convergence of a deterministic sequence.
    Let us now show that \eqref{eq:convergence_deterministic_weights}
    holds.
    Denote by $\hbW=(\bI_T-\hbA/n)^{-1}$ so that
    \eqref{eq:convergence_deterministic_weights} is equivalent to
    $\|(\bar\bW - \hbW)\bF^\top/\sqrt n\|_{\rm F}^2\limp 0$.
    Since convergence in probability to 0 for the sequence
    $U_n$
    is equivalent to convergence
    of $\E[1 \wedge |U_n|]\to 0$, the convergence in probability
    \eqref{eq:convergence_deterministic_weights} is equivalent to
    $u_n \defas
    \E[1\wedge \|(\bar\bW - \hbW)\bF^\top/\sqrt n\|_{\rm F}^2]\to 0$.
    Consider a converging
    subsequence $(u_{n_k})$ of $(u_n)_{n\ge 1}$ with limit point $u_\infty\in[0,1]$.
    By \eqref{eq:K_bar_K},
    we may further extract a subsequence $(u_{n_{k_m}})$, such that in this
    nested subsequence
    $u_{n_{k_{m}}}\to u_\infty$ and $\E[\bF^\top\bF/n]\to \bK$ both holds.
    In the event $\Omega_n$, 
    \eqref{bar_W_limit} is also satisfied
    for $\hbW$, hence in this subsequence and in $\Omega_n$,
    \begin{equation}
        \|\bar\bW \bK \bar \bW^\top - \hbW \bK \hbW^\top\|_{\rm F}
        \le C(\zeta,T,\gamma,v_0)(n^{-0.49} + \|\bK - \E[\bF^\top\bF/n]\|_{\rm F}).
    \end{equation}
    Let $\bK=\bL\bD\bL^\top$ be the LDLT decomposition of $\bK$
    with $\bL$ triangular with diagonal elements all equal to 1
    and $\bD$ diagonal with non-negative entries.
    Since the operator norms of $\bar\bW$  and $\hbW$ are bounded
    by $C(\zeta,T,\gamma,v_0)$ in $\Omega_n$, multiplying on the left
    by $\bL^{-1}\hbW^{-1}$ and on the right by $(\bL^\top\bar\bW^\top)^{-1}$,
    we get
    \begin{multline*}
        \|(\hbW\bL)^{-1}\bar\bW \bL \bD -  \bD \bL^\top \hbW^\top (\bL^\top\bar\bW^\top)^{-1}\|_{\rm F}
        \\\le C(\zeta,T,\gamma, v_0,\|\bL\|_{\rm op})(n^{-0.49} + \|\bK - \E[\bF^\top\bF/n]\|_{\rm F}).
    \end{multline*}
    Now $(\hbW\bL)^{-1}\bar\bW \bL \bD$ is lower triangular
    while 
    $\bD \bL^\top \hbW^\top (\bL^\top\bar\bW^\top)^{-1}$
    is upper triangular, so the left-hand side is bounded from below
    by
    $
    \|(\hbW\bL)^{-1}\bar\bW \bL \bD -  \bD\|_{\rm F}$
    (we may bound from below the full Frobenius norm by the Frobenius norm
    of the lower triangular part only). Thus on $\Omega_n$,
    \begin{equation*}
        \|(\hbW\bL)^{-1}\bar\bW \bL \bD -  \bD\|_{\rm F}
        \le C(\zeta,T,\gamma,v_0, \|\bL\|_{\rm op})(n^{-0.49} + \|\bK - \E[\bF^\top\bF/n]\|_{\rm F}).
    \end{equation*}
    Multiplying by $\hbW\bL$ on the left again, and by
    $\bL^\top(\hbW - \bar\bW)^\top$ on the right,
    we have proved
    that on $\Omega_n$,
    $$
    \|(\bar \bW \bL \bD  - \hbW \bL \bD) \bL^\top(\hbW - \bar\bW)^\top
    \|_{\rm F}
    \le C(\zeta,T,\gamma,v_0, \|\bL\|_{\rm op})(n^{-0.49} + \|\bK - \E[\bF^\top\bF/n]\|_{\rm F}).
    $$
    Finally, we may replace $\bK=\bL\bD\bL^\top$
    by $\bF^\top\bF/n$ on $\Omega_n$ in the left-hand side
    by enlarging the right-hand side by a constant if necessary.
    This implies that in $\Omega_n$,
    $$
    \|(\bar \bW - \hbW)(\bF^\top\bF/n)(\hbW - \bar\bW)^\top
    \|_{\rm F}
    \le C(\zeta,T,\gamma,v_0, \|\bL\|_{\rm op})(n^{-0.49} + \|\bK - \E[\bF^\top\bF/n]\|_{\rm F})
    $$
    and since $\P(\Omega_n)\to 1$, this shows that $u_\infty=0$
    is the unique limit point of the sequence $(u_n)_{n\ge 1}$.
    Since $0$ is the unique limit point, $(u_n)_{n\ge 1}\to 0$
    and \eqref{eq:convergence_deterministic_weights} holds.

    Recall that $\E\bigl[r_{T+1}\bigr] = \sigma^2 + \E[\|\bg_{T+1}(\bar\bb^{T, \rm{debias}}) - \bb^*\|^2]$ by definition of $\hbb^{T+1}$.
    By the triangle inequality for $\E[\sigma^2  + \|\cdot\|^2]^{1/2}$ we have
    \begin{equation}\label{eq:risk_evolution-1}
        \begin{aligned}
            &\big|
            \bigl(
            \sigma^2
            +
            \E[
            \|
            \bg_{T+1}(\hbb^{T, \rm{debias}})
            -\bb^*
        \|^2]
            \bigr)^{1/2}
            -
            \E\bigl[r_{T+1}\bigr]^{1/2}
            \big|
            \\&\le
            \E[\|
            \bg_{T+1}(\hbb^{T, \rm{debias}})
            -\bg_{T+1}(\bar\bb^{T, \rm{debias}})
            \|^2]^{1/2}
          \\&\le \zeta
            \E[\|
            \hbb^{T, \rm{debias}}
            -\bar\bb^{T, \rm{debias}}
            \|^2]^{1/2}
          \\&= \zeta
            \E[\|
            n^{-1}\bX^\top\bF(\bar\bW -\hbW)
            \|_{\rm F}^2]^{1/2}
          \\&\le \zeta
              \E[\|\bX^\top\bX/n\|_{\rm op}
                \|n^{-1/2}\bF(\bar\bW -\hbW)
            \|_{\rm F}^2]^{1/2}
            \end{aligned}
    \end{equation}
    
    since $\hbb^{T, \rm{debias}}$ and
    $\bar\bb^{T, \rm{debias}}$ only differ in the weights used
    for the columns of $\bF$.
    The random variable inside the final expectation
    converges to 0 in probability by the previous argument,
    and has uniformly bounded fourth moment by
    \Cref{lem:moment-bound-F-H,lem:opnorm-Ahat,lem:moment-bound-X}.
    By dominated convergence, the left-hand side converges to 0.
    Similarly, \Cref{thm:debias} gives
    $$
    \E[
    \|
    \hbb^{T, \rm{debias}}
    -(\bb^* + n^{-1/2}\bzeta_{\cdot,T})
    \|^2] \le C(\zeta,\gamma,T,v_0) /n
    $$
    where $\bzeta_{\cdot,T}\in\R^p$ is the random vector
    with components $(\bzeta_{j,T}){j\in [p]}$ from \Cref{thm:debias}.
    By the triangle inequality, similarly to the above display,
    \begin{equation}\label{eq:risk_evolution-2}
    \begin{aligned}
    &\big|
    \bigl(
    \sigma^2
    +
    \E[
    \|
    \bg_{T+1}(\hbb^{T, \rm{debias}})
    -\bb^*
\|^2]
    \bigr)^{1/2}
    -
    \bigl(
    \sigma^2
    +
    \E[
    \|
    \bg_{T+1}(\bb^* + n^{-1/2}\bzeta_{\cdot,T})
    -\bb^*
\|^2]
    \bigr)^{1/2}
    \big|
  \\&\le
  \zeta \E[\|\hbb^{T, \rm{debias}} - (\bb^*+n^{-1/2}\bzeta_{\cdot,T})\|^2]^{1/2}
  \\&\le C(\zeta,T,\gamma,v_0) / \sqrt n.
    \end{aligned}
\end{equation}
    Finally, by 
    \eqref{convergence_Zeta_j_W2_to_0},
    for each $j\in[p]$
    there exists a coupling $(Z_j,\zeta_{j,T})$
    with $Z_j\sim \sf N(0,\E[r_t])$ such that
    $\E[(Z_j-\zeta_{j,T})^2]^{1/2}\le
    \eqref{convergence_Zeta_j_W2_to_0} \to 0$
    where $\eqref{convergence_Zeta_j_W2_to_0}$
    the maximum over $j\in[p]$ of the 2-Wasserstein distance
    in the left-hand side of \eqref{convergence_Zeta_j_W2_to_0}.
    Since $\bg_{T+1}:\R^p\to\R^p$ is separable, acting componentwise
    with the $\zeta$-Lipschitz functions $\eta_j:\R\to\R$,
    \begin{equation} \label{eq:risk_evolution-3}
    \begin{aligned}
    &\Big|
    \Bigl(
    \sigma^2
    +
    \sum_{j=1}^p
    \E\Bigl[
    \Bigl(
    \eta_j\Bigl(b_j^* + \frac{Z_j}{\sqrt n}\Bigr)
    - b_j^*
    \Bigr)^2\Bigr]
    \Bigr)^{1/2}
    -
    \Bigl(
    \sigma^2
    +
    \E\Bigl[
    \Big\|
    \bg_{T+1}\Bigl(\bb^* + \frac{\bzeta_{\cdot,T}}{\sqrt n}\Bigr)
    -\bb^*
\Big\|^2\Bigr]
    \Bigr)^{1/2}
    \Big|
  \\&\le
  \zeta \E\Bigl[
  \sum_{j=1}^p
  \Bigl(
  \frac{Z_j}{\sqrt n}
  -
  \frac{\zeta_{jT}}{\sqrt n}
  \Bigr)^2
  \Bigr]^{1/2} \le \zeta
  \sqrt{\frac{p}{n}} \eqref{convergence_Zeta_j_W2_to_0}
  \to 0.
    \end{aligned}
\end{equation}
Combining the inequalities \eqref{eq:risk_evolution-1}-\eqref{eq:risk_evolution-3} gives  \eqref{eq:risk_evolution}.

The result 
$
    (
    \sigma^2
    +
    \E[\sum_{j=1}^p\{
            \eta_j(\hbb^{t, \rm{debias}}_j)
    -\bb_j^*
\}^2]
    )^{1/2}
    -
    \E[r_{T+1}]^{1/2}
    \to 0
    $
    is a direct consequence of \eqref{eq:risk_evolution} and the triangle inequality.
    This concludes the proof of \Cref{thm:state_ev}.
\end{proof}

\end{document}